\documentclass{article}

\usepackage{arxiv}

\usepackage[utf8]{inputenc} 
\usepackage[T1]{fontenc}    
\usepackage{hyperref}       
\usepackage{url}            
\usepackage{booktabs}       
\usepackage{amsfonts}       
\usepackage{nicefrac}       
\usepackage{microtype}      
\usepackage{lipsum}
\usepackage{graphicx}
\graphicspath{ {./images/} }

\author{
 Nil Ayday\thanks{Equal contribution} \\
  School of Computation, Information and Technology\\
Technical University of Munich\\
\texttt{nil.ayday@tum.de}
   \And
 Lingchu Yang$^*$  \\
 	Graduate School of Information Science and Technology\\
    The University of Tokyo\\
  \texttt{yang-lingchu@g.ecc.u-tokyo.ac.jp}
  \And
 Debarghya Ghoshdastidar \\
  School of Computation, Information and Technology\\
Technical University of Munich
}

\usepackage{comment}
\usepackage{tcolorbox}
\definecolor{hellblue}{RGB}{173, 216, 230}
\tcbset{
    colback=hellblue,
    colframe=hellblue,
    arc=2mm,
    boxrule=0pt,
}

\newenvironment{mybox}[2]{%
    \begin{tcolorbox}[colback=#1,colframe=#1]
        {\large \textbf{#2}}\\[4pt]
}{%
    \end{tcolorbox}
}

\newenvironment{mybox2}[2]{%
    \begin{tcolorbox}[colback=#1,colframe=#1]
}{%
    \end{tcolorbox}
}
\usepackage{mdframed}

\usepackage{pbox}

\usepackage{amsthm}
\usepackage{multirow}
\usepackage{booktabs}
\usepackage{amsmath}
\usepackage{graphicx}
\usepackage{caption}
\usepackage{float}
\usepackage{amssymb}   
\usepackage[american]{babel}
\usepackage{subcaption}
\usepackage{natbib} 
\usepackage{mathtools} 
\usepackage{booktabs} 
\usepackage{tikz} 
\usepackage{makecell}
\usepackage{adjustbox} 
\usepackage{placeins}

\newtheorem{theorem}{Theorem}

\newtheorem{lemma}[theorem]{Lemma}
\newtheorem{corollary}[theorem]{Corollary}

\newtheorem{remark}[theorem]{Remark}
\usepackage{wrapfig}
\title{Gaussian Process Limit Reveals Structural Benefits of  Graph Transformers}
\begin{document}
\maketitle
\begin{abstract}
Graph transformers are the state-of-the-art for learning from graph-structured data and are empirically known to avoid several pitfalls of message-passing architectures. However, there is limited theoretical analysis on why these models perform well in practice.
In this work, we prove that attention-based architectures have structural benefits over graph convolutional networks in the context of node-level prediction tasks.
Specifically, we study the neural network gaussian process limits of graph transformers (GAT, Graphormer, Specformer) with infinite width and infinite heads, and derive the node-level and edge-level kernels across the layers. 
Our results characterise how the node features and the graph structure propagate through the graph attention layers. 
As a specific example, we prove that graph transformers structurally preserve community information and maintain discriminative node representations even in deep layers, thereby preventing oversmoothing. 
We provide empirical evidence on synthetic and real-world graphs that validate our theoretical insights, such as integrating informative priors and positional encoding can improve performance of deep graph transformers.
%
\end{abstract}

\section{Introduction}
\label{sec:intro}
Graph Neural Networks (GNNs) are currently the de facto models for learning from graph-structured data. They solve complex prediction tasks on graphs by iteratively aggregating feature information across edges. Recent empirical and theoretical work has established that classical message-passing GNN architectures have several limitations, including difficulties in simultaneously handling homophilic and heterophilic graphs \citep{DBLP:journals/corr/abs-1905-09550,DBLP:journals/corr/AydaySG26}, oversmoothing by deep GNNs \citep{DBLP:conf/aaai/LiHW18,DBLP:conf/nips/Keriven22,DBLP:conf/iclr/WuCWJ23}, oversquashing \citep{DBLP:conf/iclr/0002Y21}, etc. These challenges have motivated the development of attention-based GNNs, such as Graph Attention Network (GAT) \citep{DBLP:conf/iclr/VelickovicCCRLB18}, Graphormer \citep{DBLP:conf/nips/YangLXLLASSX21}, and Specformer \citep{DBLP:conf/iclr/BoSWL23}, which are the current state of the art models in practice.

Despite the empirical success of graph transformers, there is little theoretical understanding of the benefits of the attention mechanism in graph learning. The rich literature on the expressivity of GNNs shows that transformers are more expressive than standard message-passing GNNs, but less expressive than more general message-passing architectures, like spectrally invariant GNNs \citep{DBLP:conf/icml/ZhangZM24}. However, this line of work does not explain: \emph{Why do graph transformers perform better with more layers and not suffer from oversmoothing, similar to message-passing GNNs?}
Notably, \emph{oversmoothing}---the phenomenon of node representations becoming indistinguishable in deeper layers---remains a topic of debate in recent theoretical works.
\citet{DBLP:conf/nips/WuAWJ23} use a dynamical system-based equivalence of GATs to argue that they cannot prevent oversmoothing, while a Markov chain perspective of GATs suggests their ability to mitigate oversmoothing \citet{DBLP:conf/faiml/ZhaoWHG23}.  
Unlike GAT, which is limited to local neighborhoods, global attention-based models, like Graphormer and Specformer, allow nodes to attend to all others, making them more complex to analyze, with no theoretical work currently available. 
This lack of a unified theoretical lens for graph transformers leaves a fundamental question open: \emph{Can graph transformers overcome the structural bottlenecks of message-passing, or do they merely delay the onset of oversmoothing?}


We postulate that the above questions can be answered by studying the structural characteristics of graph transformers, specifically, \emph{how node representations evolve across layers}.
To this end,
we utilize the framework of Neural Network Gaussian Processes (NNGPs), which establishes that randomly initialized neural networks converge in distribution to Gaussian processes in the infinite-width limit \citep{lee2018deep}. 
The NNGP limit allows one to simplify GNNs, but still retain the architectural properties related to information aggregation and attention mechanism.
In particular, recent works derive NNGP and related neural tangent limits of Graph Convolutional Networks \citep{DBLP:conf/iclr/NiuA023,DBLP:journals/tmlr/SabanayagamEG23}, which has also been used to characterize oversmoothing in these models. However, NNGP limits of graph transformers are not known.

\begin{wrapfigure}{r}{0.5\textwidth} 
    \centering
    \vspace{-10pt} 
    
    \begin{minipage}{0.25\textwidth}
        \centering
        \includegraphics[width=\linewidth]{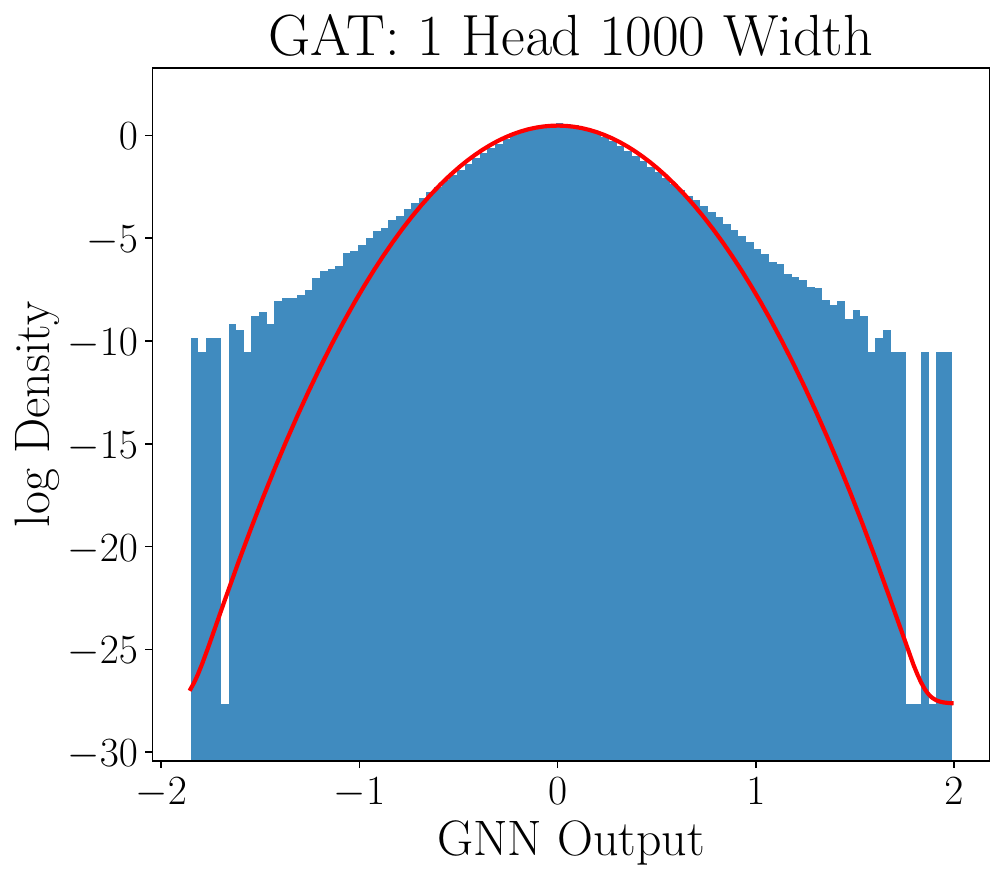}
    \end{minipage}%
    \hfill
    \begin{minipage}{0.25\textwidth}
        \centering
        \includegraphics[width=\linewidth]{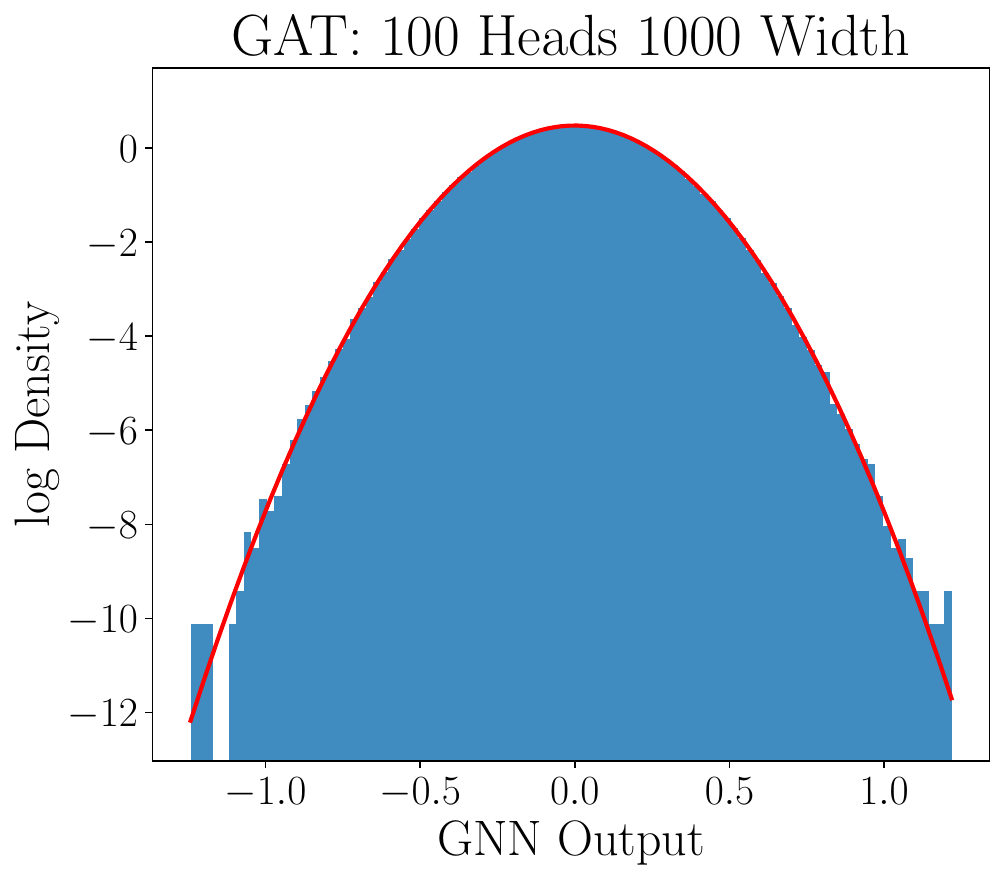}
        
    \end{minipage}

    \vspace{0.2cm}

    \begin{minipage}{0.25\textwidth}
        \centering
        \includegraphics[width=\linewidth]{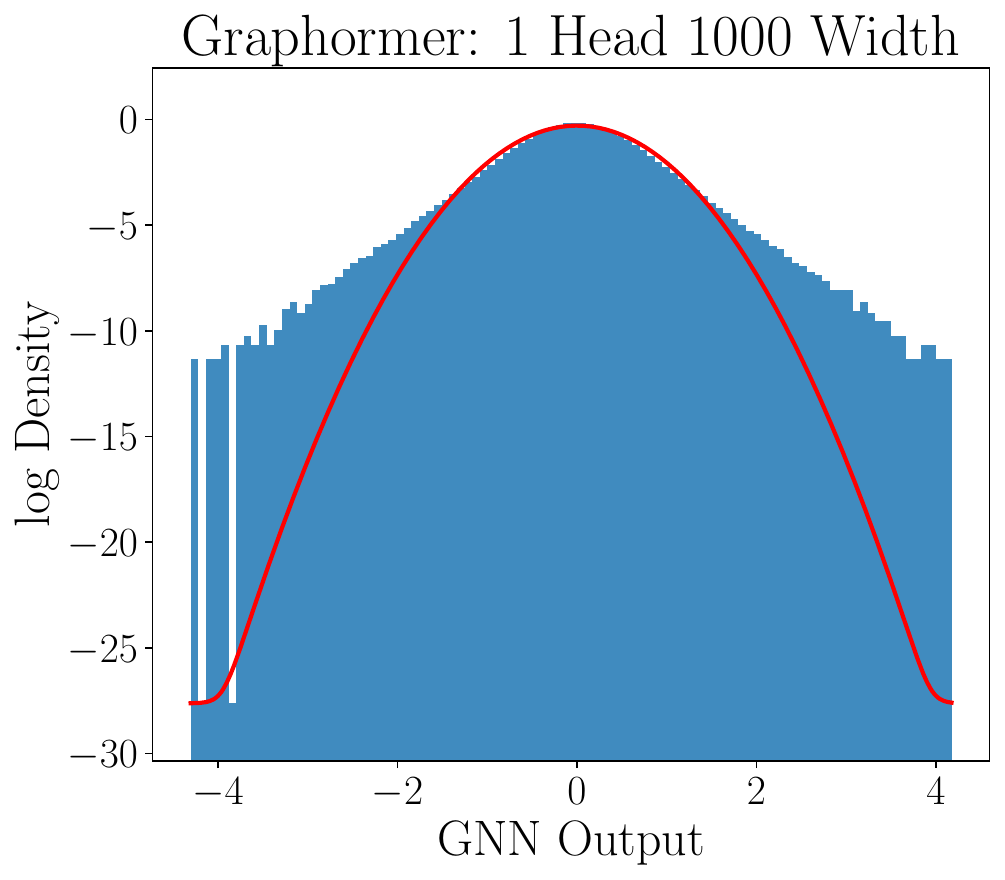}
    \end{minipage}%
    \hfill
    \begin{minipage}{0.25\textwidth}
        \centering
        \includegraphics[width=\linewidth]{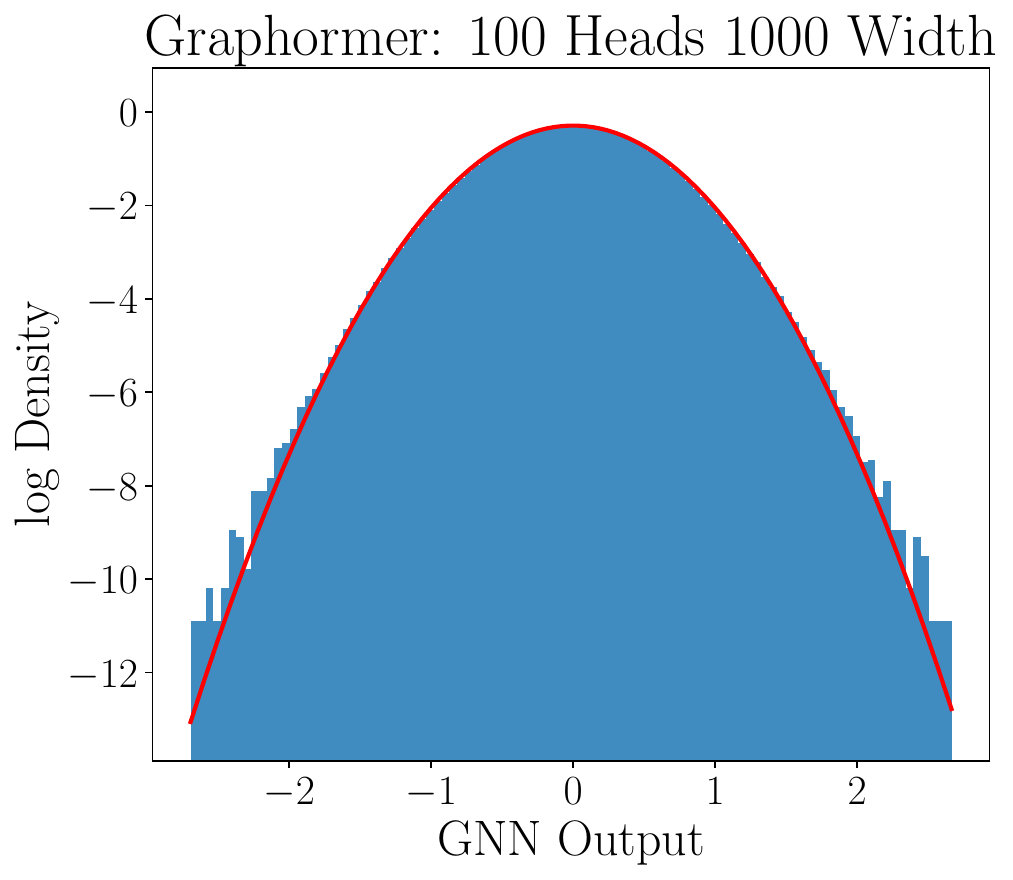}
    \end{minipage}

     \caption{Histogram of the output of GAT and Graphormer for different number of heads. The output distribution converges to a Gaussian (red line) fitted with mean and variance of the empirical distribution when both width and number of heads are large. Plots for Specformer is in Appendix~\ref{app:experimental details}.}
    \label{fig:distribution_grid}
    \vspace{-10pt} 
\end{wrapfigure}

\textbf{Contributions and significance.}
The main technical contribution of this work is a derivation of the NNGP kernels for graph transformers, which we use to analytically show when these architectures can mitigate oversmoothing. We briefly discuss the significance of the contributions.

\underline{\emph{Derivation of NNGP kernel for graph transformers.}}
We derive the NNGP limit of the three most relevant graph transformer architectures (GAT, Graphormer, and Specformer) and provide analytical forms of the kernel evolution across layers for GAT-GP (Theorem \ref{thm:GAT_GP}), Graphormer-GP (Theorem \ref{thm:Graphormer_GP}), and Specformer-GP (Theorem \ref{thm:Specformer_GP}). 
In contrast to previous work on GCN, where NNGP-equivalence holds in the infinite-width limit \citep{DBLP:conf/iclr/NiuA023},  transformer architectures behave as NNGPs only when both the width and the number of heads are large (see Figure \ref{fig:distribution_grid}). Furthermore, unlike the NNGP limit of standard transformers \citep{InfiniteAttention}, we show that graph transformers induce graph-specific kernels at multiple levels---node-level kernels reflect the node feature interactions, whereas structural kernels capture connectivity patterns, which correspond to edge-level kernels over graph neighborhoods in spatial models (GAT, Graphormer) and spectral kernels in spectral models (Specformer). These kernels interact non-trivially for various architectures (see Tables \ref{tab:kernels}--\ref{tab:linear_kernels}). 

\underline{\emph{Theoretical analysis of oversmoothing.}}
To study whether graph transformers mitigate oversmoothing, we simplify the NNGP kernels under the population version of (contextual) stochastic block model (CSBM), that is, graphs with well-defined community structure.
Our theoretical results, summarized in Table \ref{tab:sbm_summary}, confirm that while GCN is structurally destined for oversmoothing, all attention-based GNNs can retain community information across layers under specific conditions.
In particular, GAT-GP avoids rank collapse under well-separated communities, whereas Specformer-GP and Graphormer-GP counteract representational decay through spectral amplification or informative structural priors. 
The analysis provides a crucial insight: if the NNGP limit of graph transformers does not oversmooth, then their neural architecture could potentially be trained to avoid oversmoothing noted in prior work \citep{DBLP:conf/nips/WuAWJ23}.
\underline{\emph{Empirical validation of the impact of depth.}}
We validate our theoretical insights through numerical studies on synthetic and real-world benchmarks.
As a practical consequence of our theory, we identify Laplacian-based positional encodings as a structural prior that prevents oversmoothing in Graphormer-GP. Moreover, with a sufficiently informative structural prior, performance can even improve as depth increases, which is unexpected in GCNs.

\section{Preliminaries}
\label{sec:preliminaries}
\textbf{Notation.} 
$\odot$ denotes the Hadamard product, $\Vert$ concatenation, and $|\cdot|_F^2$ the squared Frobenius norm, and $\mathrm{tr}(\cdot)$ the trace of a matrix. $\phi(\cdot)$ is the row-wise softmax and  $\sigma(\cdot)$ an arbitrary activation function.

\textbf{Graph Data.} 
Let $\mathcal{G} = (\mathcal{V}, \mathcal{E})$ be an undirected graph with $n$ nodes, where $\mathcal{V}$ denotes the set of nodes and $\mathcal{E}$ the set of edges.
Let $A \in \mathbb{R}^{n \times n}$ denote the adjacency matrix of the graph. The neighborhood of a node $a \in \mathcal{V}$ is defined as
\(\mathcal{N}_a := \{\, i \in \mathcal{V} \mid (a,i) \in \mathcal{E} \,\}.\) The $ d_{\mathrm{in}}$-dimensional features for all $N$ nodes are collected in rows of the $n \times d_{\mathrm{in}}$ matrix $X$, with the $a$-th row denoted by $x_a^\top$.

\textbf{Graph Neural Networks (GNNs).} 
GNNs are a class of deep learning models designed to learn node representations from graph-structured data by aggregating information from local neighborhoods. $S_{\mathrm{GNN}}^{(\ell,h)} \in \mathbb{R}^{n \times n}$ denotes the graph convolution operator of head $h$ in layer $\ell$, whose form depends on the specific GNN architecture. For message-passing GNNs such as GCN, \( d^{\ell,H}=1\) and the graph convolution is fixed, determined solely by the graph topology, given by
\(
S_{\mathrm{GCN}} = D^{-1/2}AD^{-1/2}.
\)
In comparison, graph transformers employ a data-dependent convolution in which neighborhood weights are learned via an attention mechanism. Given input features \(f^{0}(X) := X\), pre-activation of the \(\ell\)-th layer is
\begin{equation}
g^{\ell}(X)
:= \Big(\big\|_{h=1}^{d^{\ell,H}} S_{\mathrm{GNN}}^{(\ell,h)} f^{\ell-1}(X)\, W^{\ell,h}\Big) W^{\ell,H},
\end{equation}
where $\|$ denotes concatenation of the attention heads and $W^{\ell,h} \in \mathbb{R}^{d_{\ell-1} \times d_{\ell}}$ and $W^{\ell,H} \in \mathbb{R}^{d_{\ell-1} d^{\ell,H} \times d_{\ell}}$ are  learnable weight matrices. The layer output is obtained by applying a nonlinearity. ReLU activation is a common choice and is considered in this work: \(f^{\ell}(X) = \mathrm{ReLU}\big(g^{\ell}(X)\big).\) 

\textbf{Gaussian Process Limit of GNN Node Representations.} 
We analyse the node representations learned by GNNs through the lens of \textit{Neural Network Gaussian Processes} (NNGPs). In the infinite-width and head limit, the model's pre-activations $\{ g_a^{\ell}(X) \}_{a \in V}$ converge in distribution to a centered Gaussian process
$g^{\ell}(X) \sim \mathcal{GP}(0, \Sigma^{(\ell)})$. For ReLU activations (\(f^{\ell}(X) = \mathrm{ReLU}\big(g^{\ell}(X)\big)\)), the post-activation kernel $K^{(\ell)}$ admits a closed-form recursive update \citep{DBLP:conf/nips/BiettiM19}:
\begin{align}
K^{(\ell)}_{ab}
&=
\frac{1}{2\pi}
\sqrt{
\Sigma^{(\ell)}_{aa}\,
\Sigma^{(\ell)}_{bb}
}
\Big(
\sin \theta^{(\ell)}_{ab}
+
\bigl(\pi - \theta^{(\ell)}_{ab}\bigr)
\cos \theta^{(\ell)}_{ab}
\Big),
\nonumber
\\
\theta^{(\ell)}_{ab}
&=
\arccos
\!\left(
\frac{
\Sigma^{(\ell)}_{ab}
}{
\sqrt{
\Sigma^{(\ell)}_{aa}\,
\Sigma^{(\ell)}_{bb}
}
}
\right).
\label{eq:relu-nngp-kernel}
\end{align}
Starting from the initial kernel $K^{(0)}_{ab} := \frac{ x_a \cdot x_b}{d_{\mathrm{in}}},
$
the sequence of kernels $\{\Sigma^{(\ell)}\}_{\ell \ge 0}$ and $\{K^{(\ell)}\}_{\ell \ge 0}$ are defined recursively.

Unlike standard NNGPs, in which the kernel is solely based on node features, the GNNs incorporate the graph structure and obtain predictions on unlabeled nodes by conditioning on observed node labels. In this work, we characterise the NNGP induced by graph transformers by deriving the corresponding covariance updates, thereby placing models such as GAT, Graphormer, and Specformer within a unified Gaussian process framework.

In the following, we introduce graph transformers investigated in this paper: GAT, Graphormer, and Specformer. In contrast to these models, which operate on homogeneous graphs, the Graph Transformer Network (GTN) is designed for heterogeneous graphs and is presented in Appendix~\ref{app:GTN}, including its model formulation, GP-equivalent theorem, and proof. The design choices underlying GAT, Graphormer, and Specformer are detailed in Appendix \ref{app:design}.

\textbf{Graph Attention Network (GAT).}
A multi-head GAT consists of $d^{\ell,H}$ attention heads in layer $\ell$. For each head $h$ and layer $\ell$, attention scores are computed on edges $(a,i)\in \mathcal{E}$:
\begin{equation}
E^{(\ell h)}_{ai}\!(X)
=
\bigl(\vec{v}^{\ell h}\bigr)^\top
\!\!\left[
W^{\ell,h} f_a^{\ell-1}(X)\, \Vert \,
W^{\ell,h} f_i^{\ell-1}(X)
\right]
\end{equation}
where $\vec{v}^{\ell h} \in \mathbb{R}^{2d_\ell \times 1}$ is the attention vector.

The normalized attention coefficients are obtained via
\begin{equation}
\bigl(S_{\mathrm{GAT}}^{(\ell,h)}(X)\bigr)_{ai}
=
\frac{
\exp\!\big(\sigma(E^{(\ell h)}_{ai}(X)) \big)
}{
\sum_{k \in \mathcal{N}_a}
\exp\!\big( \sigma(E^{(\ell h)}_{ak}(X)) \big)
},
\label{Eq:gat_softmax}
\end{equation}
and $\bigl(S_{\mathrm{GAT}}^{(\ell,h)}(X)\bigr)_{ai}=0$ for $(a,i) \notin \mathcal{E}$.

\textbf{Graphormer.} A Graphormer is a multi-head attention-based architecture with $d^{\ell,H}$ heads in layer $\ell$. Each layer augments node representations with graph positional encodings:
$\tilde f_i^{\ell-1}(X):=[f_i^{\ell-1}(X),\,P_i^{\ell}]$ for $i\in V$, where $P_i^{\ell}$ encodes structural information such as Laplacian eigenvectors, degree features, or random-walk-based statistics.
For each head $h$ and layer $\ell$, attention scores are computed on all
node pairs $(i,j)\in V\times V$ as
\begin{equation}
\small
E^{(\ell h)}_{ij}(X)
=
\frac{
\big(\tilde f^{\ell-1}_i(X) W^{Q,\ell h}\big)\,
\big(\tilde f^{\ell-1}_j(X) W^{K,\ell h}\big)^{\top}
}{\sqrt{d_\ell}}
\;+\;
b_{\rho(i, j)},
\label{eq:attention_score_graphormer}
\end{equation}
where $W^{Q,\ell h},W^{K,\ell h}\in\mathbb{R}^{d_{\ell-1}\times d_{\ell}}$
are learnable projection matrices, and the scaled dot-product term measures content-based
compatibility between node $i$ and node $j$.
The additive bias $b_{\rho(i,j)}$ is a learnable scalar that depends
only on the structural relation $\rho(\cdot,\cdot):V\times V\to\mathcal{R}$ (e.g., shortest-path
distance bucket, edge type, or other graph-structural categories).

The graph convolution operator is obtained by applying the row-wise softmax, to the logits, i.e.,
$S_{\mathrm{Graphormer}}^{(\ell,h)}(X)=\phi\!\big(E^{(\ell h)}(X)\big)$ for head $h$.

\textbf{Specformer.}
Specformer operates on the spectrum of the normalized graph Laplacian: \(L:=I_n-D^{-1/2}AD^{-1/2}=U\Lambda U^\top\) with
\(\Lambda=\mathrm{diag}(\lambda_1,\dots,\lambda_n)\).
Each eigenvalue $\lambda_i$ is encoded by a sinusoidal map $\rho(\lambda_i)\in\mathbb{R}^{d}$ with
\(\rho(\lambda,2t)
=\sin\!\Big(\epsilon\lambda/10000^{2t/d}\Big), 
\rho(\lambda,2t+1)
=\cos\!\Big(\epsilon\lambda/10000^{2t/d}\Big)
\)

and the initial spectral tokens are formed as \(H^{0}
=
\Big[(\lambda_1\Vert\rho(\lambda_1))^\top,\dots,(\lambda_n\Vert\rho(\lambda_n))^\top\Big]^\top
\in\mathbb{R}^{n\times (d+1)}.\)

Consider a multi-head Specformer consisting of an \emph{eigenvalue encoding} module with $d^{t,H}$ heads and a \emph{graph
convolution} module with $d^{\ell,H}$ heads. For each graph convolution head $h\in\{1,\dots,d^{\ell,H}\}$, the eigenvalue
encoding module applies a $d^{t,H}$-head attention at each eigenvalue encoding layer $t$, producing head-specific attention scores on token
pairs $(i,j)\in V\times V$:
\begin{equation}
E^{(t,k,h)}_{ij}
=
\frac{\big(H^{t-1,k,h}_{i}W^{Q,t k h}\big)\big(H^{t-1,k,h}_{j}W^{K,t k h}\big)^{\top}}{\sqrt{d_t}}.
\end{equation}
Here, $k\in{1,\dots,d^{t,H}}$ indexes the attention head within token layer $t$, and $W^{Q,t k h},W^{K,t k h}\in
\mathbb{R}^{d_{t-1}\times d_t}$ are the corresponding projection matrices. The normalized attention coefficients are given by
$S_{\mathrm{Spec}}^{(t,k,h)}=\phi\!\big(E^{(t,k,h)}\big)$. The token-layer update for the $h$-th graph convolution head is
\begin{equation}
H^{t,h}
=
\Big(\big\|_{k=1}^{d^{t,H}}
S_{\mathrm{Spec}}^{(t,k,h)}\,H^{t-1,k,h} W^{V,t kh}\Big)W^{th,H},
\end{equation}
where $W^{V,t k h}\in\mathbb{R}^{d_{t-1}\times d_t}$ is the value projection for the $k$-th eigenvalue encoding head and
$W^{t,H}\in\mathbb{R}^{d^{t,H}d_t\times d_{t}}$ is the token-layer output projection. After $T$ eigenvalue layers, the aggregated
eigenvalue encoding output associated with the $h$-th graph convolution head is denoted by $\bar H^{(h)}=H^{T,h}$.

The graph convolution module then constructs a multi-head spectral filtering operator. For each graph convolution head
$h\in\{1,\dots,d^{\ell,H}\}$, a spectral filter is generated by \(\bar\lambda^{(h)}
=
\sigma\!\big(\bar H^{(h)} W_{\lambda}^{(h)}\big)
\in\mathbb{R}^{n},\) where $W_{\lambda}^{(h)} \in \mathbb{R}^{d_{T}\times 1}$ is a learnable parameter vector associated with the $h$-th graph
convolution head. This induces the graph convolution operator
\(
\label{eq:S_specformer}
S_{\mathrm{Specformer}}^{(\ell,h)}
=
U\,\mathrm{diag}\!\big(\bar\lambda^{(h)}\big)\,U^\top
\in\mathbb{R}^{n\times n}.\)

\section{Gaussian process Limits of Graph Transformers}
In this section, we analyse the GP limits of Graph Transformers architectures introduced in Section~\ref{sec:preliminaries}. We begin by introducing the kernels that govern the Gaussian process limit and explain their roles, then present their explicit formulas in Table~\ref{tab:kernels}, and finally interpret their implications for the corresponding architectures in Section~\ref{Ch:GP_Remark}. 

Our analysis is carried out within the Neural Network Gaussian Process (NNGP) framework. Specifically, we characterise the behavior of GAT, Graphormer, and Specformer in the \textbf{infinite-width and infinite-head limit}, denoted as $d_\ell, d_{\ell,H} \to \infty$, and under the independent Gaussian priors listed in Table~\ref{tab:weight_priors}. In this regime, the models admit kernel recursions that govern their layerwise propagation. In particular, we focus on the pre-activations $g^{\ell}$ with kernel $\Sigma^{(\ell)}$, and the post-activation kernel $K^{(\ell)}$ is obtained via the standard ReLU NNGP transformation (Equation~\ref{eq:relu-nngp-kernel}). The complete theorems are provided in Appendix~\ref{App:GP_Theorems}.

 \paragraph{Spatial Models: GAT-GP and Graphormer-GP} To provide a unified analysis of GAT and Graphormer, we define four kernels that govern the GP limit. These kernels model the propagation of information: we define the structural input (post-activation), model the attention mechanism (edge-level), aggregate these into a graph operator (convolution), and finally produce the node representations (node-level). Together, they form a recursion that characterizes the evolution of graph signals through deep layers.
\begin{itemize}
     \item \textbf{Post-Activation Kernel of the Previous Layer:} The propagation is initialized using the kernel from the previous iteration. For GAT, this is the post-activation kernel $K^{(\ell-1)}$. For Graphormer, this is the \textbf{augmented kernel} $\tilde{K}^{(\ell)} = \alpha K^{(\ell-1)} + (1-\alpha)R^{(\ell)}$ for $\alpha \in [0,1]$, which incorporates structural positional encoding $R^{(\ell)}$.
    \item \textbf{Edge-Level Logit Kernel ($K^{(\ell), E}$):} The attention logits $E^{(\ell)}$ converge to a centered Gaussian process $E^{(\ell)} \sim GP(0, K^{(\ell), E})$. This kernel describes the covariance between attention scores for different edges.
    \item \textbf{Convolution Kernel ($C^{(\ell)}$):} This kernel captures the covariance of the graph convolution operator across independent heads and is defined as \(
    C^{(\ell)}_{ai,bj} :=\mathbb{E}\Big[
  \bigl(S^{(\ell,1)}(X)\bigr)_{ai}\,
  \bigl(S^{(\ell,1)}(X)\bigr)_{bj}
\Big] \). Its explicit form is determined by the Edge-Level Logit Kernel $K^{(\ell),E}$.
    \item \textbf{Node-Level Kernel ($\Sigma^{(\ell)}$):} The final covariance between node representations $a$ and $b$ after neighborhood aggregation, where $\Sigma^{(\ell)}_{ab} := \mathbb{E}[g^\ell_{a1}(X) g^\ell_{b1}(X)]$.
\end{itemize}

\paragraph{Spectral Model: Specformer-GP}
Unlike spatial models, Specformer operates on the spectrum of $L$. Alongside the layerwise limits, when $d_t, d_{t,H} \to \infty$, the model converges to a Gaussian process characterized by a set of kernels:

\begin{itemize}
    \item \textbf{Spectral Token Kernel:} The representations of the $t$-th token layer, $H^{(t)}$, converge in distribution to a Gaussian process $H^{(t)} \sim GP(0, K_H^{(t)})$, which captures the evolution of spectral information through the eigenvalue encoding.
    \item \textbf{Learned Spectral Kernel ($K_\lambda$):} After $T$ token layers,  the learned spectral filter coefficients  $\bar{\lambda}$, induces a kernel defined as $K_{\lambda, ab} := \mathbb{E}[\bar{\lambda}_a \bar{\lambda}_b]$.
    \item \textbf{Node-Level Kernel ($\Sigma^{(\ell)}$)} This kernel propagates feature covariance by mapping previous-layer representations into the Laplacian eigenbasis for elementwise modulation by the induced spectral kernel ($K_\lambda$) before projecting them back to the node domain.
\end{itemize}

The specific kernels for each architecture are summarized in Table~\ref{tab:kernels}. The independent Gaussian weight priors considered are summarized in Table~\ref{tab:weight_priors} at the beginning of Appendix~\ref{App:GP_Theorems}. Formal theorems establishing the existence of the Gaussian process limits are provided in Appendix~\ref{App:GP_Theorems}, and detailed proofs of these theorems can be found in Appendix~\ref{app:proof of GP}.

\paragraph{Explicit Kernel Expressions.} Table~\ref{tab:kernels} summarizes a general kernel recursion, which can be specialized by considering specific choices of the attention and activation nonlinearities to obtain a closed-form expressions. Table~\ref{tab:linear_kernels} presents the closed-form kernels obtained under linear attention (formally in Corollaries~\ref{cor:GAT_kernel_linearized}, and \ref{cor:graphormer-linear}). For completeness, and to facilitate the comparison with attention-based architectures in the Section~\ref{Ch:Oversmoothing}, we also include the corresponding GCN-GP kernel from \citep{DBLP:conf/iclr/NiuA023}, which does not incorporate attention mechanisms. These kernels provide a tractable way to study the GP limit under simplified attention and activation choices, while still capturing the interaction between node features and graph structure. More general kernels can be obtained by considering RELU as output activation and applying Equation \ref{eq:relu-nngp-kernel}. In Appendix~\ref{App:GP_Theorems} Corollary~\ref{cor:GAT_kernel_closed_form}, we also present GAT with RELU attention.

\begin{table*}[t] 
\centering
\renewcommand{\arraystretch}{2} 
\caption{Summary of the GP kernels for graph transformers. The weight priors are in Table~\ref{tab:weight_priors} in Appendix~\ref{App:GP_Theorems}. 
The table presents both the edge/structural kernels ($K^{(\ell),E}_{ab}$ for spatial models or $K_{\lambda, ab}$ for Specformer) 
and the node-level kernels ($\Sigma^{(\ell)}_{ab}$) that describe the propagation of covariance across layers. 
For GAT-GP and Graphormer-GP, the edge-level kernels capture correlations between attention scores, while the node-level kernels aggregate these correlations.
The learned spectral kernel $K_\lambda$ of Specformer-GP modulates the mapping of node representations in the Laplacian eigenbasis.} 

\label{tab:kernels}
\begin{tabular}{@{}lll@{}}
\toprule
\textbf{Model} & \textbf{Edge/Structural Kernel ($K^{(\ell),E}_{ab}$ or ${K_\lambda}_{, ab}$)} & \textbf{Node Kernel $\Sigma^{(\ell)}_{ab}$} \\ \midrule
GAT-GP (Theorem \ref{thm:GAT_GP})& \(\sigma_w^2\sigma_v^2
   \bigl(K^{(\ell-1)}_{ab}+K^{(\ell-1)}_{ij}\bigr)\) &\(\sigma_H^2\sigma_w^2
\sum_{i\in N_a}\sum_{j\in N_b}
K^{(\ell-1)}_{ij}\,
C^{(\ell)}_{ai,bj}\) \\
Graphormer-GP (Theorem \ref{thm:Graphormer_GP})& \(\Big(
\sigma_Q^2\sigma_K^2\,
\tilde K^{(\ell-1)}_{ab}\,
\tilde K^{(\ell-1)}_{ij}
+
\sigma_b^2\,
\mathbf 1_{\phi(a,i)=\phi(b,j)}
\Big) \)& $\sigma_H^2 \sigma_w^2 \sum_{i,j \in V} \tilde{K}^{(\ell-1)}_{ij} C^{(\ell)}_{ai,bj}$ \\
Specformer-GP (Theorem \ref{thm:Specformer_GP})& $K_{\lambda, ab} = \mathbb{E}[\bar{\lambda}_a \bar{\lambda}_b]$ (Spectral) & $\sigma_H^2 \sigma_w^2 U (K_{\lambda} \odot (U^\top K^{(\ell-1)} U)) U^\top$  \\
\bottomrule
\end{tabular}
\end{table*}

\subsection{Remarks on GP kernels}
\label{Ch:GP_Remark}
The kernels in Table~\ref{tab:kernels} reveal how node features and graph structure interact in the GP limit. In the following, we discuss the interpretation of these kernels in the graph domain and their implications for node representation propagation.
\paragraph{\textbf{GAT-GP.}} 
The edge-level kernel $K^{(\ell),E}$ of GAT-GP characterizes the covariance between attention logits on two edges $(a,i)$ and $(b,j)$, determined by the similarity of their source nodes $a$ and $b$ and their target nodes $i$ and $j$. Edges whose endpoints are simultaneously similar in representation space exhibit stronger correlation. The node-level kernel $\Sigma^{(\ell)}$ then aggregates these edge contributions across all neighbor pairs $(i,j)$ with $i\in\mathcal{N}_a$ and $j\in\mathcal{N}_b$, combining the previous-layer feature similarity with the alignment induced by attention. This yields a kernel recursion describing how both features and graph structure govern node representation propagation across layers. 
\paragraph{\textbf{Graphormer-GP}}
Graphormer incorporates graph information
through positional embeddings.
The matrix $R$ can be interpreted as the covariance matrix
of the corresponding positional embeddings which can be computed as an inner product, i.e.,
\(
R^{(\ell)}_{ab} = \langle P_a^{(\ell)}, P_b^{(\ell)} \rangle
\).
Alternatively, instead of introducing explicit positional embeddings,
the graph structure itself can be directly leveraged to construct a
node-level graph-structure covariance matrix. The node-level kernel $\Sigma^{(\ell)}$ of Graphormer-GP indicates that attention correlations are strengthened for node pairs sharing the same structural relation, ensuring that graph information is explicitly leveraged in guiding attention. The edge-level kernel $K^{(\ell),E}$ then propagates the attention  correlations to the next layer through an aggregation over all node pairs.

\begin{mybox2}{hellblue}{}
 Unlike GAT which is restricted to local neighborhoods, Graphormer induces similarity connections between every pair of nodes in the graph.
\end{mybox2}

\paragraph{\textbf{Specformer.}} The convolution in Specformer is composed of \emph{spectral tokens} and \emph{node features}. 
The spectral tokens are first processed through $T$ layers of encoding, and passed through a decoder to construct a data-dependent spectral filtering that governs the propagation of node features. As detailed in Theorem \ref{thm:Specformer_GP} in Appendix~\ref{App:GP_Theorems}, (I) the spectral-token representations $H^t$, which are updated through attention, and (II) the node-feature pre-activations $g(X)$ converge in distribution to a Gaussian process.
 The learned spectral kernel $K_{\lambda}$ induces a kernel over the
learned spectral filter coefficients. The node level kernel $\Sigma^{l}$ then defines the recursion of the node-feature kernel: the previous-layer node covariance is
mapped into the Laplacian eigenbasis, modulated elementwise according to the induced
spectral kernel, and projected back to the node domain.

\begin{mybox2}{hellblue}{}
Spatial models aggregate information over local node neighborhoods, whereas Specformer operates in the graph spectral domain.
\end{mybox2}

\begin{table*}[t]
\renewcommand{\arraystretch}{2}
\centering
\caption{Closed-form expressions for the node-level Gaussian process (GP) kernels under linear attention and linear activation. 
These kernels provide an explicit characterization of kernel propagation for each architecture. 
For comparison, the GCN-GP kernel from \citep{DBLP:conf/iclr/NiuA023} is included, which does not involve attention.}

\label{tab:linear_kernels}
\begin{tabular}{@{}ll@{}}
\toprule
\renewcommand{\arraystretch}{2}
\textbf{Model} &\textbf{Node Kernel ($K^{(\ell)}$)} \\ \midrule
GCN-GP \citep{DBLP:conf/iclr/NiuA023}&  $K^{(\ell)} = \sigma_w^2AK^{\ell-1}A^T$ \\
GAT-GP (Corollary \ref{cor:GAT_kernel_linearized}) &  $K^{(\ell)} = \sigma_H^2\sigma_w^4\sigma_v^2 \Big(K^{(\ell-1)} \odot (A K^{(\ell-1)} A^\top) + A(K^{(\ell-1)} \odot K^{(\ell-1)})A^\top \Big)$ \\
Graphormer-GP (Corollary \ref{cor:graphormer-linear})& $K^{(\ell)}_{ab}
=
\sigma_H^2\sigma_w^2
\Bigg[
\sigma_Q^2\sigma_K^2\,
\tilde K^{(\ell)}_{ab}
\sum_{i,j\in V}
\tilde K^{(\ell-1)}_{ij}\,
\tilde K^{(\ell-1)}_{ij}+
\sigma_b^2
\sum_{i,j\in V}
\tilde K^{(\ell-1)}_{ij}\,
\mathbf 1_{\rho(a,i)=\rho(b,j)}
\Bigg]$
\\
Specformer-GP (Theorem \ref{thm:Specformer_GP}) & $K^{(\ell)} = \sigma_H^2\sigma_w^2 U (K_\lambda \odot (U^\top K^{(\ell-1)} U)) U^\top$ \\
\bottomrule
\end{tabular}
\end{table*}

\section{Analysis of GNN-GPs under Contextual Stochastic Block Model (CSBM)}
\label{Ch:Oversmoothing}
The GP formulation of graph transformers allows us to compare structural benefits and limitations of different architectures. Having introduced kernel recursion of graph transformers, we now focus on a key phenomenon in deep graph models: \textbf{Oversmoothing}.
In the kernel perspective, this corresponds to convergence of the node-level covariance toward a constant matrix. To understand whether graph transformers can structurally maintain discriminative signals across layers, it is useful to study their GP kernels on a controlled setting with known community structure.

In this section, we analyse how the resulting kernels behave when the underlying graph is governed by a \textbf{CSBM}. We consider population version of 2-CSBM with $n$ nodes partitioned into two equally sized communities, $\mathcal{C}_1$ and $\mathcal{C}_2$. In this setting, the probability of an edge existing between two nodes is $p$ if they belong to the same community and $q$ if they belong to different communities. The expected adjacency matrix takes the following block structure:
\(
A = 
\begin{pmatrix} 
p \mathbf{1}\mathbf{1}^\top & q \mathbf{1}\mathbf{1}^\top \\ 
q \mathbf{1}\mathbf{1}^\top & p \mathbf{1}\mathbf{1}^\top 
\end{pmatrix},
\) where $\mathbf{1}\mathbf{1}^\top$ denotes the all-ones matrix of size $\frac{n}{2} \times \frac{n}{2}$ \footnote{Throughout this section, SBM refers to the two-community stochastic block model introduced above.
}. A kernel that captures the community structure should mirror this block structure, i.e. \(K^{(\ell)} = 
\begin{pmatrix} 
x_\ell \mathbf{1}\mathbf{1}^\top & y_\ell \mathbf{1}\mathbf{1}^\top \\ 
y_\ell \mathbf{1}\mathbf{1}^\top & x_\ell \mathbf{1}\mathbf{1}^\top 
\end{pmatrix}\), where $x_\ell$ represents the kernel value for intra-community node pairs and $y_\ell$ for inter-community pairs at layer $\ell$, initialized by the feature covariance values $x_0$ and $y_0$ at the input layer ($\ell=0$). While we analyse the population version, our experiments use random CSBM realizations, which exhibit similar behavior. We summarise the evolution of the kernels (formulas for $x_\ell$ and $y_\ell$) and their asymptotic behaviour in Table~\ref{tab:sbm_summary}, highlighting how each model propagates structural information under the CSBM. The full corollaries and proofs for the closed-form expressions of the kernels are provided in Appendix~\ref{App:SBM_Theorems}.

\begin{table*}[h]
\centering
\renewcommand{\arraystretch}{2.5} 
\caption{\textbf{Evolution of the GP kernel $K^{(\ell)} = \left( \begin{smallmatrix} x_\ell \mathbf{1}\mathbf{1}^\top & y_\ell \mathbf{1}\mathbf{1}^\top \\ y_\ell \mathbf{1}\mathbf{1}^\top & x_\ell \mathbf{1}\mathbf{1}^\top \end{smallmatrix} \right)$ under CSBM and conditions for oversmoothing.} 
To preserve community structure, the kernel must maintain a gap between the intra-community ($x_\ell$) and inter-community ($y_\ell$) values; this discriminative signal is captured by the second term in each expression.
The second column indicates whether oversmoothing occurs as $\ell \to \infty$, i.e., whether the kernel loses its ability to distinguish the two communities. 
For graph transformers oversmoothing can be avoided, maintaining the discriminative signal across layers.}
\label{tab:sbm_summary}
\begin{tabular}{@{}lll@{}}
\toprule
\textbf{Model} & \textbf{Intra/Inter-Community Formulas ($x_\ell, y_\ell$)} & \textbf{Oversmoothing ($\ell \to \infty$)} \\ \midrule

GCN-GP & 
\makecell[l]{$\frac{1}{2}(\frac{n}{2})^{2\ell} \left[ (x_0+y_0)(p+q)^{2\ell} \pm (x_0-y_0)(p-q)^{2\ell} \right]$ \\ \footnotesize (Corollary~\ref{cor:GCN_kernel_SBM})} & 
\makecell[l]{Yes \\ \footnotesize (Corollary~\ref{cor:GCN_oversmoothing})} \\

GAT-GP & 
\makecell[l]{$\frac{1}{2} \frac{G^{2\ell}}{ (\frac{n}{2})^2 (p+q)^2} \left[ 1 \pm \left(\frac{x_0-y_0}{x_0+y_0}\right)F^\ell \right]$ \\ \footnotesize (Corollary~\ref{cor:GAT_kernel_SBM})} & 
\makecell[l]{No, if communities are separated ($F \ge 1$) \\ \footnotesize (Corollary ~\ref{cor:GAT_no_oversmoothing})} \\

Graphormer-GP &
\makecell[l]{$\tilde{x}_\ell = \alpha^\ell x_0 + (1-\alpha^\ell)p, \quad \tilde{y}_\ell = \alpha^\ell y_0 + (1-\alpha^\ell)q$ \\ \footnotesize (Corollary~\ref{cor:Graphormer_kernel_SBM})} & 
\makecell[l]{No, if the prior ($R$) captures communities \\ \footnotesize (Remark~\ref{rem:Graphormer_convergence})} \\

Specformer-GP & 
\makecell[l]{$\frac{1}{2} \left[ (x_0+y_0)\bar{\lambda}_1^{2\ell} \pm (x_0-y_0)\bar{\lambda}_2^{2\ell} \right]$ \\ \footnotesize (Corollary~\ref{cor:SpecFormer_kernel_SBM})} & 
\makecell[l]{No, if $|\bar{\lambda}_2| \ge |\bar{\lambda}_1|$ \\ \footnotesize (Remark~\ref{cor:Specformer_advantage})} \\

\bottomrule
\end{tabular}
\end{table*}

Oversmoothing can be formalized by examining the normalized kernel \(\frac{K^{(\ell)}} { {\frac{1}{n}tr(K^{(\ell)})}}\). Since the normalisation term corresponds to $x_\ell$, the discriminability of the model is captured by the ratio $y_\ell / x_\ell$: if $\lim_{\ell \to \infty} y_\ell / x_\ell = 1$, the kernel converges to a rank-one matrix of ones, meaning the representations of the two communities become indistinguishable.
\paragraph{GCN.} The collapse into a rank-one kernel is clearly visible in the Gaussian Process limit of standard GCN.
\begin{corollary}[Rank collapse in GCN-GP indicates oversmoothing]
\label{cor:GCN_oversmoothing}
For the GCN kernel presented in Corollary \ref{cor:GCN_kernel_SBM} (and in Table \ref{tab:sbm_summary}). The normalized kernel converges to the all-ones matrix:
\[
\lim_{\ell \to \infty} \frac{K^{(\ell)}}{\frac{1}{n}tr(K^{(\ell)})} =  \begin{pmatrix} 
\mathbf{1}\mathbf{1}^\top & \mathbf{1}\mathbf{1}^\top \\ 
\mathbf{1}\mathbf{1}^\top & \mathbf{1}\mathbf{1}^\top 
\end{pmatrix}.
\]
Consequently, $\text{rank}\left(\lim_{\ell \to \infty} \frac{K^{(\ell)}}{\frac{1}{n}tr(K^{(\ell)})}\right) = 1$, indicating that GCNs suffer from complete oversmoothing.
\end{corollary}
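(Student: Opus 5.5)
The plan is to start from the closed form of Corollary~\ref{cor:GCN_kernel_SBM}, as recorded in Table~\ref{tab:sbm_summary}:
\[
x_\ell,\;y_\ell=\tfrac12\left(\tfrac n2\right)^{2\ell}\Big[(x_0+y_0)(p+q)^{2\ell}\pm(x_0-y_0)(p-q)^{2\ell}\Big],
\]
with the plus sign for $x_\ell$ and the minus sign for $y_\ell$. Because $K^{(\ell)}$ keeps the two-block form, every diagonal entry equals $x_\ell$, so $\frac1n\mathrm{tr}(K^{(\ell)})=x_\ell$. The normalized kernel is therefore the block matrix with diagonal blocks $\mathbf 1\mathbf 1^\top$ and off-diagonal blocks $(y_\ell/x_\ell)\mathbf 1\mathbf 1^\top$. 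The whole claim then reduces to the scalar limit $y_\ell/x_\ell\to1$.

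To compute the ratio, I divide numerator and denominator by $(\tfrac n2)^{2\ell}(x_0+y_0)(p+q)^{2\ell}$. This gives
\[
\frac{y_\ell}{x_\ell}=\frac{1-c\,r^{2\ell}}{1+c\,r^{2\ell}},\qquad c=\frac{x_0-y_0}{x_0+y_0},\quad r=\frac{p-q}{p+q}.
\]
Since $p,q$ are edge probabilities, they are nonnegative. Assuming $p+q>0$ (a nonempty graph) and $q>0$, we get $|r|<1$, so $r^{2\ell}\to0$ and the ratio tends to $1$.

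We also need $x_0+y_0>0$, so that $c$ is well defined and the denominator stays away from zero. This holds because the initial block kernel $\begin{pmatrix} x_0\mathbf 1\mathbf 1^\top & y_0\mathbf 1\mathbf 1^\top\\ y_0\mathbf 1\mathbf 1^\top & x_0\mathbf 1\mathbf 1^\top\end{pmatrix}$ is PSD. Its eigenvalue along $\mathbf 1$ is $\tfrac n2(x_0+y_0)\ge0$, and we exclude the degenerate zero case. The same PSD argument gives $|y_0|\le x_0$, hence $|c|\le1$, although boundedness of $c$ is all the limit requires.

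The hypotheses on $p$ and $q$ are the only real obstacle. I will state explicitly that the result needs $q>0$. When $q=0$ we have $|r|=1$ and the ratio stays at $(1-c)/(1+c)$ for all $\ell$, so no collapse occurs; this is consistent with the two communities being disconnected. With $y_\ell/x_\ell\to1$, the normalized kernel converges entrywise to the all-ones $n\times n$ matrix $\mathbf 1_n\mathbf 1_n^\top$. That matrix has rank one, which gives the rank statement and the oversmoothing interpretation.
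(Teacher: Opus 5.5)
Your argument is the same as the paper's. You identify $\frac1n\mathrm{tr}(K^{(\ell)})=x_\ell$, reduce the claim to the scalar ratio, rewrite it as $(1-c\,r^{2\ell})/(1+c\,r^{2\ell})$, and let $r^{2\ell}\to0$. Your added check that $x_0+y_0>0$ is something the paper silently assumes.

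One step is misstated, though: $q>0$ (with $p+q>0$) does not give $|r|<1$. For nonnegative $p,q$ one has $|p-q|<p+q$ if and only if $\min(p,q)>0$. If $p=0<q$, then $r=-1$ and $r^{2\ell}=1$. In fact $AK^{(0)}A=q^2(\tfrac n2)^2K^{(0)}$ in that case, so the ratio stays at $(1-c)/(1+c)=y_0/x_0$ for all $\ell$, exactly as in your $q=0$ case. The correct hypothesis is therefore $p>0$ and $q>0$, which is also what the paper's unqualified ``$|p-q|<|p+q|$'' tacitly requires.

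A smaller slip: $|y_0|\le x_0$ does not imply $|c|\le1$. For example, $x_0=1$, $y_0=-\tfrac12$ gives $c=3$. What positive semidefiniteness actually gives is $c\ge0$, and that is the more useful fact. It yields $1+c\,r^{2\ell}\ge1$, so $x_\ell>0$ for every $\ell$ and the trace normalization is well defined at all depths, not just eventually. Neither point changes the structure of your proof.
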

The detailed derivations for the closed-form expressions of the GCN kernel and the spectral analysis leading to rank collapse are provided in Appendix \ref{Ap:Cor_SBM_GCN}. 
\paragraph{GAT.} Unlike the GCN kernel, which collapses to a rank-one matrix, we next show that GAT-GP avoids oversmoothing. Notably, while the GCN kernel follows a simple linear recurrence, the GAT kernel involves a coupled, non-linear two-variable system. The derivation of the GAT kernel under SBM (Corollary \ref{cor:GAT_kernel_SBM}) and its limit (Corollary \ref{cor:GAT_no_oversmoothing}) are provided in Appendix \ref{Ap:GAT_SBM}. The GAT kernel is governed by the \textbf{global growth factor} $G = (x_0 + y_0)(p + q)^2 (\frac{n}{2})^2$ and the \textbf{structural preservation factor} $F = 2\frac{p^2 - pq + q^2}{(p + q)^2}$. Here, $G$ controls the overall magnitude of the kernel, while $F$ determines whether the community structure is preserved across layers.
In particular, when the communities are sufficiently well-separated, $F$ ensures that the kernel maintains its ability to distinguish between them even in the infinite-depth limit.  
This property is formalised in the following corollary:

\begin{corollary}[GAT-GP avoids rank collapse indicating the preservation of discriminative community structure]
\label{cor:GAT_no_oversmoothing}
For the GAT kernel defined in Corollary \ref{cor:GAT_kernel_SBM}, the normalized kernel converges to: 
\(
\lim_{\ell \to \infty} \frac{K^{(\ell)}}{\frac{1}{n}\text{tr}(K^{(\ell)})} = \begin{pmatrix} 
\mathbf{1}\mathbf{1}^\top & \gamma \mathbf{1}\mathbf{1}^\top \\ 
\gamma \mathbf{1}\mathbf{1}^\top & \mathbf{1}\mathbf{1}^\top 
\end{pmatrix}, \quad \text{where } \gamma = \lim_{\ell \to \infty} \frac{1 - \left( \frac{x_0 - y_0}{x_0 + y_0} \right) F^\ell}{1 + \left( \frac{x_0 - y_0}{x_0 + y_0} \right) F^\ell}.
\)
Consequently, if $F \geq 1$, which occurs when \(\frac{p}{q}\) is bounded away from 1, the kernel maintains a rank of $2$ and preserves community separation as $\ell \to \infty$, avoiding oversmoothing.
\end{corollary}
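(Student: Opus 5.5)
The plan is to start from the closed form of Corollary~\ref{cor:GAT_kernel_SBM}, as summarised in Table~\ref{tab:sbm_summary}:
\[
x_\ell,\;y_\ell \;=\; \tfrac12\,\frac{G^{2\ell}}{(\tfrac n2)^2(p+q)^2}\Bigl[1 \pm \beta F^\ell\Bigr],
\qquad \beta := \frac{x_0-y_0}{x_0+y_0},
\]
with $+$ for $x_\ell$ and $-$ for $y_\ell$. The kernel $K^{(\ell)}$ is block constant with diagonal value $x_\ell$, so $\frac1n\mathrm{tr}(K^{(\ell)}) = x_\ell$. The normalized kernel is therefore the block matrix with entries $1$ on the diagonal blocks and $y_\ell/x_\ell$ on the off-diagonal blocks. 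The common prefactor $\tfrac12 G^{2\ell}/((\tfrac n2)^2(p+q)^2)$ cancels in this ratio, which gives
\[
\frac{y_\ell}{x_\ell} = \frac{1-\beta F^\ell}{1+\beta F^\ell}.
\]
The limit statement then follows at once by defining $\gamma$ as the limit of this expression, whenever it exists. I will assume $x_0 > |y_0|$, so that $x_0+y_0>0$ and $\beta\in(0,1]$; this keeps the denominator nonzero for small $\ell$.

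Next I need to show that $F\ge 1$ exactly when $p/q$ is bounded away from $1$, and compute $\gamma$ in each regime. The condition $F\ge1$ means $2(p^2-pq+q^2)\ge (p+q)^2$, which simplifies to $p^2-4pq+q^2\ge 0$. With $r=p/q$ this is $r^2-4r+1\ge0$, so $r\ge 2+\sqrt3$ or $r\le 2-\sqrt3$. This is the precise meaning of ``bounded away from $1$'', and it covers both the homophilic and heterophilic regimes.

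The regimes for $\gamma$ are as follows.
\begin{itemize}
\item If $F=1$, then $\gamma = (1-\beta)/(1+\beta) = y_0/x_0$. Whenever $x_0\neq y_0$ we have $\gamma\neq 1$, so the limit is the block matrix with blocks $\mathbf 1\mathbf 1^\top$ and $\gamma\mathbf 1\mathbf 1^\top$, with $|\gamma|<1$.
\item If $F>1$, then $\beta F^\ell\to\infty$ and $\gamma=-1$. The off-diagonal blocks become $-\mathbf 1\mathbf 1^\top$, the maximal separation.
\end{itemize}
For the rank, the limiting matrix equals $M\otimes \mathbf 1\mathbf 1^\top$ with
\[
M=\begin{pmatrix}1&\gamma\\ \gamma&1\end{pmatrix},
\]
so its rank is $\operatorname{rank}(M)$. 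Since $\det M = 1-\gamma^2$, the rank is $2$ if and only if $|\gamma|\neq 1$.

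The main obstacle is the case $F>1$. There the limit is $\gamma=-1$, and $M\otimes\mathbf 1\mathbf 1^\top$ is then formally rank one, namely $vv^\top$ with $v=(\mathbf 1,-\mathbf 1)$. However, it is not the oversmoothed all-ones matrix: the two communities are perfectly anti-correlated. I would handle this in one of two ways.
\begin{itemize}
\item State that for every finite $\ell$ the normalized kernel has rank $2$, since $|y_\ell/x_\ell|<1$, and that the limit is not the constant matrix, so $y_\ell/x_\ell\not\to 1$. That is the oversmoothing criterion the paper uses.
\item Alternatively, interpret ``rank 2'' in terms of the centered representation.
\end{itemize}
A further technical point is that $1+\beta F^\ell$ is never zero. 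This holds because $\beta>0$ and $F>0$; for $F<1$ one simply notes $\gamma=1$ as the contrast case.

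In summary, the argument reduces to three steps: the algebraic cancellation, the case analysis on $F$, and the $2\times2$ rank computation. The only real care needed is in the $F>1$ interpretation.
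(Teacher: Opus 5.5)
Your argument is correct and follows the paper's proof: cancel the common prefactor, use $\frac1n\mathrm{tr}(K^{(\ell)})=x_\ell$ to get $y_\ell/x_\ell=(1-\beta F^\ell)/(1+\beta F^\ell)$, and reduce $F\ge 1$ to $p^2-4pq+q^2\ge 0$. Your case analysis is more careful than the paper, which only notes that $\beta F^\ell$ ``does not vanish'' for $F\ge 1$ and then asserts a rank-two limit. Taking Corollary~\ref{cor:GAT_kernel_SBM} at face value, you are right that this assertion holds only at $F=1$, where $\gamma=y_0/x_0$. For $F>1$ the closed form gives $\gamma=-1$ and the rank-one limit $vv^\top$ with $v=(\mathbf 1,-\mathbf 1)$. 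So what the closed form actually supports on all of $F\ge 1$ is your first fallback: rank two at every finite depth, and $\gamma\neq 1$. (Minor: $x_0>|y_0|$ gives $\beta\in(0,\infty)$, not $(0,1]$; you only use $\beta>0$.)

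Be aware, though, that your ``main obstacle'' is an artifact of that closed form, which does not solve the recurrence it is derived from. Already at $\ell=1$ the exact update gives $S_1=(\frac n2)^2\bigl[(p+q)^2S_0^2+(p^2+q^2)\Delta_0^2\bigr]$. The closed form instead gives $S_1=(\frac n2)^2(p+q)^2S_0^2$, dropping the $\Delta_0^2$ term, so it is exact only when $x_0=y_0$. A quick sanity check: for $p,q,x_0,y_0\ge 0$, every term in the exact expression for $y_{\ell+1}$ is nonnegative, so $\gamma=-1$ cannot occur.

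To get a conclusion that is true for the linear GAT-GP recursion itself, track $r_\ell:=\Delta_\ell/S_\ell$. It satisfies exactly $r_{\ell+1}=F r_\ell/(1+c\,r_\ell^2)$ with $c=(p^2+q^2)/(p+q)^2$, so that $F=3c-1\le 2$.
\begin{itemize}
\item For $F\le 1$ this map lies strictly below the identity on $(0,\infty)$, so $r_\ell\to 0$ and $\gamma=1$.
\item For $F>1$ one checks that $r_\ell\to r^\ast=\sqrt{(F-1)/c}\in(0,1]$. Hence $\gamma=(1-r^\ast)/(1+r^\ast)\in[0,1)$, and the limit genuinely has rank two with no reinterpretation needed.
\end{itemize}
So the right threshold is the strict one, $F>1$. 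The boundary case $F=1$, which both you and the paper treat as safe, actually oversmooths, though only slowly ($r_\ell\sim(2c\ell)^{-1/2}$).
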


\paragraph{Graphormer.} Consider the Graphormer kernel from Corollary \ref{cor:graphormer-linear} in Appendix \ref{App:GP_Theorems} without the bias term, and assume that the positional encodings capture all structural information of the graph. In the SBM setting, this implies that the spatial relation matrix coincides with the adjacency matrix, i.e., $R = A$. Under SBM, the block entries $\tilde{x}_\ell$ and $ \tilde{y}_\ell$ of $\tilde{K}^{(\ell)}$ are determined by the recurrence given in Table~\ref{tab:sbm_summary}. The normalized kernel from Table~\ref{tab:linear_kernels} without the bias term can be written as $K^{(\ell)} = \tilde{K}^{(\ell)} \cdot Z_{\ell-1}$, with $Z_{\ell-1} = tr(A^\top (\tilde{K}^{(\ell-1)} \odot \tilde{K}^{(\ell-1)}))$. Since our analysis focuses on whether the kernel preserves the block structure induced by the SBM, the trace term does not play a role. Consequently, it suffices to study the evolution of $\tilde{K}^{(\ell)}$.

\begin{remark}[Graphormer Convergence]
\label{rem:Graphormer_convergence}
As the number of layers $\ell \to \infty$, the unnormalized Graphormer kernel $\tilde{K}^{(\ell)}$ converges to the spatial relation matrix $R$. In the SBM setting where $R = A$, the kernel effectively recovers the ground truth, improving its ability to distinguish communities with increasing depth. This benefit, however, relies entirely on the quality of the positional encodings; in a real-world dataset if $R$ is uninformative, the kernel may collapse toward a non-discriminative prior, resulting in oversmoothing.
\end{remark}

\paragraph{Specformer.} The evolution of the Specformer kernel under SBM is presented in Table~\ref{tab:sbm_summary}. Unlike the GCN and Graphormer kernels, the Specformer uses a learnable spectral filter, which is characterised by the parameters $\bar{\lambda}_1$ and $\bar{\lambda}_2$ that determine the cross-covariance $K_{\lambda}$ (Equation \ref{eq:specformer_lambda_kernel_T}). These eigenvalues control how different spectral components are amplified across layers, and therefore determine whether the kernel preserves the block structure induced by the SBM. Depending on the choice of $\bar{\lambda}_1$ and $\bar{\lambda}_2$, Specformer can either collapse to the GCN behaviour or maintain community separation, as detailed in the following remarks.

\begin{remark}[Specformer Collapses to GCN]
\label{rem:SpecFormer_GCN}
The Specformer-GP kernel collapses to the GCN-GP kernel if the learned spectral weights correspond to the eigenvalues of the adjacency matrix, i.e., $\bar{\lambda}_1 = \frac{n}{2}(p+q)$ and $\bar{\lambda}_2 = \frac{n}{2}(p-q)$. In this case, the Specformer kernel exhibits the same oversmoothing behaviour  (Corollary~\ref{cor:GCN_oversmoothing}).
\end{remark}

\begin{remark}
\label{cor:Specformer_advantage}
Specformer can preserve community structure, even in the limit $\ell \to \infty$, when the learned spectral filter satisfies
$|\bar{\lambda}_2| \geq |\bar{\lambda}_1|$, as this amplifies the spectral direction that carries the community information.
\end{remark}

\begin{mybox}{hellblue}{\textcolor{black}{Summary of Oversmoothing Behavior}}
While GCN-GP is structurally destined for rank collapse and oversmoothing, attention-based models provide mechanisms to preserve structural information. GAT-GP avoids oversmoothing under a data-dependent condition: communities need to be sufficiently separated, which is a standard expectation for graph learning tasks. Graphormer-GP relies on a prior that encodes meaningful structural information to prevent oversmoothing. In contrast, Specformer-GP offers a learnable solution, where avoiding oversmoothing is determined by the spectral weights optimized during training rather than by the data or prior. Together, these findings highlight how architectural design influence oversmoothing.
\end{mybox}

\begin{wrapfigure}{!}{0.48\columnwidth}
    \centering
    \includegraphics[width=\linewidth]{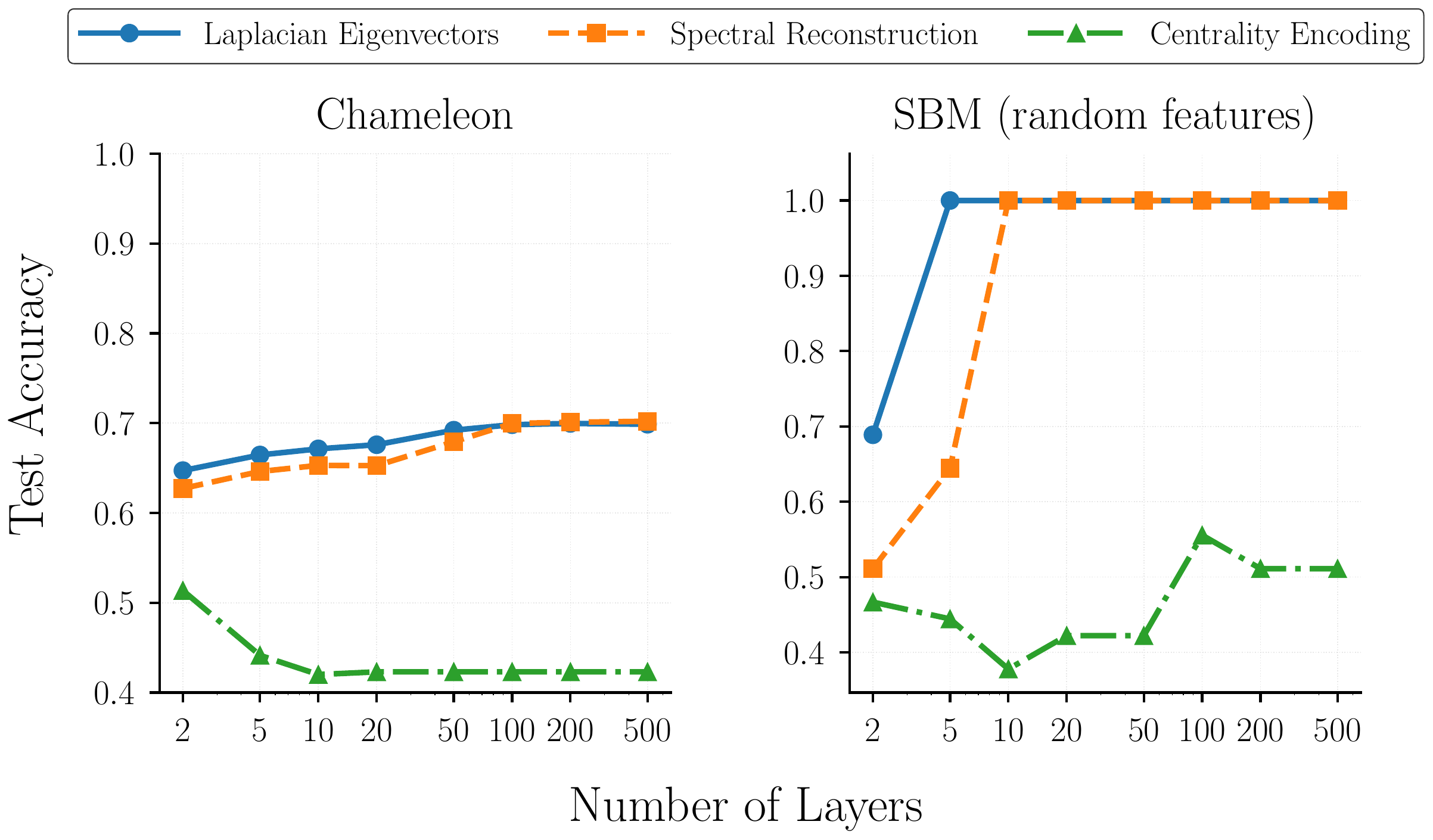}
    \caption{\textbf{Oversmoothing behaviour and the impact of positional encodings.} Test accuracy of \textbf{Graphormer-GP} on Chameleon (left) and SBM with random features (right) as a function of the number of layers. While accuracy in graph models typically suffers from performance degradation with an increasing number of layers, \textbf{Graphormer-GP exhibits increasing accuracy with depth} when utilizing informative positional encodings, such as Laplacian Eigenvectors (blue) and Spectral Reconstruction (orange). This empirical trend aligns with the theoretical results in Corollary~\ref{rem:Graphormer_convergence}.}
    \label{fig:pe_oversmoothing_graphormer}
\end{wrapfigure}

\section{Discussions and Experiments}
While graph transformers are notoriously difficult to analyse due to complex attention mechanisms and training dynamics, our Gaussian Process (GP) framework provides a mathematically rigorous closed-form equivalent. We provide the first kernel equivalent for graph transformers, thereby offering a tool for studying their structural properties. A structural phenomenon we investigate is \textbf{oversmoothing}. 
In Section~\ref{Ch:Oversmoothing}, we formalise this behaviour by analysing GP kernels under CSBM, revealing a fundamental distinction: while \textbf{GCN-GP is destined to oversmooth} (Corollary~\ref{cor:GCN_oversmoothing}), attention-based architectures can preserve discriminative community representations as depth increases. Consistent with this theory, evaluating GP kernels on synthetic and benchmark datasets (homophilic Pubmed \citep{sen2008collective} and heterophilic Chameleon \citep{rozemberczki2019gemsec}) shows that \textbf{GCN-GP performance deteriorates with depth}, whereas \textbf{GAT-GP remains resilient to oversmoothing} (Figure~\ref{fig:oversmoothing_comparison}, left; Corollary~\ref{cor:GAT_no_oversmoothing}). Details of the experiments are provided in Appendix~\ref{app:experimental details}, and the code to reproduce the experiments is available at \url{https://figshare.com/s/4ad5245d4d6c405f46f0}.

\begin{wrapfigure}[18]{r}{0.48\columnwidth}
    \centering
    \vspace{-280pt} 
    \includegraphics[width=\linewidth]{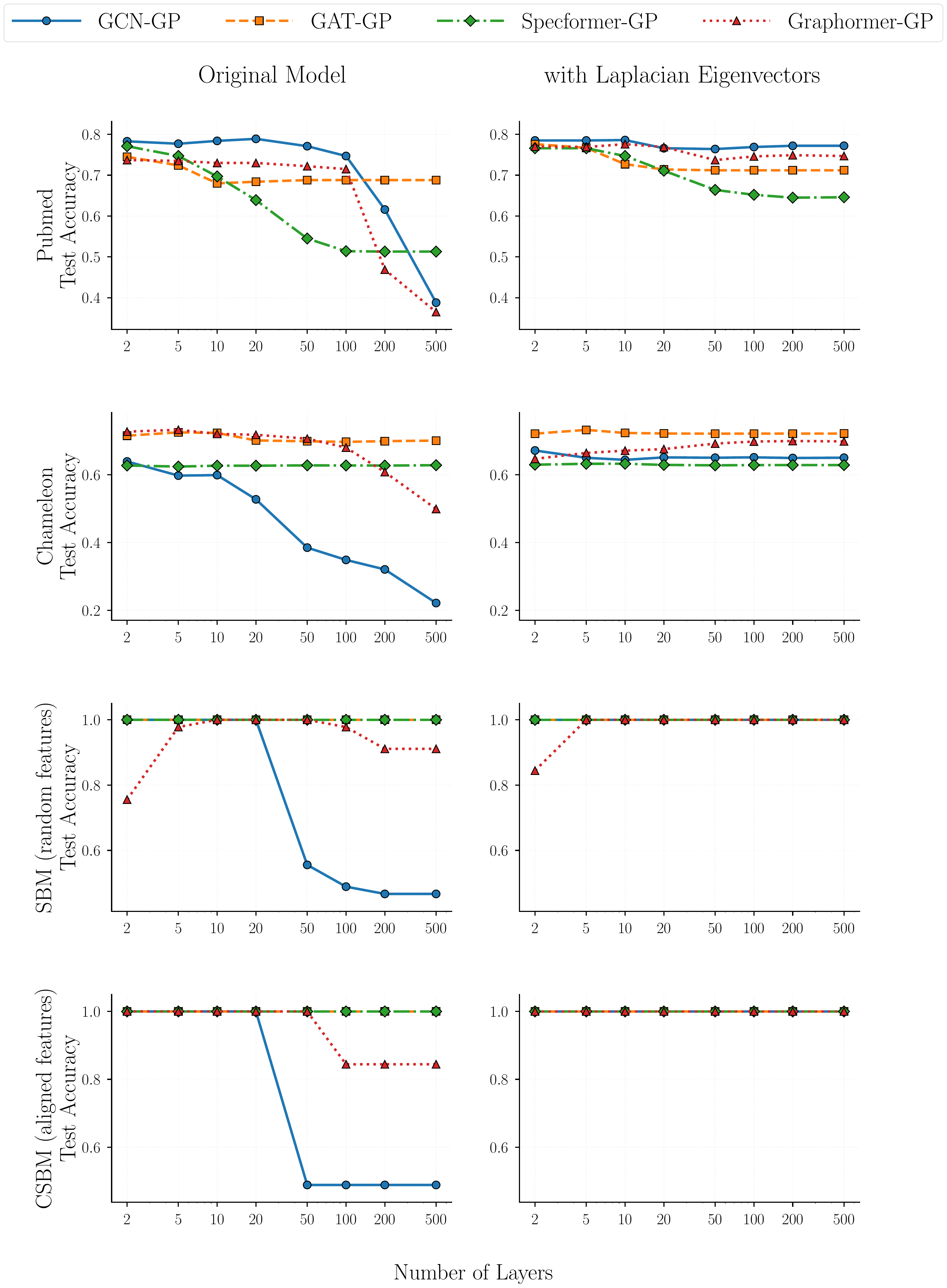}
    \caption{\textbf{Oversmoothing behaviour across various GNN-GP architectures.} Test accuracy is shown as a function of depth for the original models (left) and models augmented with Laplacian Eigenvectors (right). In the original configurations, \textbf{GCN-GP (blue) consistently suffers from a performance drop-off as the number of layers increases}. In contrast, \textbf{GAT-GP (orange) demonstrates resilience to depth}, maintaining stable performance as argued in Corollary~\ref{cor:GAT_no_oversmoothing}. The behavior of Specformer-GP and Graphormer-GP depends on the dataset. Notably, upon incorporating Laplacian Eigenvectors (right), the tendency to oversmooth is significantly mitigated across all architectures.}
    \label{fig:oversmoothing_comparison}
\end{wrapfigure}

\textbf{Graphormer-GP is sensitive to structural prior.}
Remark \ref{rem:Graphormer_convergence}, argues that when the structural prior is informative, Graphormer benefits from depth, as the kernel converges toward the spatial relation matrix (e.g., the ground-truth adjacency in SBM). However, when an informative structural prior is not explicitly known for a real-world dataset, Graphormer may still suffer from oversmoothing as the kernel collapses toward a \textbf{non-discriminative prior}. Since Graphormer can incorporate different structural priors, we ran additional experiments on SBM with random features and Chameleon. We evaluate three positional encodings (without bias) at large depth (Figure~\ref{fig:pe_oversmoothing_graphormer}): \textbf{Laplacian Eigenvectors}, \textbf{Spectral Reconstruction} (low-rank Laplacian approximation), and \textbf{Centrality Encoding} (from \cite{DBLP:conf/nips/YangLXLLASSX21}). Across both datasets, Laplacian-based encodings outperform centrality encoding and resist oversmoothing. Motivated by these results, we incorporate Laplacian Eigenvectors into the other GP architectures (Figure~\ref{fig:oversmoothing_comparison}, right) and find that they can mitigate the onset of oversmoothing.


\paragraph{Broader Perspective}
The GP equivalence for graph transformers established in this work provides a foundation for a broader theoretical analysis. Analysing the model in the infinite-width regime, independent of training dynamics, yields corresponding kernel equivalents that enable the study of structural properties. Among the central questions in such analyses is expressivity, as the model’s representational power can be characterised by the expressivity of the corresponding kernel. While our current analysis demonstrates community separation at large depth in the CSBM setting, it can be extended to study other critical structural phenomena, including \emph{oversquashing} (the bottleneck of information propagation) and the model's \emph{sensitivity to heterophily} under more complex random graph models. GP equivalence has already successfully enabled the study of generalisation \citep{Seeger2003PACBayesianGE} and inductive biases \citep{li2021metalearning} in deep learning. This work provides the first unified theoretical lens based on GP for graph transformers, laying the foundation for a deeper understanding of graph attention mechanisms.


\newpage
\bibliography{references}

@inproceedings{DBLP:conf/iclr/NiuA023,
  author       = {Zehao Niu and
                  Mihai Anitescu and
                  Jie Chen},
  title        = {Graph Neural Network-Inspired Kernels for Gaussian Processes in Semi-Supervised
                  Learning},
  booktitle    = {The Eleventh International Conference on Learning Representations,
                  {ICLR}},
  year         = {2023}
}

@inproceedings{DBLP:conf/icml/ZhangZM24,
  author       = {Bohang Zhang and
                  Lingxiao Zhao and
                  Haggai Maron},
  editor       = {Ruslan Salakhutdinov and
                  Zico Kolter and
                  Katherine A. Heller and
                  Adrian Weller and
                  Nuria Oliver and
                  Jonathan Scarlett and
                  Felix Berkenkamp},
  title        = {On the Expressive Power of Spectral Invariant Graph Neural Networks},
  booktitle    = {Forty-first International Conference on Machine Learning, {ICML} 2024,
                  Vienna, Austria, July 21-27, 2024},
  series       = {Proceedings of Machine Learning Research},
  volume       = {235},
  pages        = {60496--60526},
  publisher    = {{PMLR} / OpenReview.net},
  year         = {2024},
}

@article{DBLP:journals/tmlr/SabanayagamEG23,
  author       = {Mahalakshmi Sabanayagam and
                  Pascal Mattia Esser and
                  Debarghya Ghoshdastidar},
  title        = {Analysis of Convolutions, Non-linearity and Depth in Graph Neural
                  Networks using Neural Tangent Kernel},
  journal      = {Trans. Mach. Learn. Res.},
  volume       = {2023},
  year         = {2023},
 }

@inproceedings{DBLP:conf/nips/Keriven22,
  author       = {Nicolas Keriven},
  editor       = {Sanmi Koyejo and
                  S. Mohamed and
                  A. Agarwal and
                  Danielle Belgrave and
                  K. Cho and
                  A. Oh},
  title        = {Not too little, not too much: a theoretical analysis of graph (over)smoothing},
  booktitle    = {Advances in Neural Information Processing Systems 35: Annual Conference
                  on Neural Information Processing Systems 2022, NeurIPS 2022, New Orleans,
                  LA, USA, November 28 - December 9, 2022},
  year         = {2022},
}

@article{DBLP:journals/corr/AydaySG26,
  author       = {Nil Ayday and
                  Mahalakshmi Sabanayagam and
                  Debarghya Ghoshdastidar},
  title        = {Why does your graph neural network fail on some graphs? Insights from
                  exact generalisation error},
  journal      = {CoRR},
  volume       = {abs/2509.10337},
  year         = {2025},
}

@inproceedings{DBLP:conf/nips/BiettiM19,
  author       = {Alberto Bietti and
                  Julien Mairal},
  title        = {On the Inductive Bias of Neural Tangent Kernels},
  booktitle    = {Advances in Neural Information Processing Systems},
  year         = {2019},
}

@inproceedings{Matthews2018,
  author    = {Matthews, Alexander G. de G. and Hron, Jiri and Rowland, Mark and Turner, Richard E. and Ghahramani, Zoubin},
  title     = {Gaussian Process Behaviour in Wide Deep Neural Networks},
  booktitle = {International Conference on Learning Representations (ICLR)},
  year      = {2018}
}

@article{Blum1958,
  author  = {Blum, J. R. and Kiefer, J. and Rosenblatt, M.},
  title   = {Distribution Free Tests of Independence Based on the Sample Distribution Function},
  journal = {Annals of Mathematical Statistics},
  volume  = {29},
  pages   = {485--498},
  year    = {1958}
}

@book{Billingsley1986,
  author    = {Billingsley, Patrick},
  title     = {Probability and Measure},
  edition   = {2},
  publisher = {Wiley},
  year      = {1986}
}

@inproceedings{InfiniteAttention,
  author    = {Hron, Jiri and Bahri, Yasaman and Sohl-Dickstein, Jascha and Novak, Roman},
  title     = {Infinite Attention: {NNGP} and {NTK} for Deep Attention Networks},
  booktitle = {Proceedings of the 37th International Conference on Machine Learning (ICML)},
  series    = {Proceedings of Machine Learning Research (PMLR)},
  year      = {2020}
}

@inproceedings{Lee2018,
  author    = {Lee, Jaehoon and Bahri, Yasaman and Novak, Roman and Schoenholz, Samuel S. and Sohl-Dickstein, Jascha and Pennington, Jeffrey},
  title     = {Deep Neural Networks as Gaussian Processes},
  booktitle = {International Conference on Learning Representations (ICLR)},
  year      = {2018}
}

@inproceedings{Novak2019,
  author    = {Novak, Roman and Xiao, Lechao and Bahri, Yasaman and Lee, Jaehoon and Yang, Greg and Pennington, Jeffrey and Sohl-Dickstein, Jascha},
  title     = {Bayesian Deep Convolutional Networks with Many Channels are Gaussian Processes},
  booktitle = {International Conference on Learning Representations (ICLR)},
  year      = {2019}
}

@inproceedings{GarrigaAlonso2019,
  author    = {Garriga-Alonso, Adri{\`a} and Rasmussen, Carl Edward and Aitchison, Laurence},
  title     = {Deep Convolutional Networks as Shallow Gaussian Processes},
  booktitle = {International Conference on Learning Representations (ICLR)},
  year      = {2019}
}

@misc{Yang2019a,
  author       = {Yang, Greg},
  title        = {Scaling Limits of Wide Neural Networks with Weight Sharing: Gaussian Process Behavior, Gradient Independence, and Neural Tangent Kernel},
  howpublished = {arXiv preprint},
  year         = {2019},
  eprint       = {1902.04760},
  archivePrefix= {arXiv}
}

@article{sen2008collective,
  author  = {Sen, Prithviraj and Namata, Galileo and Bilgic, Mustafa and Getoor, Lise and Galligher, Brian and Eliassi-Rad, Tina},
  title   = {Collective Classification in Network Data},
  journal = {AI Magazine},
  volume  = {29},
  number  = {3},
  pages   = {93--106},
  year    = {2008}
}

@inproceedings{rozemberczki2019gemsec,
  author    = {Rozemberczki, Benedek and Davies, Ryan and Sarkar, Rik and Sutton, Charles},
  title     = {{GEMSEC}: Graph Embedding with Self Clustering},
  booktitle = {Proceedings of the IEEE/ACM International Conference on Advances in Social Networks Analysis and Mining (ASONAM)},
  pages     = {65--72},
  year      = {2019}
}

@inproceedings{DBLP:conf/nips/YangLXLLASSX21,
  author       = {Junhan Yang and
                  Zheng Liu and
                  Shitao Xiao and
                  Chaozhuo Li and
                  Defu Lian and
                  Sanjay Agrawal and
                  Amit Singh and
                  Guangzhong Sun and
                  Xing Xie},
  title        = {GraphFormers: GNN-nested Transformers for Representation Learning
                  on Textual Graph},
  booktitle    = {Advances in Neural Information Processing Systems 34: Annual Conference
                  on Neural Information Processing Systems },
  year         = {2021}
}

@inproceedings{DBLP:conf/faiml/ZhaoWHG23,
  author       = {Weichen Zhao and
                  Chenguang Wang and
                  Congying Han and
                  Tiande Guo},
  title        = {Exploring Over-smoothing in Graph Attention Networks from the Markov
                  Chain Perspective},
  booktitle    = {Proceedings of the International Conference on Frontiers of Artificial
                  Intelligence and Machine Learning, {FAIML}},
  year         = {2023}
}

@inproceedings{DBLP:conf/nips/WuAWJ23,
  author       = {Xinyi Wu and
                  Amir Ajorlou and
                  Zihui Wu and
                  Ali Jadbabaie},
  title        = {Demystifying Oversmoothing in Attention-Based Graph Neural Networks},
  booktitle    = {Advances in Neural Information Processing Systems 36: Annual Conference
                  on Neural Information Processing Systems 2023},
  year         = {2023}
}

@inproceedings{DBLP:conf/iclr/VelickovicCCRLB18,
  author       = {Petar Velickovic and
                  Guillem Cucurull and
                  Arantxa Casanova and
                  Adriana Romero and
                  Pietro Li{\`{o}} and
                  Yoshua Bengio},
  title        = {Graph Attention Networks},
  booktitle    = {6th International Conference on Learning Representations},
  year         = {2018}
}

@inproceedings{DBLP:conf/iclr/BoSWL23,
  author       = {Deyu Bo and
                  Chuan Shi and
                  Lele Wang and
                  Renjie Liao},
  title        = {Specformer: Spectral Graph Neural Networks Meet Transformers},
  booktitle    = {The Eleventh International Conference on Learning Representations,
                  {ICLR}},
  year         = {2023}
}

@inproceedings{DBLP:conf/aaai/LiHW18,
  author       = {Qimai Li and
                  Zhichao Han and
                  Xiao{-}Ming Wu},
  title        = {Deeper Insights Into Graph Convolutional Networks for Semi-Supervised
                  Learning},
  booktitle    = {Proceedings of the Thirty-Second {AAAI} Conference on Artificial Intelligence,
                  (AAAI-18)},
  year         = {2018}
}

@inproceedings{DBLP:conf/iclr/WuCWJ23,
  author       = {Xinyi Wu and
                  Zhengdao Chen and
                  William Wei Wang and
                  Ali Jadbabaie},
  title        = {A Non-Asymptotic Analysis of Oversmoothing in Graph Neural Networks},
  booktitle    = {The Eleventh International Conference on Learning Representations},
  year         = {2023}
}

@inproceedings{
lee2018deep,
title={Deep Neural Networks as Gaussian Processes},
author={Jaehoon Lee and Jascha Sohl-dickstein and Jeffrey Pennington and Roman Novak and Sam Schoenholz and Yasaman Bahri},
booktitle={International Conference on Learning Representations},
year={2018},
}

@inproceedings{DBLP:conf/iclr/0002Y21,
  author       = {Uri Alon and
                  Eran Yahav},
  title        = {On the Bottleneck of Graph Neural Networks and its Practical Implications},
  booktitle    = {9th International Conference on Learning Representations, 2021},
  year         = {2021}, 
}

@article{DBLP:journals/corr/abs-1905-09550,
  author       = {Hoang NT and
                  Takanori Maehara},
  title        = {Revisiting Graph Neural Networks: All We Have is Low-Pass Filters},
  journal      = {CoRR},
  year         = {2019}
}

@article{wang2019han,
  title   = {Heterogeneous Graph Attention Network},
  author  = {Wang, Xiao and Ji, Houye and Shi, Chuan and Wang, Bai and Cui, Peng and Yu, Philip S. and Ye, Yanfang},
  journal = {arXiv preprint arXiv:1903.07293},
  year    = {2019}
}

@inproceedings{
li2021metalearning,
title={Meta-learning inductive biases of learning systems with Gaussian processes},
author={Michael Y. Li and Erin Grant and Thomas L. Griffiths},
booktitle={Fifth Workshop on Meta-Learning at the Conference on Neural Information Processing Systems},
year={2021},
}

@article{Seeger2003PACBayesianGE,
  title={PAC-Bayesian Generalisation Error Bounds for Gaussian Process Classification},
  author={Matthias W. Seeger},
  journal={J. Mach. Learn. Res.},
  year={2003},
  pages={233-269},
}


\newpage
\onecolumn
\title{Gaussian Process Limit Reveals Structural Benefits of  Graph Transformers\\(Supplementary Material)}
\maketitle
\appendix
\section{Design Choices}

\label{app:design}

\subsection{GNN with Multi-Head  Aggregation}
\label{app:gnn-design}

A common finite-head formulation aggregates $H=d^{\ell,H}$ head outputs by concatenation:
\begin{equation}
\label{eq:finite_head_gnn_concat}
f^{\ell}(X)
=
\Big\|_{h=1}^{d^{\ell,H}}
\sigma\!\left(
\mathbf{S}^{(\ell,h)}_{\mathrm{GNN}}
f^{\ell-1}(\mathbf{X})\,
\mathbf{W}^{\ell,h}
\right),
\end{equation}
where $\Vert$ denotes concatenation. This formulation is not suitable for taking the
limit $d^{\ell,H}\to\infty$, since concatenation leads to an ever-growing output dimension
and prevents a CLT-based infinite-head analysis.

To obtain a well-defined infinite-head limit in a fixed feature dimension, we modify the
aggregation mechanism by introducing a shared projection that aggregates all head-wise
outputs into the next-layer representation:
\begin{equation}
\label{eq:infinite_head_gnn}
f^{\ell}(\mathbf{X})
=
\sigma\!\left(
\Big(
\big\|_{h=1}^{d^{\ell,H}}
\mathbf{S}^{(\ell,h)}_{\mathrm{GNN}}
f^{\ell-1}(\mathbf{X})\,
\mathbf{W}^{\ell,h}
\Big)\,
\mathbf{W}^{\ell,H}
\right),
\end{equation}
where $\mathbf{W}^{\ell,h}$ are head-specific weight matrices and $\mathbf{W}^{\ell,H}$ is a shared
projection mapping the concatenated head outputs to the fixed-dimensional feature space of
layer $\ell$.

This modification plays a crucial role in the infinite-head analysis: it replaces
dimension growth by a fixed-dimensional linear aggregation of head-wise random features,
which enables a CLT-type argument despite the $n\times n$ stochasticity of
$\mathbf{S}^{(\ell,h)}_{\mathrm{GNN}}(\mathbf{X})$ and the induced dependence across coordinates.
As a result, the infinite-head GNN in~\eqref{eq:infinite_head_gnn} admits a principled
Gaussian-process limit while retaining the expressive stochastic message-passing structure
encoded by $\mathbf{S}^{(\ell,h)}_{\mathrm{GNN}}$.

\subsection{Graphormer}
\label{app:graphormer-design}
In the original Graphormer architecture, structural information is injected through a \emph{centrality encoding}, where each node representation is augmented by learnable embeddings indexed by its in-degree and out-degree.
Using our notation, the input node representation can be written as
\begin{equation}
\label{eq:graphormer_centrality_f}
f_i^{(0)}(X)
=
x_i
+
z_{\deg^-}(v_i)
+
z_{\deg^+}(v_i),
\end{equation}
where $x_i$ denotes the raw node feature and $z_{\deg^-}, z_{\deg^+} \in \mathbb{R}^d$ are learnable degree-based embeddings.
For undirected graphs, the two degree terms are unified into a single degree encoding.

\vspace{0.8em}
While such a design effectively injects node importance signals into the attention mechanism and has demonstrated strong empirical performance on graph-level tasks, it represents a specific instance of structural augmentation based solely on degree statistics.
This degree-centric formulation becomes restrictive when considering node-level prediction tasks or when aiming to incorporate richer and more general notions of graph structure.

To generalize this mechanism, we reinterpret centrality encoding as a special case of feature-level graph encoding insertion.
Instead of restricting the structural signal to degree-based embeddings, we introduce a general graph encoding term $P_i^{\ell}$ at layer $\ell$, and define the augmented node representation as
\begin{equation}
\label{eq:general_graph_encoding_concat}
\tilde f_i^{\ell}(X)
:=
\bigl[
f_i^{\ell}(X),\;
P_i^{\ell}
\bigr],
\qquad i \in V,
\end{equation}
where $f_i^{\ell}(X)$ denotes the node representation produced by the backbone network at layer $\ell$, and $P_i^{\ell} \in \mathbb{R}^{d_p}$ encodes structural or positional information associated with node $v_i$.
More importantly,~\eqref{eq:general_graph_encoding_concat} allows $P_i^{\ell}$ to represent arbitrary graph encodings, including but not limited to Laplacian positional encodings, random walk-based encodings, or other task-specific structural descriptors.

By explicitly separating semantic features $f_i^{\ell}(X)$ from structural encodings $P_i^{\ell}$, the proposed formulation provides a flexible and general mechanism for injecting graph structure into the model.
This is particularly advantageous for node-level tasks, where the prediction target depends on fine-grained local and mesoscopic structural patterns rather than global graph summaries.
Furthermore, the concatenation-based design preserves the modularity of the architecture and facilitates theoretical analysis, as different choices of $P_i^{\ell}$ can be studied independently of the backbone representation.

\vspace{0.5em}
In summary, we treat centrality encoding as a special case of a more general graph encoding framework.
The unified representation~\eqref{eq:general_graph_encoding_concat} enables the incorporation of diverse structural signals in a principled manner, while maintaining compatibility with Transformer-based architectures and node-level prediction settings.

\subsection{Specformer}
\label{app:specformer-design}
\paragraph{Spectral Graph Construction in Specformer.}
In the original Specformer model, the graph structure is learned in the
spectral domain through a set of spectral tokens.
Let $H^t \in \mathbb{R}^{n \times d^{\ell,H}}$ denote the spectral
tokens at layer t in eigenvalue encoding, where each column corresponds to the output of
one attention head.
At layer t, each attention head
$h = 1, \dots, d^{\ell,H}$ produces a separate spectral representation
via
\begin{equation}
Z_h
=
\mathrm{Attention}
\big(
H^t W^{Q,t h},
\;
H^t W^{K,t h},
\;
H^t W^{V,t h}
\big),
\end{equation}
where $W^{Q,t h}$, $W^{K,t h}$, and $W^{V,t h}$ are learnable
projection matrices.
The head-specific representation $Z_h$ is then mapped to a vector of
filtered eigenvalues,
\begin{equation}
\lambda_h
=
\sigma\!\left( Z_h W_\lambda \right)
\in \mathbb{R}^{N},
\end{equation}
where $\sigma(\cdot)$ denotes an activation function.
Each eigenvalue vector $\lambda_h$ defines a spectral graph operator
\begin{equation}
S_h
=
U \, \mathrm{diag}(\lambda_h) \, U^\top ,
\end{equation}
with $U$ denoting the eigenvector matrix of the graph Laplacian.

As a result, the $d^{\ell,H}$ attention heads produce
$d^{\ell,H}$ distinct spectral graph operators
$\{ S_h \}_{h=1}^{d^{t,H}}$.
These operators are subsequently used as intermediate bases to
construct feature-wise graph structures.
Specifically, the operators are concatenated along the channel
dimension and mapped to the node feature dimension $d$ through a
feed-forward network (FFN),
\begin{equation}
\hat S
=
\mathrm{FFN}
\big(
[\, I_N \,\|\, S_1 \,\|\, \cdots \,\|\, S_{d^{t,H}} \,]
\big)
\;\in\; \mathbb{R}^{N \times N \times d}.
\end{equation}
The output $\hat S$ thus consists of $d$ graph operators, where each
slice $\hat S_{:,:,i}$ corresponds to a distinct adjacency (or Laplacian)
matrix associated with the $i$-th node feature dimension.

Based on these feature-wise graph operators, graph convolution is
performed independently for each feature channel.
Let $f(X)^{(\ell-1)} \in \mathbb{R}^{N \times d}$ denote the node
representations at layer $\ell-1$.
For the $i$-th feature dimension, the propagation is given by
\begin{equation}
\hat f(X)^{(\ell-1)}_{:,i}
=
\hat S_{:,:,i} \, f(X)^{(\ell-1)}_{:,i},
\end{equation}
and the node representations are updated as
\begin{equation}
f(X)^{(\ell)}
=
\sigma\!\left(
\hat f(X)^{(\ell-1)} W^{(\ell-1)}
\right) .
\end{equation}
While this construction is expressive, it introduces a fundamental
statistical issue.
Since the $d$ graph operators
$\{ \hat S_{:,:,i} \}_{i=1}^d$
are jointly generated through a shared FFN, they are statistically
dependent.
Consequently, the feature dimensions of $\hat f^{(\ell)}(X)$ are no longer
independent, violating the independence assumptions required by the
Central Limit Theorem (CLT).
As a result, the original Specformer formulation does not satisfy the
conditions necessary for CLT-based infinite-width analysis.

\paragraph{Multi-head Aggregated Spectral Construction.}
To address this issue, for each head of graph convolution layer heads $d^{\ell,H}$, we modify the spectral construction by employing
multi-head attention to generate a single set of spectral coefficients,
rather than one per feature dimension.
Specifically, at each layer $t$, the outputs of the
$d^{t,H}$ spectral heads are aggregated prior to spectral filtering.
The spectral tokens are updated as
\begin{equation}
H^{t+1}
=
\Big(
\big\|_{h=1}^{d^{t,H}}
S_{\mathrm{Spec}}^{(t,h)}\!\left(H^t\right)
\, H^t W^{V,t h}
\Big)
\, W^{H,t},
\end{equation}
where $W^{H,t}$ is a learnable projection matrix.

After $T$ eigenvalue encoding layers, we obtain the final spectral tokens $\bar H = H^{T}$.
Based on $\bar H$, a single set of spectral coefficients is generated as
\begin{equation}
\bar\lambda
=
\sigma\!\left( \bar H W_{\lambda} \right)
\in \mathbb{R}^{n},
\end{equation}
which defines the final graph convolution operator
\begin{equation}
S_{\mathrm{Specformer}}
=
U \, \mathrm{diag}(\bar\lambda) \, U^\top
\in \mathbb{R}^{n \times n}.
\end{equation}
By constructing a shared spectral operator across feature dimensions,
this formulation preserves feature-wise independence and restores the
conditions required for CLT-based infinite-width analysis.

\section{Formal Theorems on Gaussian Process Limits of Graph Transformers}
\label{App:GP_Theorems}
In this Appendix, we present the formal theorems and kernel recursions establishing the Gaussian process limits for graph transformers, derived under the independent Gaussian weight priors summarized in Table~\ref{tab:weight_priors}.
\begin{table*}
\centering
\caption{Summary of the Gaussian weight priors used for each graph transformer in the GP analysis. 
These priors define the variance of the initial weights for all linear transformations and attention parameters.}
\label{tab:weight_priors}
\begin{small}
\begin{tabular}{@{}ll@{}}
\toprule
\textbf{Model} & \textbf{Weight Priors (Variance)} \\ \midrule
GAT & \(
W^{\ell,H}_{ij}
\sim \mathcal{N}\!\left(0,\frac{\sigma_H^2}{d^{\ell,H}d_{\ell-1}}\right),
W^{\ell,h}_{ij}
\sim \mathcal{N}\!\left(0,\frac{\sigma_w^2}{d_{\ell-1}}\right),
\vec{v}^{\ell h}_k
\sim \mathcal{N}\!\left(0,\frac{\sigma_v^2}{d_{\ell-1}}\right)
\) \\
Graphormer & \(W^{Q,\ell h}_{ij}
\sim \mathcal{N}\!\left(0,\frac{\sigma_Q^2}{d_{\ell-1}}\right),
W^{K,\ell h}_{ij}
\sim \mathcal{N}\!\left(0,\frac{\sigma_K^2}{d_{\ell-1}}\right),
W^{\ell,H}_{ij}
\sim \mathcal{N}\!\left(0,\frac{\sigma_H^2}{d^{\ell,H}d_{\ell-1}}\right),
W^{V,\ell h}_{ij}
\sim \mathcal{N}\!\left(0,\frac{\sigma_w^2}{d_{\ell-1}}\right),
b_{\rho}(i, j)
\sim \mathcal{N}(0,\sigma_b^2)\) \\
Specformer & \(
W^{Q,t k h}_{ij}\sim \mathcal{N}(0,\sigma_Q^2/d_{t-1}),\quad
W^{K,t k h}_{ij}\sim \mathcal{N}(0,\sigma_K^2/d_{t-1}),\quad
W^{V,t k h}_{ij}\sim \mathcal{N}(0,\sigma_V^2/d_{t-1}),
W^{th,H}_{ij}\sim \mathcal{N}\!\Big(0,\frac{\sigma_O^2}{d^{t,H}d_{t-1}}\Big),
\) \\
\bottomrule
\end{tabular}
\end{small}
\end{table*}

\textbf{Graph Attention Network (GAT)}
In GAT, attention is computed locally over graph neighborhoods, and in the infinite-width and infinite-head limit, both the attention logits and node pre-activations converge to Gaussian processes. The resulting kernel recursion reveals how features and neighborhood structure jointly shape representation propagation.
\begin{theorem}
\label{thm:GAT_GP}
(Infinite-width and infinite-head limit of a GAT layer) Assume that the parameters of the $\ell$-th GAT layer satisfy
\(
W^{\ell,H}_{ij}
\sim \mathcal{N}\!\left(0,\frac{\sigma_H^2}{d^{\ell,H}d_{\ell-1}}\right),
W^{\ell,h}_{ij}
\sim \mathcal{N}\!\left(0,\frac{\sigma_w^2}{d_{\ell-1}}\right),
\vec{v}^{\ell h}_k
\sim \mathcal{N}\!\left(0,\frac{\sigma_v^2}{d_{\ell-1}}\right),
\)
independently for all $i,j,k$ and heads $h$,
then, as $d_{\ell},d^{\ell,H}\to\infty$, the following holds:

\textup{(I)} $E^{\ell }(X)=\big\{E^{(\ell h)}(X) :  h\in\mathbb{N}\big\}$
converges in distribution to $E^{\ell }(X) \sim \mathcal{GP}(0, K^{(\ell),E})$ with
\begin{equation}
\small
\begin{aligned}
K^{(\ell h),E}_{ai,bj}
&= \mathbb{E}\big[E^{(\ell h)}_{ai}(X)\,E^{(\ell h')}_{bj}(X)\big]\\
&= \delta_{h=h'}\, \sigma_w^2\sigma_v^2
   \bigl(K^{(\ell-1)}_{ab}+K^{(\ell-1)}_{ij}\bigr),
\end{aligned}
\label{eq:head-cov}
\end{equation}

\textup{(II)} $g^{\ell }(X)$
converges in distribution to $g^{\ell }(X) \sim \mathcal{GP}(0, \Sigma^{(\ell)})$ with
{\small
\begin{equation}
\begin{aligned}
C^{(\ell)}_{ai,bj}
&:=\mathbb{E}\Big[
  \bigl(S_{\mathrm{GAT}}^{(\ell,1)}(X)\bigr)_{ai}\,
  \bigl(S_{\mathrm{GAT}}^{(\ell,1)}(X)\bigr)_{bj}
\Big]\\
&= \mathbb{E}\Big[
  \phi\bigl(\sigma(E^{(\ell 1)}_{ai}(X))\bigr)\,
  \phi\bigl(\sigma(E^{(\ell 1)}_{bj}(X))\bigr)
\Big],
\end{aligned}
\label{eq:attention-kernel}
\end{equation}
}
{\small
\begin{equation}
\begin{aligned}
\Sigma_{ab}^{(\ell)}
&:= \mathbb{E}\big[g^{\ell }_{a1}(X)\,g^{\ell }_{b1}(X)\big]\\
&= \sigma_H^2\sigma_w^2
\sum_{i\in N_a}\sum_{j\in N_b}
K^{(\ell-1)}_{ij}\,
C^{(\ell)}_{ai,bj},
\end{aligned}
\label{eq:one-layer-node-pre}
\end{equation}
}
where $a,b\in V$ and $(a,i),(b,j)\in E$.
\end{theorem}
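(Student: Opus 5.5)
We argue by induction over layers, following the NNGP template of \citet{lee2018deep, Matthews2018} and the attention-specific arguments of \citet{InfiniteAttention}. The induction hypothesis is that the post-activations $f^{\ell-1}(X)$ have i.i.d. coordinates (across the feature index) whose empirical second moments concentrate: $\frac{1}{d_{\ell-1}}\sum_k f^{\ell-1}_{ik}f^{\ell-1}_{jk}\to K^{(\ell-1)}_{ij}$. The limit is taken sequentially, first $d_{\ell-1}\to\infty$ and then $d^{\ell,H}\to\infty$. We condition on $f^{\ell-1}(X)$ throughout and use that all weights of layer $\ell$ are independent of it.

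\emph{Step 1 (logits, part I).} Write $E^{(\ell h)}_{ai}=\sum_{m}\big(v^{\ell h}_{1,m}(W^{\ell,h}f^{\ell-1}_a)_m + v^{\ell h}_{2,m}(W^{\ell,h}f^{\ell-1}_i)_m\big)$. Conditional on $f^{\ell-1}$, the vector $W^{\ell,h}f^{\ell-1}_a$ is Gaussian with covariance $\sigma_w^2\,\frac{1}{d_{\ell-1}}\langle f^{\ell-1}_a,f^{\ell-1}_b\rangle I\to\sigma_w^2K^{(\ell-1)}_{ab}I$. The logit is then a sum of $d_\ell$ terms that are i.i.d. across $m$ and have variance $O(1/d)$. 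The multivariate CLT (in Cramér–Wold form, over finitely many edges and heads) therefore gives joint Gaussianity. Computing the covariance, the cross terms involve distinct independent halves $v_1,v_2$ of $\vec v$, so they vanish. The remaining terms are $\sigma_v^2\sigma_w^2(K^{(\ell-1)}_{ab}+K^{(\ell-1)}_{ij})$, with an appropriate normalisation of $\sigma_v^2$ relative to $d_\ell$. Different heads use independent $W^{\ell,h},\vec v^{\ell h}$, which gives the factor $\delta_{h=h'}$. This proves (I).

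\emph{Step 2 (node pre-activations, part II).} Write $g^\ell_{a1}=\sum_{h}\sum_{m}(S^{(\ell,h)}f^{\ell-1}W^{\ell,h})_{am}W^{\ell,H}_{(h,m),1}$. Conditional on all heads' $S^{(\ell,h)}$ and $W^{\ell,h}$, this is Gaussian in $W^{\ell,H}$. Its conditional covariance is
\[
\frac{\sigma_H^2}{d^{\ell,H}d_{\ell-1}}\sum_h\sum_m (S^{(\ell,h)}f^{\ell-1}W^{\ell,h})_{am}(S^{(\ell,h)}f^{\ell-1}W^{\ell,h})_{bm}.
\]
The summands over $h$ are i.i.d. given $f^{\ell-1}$, because heads are independent. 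By the law of large numbers over $h$, this covariance converges to
\[
\sigma_H^2\,\mathbb E\Big[\sum_{i,j}S_{ai}S_{bj}\,\tfrac1{d_{\ell-1}}\textstyle\sum_m (f^{\ell-1}W)_{im}(f^{\ell-1}W)_{jm}\Big].
\]
The main obstacle is that $S^{(\ell,1)}$ and $W^{\ell,1}$ are dependent, since the logits are built from $W^{\ell,1}$. To handle this, we argue that the logit $E_{ai}$ depends on $W$ only through the one-dimensional projections $\vec v^\top W f$. The width-averaged quantity $\frac1{d}\sum_m (f W)_{im}(fW)_{jm}$ instead concentrates at $\sigma_w^2K^{(\ell-1)}_{ij}$ regardless of finitely many such projections. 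Concretely, condition on the projections, note that the residual $W$ is Gaussian and the conditioning perturbs only $O(1)$ directions, and then apply the law of large numbers; this is the exchangeability argument of \citet{InfiniteAttention}. The expectation therefore factorises as $\sigma_w^2\sum_{i,j}K^{(\ell-1)}_{ij}\,\mathbb E[S_{ai}S_{bj}]=\sigma_w^2\sum_{i\in\mathcal N_a,j\in\mathcal N_b}K^{(\ell-1)}_{ij}C^{(\ell)}_{ai,bj}$. The sums restrict to neighbourhoods because $S_{ai}=0$ off edges, and $C^{(\ell)}$ is evaluated under the Gaussian law from Step 1, using that softmax is bounded and continuous so the continuous mapping theorem applies.

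\emph{Step 3 (conclusion).} A conditionally Gaussian variable whose conditional covariance converges in probability to a deterministic limit converges in distribution to $\mathcal N(0,\Sigma^{(\ell)})$; we show this via characteristic functions and dominated convergence. Output coordinates are independent across columns of $W^{\ell,H}$. Applying ReLU then yields $K^{(\ell)}$ by Equation~\ref{eq:relu-nngp-kernel}, which restores the induction hypothesis. Uniform integrability, needed for the moment convergence, follows from the boundedness of softmax entries together with the Gaussian tails of the weights.
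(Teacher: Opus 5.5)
Your proposal is correct, but it proves (II) by a different mechanism than the paper.

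\textbf{The paper's route.} It applies Cram\'er--Wold and then the Blum--Kiefer--Rosenblatt CLT for exchangeable arrays twice: over the width index for the logits (Lemma~\ref{lem:En-GP}) and over the head index for $g^\ell$ (Lemma~\ref{lem:Node-gp-limit}). This forces it to verify exchangeability, vanishing cross-covariance, convergence of the variance and of the mixed fourth moment, and an $o(\sqrt{d^{\ell,H}})$ third-moment bound. Uniform integrability comes from moment-propagation lemmas for polynomially bounded $\phi$.

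\textbf{Your route.} You use that, given everything except $W^{\ell,H}$, $g^\ell$ is exactly Gaussian. Gaussianity then reduces to concentration of the random conditional covariance, via a law of large numbers over i.i.d.\ heads, plus a characteristic-function argument. This avoids the higher-moment checks. It also makes visible why infinitely many heads are needed: at finite $d^{\ell,H}$ the limit is a Gaussian scale mixture.

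\textbf{What your route buys.} It handles a step the paper glosses over. In Lemma~\ref{lem:head-variance-new} the paper writes $\frac{\sigma_H^2}{d_\ell}\mathbb E[(SfW)(SfW)^\top]=\sigma_H^2\,\mathbb E[S\frac{ff^\top}{d_{\ell-1}}S^\top]$ as though $W^{\ell,1}$ were independent of $S^{(\ell,1)}$, even though the same $W^{\ell,h}$ builds the logits. Your resolution is right, and it is simpler than your conditioning-on-projections argument suggests. Set $Q_{ij}=\frac1{d}\sum_m (fW)_{im}(fW)_{jm}$; since $0\le S\le 1$,
\[
\bigl|\mathbb E[S_{ai}S_{bj}(Q_{ij}-\sigma_w^2K^{(\ell-1)}_{ij})]\bigr|\le\mathbb E|Q_{ij}-\sigma_w^2K^{(\ell-1)}_{ij}|\to0 .
\]
So marginal $L^1$ concentration of $Q_{ij}$ already gives the factorisation. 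Equivalently, the paper's Slutsky step becomes valid once $\frac{ff^\top}{d_{\ell-1}}$ is replaced by $Q$.

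\textbf{Two small points.} First, your integrability argument relies on softmax being bounded, which covers the statement as written. It does not cover the polynomially bounded (e.g.\ linear) $\phi$ used in the paper's corollaries, though Cauchy--Schwarz with uniform fourth moments of $S$ extends it. Second, you announce ``width first, then heads'', but Step~2 applies the LLN over heads at fixed width. Either order, or the joint limit, works because the Chebyshev bound on the head average is uniform in $d_\ell$; just state it consistently.
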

Theorem \ref{thm:GAT_GP} shows that, for each attention head, the collection of attention logits over edges converges in distribution to a centered Gaussian process, with different heads becoming independent in the limit. Equation~\eqref{eq:head-cov} characterizes the covariance between attention logits on two edges $(a,i)$ and $(b,j)$, which is driven by the similarity between the two source nodes $a,b$ together with the similarity between the two target nodes $i,j$, so edges whose endpoints are simultaneously close in representation space exhibit stronger correlation. Using Equation~\eqref{eq:head-cov}, the theorem then derives the Gaussian process limit of the pre-activations. Equation~\eqref{eq:one-layer-node-pre} describes how node-level dependence propagates to the next layer, where the covariance between the pre-activations of nodes $a$ and $b$ at layer $\ell$ is obtained by aggregating contributions from all neighbor pairs $(i,j)$ with $i\in\mathcal N_a$ and $j\in\mathcal N_b$, combining similarity among neighboring node representations from the previous layer with the alignment induced by attention on the associated edges. Together, these results yield a kernel recursion that describes how feature and graph govern representation propagation across layers. Theorem \ref{thm:GAT_GP} establishes a general kernel recursion, and to obtain a closed-form recursion, we next consider specific choices of the attention and activation nonlinearities.

\begin{corollary}
\label{cor:GAT_kernel_closed_form}
(Closed-form GAT kernel recursion) Under the assumptions of Theorem~\ref{thm:GAT_GP}, for $\phi(x)=x$ and ReLU activation $\sigma(\cdot)$, the node-level covariance kernel at layer $\ell$
admits the closed-form expression
\begin{equation}
\small
\begin{aligned}
\Sigma^{(\ell)}_{ab}
=&\;
\frac{\sigma_H^2\sigma_w^4\sigma_v^2}{2\pi}
\sum_{i\in N_a}\sum_{j\in N_b}
K^{(\ell-1)}_{ij} \\[2pt]
&\;\times
\sqrt{
\big(K^{(\ell-1)}_{aa}+K^{(\ell-1)}_{ii}\big)
\big(K^{(\ell-1)}_{bb}+K^{(\ell-1)}_{jj}\big)
} \\[2pt]
&\;\times
\Big(
\sin\theta^{(\ell-1)}_{ai,bj}
+
\big(\pi-\theta^{(\ell-1)}_{ai,bj}\big)
\cos\theta^{(\ell-1)}_{ai,bj}
\Big).
\end{aligned}
\end{equation}

\begin{equation}
\small
\theta^{(\ell-1)}_{ai,bj}
=
\arccos
\!\left(
\frac{
K^{(\ell-1)}_{ab}+K^{(\ell-1)}_{ij}
}{
\sqrt{
\big(K^{(\ell-1)}_{aa}+K^{(\ell-1)}_{ii}\big)
\big(K^{(\ell-1)}_{bb}+K^{(\ell-1)}_{jj}\big)
}
}
\right).
\end{equation}
\end{corollary}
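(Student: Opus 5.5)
We start from the general recursion of Theorem~\ref{thm:GAT_GP}, which already gives
\[
\Sigma^{(\ell)}_{ab}=\sigma_H^2\sigma_w^2\sum_{i\in N_a}\sum_{j\in N_b}K^{(\ell-1)}_{ij}\,C^{(\ell)}_{ai,bj}.
\]
The plan is to evaluate $C^{(\ell)}_{ai,bj}$ in closed form when $\phi$ is the identity and $\sigma$ is ReLU. With $\phi(x)=x$, the normalised coefficient reduces to $\sigma(E^{(\ell1)}_{ai})$, so
\[
C^{(\ell)}_{ai,bj}=\mathbb{E}\bigl[\mathrm{ReLU}(E_{ai})\,\mathrm{ReLU}(E_{bj})\bigr].
\]
By part (I) of Theorem~\ref{thm:GAT_GP}, the pair $(E_{ai},E_{bj})$ for a single head is jointly centred Gaussian. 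Its covariance is $\sigma_w^2\sigma_v^2\bigl(K^{(\ell-1)}_{ab}+K^{(\ell-1)}_{ij}\bigr)$, and its variances are $\sigma_w^2\sigma_v^2\bigl(K^{(\ell-1)}_{aa}+K^{(\ell-1)}_{ii}\bigr)$ and $\sigma_w^2\sigma_v^2\bigl(K^{(\ell-1)}_{bb}+K^{(\ell-1)}_{jj}\bigr)$.

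The key step is the arc-cosine (degree-one) identity for a bivariate Gaussian $(u,v)$ with variances $s_1,s_2$ and correlation $\cos\theta$:
\[
\mathbb{E}[\mathrm{ReLU}(u)\,\mathrm{ReLU}(v)]=\frac{\sqrt{s_1s_2}}{2\pi}\bigl(\sin\theta+(\pi-\theta)\cos\theta\bigr).
\]
This is exactly the identity behind Equation~\eqref{eq:relu-nngp-kernel}, which we may cite. Substituting the variances above, the common factor $\sigma_w^2\sigma_v^2$ pulls out of the square root as $\sigma_w^2\sigma_v^2$ itself. It also cancels inside the correlation ratio, which yields precisely the stated $\theta^{(\ell-1)}_{ai,bj}$.

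Multiplying by the prefactor $\sigma_H^2\sigma_w^2$ gives the overall constant $\sigma_H^2\sigma_w^4\sigma_v^2/(2\pi)$ and the claimed formula.

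The only point requiring care is justifying that $C^{(\ell)}$ may be computed with the limiting Gaussian logits. Convergence in distribution alone does not give convergence of second moments of $\mathrm{ReLU}(E)$. We handle this with uniform integrability: ReLU has linear growth, and the finite-width logits are, conditional on the previous layer, Gaussian with variances converging to those of the limit kernel. Their fourth moments are therefore bounded uniformly, which legitimises exchanging the limit and the expectation.

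A second, minor point is the degenerate case where a variance vanishes or the correlation is $\pm1$. There $\theta\in\{0,\pi\}$ and the formula extends by continuity. Beyond these two points the derivation is a direct substitution, so we expect the interchange of limits to be the only genuine obstacle, and it is largely inherited from the proof of Theorem~\ref{thm:GAT_GP}.
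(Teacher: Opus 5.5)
Your proposal is correct and follows the route the paper intends. The paper gives no separate proof: one plugs the centred Gaussian logit covariance from Theorem~\ref{thm:GAT_GP}(I) into the ReLU arc-cosine identity of Equation~\eqref{eq:relu-nngp-kernel} to evaluate $C^{(\ell)}_{ai,bj}$, lets $\sigma_w^2\sigma_v^2$ factor out, and multiplies by $\sigma_H^2\sigma_w^2$. Your uniform-integrability remark is already absorbed into Theorem~\ref{thm:GAT_GP} (via Lemma~\ref{lem:mp-gat-polyphi}). One small precision: the finite-width logits are Gaussian only conditional on both $f^{\ell-1}$ and $W^{\ell,h}$, not on $f^{\ell-1}$ alone.
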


Kernel in Corollary~\ref{cor:GAT_kernel_closed_form}
induces a quadratic computational cost over neighborhood pairs. To further
simplify the recursion, we consider a linear activation.
\begin{corollary}[Linear GAT kernel]
\label{cor:GAT_kernel_linearized}
Under the assumptions of Corollary~\ref{cor:GAT_kernel_closed_form}, and for
$\sigma(x)=x$, the node-level kernel admits the following simplified
closed-form recursion:
\begin{align*}
\label{eq:GAT_kernel_linearized}
\Sigma^{(\ell)} 
&= \sigma_H^2\,\sigma_w^4\,\sigma_v^2 \Big(
K^{(\ell-1)} \odot \big(A K^{(\ell-1)} A^\top\big) \notag\\
&\quad + A\big(K^{(\ell-1)} \odot K^{(\ell-1)}\big) A^\top
\Big).
\end{align*}
\end{corollary}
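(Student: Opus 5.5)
Under the assumptions of Corollary~\ref{cor:GAT_kernel_closed_form}, the normalisation $\phi$ is the identity, so the convolution entries are $S^{(\ell,1)}_{ai}=\sigma(E^{(\ell 1)}_{ai})$ for $(a,i)\in\mathcal{E}$ and $0$ otherwise. With $\sigma(x)=x$ as well, $S^{(\ell,1)}_{ai}=E^{(\ell 1)}_{ai}$ on edges. The plan is to substitute this into the convolution kernel of Theorem~\ref{thm:GAT_GP}. By part (I), in the limit the logits $E^{(\ell 1)}$ are a centred Gaussian process, so
\[
C^{(\ell)}_{ai,bj}=\mathbb{E}\big[E^{(\ell 1)}_{ai}E^{(\ell 1)}_{bj}\big]=K^{(\ell),E}_{ai,bj}=\sigma_w^2\sigma_v^2\big(K^{(\ell-1)}_{ab}+K^{(\ell-1)}_{ij}\big).
\]
This is exactly the linear case of the arc-cosine formula in Corollary~\ref{cor:GAT_kernel_closed_form}, with the ReLU expectation replaced by the plain covariance. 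No nonlinear Gaussian expectation is needed, which is where the simplification comes from.

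Next I insert this into \eqref{eq:one-layer-node-pre}:
\[
\Sigma^{(\ell)}_{ab}=\sigma_H^2\sigma_w^4\sigma_v^2\sum_{i\in\mathcal N_a}\sum_{j\in\mathcal N_b}K^{(\ell-1)}_{ij}\big(K^{(\ell-1)}_{ab}+K^{(\ell-1)}_{ij}\big),
\]
and split the sum into two terms. To rewrite the neighbourhood sums in matrix form, I use $\sum_{i\in\mathcal N_a}\sum_{j\in\mathcal N_b}M_{ij}=\sum_{i,j}A_{ai}M_{ij}A_{bj}=(AMA^\top)_{ab}$, valid for a $0/1$ adjacency matrix.

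For the first term, $K^{(\ell-1)}_{ab}$ does not depend on $(i,j)$ and can be pulled out. What remains is $K^{(\ell-1)}_{ab}\,(AK^{(\ell-1)}A^\top)_{ab}$, which is the $(a,b)$ entry of $K^{(\ell-1)}\odot(AK^{(\ell-1)}A^\top)$.

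For the second term, the summand is $(K^{(\ell-1)}_{ij})^2=(K^{(\ell-1)}\odot K^{(\ell-1)})_{ij}$. Summing over neighbourhoods gives $\big(A(K^{(\ell-1)}\odot K^{(\ell-1)})A^\top\big)_{ab}$.

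Adding the two terms entrywise gives the stated matrix identity. The only point that needs care is the first step: justifying $C^{(\ell)}=K^{(\ell),E}$. This requires treating the linear "softmax" without normalisation, as the corollary's convention $\phi(x)=x$ allows, and relying on the convergence of second moments that Theorem~\ref{thm:GAT_GP}(II) already takes for granted. Everything after that is index bookkeeping.
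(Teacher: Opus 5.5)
Your proposal is correct and follows the derivation the paper implicitly relies on; the paper states this corollary without a separate proof. With $\phi$ and $\sigma$ both the identity, $C^{(\ell)}_{ai,bj}$ reduces to the logit covariance $K^{(\ell),E}_{ai,bj}=\sigma_w^2\sigma_v^2\bigl(K^{(\ell-1)}_{ab}+K^{(\ell-1)}_{ij}\bigr)$, and substituting this into \eqref{eq:one-layer-node-pre} and rewriting the neighbourhood double sums as $(A\,\cdot\,A^\top)_{ab}$ gives exactly the two stated terms with the constant $\sigma_H^2\sigma_w^4\sigma_v^2$, as you show.
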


\textbf{Graphormer} As introduced in Section \ref{sec:preliminaries}, Graphormer incorporates graph information
through a positional encoding mechanism.
In the infinite-width limit, the output  converges in
distribution to a Gaussian process whose kernel satisfies
\begin{equation}
\tilde K^{(\ell-1)}\longmapsto\;
\alpha\, K^{(\ell-1)} + (1 - \alpha)\, R^{(\ell)} \quad,
\alpha \in (0,1).
\label{eq:graphormer-tildeK}
\end{equation}
Where the matrix $R$ can be interpreted as the covariance matrix
of the corresponding positional encoding which can be computed as an inner product of positional
embeddings, i.e.,
\(
R^{(\ell)}_{ab} = \langle P_a^{(\ell)}, P_b^{(\ell)} \rangle
\).
Alternatively, instead of introducing explicit positional embeddings,
the graph structure itself can be directly leveraged to construct a
node-level graph-structure covariance matrix. 
Theorem~\ref{thm:Graphormer_GP} characterizes the Gaussian process limit of a Graphormer layer by establishing joint limits for both the attention logits and the node pre-activations. Although the proof strategy is analogous to GAT, the incorporation of positional encodings and how the attention scores are computed modifies the kernel.

Centrality Encoding (CE) differs from other positional encoding that it constructs a learnable embedding based on node degrees. 
Specifically, a CE parameter matrix has size $\mathbf{P}_{\mathrm{CE}} \in \mathbb{R}^{(\texttt{max\_degree}+1)\times d}$, 
and each entry is initialized as
$
(\mathbf{P}_{\mathrm{CE}})_{ij} \sim \mathcal{N}\!\left(0,\frac{\sigma^2_{\mathrm{CE}}}{d}\right).
$
Since we only consider undirected graphs in this paper, nodes with the same degree share the same embedding vector, i.e.,
$
P_v = \mathbf{P}_{\mathrm{CE}}[\deg(v)] \in \mathbb{R}^d .
$
Under this construction, the embedding correlation satisfies
\[
\mathbb{E}\!\left[P_a^\top P_b\right]
=
\begin{cases}
\sigma^2_{\mathrm{CE}}, & \deg(a)=\deg(b),\\
0, & \deg(a)\neq \deg(b).
\end{cases}
\]
Therefore, we obtain
\[
R_{ab}
=
\begin{cases}
\sigma^2_{\mathrm{CE}}, & \deg(a)=\deg(b),\\
0, & \deg(a)\neq \deg(b).
\end{cases}
\]

\begin{theorem}(Infinite-width and infinite-head limit of a Graphormer layer)
\label{thm:Graphormer_GP}
Assume the parameters of the $\ell$-th Graphormer layer satisfy
$
W^{Q,\ell h}_{ij}
\sim \mathcal{N}\!\left(0,\frac{\sigma_Q^2}{d_{\ell-1}}\right),
W^{K,\ell h}_{ij}
\sim \mathcal{N}\!\left(0,\frac{\sigma_K^2}{d_{\ell-1}}\right),
W^{\ell,H}_{ij}
\sim \mathcal{N}\!\left(0,\frac{\sigma_H^2}{d^{\ell,H}d_{\ell-1}}\right),
W^{V,\ell h}_{ij}
\sim \mathcal{N}\!\left(0,\frac{\sigma_w^2}{d_{\ell-1}}\right),
b_{\phi}(i, j)
\sim \mathcal{N}(0,\sigma_b^2),
$
independently for all $i,j$, heads $h$, and structural indices $\phi(i,j)$,
then, as $d_{\ell},d^{\ell,H}\to\infty$, the following holds:

\textup{(I)} $E^{\ell }(X)=\big\{E^{(\ell h)}(X) :  h\in\mathbb{N}\big\}$
converges in distribution to $E^{\ell-1 }(X) \sim \mathcal{GP}(0, K^{(\ell),E})$ with
\begin{equation}
\small
\begin{aligned}
K^{(\ell h),E}_{ai,bj}
&= \mathbb{E}\big[E^{(\ell h)}_{ai}(X)\,E^{(\ell h')}_{bj}(X)\big] \\
&= \delta_{h=h'}\Big(
\sigma_Q^2\sigma_K^2\,
\tilde K^{(\ell-1)}_{ab}\,
\tilde K^{(\ell-1)}_{ij}
+
\sigma_b^2\,
\mathbf 1_{\phi(a,i)=\phi(b,j)}
\Big).
\end{aligned}
\label{eq:graphormer-head-cov}
\end{equation}

\textup{(II)} $g^{\ell }(X)$
converges in distribution to $g^{\ell }(X) \sim \mathcal{GP}(0, \Sigma^{(\ell)})$ with
{\small
\begin{equation}
\begin{aligned}
C^{(\ell)}_{ai,bj}
&:=\mathbb{E}\Big[
  \bigl(S_{\mathrm{Graphormer}}^{(\ell,1)}(X)\bigr)_{ai}\,
  \bigl(S_{\mathrm{Graphormer}}^{(\ell,1)}(X)\bigr)_{bj}
\Big]\\
&= \mathbb{E}\Big[
  \phi\bigl(E^{(\ell 1)}_{ai}(X)\bigr)\,
  \phi\bigl(E^{(\ell 1)}_{bj}(X)\bigr)
\Big],
\end{aligned}
\label{eq:graphormer-attention-kernel}
\end{equation}
}
{\small
\begin{equation}
\begin{aligned}
\Sigma^{(\ell)}_{ab}
&:=\mathbb{E}\big[g^{\ell }_{a1}(X)\,g^{\ell }_{b1}(X)\big]
= \sigma_H^2\sigma_w^2
\sum_{i\in V}\sum_{j\in V}
\tilde K^{(\ell-1)}_{ij}\,
C^{(\ell)}_{ai,bj},
\end{aligned}
\label{eq:graphormer-one-layer-node-pre}
\end{equation}
}
where $a,b,i,j\in V$.
\end{theorem}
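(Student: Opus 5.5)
We argue by induction on the layer index, following the template of the GAT result (Theorem~\ref{thm:GAT_GP}) and of the NNGP analysis of standard attention in \citep{InfiniteAttention}. The induction hypothesis is that, in the limit $d_{\ell-1},d^{\ell-1,H}\to\infty$, the coordinates of $f^{\ell-1}(X)$ behave as i.i.d.\ copies across the feature index. Their empirical second moments then concentrate on the post-activation kernel, so that $\frac{1}{d_{\ell-1}}\sum_k f^{\ell-1}_{ik}(X)f^{\ell-1}_{jk}(X)\to K^{(\ell-1)}_{ij}$. For the augmented representation $\tilde f^{\ell-1}_i=[f^{\ell-1}_i,P^\ell_i]$, we normalise the two blocks with weights $\alpha$ and $1-\alpha$ and take $\frac{1}{d_p}\langle P^\ell_i,P^\ell_j\rangle = R^{(\ell)}_{ij}$ (or its expectation in the centrality-encoding case). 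The empirical Gram matrix of $\tilde f^{\ell-1}$ then converges to $\tilde K^{(\ell-1)}=\alpha K^{(\ell-1)}+(1-\alpha)R^{(\ell)}$ as in \eqref{eq:graphormer-tildeK}. Given this, we condition on the previous layer and prove parts (I) and (II) in turn.

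\textbf{Part (I).} Write $q^h_i=\tilde f_iW^{Q,\ell h}$ and $k^h_j=\tilde f_jW^{K,\ell h}$. Conditionally on $\tilde f$, for each output coordinate $m$ the vectors $(q^h_{im})_i$ and $(k^h_{jm})_j$ are independent centred Gaussians with covariances $\sigma_Q^2\tilde K_{ab}$ and $\sigma_K^2\tilde K_{ij}$, and they are i.i.d.\ over $m$. The logit $\frac{1}{\sqrt{d_\ell}}\sum_{m=1}^{d_\ell} q^h_{am}k^h_{im}$ is therefore a normalised sum of i.i.d.\ centred terms. A multivariate CLT over the finitely many pairs $(a,i)\in V\times V$ gives a Gaussian limit with covariance $\mathbb{E}[q_{am}q_{bm}]\,\mathbb{E}[k_{im}k_{jm}]=\sigma_Q^2\sigma_K^2\tilde K_{ab}\tilde K_{ij}$. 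The bias $b_{\rho(a,i)}$ is an independent Gaussian. It is shared exactly when the structural relations coincide, which adds $\sigma_b^2\mathbf 1_{\rho(a,i)=\rho(b,j)}$. Distinct heads use independent weights, which yields the factor $\delta_{h=h'}$; this also covers the bias provided the bias is drawn per head. Finally, the conditional covariance depends on $\tilde f$ only through its empirical Gram matrix, which converges by hypothesis, so the conditional limit becomes unconditional.

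\textbf{Part (II).} Expand
\[
g^\ell_{a1}=\sum_{h}\sum_{k}\Big(\sum_{i\in V}S^{(\ell,h)}_{ai}\,\big(\tilde f_iW^{V,\ell h}\big)_k\Big)W^{\ell,H}_{(h,k),1}.
\]
Conditionally on $\{S^{(\ell,h)}\}$ and $\tilde f$, this is a linear combination of independent Gaussians $W^{\ell,H}$ with variance $\sigma_H^2/(d^{\ell,H}d_{\ell-1})$. Computing its second moment, using the independence of $W^{V}$ from $S$ and $\mathbb{E}[(\tilde f_iW^V)_k(\tilde f_jW^V)_k]=\sigma_w^2\tilde K_{ij}$, gives an average over heads of $\sigma_H^2\sigma_w^2\sum_{i,j}S^{(h)}_{ai}S^{(h)}_{bj}\tilde K_{ij}$. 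The heads are i.i.d.\ given $\tilde f$, so the law of large numbers replaces this average by its expectation, namely $\sigma_H^2\sigma_w^2\sum_{i,j}\tilde K_{ij}C^{(\ell)}_{ai,bj}$ with $C^{(\ell)}$ the softmax second moment under the Gaussian logits from (I); this is \eqref{eq:graphormer-one-layer-node-pre}. Gaussianity in the limit follows because $g^\ell$ is conditionally Gaussian given quantities whose covariance converges to a deterministic limit. Equivalently, one can apply the CLT across the head index to the i.i.d.\ head contributions. Independence across output coordinates of $g^\ell$ comes from the independent columns of $W^{\ell,H}$, which closes the induction.

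\textbf{Main obstacle.} The delicate point is justifying the interchange of limits. The logits are only asymptotically Gaussian, yet we need $\mathbb{E}[\phi(E)_{ai}\phi(E)_{bj}]$ to converge to its Gaussian value, and this must hold while $d_\ell$ and $d^{\ell,H}$ grow jointly. The plan is to use that softmax outputs lie in $[0,1]$, so the maps involved are bounded and continuous. Convergence in distribution of the logits then implies convergence of $C^{(\ell)}$ by the continuous mapping theorem and bounded convergence. This removes the moment-control issues that \citet{InfiniteAttention} face with unnormalised attention. Uniform integrability of the fourth-moment terms needed for the head-wise law of large numbers then follows from the boundedness of $S$ together with the Gaussian moments of $W^V$ and of the inductive features.
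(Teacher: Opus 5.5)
Your proof is correct, but it reaches the result by a different route from the paper.

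\textbf{The paper's route.} For (I) it cites the dot-product logit limit of \citet[Lemma~12]{InfiniteAttention} and adds the covariance of the independent bias (Lemma~\ref{lem:En-GP-graphormer}). It then gets convergence of $S$ by continuous mapping. For (II) it uses a generic infinite-head lemma (Lemma~\ref{lem:Node-gp-limit}):
\begin{itemize}
\item apply Cram\'er--Wold and write the projection as $\frac{1}{\sqrt{d^{\ell,H}}}\sum_h\psi_h$;
\item apply the Blum--Kiefer--Rosenblatt CLT for exchangeable arrays (Lemma~\ref{lem:blum});
\item verify exchangeability, vanishing cross-covariance, variance and mixed-fourth-moment convergence, and a third-moment growth bound;
\item obtain uniform integrability from the polynomial moment bounds of Lemma~\ref{lem:mp-graphormer-polyphi}.
\end{itemize}

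\textbf{Your route.} You prove the logit CLT directly, conditionally on $\tilde f$, over the query/key coordinate. For (II) you use the fact that $W^{\ell,H}$ is Gaussian. Conditionally on everything else the pre-activations are then exactly Gaussian. So you only need the random conditional covariance, a head average of terms that are i.i.d.\ given $\tilde f$, to converge in probability (a weak LLN), plus bounded convergence for the softmax.

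\textbf{What your route buys.} It is more elementary. You need no exchangeable CLT and no third- or fourth-moment verification. The $S$-part of uniform integrability is free because $S\in[0,1]$. The induction closes cleanly because the columns of $g^\ell$ are conditionally i.i.d.

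\textbf{A small fix.} $g^\ell$ is conditionally Gaussian only if you also condition on the $W^{V,\ell h}$; otherwise $W^{V}$ and $W^{\ell,H}$ enter as a product of Gaussians. The conditional covariance is then
$\frac{\sigma_H^2}{d^{\ell,H}}\sum_h\sum_{i,j}S^{(h)}_{ai}S^{(h)}_{bj}\,\frac{1}{d}\sum_k(\tilde f_iW^{V,\ell h})_k(\tilde f_jW^{V,\ell h})_k$.
Its head average has conditional mean $\sigma_H^2\sigma_w^2\sum_{i,j}\tilde K_{ij}\,\mathbb{E}[S_{ai}S_{bj}\mid\tilde f]$ in the limit, so your head-wise LLN applies unchanged.

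\textbf{What the paper's route buys.} Its moment-bound argument covers any polynomially bounded $\phi$. That includes the identity attention of Corollary~\ref{cor:graphormer-linear}, which underlies the closed forms in Section~\ref{Ch:Oversmoothing}. There $S$ is unbounded, and your bounded-convergence step must be replaced by a uniform moment bound on $S_{ai}S_{bj}$. Such a bound is available, since given $\tilde f$ the logits are a second-order Gaussian chaos plus a Gaussian bias. The paper's approach also treats GAT, Graphormer, Specformer and GTN in a single lemma.
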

Equation~\eqref{eq:graphormer-tildeK} shows that the effective similarity between nodes depends on both the similarity of node features and the similarity of graph structural features encoded by the positional information. Equation~\eqref{eq:graphormer-head-cov} further indicates that attention correlations are strengthened for node pairs sharing the same structural relation, ensuring that graph information is explicitly leveraged in guiding attention. Equation~\eqref{eq:graphormer-one-layer-node-pre} then propagates the attention  correlations to the next layer through an aggregation over all node pairs, reflecting that, unlike GAT which is restricted to local neighborhoods, Graphormer induces similarity connections between every pair of nodes in the graph. The following corollary specializes Theorem~\ref{thm:Graphormer_GP} to linear attention.
\begin{corollary}[Closed-form Graphormer kernel under linear attention]
\label{cor:graphormer-linear}
 Under the assumptions of Theorem~\ref{thm:Graphormer_GP}, 
and for $\phi(x)=x$,
Consequently, the node-level covariance kernel at layer $\ell$
admits the closed-form expression
\begin{equation*}
\begin{aligned}
&\Sigma^{(\ell)}_{ab}
=
\sigma_H^2\sigma_w^2
\Bigg[
\sigma_Q^2\sigma_K^2\,
\tilde K^{(\ell-1)}_{ab}
\sum_{i,j\in V}
\tilde K^{(\ell-1)}_{ij}\,
\tilde K^{(\ell-1)}_{ij}+
\\
&\quad
\sigma_b^2
\sum_{i,j\in V}
\tilde K^{(\ell-1)}_{ij}\,
\mathbf 1_{\phi(a,i)=\phi(b,j)}
\Bigg], \text{where } a,b,i,j\in V.
\end{aligned}
\end{equation*}
\end{corollary}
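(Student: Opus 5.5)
The plan is to specialize Theorem~\ref{thm:Graphormer_GP} by computing the convolution kernel $C^{(\ell)}_{ai,bj}$ explicitly when $\phi(x)=x$. In that case $S^{(\ell,1)}_{ai}=E^{(\ell 1)}_{ai}$, so $C^{(\ell)}$ is simply the second moment of the logit process. By part (I) of the theorem, the logits converge to a centered Gaussian process with covariance $K^{(\ell 1),E}$ given in \eqref{eq:graphormer-head-cov}. Taking $h=h'$ therefore gives
\[
C^{(\ell)}_{ai,bj}=\sigma_Q^2\sigma_K^2\,\tilde K^{(\ell-1)}_{ab}\tilde K^{(\ell-1)}_{ij}+\sigma_b^2\,\mathbf 1_{\phi(a,i)=\phi(b,j)}.
\]

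Substituting this into \eqref{eq:graphormer-one-layer-node-pre} splits the double sum over $i,j\in V$ into two terms. In the first term, $\tilde K^{(\ell-1)}_{ab}$ does not depend on the summation indices and factors out, leaving $\sigma_Q^2\sigma_K^2\,\tilde K^{(\ell-1)}_{ab}\sum_{i,j}\tilde K^{(\ell-1)}_{ij}\tilde K^{(\ell-1)}_{ij}$. The second term is $\sigma_b^2\sum_{i,j}\tilde K^{(\ell-1)}_{ij}\mathbf 1_{\phi(a,i)=\phi(b,j)}$. Multiplying both by the prefactor $\sigma_H^2\sigma_w^2$ yields the stated expression.

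The main point needing care is justifying that the limiting second moment of $S$ equals the covariance of the Gaussian limit. Convergence in distribution alone does not guarantee convergence of moments. Under linear attention, however, $E_{ai}$ is a bilinear form in the Gaussian weights $W^Q,W^K$ plus a Gaussian bias. Its conditional second and fourth moments given the previous layer are uniformly bounded in width, since the empirical kernel of $\tilde f^{\ell-1}$ concentrates and has bounded moments by the inductive hypothesis. Uniform integrability then gives convergence of $\mathbb E[E_{ai}E_{bj}]$, which I would take as already handled in the proof of Theorem~\ref{thm:Graphormer_GP}.

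A secondary check is that the query–key product's covariance is indeed the product $\tilde K_{ab}\tilde K_{ij}$ with the $1/\sqrt{d_\ell}$ scaling. This follows from independence of $W^Q$ and $W^K$ together with Isserlis' theorem. The $\sqrt{d_\ell}$ normalization and the $1/d_{\ell-1}$ weight variances exactly cancel in the limit. The cross term between the bias and the dot product vanishes because the bias is independent and centered.
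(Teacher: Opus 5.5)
Your proposal is correct and follows the paper's route. With $\phi$ the identity, $C^{(\ell)}_{ai,bj}$ reduces to the same-head logit covariance $K^{(\ell 1),E}_{ai,bj}$ from part (I) of Theorem~\ref{thm:Graphormer_GP}, and substituting this into \eqref{eq:graphormer-one-layer-node-pre} and factoring out $\tilde K^{(\ell-1)}_{ab}$ gives exactly the stated expression; the moment-convergence issue you raise is the part the paper delegates to the theorem's proof (Lemma~\ref{lem:mp-graphormer-polyphi} together with uniform integrability), so it does not need to be redone for the corollary.
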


\textbf{Specformer.}
As detailed in Section \ref{sec:preliminaries}, the convolution in Specformer is composed of \emph{spectral tokens} and \emph{node features}. 
The spectral tokens are first processed through $T$ layers of encoding, and are subsequently passed through a decoder to construct a data-dependent spectral filtering that governs the propagation of node features. The following theorem characterizes the behavior of Specformer in the infinite-width and infinite-head limit. In this regime, (I) the spectral-token representations $H^t$, which are updated through attention, and (II) the node-feature pre-activations $g(X)$ converge in distribution to a Gaussian process.


\begin{theorem}
\label{thm:Specformer_GP}
(Infinite-width and infinite-head limit of Specformer) If the parameters of the $t$-st token layer satisfy $
W^{Q,t k h}_{ij}\sim \mathcal{N}(0,\sigma_Q^2/d_{t-1}),\quad
W^{K,t k h}_{ij}\sim \mathcal{N}(0,\sigma_K^2/d_{t-1}),\quad
W^{V,t k h}_{ij}\sim \mathcal{N}(0,\sigma_V^2/d_{t-1})
$
and $
W^{th,H}_{ij}\sim \mathcal{N}\!\Big(0,\frac{\sigma_O^2}{d^{t,H}d_{t-1}}\Big),
$
independently for all indices, token heads $k$, and spectral heads $h$,
then, as $d_t,d^{t,H}\to\infty$, the following holds.

(I)
$H^{t}
$
converges in distribution to
$
H^{t}\sim\mathcal{GP}(0,K^{(t)}_H)
$
with
\begin{equation}
\small
\begin{aligned}
K^{(t,k,h),E}_{ik,jl}
&=
\mathbb{E}\!\left[
E^{(t,k,h)}_{ij}(H)\,
E^{(t,k',h')}_{kl}(H')
\right]\\
&=
\delta_{k=k'}\,\delta_{h=h'}\,
\sigma_Q^2\sigma_K^2\,
K^{(t-1)}_{H,ik}\,
K^{(t-1)}_{H,jl},
\end{aligned}
\label{eq:specformer_head_cov_H_t}
\end{equation}
{\small
\begin{equation}
\begin{aligned}
C^{(t,h)}_{ik,jl}(H,H')
&:=
\delta_{h=h'}\,\mathbb{E}\Big[
\bigl(S_{\mathrm{Spec}}^{(t,1,h)}(H)\bigr)_{ik}\,
\bigl(S_{\mathrm{Spec}}^{(t,1,h')}(H')\bigr)_{jl}
\Big]\\
&=
\delta_{h=h'}\,\mathbb{E}\Big[
\phi\!\big(E^{(t,1,h)}_{ik}(H)\big)\,
\phi\!\big(E^{(t,1,hi)}_{jl}(H')\big)
\Big],
\end{aligned}
\label{eq:specformer_attention_kernel_H_t}
\end{equation}
}
{\small
\begin{equation}
\begin{aligned}
K^{(t,h)}_{H,ij}
&:=
\mathbb{E}\big[(H^{t,h})_i(H^{t,h'})_j\big]\\
&=
\delta_{h=h'}\,\big(\sigma_O^2\sigma_V^2
\sum_{k,l\in V}
K^{(t-1)}_{H,kl}\,
C^{(t,h)}_{ik,jl}(H,H')\big),
\end{aligned}
\label{eq:specformer_one_layer_H_pre_t}
\end{equation}
}
where $i,k,j,l\in V$ and  $
K_{H,ab}^{(0,h)} := \frac{H^{0}_a \cdot H^0_b}{d}\,
$ with $d$ as the spectral-token embedding dimension.

(II)
If the parameters of the $\ell$-st graph convolution layer satisfy
\(
W_{\ell,ij}\sim \mathcal{N}(0,\sigma_w^2/d_{\ell-1}),
W^{\ell,H}_{ij}\sim \mathcal{N}\!\Big(0,\frac{\sigma_H^2}{d^{\ell,H}d_{\ell-1}}\Big)
\)
and the eigenvalue decoder layers satisfy
\(
W^{(\ell,h)}_{\lambda,ij}\sim \mathcal{N}(0,\sigma_\lambda^2/d_T)
\)
independently for all indices and spectral heads $h$, then, as
$d_\ell,d_T,d^{\ell,H}\to\infty$,
$
g^{\ell}(X)
$
converges in distribution to
$
\mathcal{GP}(0,\Sigma^{(\ell)})
$
with
{\small
\begin{equation}
\begin{aligned}
K_{\lambda,ab}
&:=
\delta_{h=h'}\,\mathbb{E}\big[
\bar\lambda^{(h)}_a(H)\,
\bar\lambda^{(h')}_b(H')
\big]\\
&=
\delta_{h=h'}\,\mathbb{E}\Big[
\sigma\!\big((\bar H^{(h)} W^{(\ell,h)}_\lambda)_a\big)\,
\sigma\!\big((\bar H^{(h')} W^{(\ell,h)}_\lambda)_b\big)
\Big],
\end{aligned}
\label{eq:specformer_lambda_kernel_T}
\end{equation}
}
{\small
\begin{equation}
\Sigma^{(\ell)}_{ab}
=
\sigma^2_H\sigma^2_w
U\Big(
K_{\lambda}
\odot
\big(U^\top K^{(\ell-1)}U\big)
\Big)U^\top,
\label{eq:specformer_X_propagated_T}
\end{equation}
}
where $a,b\in V$.
\end{theorem}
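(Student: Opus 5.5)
We prove Theorem~\ref{thm:Specformer_GP} by induction, first over the token layers $t=1,\dots,T$ and then over the graph-convolution layers $\ell$. The argument follows the GAT and Graphormer cases (Theorems~\ref{thm:GAT_GP} and~\ref{thm:Graphormer_GP}) and the standard attention-NNGP argument of \citet{InfiniteAttention}.

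\emph{Part (I).} The induction hypothesis is that, for each spectral head $h$, the tokens $H^{t-1,k,h}$ converge in distribution to a centered GP with kernel $K_H^{(t-1)}$ that is i.i.d.\ across feature coordinates. The base case $t=1$ is deterministic, with $K_H^{(0)}=H^0H^{0\top}/d$.

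The first step handles the logits. Conditionally on $H^{t-1}$, the quantity $E^{(t,k,h)}_{ij}$ is $d_t^{-1/2}\sum_{c}(H^{t-1}_iW^Q)_c(H^{t-1}_jW^K)_c$. Its summands are i.i.d.\ over $c$, because $W^Q$ and $W^K$ have independent columns. Its second moment factorizes as $\sigma_Q^2\sigma_K^2\,\hat K_{ik}\hat K_{jl}$, where $\hat K$ is the empirical kernel of $H^{t-1}$. A multivariate CLT over $c$, together with $\hat K\to K_H^{(t-1)}$ (a weak LLN using the induction hypothesis), gives~\eqref{eq:specformer_head_cov_H_t}. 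The logits of distinct heads $(k,h)\neq(k',h')$ use independent weights, so they are uncorrelated and hence asymptotically independent.

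The second step takes the value and output projections. Conditionally on the attention matrices and on $H^{t-1}$, the output $H^{t,h}_{i,c}=\sum_{k}\sum_{m}(S^{(t,k,h)}H^{t-1}W^{V,tkh})_{i,m}W^{th,H}_{(k,m),c}$ is a sum of $d^{t,H}d_{t-1}$ terms. These terms are exchangeable in $k$, and $W^{th,H}$ is Gaussian with variance scaling $\sigma_O^2/(d^{t,H}d_{t-1})$. Integrating out $W^V$ and $W^{H}$ gives the conditional covariance
\[
\frac{\sigma_O^2\sigma_V^2}{d^{t,H}}\sum_k \sum_{k',l}(S^{(t,k,h)})_{ik'}\hat K_{k'l}(S^{(t,k,h)})_{jl}.
\]
By the LLN over heads $k$, this converges to $\sigma_O^2\sigma_V^2\sum_{k',l}K^{(t-1)}_{H,k'l}C^{(t,h)}_{ik',jl}$, which is~\eqref{eq:specformer_one_layer_H_pre_t}. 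Gaussianity then follows because the output is conditionally Gaussian given a covariance that converges in probability.

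\emph{Part (II).} First, $\bar\lambda^{(h)}_a=\sigma((\bar H^{(h)}W^{(\ell,h)}_\lambda)_a)$. Conditionally on $\bar H$, the argument of $\sigma$ is Gaussian with covariance $\sigma_\lambda^2\hat K^{(T)}_H$. Taking expectations gives~\eqref{eq:specformer_lambda_kernel_T}, and independence across $h$ comes from independent $W_\lambda^{(\ell,h)}$ and token stacks. Next, write $S^{(h)}=U\mathrm{diag}(\bar\lambda^{(h)})U^\top$ and compute the conditional covariance of $g^\ell$ given all the $S^{(h)}$ and $f^{\ell-1}$. It equals
\[
\frac{\sigma_H^2\sigma_w^2}{d^{\ell,H}}\sum_h S^{(h)}\hat K^{(\ell-1)}S^{(h)\top}.
\]
Expanding, the $(a,b)$ entry of each summand is
\[
\sum_{p,q}U_{ap}U_{bq}\,\bar\lambda^{(h)}_p\bar\lambda^{(h)}_q\,(U^\top\hat K U)_{pq}.
\]
By the LLN over i.i.d.\ heads, $\bar\lambda_p\bar\lambda_q$ is replaced by $K_{\lambda,pq}$, which yields $\sigma_H^2\sigma_w^2\,U(K_\lambda\odot U^\top K^{(\ell-1)}U)U^\top$. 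We then conclude with the same conditional-Gaussian argument as in Part~(I).

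The main obstacle is the rigorous justification of the limits: the product structure of the logits and the softmax nonlinearity make the summands non-Gaussian and non-independent across coordinates. The LLN over heads and the convergence of empirical kernels also need uniform integrability of the attention weights to pass to the limit inside $C^{(t,h)}$. For this we would use that softmax outputs are bounded in $[0,1]$, and that $\sigma$ is ReLU and hence has linear growth, which gives the moment bounds. The joint limit $d_t,d^{t,H}\to\infty$ would be handled by taking the limits sequentially, or by applying the exchangeability CLT of \citet{Matthews2018}.
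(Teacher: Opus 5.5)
Your argument is sound in outline and ends with the same computation as the paper: the diagonal structure of $S^{(h)}=U\mathrm{diag}(\bar\lambda^{(h)})U^\top$ turns $\bar\lambda_p\bar\lambda_q$ into the Hadamard factor $K_\lambda$. The limit theorem doing the work is different, though.

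The paper proves one architecture-agnostic statement, Lemma~\ref{lem:Node-gp-limit}, using Cram\'er--Wold and the Blum--Kiefer--Rosenblatt CLT for exchangeable arrays over the head index. This reduces each model to convergence in distribution of its operator $S^{(h)}$ plus uniform moment bounds. For Specformer these come from two lemmas:
\begin{itemize}
\item Lemma~\ref{lem:S-specformer}: Part (I) is imported wholesale from the transformer result of \citet{InfiniteAttention}. Then $\bar H W_\lambda$ is Gaussianised by a CLT and pushed through $\sigma$ and $\lambda\mapsto U\mathrm{diag}(\lambda)U^\top$ by continuous mapping.
\item Lemma~\ref{lem:mp-specformer}: the moment bounds follow from $\|S^{(h)}v\|_2\le\|\bar\lambda^{(h)}\|_\infty\|v\|_2$.
\end{itemize}

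You instead get Gaussianity exactly from the Gaussian output projection. You then need only a law of large numbers over conditionally i.i.d.\ heads for the random covariance. You also reprove Part (I) rather than citing it, and you use that $\bar H W_\lambda$ is exactly Gaussian given $\bar H$.

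Your route avoids Blum's mixed fourth-moment and third-moment conditions. It needs only convergence of conditional second moments of $S^{(h)}$, not distributional convergence of $S^{(h)}$. The paper's route buys a single lemma covering GAT, Graphormer, Specformer and GTN at once, and it is set up for the simultaneous limit. Yours naturally gives sequential limits unless you make the head-LLN uniform in the inner widths.

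One step needs repair. Conditionally on the attention matrices and $H^{t-1}$ (resp.\ on $S^{(h)}$ and $f^{\ell-1}$), the output is \emph{not} Gaussian. It is bilinear in the independent Gaussian matrices $W^{V,tkh}$ and $W^{th,H}$ (resp.\ $W^{\ell,h}$ and $W^{\ell,H}$). The covariance you wrote is what remains after integrating $W^{V}$ out.

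The fix is to condition on $W^{V}$ (resp.\ $W^{\ell,h}$) as well. The output is then exactly Gaussian, being linear in the final projection, with random covariance $\hat\Sigma=\frac{\sigma_O^2}{d^{t,H}d_{t-1}}\sum_{k}\sum_{m}(S^{(t,k,h)}H^{t-1}W^{V,tkh})_{im}(S^{(t,k,h)}H^{t-1}W^{V,tkh})_{jm}$. Its per-head summands are i.i.d.\ given $H^{t-1}$, and its expectation over $W^V$ is exactly your expression. So the LLN over heads gives the same limit $\Sigma$. Finally, $\mathbb{E}\bigl[\exp(-\tfrac12 s^\top\hat\Sigma s)\bigr]\to\exp(-\tfrac12 s^\top\Sigma s)$ by bounded convergence. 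With this change the argument goes through, in Part (II) as well as Part (I).
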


Equations~\eqref{eq:specformer_head_cov_H_t}--\eqref{eq:specformer_one_layer_H_pre_t}
describe how the covariance induced by the previous token layer is recursively
transformed through a standard transformer. At the terminal token layer, Equation~\eqref{eq:specformer_lambda_kernel_T} transfers the
resulting token-level covariance to the spectral domain by inducing a kernel over the
learned spectral filter coefficients. Equation~\eqref{eq:specformer_X_propagated_T} then defines the recursion of the node-feature kernel: the previous-layer node covariance is
mapped into the Laplacian eigenbasis, modulated elementwise according to the induced
spectral kernel, and projected back to the node domain. Together, these equations specify
a coupled kernel recursion in which attention governs spectral reweighting, and graph
convolution propagates the resulting covariance across layers.

\section{Proofs of Theorems~\ref{thm:GAT_GP}, ~\ref{thm:Graphormer_GP}, ~\ref{thm:Specformer_GP}, ~\ref{thm:GTN_GP}}
\paragraph{Proof technique.}
\label{app:proof of GP}
Our analysis follows the standard induction-on-layers framework for
establishing Gaussian process (GP) limits of randomly initialized deep
neural architectures
(\citet{Matthews2018,Lee2018,Novak2019,GarrigaAlonso2019,Yang2019a,InfiniteAttention}).
When the layer-$(\ell-1)$ representations converge in distribution
to a Gaussian process, we show that the layer-$\ell$ representations
also converge to a Gaussian process under suitable initialization and
moment conditions.

A key methodological step of this work is to develop a unified convergence
framework for graph transformers in the infinite-head regime.
At a high level, our strategy is to establish a single infinite-head
convergence theorem that yields a Gaussian-process limit for the model
outputs, while making explicit the analytic conditions required for
interchanging limits and expectations.

Concretely, Appendix~\ref{app:infinite_head_proof} proves a general
infinite-head convergence result for multi-head attention aggregation
layers, covering all four architectures studied in this paper within one
theorem. Importantly, this output-level convergence in the infinite-head
limit is not obtained in isolation: the proof requires
distributional convergence of the corresponding graph convolution operators,
together with uniform integrability, which ensures convergence of expectations and
justifies interchanging limits and $E[\cdot]$. Therefore,
we separately establish the convergence of the graph convolution operators
for each model Appendix~\ref{app:gat-proof}, \ref{app:graphormer-proof},
\ref{app:specformer_proof}, and \ref{app:gtn_proof}. These operator-level results, together
with uniform integrability (Lemma~\ref{lem:uniform-integrability}), yield
convergence of expectations via Theorem~\ref{thm:expectation-convergence}.
Combining these ingredients completes a single coherent route to the final,
uniform convergence statements for all models under infinitely many heads.

Technically, the core convergence arguments proceed by reducing
finite-dimensional distributional claims to one-dimensional linear
projections via the Cram\'er--Wold device (Lemma~\ref{lem:cramer-wold}),
followed by a central limit theorem for exchangeable triangular arrays
(Lemma~\ref{lem:blum}) applied at the multi-head aggregation level.
Slutsky's lemma (Lemma~\ref{lem:slutsky}) and the continuous mapping theorem
are then used to combine the convergent random components. Finally, passage
of limits through expectations is justified by uniform integrability
(Lemma~\ref{lem:uniform-integrability}) together with the
convergence-of-expectations theorem (Theorem~\ref{thm:expectation-convergence}),
which is enabled precisely by the integrability properties established
through the  graph convolution operator convergence in Appendix~\ref{app:gat-proof}--\ref{app:gtn_proof}.

\subsection{Convergence of $g^\ell(X)$}
\label{app:infinite_head_proof}
Since all this four models require the number of attention heads to grow to infinity in order to admit a well-defined Gaussian Process (GP) limit, we adopt a unified and general analytical framework to establish their GP convergence. 

We analyze the multi-head aggregation at a fixed layer index $\ell$,
mapping the GP input $f^{\ell-1}$ to the GP output $f^{\ell}$. For each head $h\in\{1,\dots,d^{\ell,H}\}$, node $a\in V$ and input $X\in\mathcal X_0$, define the single-head pre-activation output at layer $\ell$ by
\begin{equation}
g^{\ell h}(X)
:=  
S_{\mathrm{GNN}}^{(\ell,h)} f^{\ell-1}(X)\, W_{\ell,h},
\label{eq:single-head-output-new}
\end{equation}

The multi-head output at layer $\ell$ is given by
\begin{equation}
g^{\ell}(X)
=
\big[g^{\ell1}(X),\dots,g^{\ell d^{\ell,H}}(X)\big]\,
W^{\ell,H},
\label{eq:gat_concat-again}
\end{equation}
where $W^{\ell,H}$ is the output projection matrix, whose columns we write as
\[
W^{\ell,H} = \bigl[\,w^{\ell,H}_1,\dots,w^{\ell,H}_{d^{\ell,H}}\,\bigr],
\]
with $w^{\ell,H}_h$ corresponding to head $h$.
The variance of $W^{\ell,H}$ is chosen such that the overall scaling corresponds to a properly rescaled weighted sum over the $d^{\ell,H}$ attention heads.

\begin{lemma}[Adaptation of Theorem 2 from \citealp{Blum1958}]
\label{lem:blum}
For each $m\in\mathbb N$, let $\{X_{m,i} : i=1,2,\ldots\}$ be an infinitely exchangeable sequence with $\mathbb E[X_{n,1}]=0$ and $\mathbb E[X_{m,1}^2]=\sigma_n^2$, such that $\lim_{m\to\infty}\sigma_m^2=\sigma^2$ for some $\sigma^2\ge 0$.
Let
\begin{equation}
S_n
= \frac{1}{\sqrt{d_m}} \sum_{i=1}^{d_m} X_{m,i},
\label{eq:Sn}
\end{equation}
for some sequence $(d_m)_{m\ge 1}\subset\mathbb N$ such that $\lim_{m\to\infty}d_m=\infty$.
Assume:
\[
\begin{aligned}
(a)\;& \mathbb E[X_{m,1}X_{m,2}] = 0, \\
(b)\;& \lim_{n\to\infty} \mathbb E[X_{m,1}^2 X_{m,2}^2] = \sigma^4, \\
(c)\;& \mathbb E|X_{m,1}|^3 = o(\sqrt{d_m}).
\end{aligned}
\]
Then $S_n \xrightarrow{d} Z$, where $Z=0$ if $\sigma^2=0$, and $Z\sim\mathcal N(0,\sigma^2)$ otherwise.
\end{lemma}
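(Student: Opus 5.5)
We follow the original argument of \citet{Blum1958} and reduce to a conditional i.i.d.\ central limit theorem via de Finetti's theorem. (We read the index typos $n$ in the statement as $m$ throughout.) Fix $m$. Since $\{X_{m,i}\}_{i\ge1}$ is infinitely exchangeable, de Finetti's theorem gives a $\sigma$-field $\mathcal F_m$ (the tail / directing $\sigma$-field) such that, conditionally on $\mathcal F_m$, the $X_{m,i}$ are i.i.d. Define the random conditional moments
\[
\mu_m:=\mathbb E[X_{m,1}\mid\mathcal F_m],\qquad v_m:=\mathbb E[X_{m,1}^2\mid\mathcal F_m].
\]
The plan is to show, in order, that $\mu_m=0$ almost surely, that $v_m\to\sigma^2$ in $L^2$, and then that the conditional characteristic function of $S_m$ is close to $e^{-t^2 v_m/2}$.

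\textbf{Step 1: the conditional moments.} By conditional independence, $\mathbb E[X_{m,1}X_{m,2}]=\mathbb E[\mu_m^2]$, so assumption (a) forces $\mu_m=0$ almost surely. Similarly, $\mathbb E[X_{m,1}^2X_{m,2}^2]=\mathbb E[v_m^2]$ and $\mathbb E[v_m]=\sigma_m^2\to\sigma^2$. Assumption (b) then gives
\[
\operatorname{Var}(v_m)=\mathbb E[v_m^2]-(\mathbb E v_m)^2\to\sigma^4-\sigma^4=0 .
\]
Hence $v_m\to\sigma^2$ in $L^2$, and in particular in probability. As a by-product, $\sup_m\mathbb E[v_m^2]<\infty$.

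\textbf{Step 2: conditional characteristic-function bound.} Let $\varphi_m(s):=\mathbb E[e^{isX_{m,1}}\mid\mathcal F_m]$. Conditionally on $\mathcal F_m$,
\[
\mathbb E[e^{itS_m}\mid\mathcal F_m]=\varphi_m(t/\sqrt{d_m})^{d_m}.
\]
Both $\varphi_m(t/\sqrt{d_m})$ and $e^{-t^2v_m/(2d_m)}$ have modulus at most $1$, so the telescoping inequality $|a^d-b^d|\le d|a-b|$ applies. A third-order Taylor expansion with conditional mean zero and conditional variance $v_m$ bounds $|\varphi_m(s)-(1-s^2v_m/2)|$ by $|s|^3\mathbb E[|X_{m,1}|^3\mid\mathcal F_m]/6$. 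The elementary bound $|e^{-x}-(1-x)|\le x^2/2$ for $x\ge 0$ handles the exponential. Together these give
\[
\bigl|\mathbb E[e^{itS_m}\mid\mathcal F_m]-e^{-t^2v_m/2}\bigr|\le \frac{|t|^3\,\mathbb E[|X_{m,1}|^3\mid\mathcal F_m]}{6\sqrt{d_m}}+\frac{t^4v_m^2}{8d_m}.
\]

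\textbf{Step 3: take expectations and conclude.} Taking expectations in the bound of Step 2, the first term is $O\big(\mathbb E|X_{m,1}|^3/\sqrt{d_m}\big)\to0$ by assumption (c). The second term is $O\big(\sup_m\mathbb E[v_m^2]/d_m\big)\to0$ by Step 1. Moreover $\mathbb E[e^{-t^2v_m/2}]\to e^{-t^2\sigma^2/2}$ by bounded convergence, since $v_m\to\sigma^2$ in probability and the integrand lies in $[0,1]$. Therefore $\mathbb E[e^{itS_m}]\to e^{-t^2\sigma^2/2}$ for every $t$. Lévy's continuity theorem then yields $S_m\xrightarrow{d}\mathcal N(0,\sigma^2)$, which is the point mass at $0$ when $\sigma^2=0$.

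\textbf{Expected obstacle.} The delicate part is Step 1: the assumptions are stated only on unconditional mixed moments, and de Finetti's theorem is what converts them into almost-sure and $L^2$ statements about the random directing moments. The conditional third moment $\mathbb E[|X_{m,1}|^3\mid\mathcal F_m]$ may be large on some events. This is harmless because the bound in Step 2 is linear in it, so only its expectation, controlled by (c), is ever needed.
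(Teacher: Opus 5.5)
Your argument is correct; the paper gives no proof of its own and simply imports the result from \citet{Blum1958} via the triangular-array version in \citet{Matthews2018}, and your route (de Finetti reduction followed by a conditional Lyapunov-type characteristic-function bound) is essentially the standard proof behind that citation, with (c) supplying the quantitative control needed because the directing measure changes with $m$. One small correction: (b) and (c) only constrain the moments asymptotically, so $\sup_m\mathbb E[v_m^2]<\infty$ and finiteness of $\mathbb E|X_{m,1}|^3$ hold only for all sufficiently large $m$, which is all your limit argument actually uses.
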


\begin{lemma}[Node-level GP limit]
\label{lem:Node-gp-limit}
Let the assumptions of Theorem~\ref{thm:GAT_GP} , Theorem~\ref{thm:Graphormer_GP}, Theorem~\ref{thm:Specformer_GP} and Theorem~\ref{thm:GTN_GP} hold.
 The collection of
layer-$\ell$ pre-activation outputs
\[
g^{\ell}(X)
:= \bigl\{ g^{\ell}_a(X) : X\in\mathcal X_0,\ a\in V \bigr\}
\]
converges in distribution to a centred Gaussian process whose
finite-dimensional marginals are Gaussian with covariance given by the
node-level kernel $K^{(\ell)}$.
\end{lemma}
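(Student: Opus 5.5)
We argue by induction on $\ell$, following the standard layerwise NNGP route the paper lays out. We assume that the layer-$(\ell-1)$ post-activations $f^{\ell-1}$ converge to a Gaussian process with kernel $K^{(\ell-1)}$ (or $\tilde K^{(\ell-1)}$ for Graphormer). We also assume that, for each architecture, the head-wise graph convolution operators $S^{(\ell,h)}_{\mathrm{GNN}}$ converge jointly in distribution with i.i.d.\ heads, as established per model in the operator-level results. From these, we derive Gaussianity of $g^\ell$.

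\textbf{Step 1: reduce to one dimension.} Fix a finite set of inputs $X\in\mathcal X_0$, nodes $a\in V$ and output coordinates. By the Cram\'er--Wold device (Lemma~\ref{lem:cramer-wold}) it suffices to show that every linear combination
\[
\mathcal T := \sum_{(X,a,c)} \beta_{X,a,c}\, g^\ell_{ac}(X)
\]
converges to a centred normal with variance $\sum\beta\beta'\,\Sigma^{(\ell)}$. Write $W^{\ell,H}_{hc}=\sigma_H (d^{\ell,H}d_{\ell-1})^{-1/2}\,\varepsilon_{hc}$ blockwise. Then
\[
\mathcal T=\frac{1}{\sqrt{d^{\ell,H}}}\sum_{h=1}^{d^{\ell,H}} \xi_h,
\qquad
\xi_h:=\frac{\sigma_H}{\sqrt{d_{\ell-1}}}\sum_{(X,a,c)}\beta_{X,a,c}\sum_{k}\bigl(g^{\ell h}(X)\bigr)_{ak}\,\varepsilon_{hk,c}.
\]
Conditionally on everything except the head index, the summands $\xi_h$ are identically distributed and exchangeable in $h$. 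This holds because the head weights $W^{\ell,h}$, $\vec v^{\ell h}$, $W^{Q/K/V,\ell h}$ and $\varepsilon_{h\cdot}$ are i.i.d.\ across heads, and all heads share only the input $f^{\ell-1}$. Hence Lemma~\ref{lem:blum} applies with $m$ indexing the joint growth of $d_{\ell-1},d^{\ell,H}$.

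\textbf{Step 2: verify the conditions of Lemma~\ref{lem:blum}.} For (a), $\mathbb E[\xi_1\xi_2]=0$ because $\varepsilon_{1\cdot}$ and $\varepsilon_{2\cdot}$ are independent and centred. The mean-zero property follows the same way. The variance is
\[
\mathbb E\xi_1^2=\sigma_H^2\sum\beta\beta'\,\frac{1}{d_{\ell-1}}\sum_k \mathbb E\bigl[g^{\ell1}_{ak}(X)\,g^{\ell1}_{bk}(X')\bigr].
\]
Expanding $g^{\ell 1}=S^{(\ell,1)}f^{\ell-1}W^{\ell,1}$ and integrating out $W^{\ell,1}$ gives
\[
\sigma_w^2\sum_{i,j}\mathbb E\Bigl[S_{ai}S_{bj}\,\tfrac{1}{d_{\ell-1}}\langle f^{\ell-1}_i,f^{\ell-1}_j\rangle\Bigr].
\]
Using the inductive hypothesis and the asymptotic independence of $S^{(\ell,1)}$ from the empirical kernel $\frac{1}{d_{\ell-1}}\langle f_i,f_j\rangle\to K^{(\ell-1)}_{ij}$, this converges to
\[
\sigma_H^2\sigma_w^2\sum_{i,j}K^{(\ell-1)}_{ij}\,C^{(\ell)}_{ai,bj},
\]
which is exactly the claimed $\Sigma^{(\ell)}$. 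The neighbourhood restriction for GAT enters through the support of $S$. For Specformer, the spectral form $U(K_\lambda\odot U^\top K U)U^\top$ comes from substituting $S=U\mathrm{diag}(\bar\lambda)U^\top$. For (b), $\mathbb E[\xi_1^2\xi_2^2]\to\sigma^4$ follows the same way: conditionally on $f^{\ell-1}$, heads $1$ and $2$ are independent, and the empirical kernel concentrates to a deterministic limit, so the cross-moment factorises in the limit. For (c), a uniform bound on $\mathbb E|\xi_1|^{3}$ suffices. Softmax entries lie in $[0,1]$, and Gaussian weights have all moments, which reduces this to bounded fourth moments of $f^{\ell-1}$. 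These are supplied inductively, since ReLU is linearly bounded.

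\textbf{Main obstacle.} The main obstacle is justifying the limit exchanges in Step 2, namely replacing $\mathbb E[S_{ai}S_{bj}\,\hat K_{ij}]$ by $C^{(\ell)}_{ai,bj}K^{(\ell-1)}_{ij}$. Here $S$ depends nonlinearly on the same finite-width features, through $\exp$ or $\sigma$ of Gaussian logits. We would first use the operator-level convergence in distribution of $(S^{(\ell,1)},\hat K^{(\ell-1)})$ to $(S^\infty,K^{(\ell-1)})$, combined with Slutsky's lemma (Lemma~\ref{lem:slutsky}) and the continuous mapping theorem, to get convergence in distribution of the products. We would then upgrade this to convergence of expectations via uniform integrability (Lemma~\ref{lem:uniform-integrability} and Theorem~\ref{thm:expectation-convergence}). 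The required uniform $(2+\delta)$-moment bounds come from boundedness of softmax weights, or from Gaussian tails of $\bar\lambda$ for Specformer, together with the inductive moment bounds on $f^{\ell-1}$.

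Once the conditions are verified, Lemma~\ref{lem:blum} gives $\mathcal T\to\mathcal N\bigl(0,\sum\beta\beta'\Sigma^{(\ell)}\bigr)$. Cram\'er--Wold then yields the finite-dimensional Gaussian marginals with kernel $\Sigma^{(\ell)}$, and hence $K^{(\ell)}$ via Equation~\eqref{eq:relu-nngp-kernel}. This closes the induction.
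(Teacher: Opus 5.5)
Your proposal is correct and follows essentially the same route as the paper. Like the paper, you use Cram\'er--Wold, rewrite the projection as a $1/\sqrt{d^{\ell,H}}$-scaled sum of head-wise terms through $W^{\ell,H}$, verify the conditions of Lemma~\ref{lem:blum} using the operator-level convergence of $S^{(\ell,h)}$, concentration of the empirical kernel, Slutsky and uniform integrability, and get zero mean and zero cross-covariance from independence of the head blocks. One caveat, which the paper shares: for GAT the logits use the same $W^{\ell,h}$ as the value path, so ``integrating out $W^{\ell,1}$'' with $S^{(\ell,1)}$ held fixed is valid only asymptotically (each output coordinate enters the logits with weight $O(d_\ell^{-1/2})$), and that step needs a short decoupling argument.
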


\emph{Proof.}
Following the Cramér--Wold device \citep[p.~383]{Billingsley1986}, it suffices
to establish convergence of one-dimensional projections of
finite-dimensional marginals of $g^{\ell}(X)$.

Fix a finite index set $\mathcal L\subset\mathcal X_0\times V$ and test vectors $
\{\alpha^{X,a}\in\mathbb R^{N} : (X,a)\in\mathcal L\}.$
Consider the linear statistic
\begin{equation}
T^{(\ell)}
:= \sum_{(X,a)\in\mathcal L} (\alpha^{X,a})^\top g_a^{\ell}(X).
\label{eq:T-layer-ell}
\end{equation}
Using the concatenation form \eqref{eq:gat_concat-again} and writing
$g_a^{(\ell+1)h}(X)$ for the $a$-th row of the $h$-th head, we obtain
\begin{align*}
T^{(\ell)}
&= \sum_{(X,a)\in\mathcal L} (\alpha^{X,a})^\top
\Bigl(\sum_{h=1}^{d^{\ell,H}} g_a^{\ell h}(X)\, w^{\ell,H}_h\Bigr) \\
&= \sum_{h=1}^{d^{\ell,H}}
\underbrace{\sum_{(X,a)\in\mathcal L}
(\alpha^{X,a})^\top g_a^{\ell h}(X)\, w^{\ell,H}_h}_{=: \;\zeta_h}.
\end{align*}
Define the rescaled head-level variables
\[
\psi_h := \sqrt{d^{\ell,H}}\;\zeta_h,
\qquad h=1,\dots,d^{\ell,H},
\]
so that
\begin{equation}
T^{(\ell)}
= \frac{1}{\sqrt{d^{\ell,H}}}
\sum_{h=1}^{d^{\ell,H}} \psi_h.
\label{eq:T-psi-sum}
\end{equation}

For each head width $d^{\ell,H}$, this defines a triangular array
\[
\{\psi_{d^{\ell,H},h}\}_{h=1}^{d^{\ell,H}},
\]
where, suppressing the explicit dependence on $d^{\ell,H}$ for notational
simplicity, each $\psi_h$ admits the representation
\begin{equation}
\psi_h
= \sqrt{d^{\ell,H}}
\sum_{(X,a)\in\mathcal L}
(\alpha^{X,a})^\top g^{\ell h}_a(X)\, w^{\ell,H}_h.
\label{eq:psi-head}
\end{equation}

Thus $T^{(\ell)}$ is of the form~\eqref{eq:Sn} in Lemma~\ref{lem:blum}, with
$X_{n,i}$ identified with $\psi_h$ and $d_n$ with $d^{\ell,H}$.
To invoke Lemma~\ref{lem:blum}, it suffices to verify conditions (a)--(c) for the
array $\{\psi_h\}_{h=1}^{d^{\ell,H}}$, which is achieved by the following
lemmas:
\begin{itemize}
    \item Exchangeability of $\{\psi_h\}$ in the head index $h$ is established in
    Lemma~\ref{lem:head-exchangeable-new}.
    \item Zero mean and vanishing cross-covariance,
    $\mathbb E[\psi_1]=0$ and $\mathbb E[\psi_1\psi_2]=0$,
    follow from Lemma~\ref{lem:head-zero-mean-new}.
    \item Convergence of the variance,
    $\mathbb E[\psi_1^2]\to\tau^2$,
    is proved in Lemma~\ref{lem:head-variance-new}.
    \item Convergence of the mixed fourth moment
    $\mathbb E[\psi_1^2\psi_2^2]\to\tau^4$ and the growth condition
    $\mathbb E|\psi_1|^3=o(\sqrt{d^{\ell,H}})$
    are established in Lemma~\ref{lem:head-wise mixed fourth moment}.
\end{itemize}

Therefore, by Lemma~\ref{lem:blum},
$T^{(\ell)}$ converges in distribution to a centred Gaussian random variable
with variance $\tau^2$.
Since $\mathcal L$ and the test vectors $\{\alpha^{X,a}\}$ are arbitrary, the
finite-dimensional distributions of $g^{\ell}(X)$ converge to those of a
Gaussian process with covariance given by the node-level kernel
$K^{(\ell)}$.
\hfill$\square$
\medskip

\begin{lemma}[Head-wise exchangeability]
\label{lem:head-exchangeable-new}
Under the assumptions of Theorem~\ref{thm:GAT_GP}, the random variables $\{\psi_{h}\}_{h=1}^{d^{\ell,H}}$ are exchangeable over the head index $h$.
\end{lemma}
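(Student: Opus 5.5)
The plan is to exhibit $\{\psi_h\}_{h=1}^{d^{\ell,H}}$ as a fixed, head-independent measurable function applied to head-specific random parameters that are i.i.d. across $h$, conditionally on the shared previous-layer representation $f^{\ell-1}$. Exchangeability then follows from a permutation-invariance argument.

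Concretely, fix the finite index set $\mathcal L$ and the test vectors $\{\alpha^{X,a}\}$. For each head $h$, collect the head-specific parameters into one vector
\[
\Theta_h := \bigl(W^{\ell,h},\,\vec v^{\ell h},\,w^{\ell,H}_h\bigr).
\]
In GAT, the attention logits $E^{(\ell h)}$, and hence $S^{(\ell,h)}_{\mathrm{GAT}}(X)$, depend only on $f^{\ell-1}(X)$, $W^{\ell,h}$ and $\vec v^{\ell h}$. The single-head output $g^{\ell h}(X)=S^{(\ell,h)}_{\mathrm{GAT}}(X)f^{\ell-1}(X)W^{\ell,h}$ therefore depends only on $(f^{\ell-1},\Theta_h)$. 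By \eqref{eq:psi-head}, there is one deterministic measurable map $\Psi$, the same for every $h$, such that
\[
\psi_h=\Psi\bigl(\{f^{\ell-1}(X)\}_{X\in\mathcal X_0},\,\Theta_h\bigr).
\]
Here $\Psi$ depends on $d^{\ell,H}$ only through the $\sqrt{d^{\ell,H}}$ factor, which is common to all heads.

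Under the priors of Theorem~\ref{thm:GAT_GP}, the blocks $\Theta_1,\dots,\Theta_{d^{\ell,H}}$ are i.i.d. Each column $w^{\ell,H}_h$ of $W^{\ell,H}$ has identically distributed entries, since the variance $\sigma_H^2/(d^{\ell,H}d_{\ell-1})$ does not depend on $h$. The blocks are also jointly independent of $f^{\ell-1}$, which is built only from parameters of layers $1,\dots,\ell-1$.

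Now take any permutation $\pi$ of $\{1,\dots,d^{\ell,H}\}$. Conditionally on $f^{\ell-1}$, the permuted vector $(\Theta_{\pi(1)},\dots,\Theta_{\pi(d^{\ell,H})})$ has the same law as $(\Theta_1,\dots,\Theta_{d^{\ell,H}})$, because i.i.d. sequences are permutation invariant. Applying $\Psi(f^{\ell-1},\cdot)$ coordinatewise shows that $(\psi_{\pi(h)})_h$ and $(\psi_h)_h$ have the same conditional law given $f^{\ell-1}$. Integrating over the law of $f^{\ell-1}$ then gives equality of the unconditional joint laws, which is exchangeability.

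Lemma~\ref{lem:blum} is stated for infinitely exchangeable sequences. To get an infinite sequence for each $d^{\ell,H}$, I would extend the array by drawing further i.i.d. blocks $\Theta_h$ for $h>d^{\ell,H}$ with the same prior. This is the usual device of \citet{Matthews2018} and \citet{InfiniteAttention}. The same conditional i.i.d. argument then gives infinite exchangeability of $(\psi_h)_{h\ge1}$.

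The main point needing care is not the symmetry itself but the measurability and independence bookkeeping. I must verify that nothing in $S^{(\ell,h)}$ couples different heads. The softmax in \eqref{Eq:gat_softmax} normalizes only over neighbors within a single head, so this holds. I must also check that $f^{\ell-1}$ is independent of all $\Theta_h$, so that conditioning on $f^{\ell-1}$ preserves the i.i.d. structure. The same template carries over verbatim to Graphormer, with $\Theta_h=(W^{Q,\ell h},W^{K,\ell h},W^{V,\ell h},w^{\ell,H}_h)$. One caveat there: if the bias $b_\rho$ is shared across heads, it must be placed with $f^{\ell-1}$ in the conditioning variable. It then still enters every head identically, and exchangeability is preserved.
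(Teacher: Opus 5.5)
Your proposal is correct and follows essentially the same route as the paper: you condition on $f^{\ell-1}$ and note that the head-specific parameters (including those inside $S^{(\ell,h)}_{\mathrm{GAT}}$ and the head block of $W^{\ell,H}$) are i.i.d. across $h$ and independent of $f^{\ell-1}$, so the $\psi_h$ are conditionally i.i.d. and hence exchangeable. Your explicit extension to infinitely many i.i.d. heads, followed by integration over the law of $f^{\ell-1}$, is a cleaner justification of infinite exchangeability than the paper's appeal to de Finetti's theorem, which runs in the converse direction.
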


\begin{proof}
In all four of our models, for distinct heads $h$, the parameters
\[
\{W_{\ell,h},\, w^{\ell,H}_h\}
\]
(including those appearing in $S_{\mathrm{GNN}}^{(\ell,h)}$) are i.i.d.\ across $h$ and independent of all previous-layer variables. 
Conditioning on the previous-layer representation $f^{\ell-1}(\cdot)$ (i.e., fixing $\{f^{\ell-1}(x):x\in\mathcal X\}$), \eqref{eq:psi-head} implies that $(\psi_1,\ldots,\psi_{d^{\ell,H}})$ are i.i.d.\ in $h$, and hence their conditional joint distribution is invariant under any permutation of the head indices. Therefore, by de Finetti's theorem, $\{\psi_h\}_{h\in\mathbb N}$ is infinitely exchangeable.

\end{proof}

\begin{lemma}[Head-wise zero mean and vanishing cross-covariance]
\label{lem:head-zero-mean-new}
$\mathbb E[\psi_{1}] = 
\mathbb E[\psi_{1}\psi_{2}] = 0.
$
\end{lemma}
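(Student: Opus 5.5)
The plan is to condition on everything except the output-projection columns $w^{\ell,H}_h$ and then use that these columns are centred Gaussians, independent across heads. In \eqref{eq:psi-head} each $\psi_h$ depends linearly on $w^{\ell,H}_h$:
\[
\psi_h=\sqrt{d^{\ell,H}}\sum_{(X,a)\in\mathcal L}(\alpha^{X,a})^\top g^{\ell h}_a(X)\,w^{\ell,H}_h .
\]
Here $g^{\ell h}_a(X)$ is a function only of the previous-layer representation $f^{\ell-1}$, the head weights $W_{\ell,h}$, and the parameters entering $S^{(\ell,h)}_{\mathrm{GNN}}$. Under the priors of Table~\ref{tab:weight_priors}, $w^{\ell,H}_h$ is independent of all of these, and the columns are independent across $h$.

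The key steps, in order, are as follows.

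\textbf{Zero mean.} Write $\mathcal F_h$ for the $\sigma$-algebra generated by $f^{\ell-1}$, $W_{\ell,h}$ and the attention/structural parameters of head $h$. By the tower property,
\[
\mathbb E[\psi_1]=\mathbb E\bigl[\mathbb E[\psi_1\mid\mathcal F_1]\bigr]
=\sqrt{d^{\ell,H}}\,\mathbb E\Bigl[\textstyle\sum (\alpha^{X,a})^\top g^{\ell 1}_a(X)\,\mathbb E[w^{\ell,H}_1]\Bigr]=0,
\]
since $\mathbb E[w^{\ell,H}_1]=0$.

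\textbf{Vanishing cross-covariance.} Condition on $\mathcal F_1\vee\mathcal F_2$ together with $w^{\ell,H}_2$. Then $\psi_2$ is measurable with respect to this $\sigma$-algebra, and $w^{\ell,H}_1$ remains independent of it and centred. Hence
\[
\mathbb E[\psi_1\psi_2]=\mathbb E\bigl[\psi_2\cdot c_1\,\mathbb E[w^{\ell,H}_1]\bigr]=0,
\]
where $c_1$ is the $\mathcal F_1$-measurable row vector multiplying $w^{\ell,H}_1$. The heads share $f^{\ell-1}$, so $g^{\ell1}$ and $g^{\ell2}$ are correlated; this does not matter, because the independent centred factor $w^{\ell,H}_1$ kills the product regardless. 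For Specformer, the token layers are shared across convolution heads only through $f^{\ell-1}$ and $H^0$. For GTN, I would note in the same way that the head-specific parameters are i.i.d.\ and independent of $w^{\ell,H}_h$.

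The main obstacle is justifying the use of Fubini/tower, that is, integrability of $\psi_1$ and of $\psi_1\psi_2$. I would handle it by Cauchy--Schwarz together with independence:
\[
\mathbb E|\psi_1\psi_2|\le \bigl(\mathbb E\psi_1^2\,\mathbb E\psi_2^2\bigr)^{1/2},
\qquad
\mathbb E\psi_1^2=d^{\ell,H}\,\mathbb E\bigl[c_1\,\mathrm{Cov}(w^{\ell,H}_1)\,c_1^\top\bigr].
\]
The covariance factor contributes $\sigma_H^2/(d^{\ell,H}d_{\ell-1})$, which cancels the $d^{\ell,H}$. It therefore suffices to have finite second moments of $g^{\ell1}_a(X)$. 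These follow because softmax entries are bounded in $[0,1]$ for GAT and Graphormer, Specformer's $U\mathrm{diag}(\bar\lambda)U^\top$ has moments controlled by Gaussian moments of $\bar H W_\lambda$ under a polynomially bounded $\sigma$, and $f^{\ell-1}$ has finite moments by the induction hypothesis (cf.\ the uniform integrability of Lemma~\ref{lem:uniform-integrability}).
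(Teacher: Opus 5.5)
Your proposal is correct and takes essentially the paper's route: condition on everything except the output-projection block $w^{\ell,H}_1$ (the paper conditions on the node representations and on $\{S^{(\ell,h)}_{\mathrm{GNN}}\}_h$). Since that block is centred and independent of the conditioning variables and of $w^{\ell,H}_2$, both $\mathbb E[\psi_1]$ and $\mathbb E[\psi_1\psi_2]$ vanish. Your Cauchy--Schwarz integrability check, which the paper omits, is a useful addition, but the $[0,1]$ bound on $S$ is unavailable for the linear-attention variants $\phi(x)=x$; there, finite moments at fixed width must come instead from the Gaussian weights and the moment bounds on $f^{\ell-1}$.
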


\begin{proof}
From \eqref{eq:psi-head},
\[
\psi_1
= \sum_{(X,a)\in\mathcal L}\sqrt{d^{\ell,H}}
(\alpha^{X,a})^\top
\Bigl(S_{\mathrm{GNN}}^{(\ell,1)}\,f^{\ell}(X)\Bigr)_a\,
w^{\ell,H}_1.
\]
Condition on the $\sigma$-algebra generated by $\{f^{\ell}(X):X\in\mathcal X_0\}$ and $\{S_{\mathrm{GNN}}^{(\ell,h)}:h\in\mathbb N\}$.
Under this conditioning, $w^{\ell,H}_1$ is independent of all conditioned variables and is centred Gaussian. Therefore,
\[
\mathbb E\!\left[\psi_1 \,\middle|\, \{f^{\ell}(X)\}_{X\in\mathcal X_0}, \{S_{\mathrm{GNN}}^{(\ell,h)}\}_{h\in\mathbb N}\right]
=
\sum_{(X,a)\in\mathcal L}\sqrt{d^{\ell,H}}
(\alpha^{X,a})^\top
\Bigl(S_{\mathrm{GNN}}^{(\ell,1)}\,f^{\ell}(X) \Bigr)_a\,
\mathbb E[w^{\ell,H}_1]
=0,
\]
and hence $\mathbb E[\psi_1]=0$.

For $\mathbb E[\psi_1\psi_2]$, expand the product using \eqref{eq:psi-head}:
\[
\psi_1\psi_2
=
d^{\ell,H}\!\!\sum_{(X,a)\in\mathcal L}\sum_{(X',b)\in\mathcal L}
(\alpha^{X,a})^\top
\Bigl(S_{\mathrm{GNN}}^{(\ell,1)}\,f^{\ell}(X)\Bigr)_a\,
(\alpha^{X',b})^\top
\Bigl(S_{\mathrm{GNN}}^{(\ell,2)}\,f^{\ell}(X')\Bigr)_b\,
w^{\ell,H}_1\,w^{\ell,H}_2 .
\]
Conditioning on $\{f^{\ell}(X)\}_{X\in\mathcal X_0}$ and $\{S_{\mathrm{GNN}}^{(\ell,h)}\}_{h\in\mathbb N}$, the random variables $w^{\ell,H}_1$ and $w^{\ell,H}_2$ are independent, centred Gaussian and independent of the remaining factors in each summand. Thus
\[
\mathbb E[w^{\ell,H}_1 w^{\ell,H}_2]=
\mathbb E[w^{\ell,H}_1]\mathbb E[w^{\ell,H}_2]=0,
\]
which implies
\[
\mathbb E\!\left[\psi_1\psi_2 \,\middle|\, \{f^{\ell}(X)\}_{X\in\mathcal X_0}, \{S_{\mathrm{GNN}}^{(\ell,h)}\}_{h\in\mathbb N}\right]=0,
\qquad\text{and hence}\qquad
\mathbb E[\psi_1\psi_2]=0.
\]
\end{proof}

\begin{lemma}[Head-wise variance convergence]
\label{lem:head-variance-new}
There exists $\tau^2\ge 0$ such that
$
\lim_{d^{\ell,H}\to\infty}\mathbb E[\psi_{1}^2] = \tau^2 .
$
\end{lemma}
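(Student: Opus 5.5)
We will compute $\mathbb E[\psi_1^2]$ exactly by conditioning, and then pass to the limit using the operator-level convergence and uniform integrability established in the model-specific appendices.

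\textbf{Step 1: conditioning on the head output projection.} Recall
$\psi_1=\sqrt{d^{\ell,H}}\sum_{(X,a)\in\mathcal L}(\alpha^{X,a})^\top g^{\ell 1}_a(X)\,w^{\ell,H}_1$.
Condition on the previous-layer representations $\{f^{\ell-1}(X)\}$, the head-$1$ operator $S^{(\ell,1)}_{\mathrm{GNN}}$, and $W_{\ell,1}$.
Under this conditioning, $w^{\ell,H}_1$ is independent, centred Gaussian, with entries of variance $\sigma_H^2/(d^{\ell,H}d_{\ell-1})$.
The factor $d^{\ell,H}$ coming from $\psi_1^2$ therefore cancels exactly, and we obtain
\[
\mathbb E[\psi_1^2\mid\cdot]=\frac{\sigma_H^2}{d_{\ell-1}}\sum_{(X,a),(X',b)\in\mathcal L}\sum_{c}\alpha^{X,a}_c\alpha^{X',b}_c\,\bigl\langle g^{\ell1}_a(X),g^{\ell1}_b(X')\bigr\rangle .
\]
Here $c$ indexes output coordinates. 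The inner product is over the $d_{\ell-1}$-dimensional head feature space, normalised by $1/d_{\ell-1}$.

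\textbf{Step 2: averaging over $W_{\ell,1}$ and identifying the kernel.} Write $g^{\ell1}_a(X)=\sum_i S^{(\ell,1)}_{ai}(X)f^{\ell-1}_i(X)W_{\ell,1}$ and average over $W_{\ell,1}$ with variance $\sigma_w^2/d_{\ell-1}$. This gives
\[
\sigma_H^2\sigma_w^2\sum_{i,j}S^{(\ell,1)}_{ai}(X)\,S^{(\ell,1)}_{bj}(X')\,\widehat K_{ij}(X,X'),
\qquad
\widehat K_{ij}(X,X')=\tfrac{1}{d_{\ell-1}}\langle f^{\ell-1}_i(X),f^{\ell-1}_j(X')\rangle .
\]
A subtlety arises for GAT and Graphormer: the operator $S^{(\ell,1)}$ itself depends on $W_{\ell,1}$ through the logits. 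To handle this, we use the standard Hron et al.\ argument. The logits $E^{(\ell1)}$ are built from projections onto the independent vector $\vec v$ (or $W^Q,W^K$). In the limit they become Gaussian and asymptotically decoupled from the value path, which is exactly the content of part (I) of each theorem. Hence we may replace the joint expectation by $\widehat K_{ij}\,C^{(\ell)}_{ai,bj}$, with an $o(1)$ error.

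\textbf{Step 3: passing to the limit.} By the inductive hypothesis, $\widehat K_{ij}\to K^{(\ell-1)}_{ij}$ in probability (or $\tilde K^{(\ell-1)}$ for Graphormer). By part (I) of Theorems~\ref{thm:GAT_GP}, \ref{thm:Graphormer_GP} and \ref{thm:Specformer_GP}, $S^{(\ell,1)}$ converges in distribution to the softmax/filter of a Gaussian process. Continuous mapping and Slutsky then give convergence in distribution of the integrand. To upgrade this to convergence of expectations, we invoke Lemma~\ref{lem:uniform-integrability} and Theorem~\ref{thm:expectation-convergence}. The limit is
\[
\tau^2=\sum_{(X,a),(X',b)}\sum_c\alpha^{X,a}_c\alpha^{X',b}_c\,\Sigma^{(\ell)}_{ab}(X,X'),
\]
which is manifestly $\ge0$ as the limit of second moments.

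\textbf{Main obstacle.} The hardest step is uniform integrability. For softmax-based $S$ it is easy, since the entries lie in $[0,1]$ and only moments of $\widehat K$ are needed; these are bounded via Gaussian moments propagated through ReLU. For linear attention or Specformer's $\bar\lambda$, however, $S$ is unbounded. There we would bound $\mathbb E|S_{ai}S_{bj}\widehat K_{ij}|^{1+\delta}$ by Hölder, using uniform-in-width bounds on higher moments of Gaussian polynomial chaos, for which hypercontractivity suffices.
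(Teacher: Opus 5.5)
Your proposal is correct and follows the paper's proof essentially step for step. You integrate out $w^{\ell,H}_1$ to cancel the $d^{\ell,H}$ factor, reduce to an integrand of the form $S^{(\ell,1)}\,\frac{f^{\ell-1}f^{\ell-1\top}}{d_{\ell-1}}\,S^{(\ell,1)\top}$, get convergence in distribution from the operator-level limits plus Slutsky and continuous mapping, and upgrade to convergence of expectations via uniform integrability. For that last step the paper cites its moment-propagation lemmas and Lemma~32 of Hron et al., where you invoke H\"older and hypercontractivity.

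Your extra care about $S_{\mathrm{GAT}}^{(\ell,1)}$ depending on $W^{\ell,1}$ goes beyond the paper, which integrates out $W^{\ell,1}$ as if it were independent of $S$. The cleanest justification is that the value-path Gram $\frac{1}{d}\sum_k (f_i W)_k (f_j W)_k$ converges in probability to the constant $\sigma_w^2 K^{(\ell-1)}_{ij}$, so Slutsky applies regardless of dependence. For Graphormer the issue does not arise at all, since $W^{V,\ell h}$ is independent of $W^{Q,\ell h},W^{K,\ell h}$.
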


\begin{proof}
Observe that $\mathbb E[\psi_1^2]$ can be written as
\begin{align*}
\mathbb E[\psi_1^2]
&= d^{\ell,H}
\sum_{(X,a),(X',b)\in\mathcal L}
(\alpha^{X,a})^\top
\mathbb E\!\Big[
g^{\ell 1}(X)\,w^{\ell,H}_1 w^{\ell,H}_1{}^\top\,
g^{\ell 1}(X')^\top
\Big]
\alpha^{X',b}  \\
&\frac{\sigma_H^2}{d_\ell}
\sum_{(X,a),(X',b)\in\mathcal L}
(\alpha^{X,a})^\top
\mathbb E\!\Big[
g^{\ell 1}(X)\,g^{\ell 1}(X')^\top
\Big]
\alpha^{X',b},
\end{align*}
where we used $\mathbb E[w^{\ell,H}_1 w^{\ell,H}_1{}^\top]=\frac{\sigma_H^2}{d_\ell d^{\ell,H}}$.

\begin{align*}
\frac{\sigma_H^2}{d_\ell}\mathbb E\!\Big[
g^{\ell 1}(X)\,g^{\ell 1}(X)^\top
\Big]
&=
\frac{\sigma_H^2}{d_\ell}\mathbb E\!\Big[
\bigl(S_{\mathrm{GNN}}^{(\ell,1)} f^{\ell-1}(X) W_{\ell,1}\bigr)\,
\bigl(S_{\mathrm{GNN}}^{(\ell,1)} f^{\ell-1}(X) W_{\ell,1}\bigr)^\top
\Big]\\
&=\sigma_H^2\mathbb E\!\Big[ S_{\mathrm{GNN}}^{(\ell,1)}\frac{f^{\ell-1}(X)f^{\ell-1}(X)^\top}{d_{\ell-1}} (S_{\mathrm{GNN}}^{(\ell,1)})^\top  \Big]
\end{align*}
We may thus apply Lemma~\ref{thm:expectation-convergence}, which requires convergence in distribution of the integrands to the
relevant limit and uniform integrability of the integrand family.
Convergence in distribution follows by combining the model-specific convergence argument in
Lemma~\ref{lem:En-GP}, Lemma~\ref{lem:En-GP-graphormer}, Lemma~\ref{lem:S-specformer}, Lemma~\ref{lem:S-GTN}
with Lemma~\ref{lem:slutsky} and \citet[Lemma~33]{InfiniteAttention} combined continuous mapping theorem.
Uniform integrability is obtained by H\"older's inequality together with \citet[Lemma~32]{InfiniteAttention} and the corresponding
moment propagation result for $S_{\mathrm{GNN}}^{(\ell,h)}$, namely
Lemma~\ref{lem:mp-gat-polyphi}, Lemma~\ref{lem:mp-graphormer-polyphi},Lemma~\ref{lem:mp-specformer}, Lemma~\ref{lem:mp-gtn-finiteV} . Hence the integrand family is uniformly
integrable by Lemma~\ref{lem:uniform-integrability}, concluding the proof.
\end{proof}

\begin{lemma}
\label{lem:head-wise mixed fourth moment}
For any $h,h'\in\mathbb N$, $\mathbb E[\psi_h^2\psi_{h'}^2]$ converges to the mean of the weak limit of
$\{\psi_h^2\psi_{h'}^2\}_{d^{\ell,H}\to\infty}$.
\end{lemma}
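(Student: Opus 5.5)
We prove the statement by following the same pattern as Lemma~\ref{lem:head-variance-new}: we show that the family $\{\psi_h^2\psi_{h'}^2\}$ converges in distribution as $d^{\ell,H}\to\infty$, and that it is uniformly integrable. Theorem~\ref{thm:expectation-convergence} then gives convergence of the expectations to the mean of the weak limit. The case $h=h'$ is handled identically and also yields the third-moment growth condition (c) of Lemma~\ref{lem:blum}, so we focus on $h\neq h'$.

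\textbf{Step 1 (conditional Gaussian structure).} Condition on the $\sigma$-algebra $\mathcal F$ generated by $\{f^{\ell-1}(X)\}_{X\in\mathcal X_0}$, the head weights $W_{\ell,h},W_{\ell,h'}$ and the operators $S^{(\ell,h)}_{\mathrm{GNN}},S^{(\ell,h')}_{\mathrm{GNN}}$. The columns $w^{\ell,H}_h$ and $w^{\ell,H}_{h'}$ are independent of $\mathcal F$ and of each other, and each is centred Gaussian with covariance $\frac{\sigma_H^2}{d^{\ell,H}d_{\ell-1}}I$. Hence, conditionally on $\mathcal F$, $\psi_h$ and $\psi_{h'}$ are independent centred Gaussians with variances
\[
Q_h:=\frac{\sigma_H^2}{d_{\ell-1}}\Big\|\sum_{(X,a)\in\mathcal L}(\alpha^{X,a})^\top g^{\ell h}_a(X)\Big\|^2,
\]
and $Q_{h'}$ defined analogously. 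The factor $\sqrt{d^{\ell,H}}$ cancels exactly. Consequently $\mathbb E[\psi_h^2\psi_{h'}^2\mid\mathcal F]=Q_hQ_{h'}$, and the problem reduces to the head-level quadratic forms $Q_h$, which no longer involve $d^{\ell,H}$.

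\textbf{Step 2 (weak convergence).} Each $Q_h$ is a continuous function of the empirical average $\frac{1}{d_{\ell-1}}f^{\ell-1}(X)W_{\ell,h}W_{\ell,h}^\top f^{\ell-1}(X')^\top$ and of $S^{(\ell,h)}_{\mathrm{GNN}}$. We combine three ingredients:
\begin{itemize}
\item the inductive GP hypothesis on $f^{\ell-1}$;
\item the model-specific operator convergence results (Lemma~\ref{lem:En-GP}, Lemma~\ref{lem:En-GP-graphormer}, Lemma~\ref{lem:S-specformer}, Lemma~\ref{lem:S-GTN}), which give joint convergence of $(S^{(\ell,h)},S^{(\ell,h')})$ to independent limits, since heads have independent logit GPs by Theorems~\ref{thm:GAT_GP}, \ref{thm:Graphormer_GP}, \ref{thm:Specformer_GP};
\item the weak law for the width average in \citet[Lemma~33]{InfiniteAttention}.
\end{itemize}
Slutsky (Lemma~\ref{lem:slutsky}) and the continuous mapping theorem then give $(Q_h,Q_{h'})\Rightarrow(Q_h^\infty,Q_{h'}^\infty)$. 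Mixing with the conditionally Gaussian factors yields $\psi_h^2\psi_{h'}^2\Rightarrow Z_h^2Z_{h'}^2\,Q_h^\infty Q_{h'}^\infty$, with $Z_h,Z_{h'}$ independent standard normals. Because the two limiting heads are independent, the mean of this limit is $\tau^2\cdot\tau^2=\tau^4$, as required in Lemma~\ref{lem:blum}(b).

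\textbf{Step 3 (uniform integrability), the main obstacle.} We bound a moment of order strictly greater than $1$, say $\sup_{d}\mathbb E|\psi_h^2\psi_{h'}^2|^{1+\delta}<\infty$. By the conditional Gaussianity of Step~1 this reduces to $\sup_d\mathbb E[Q_h^{2+2\delta}]$. Applying Cauchy--Schwarz/H\"older then splits this into moments of entries of $S^{(\ell,h)}$ and moments of $\frac{1}{d_{\ell-1}}f^{\ell-1}W_{\ell,h}$. The latter are bounded by \citet[Lemma~32]{InfiniteAttention}. The former are bounded by the moment-propagation lemmas Lemma~\ref{lem:mp-gat-polyphi}, Lemma~\ref{lem:mp-graphormer-polyphi}, Lemma~\ref{lem:mp-specformer}, Lemma~\ref{lem:mp-gtn-finiteV}. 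The delicate point is that these must hold at order $8+\epsilon$ rather than the order $4$ used for the variance. For softmax attention this is immediate, since the entries lie in $[0,1]$. For the polynomially bounded $\phi$ (linear or ReLU attention), we first try to use the fact that Gaussian logits have moments of all orders, with polynomial dependence on the previous-layer kernel, which is itself bounded uniformly in width by induction.

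Given Steps~2 and~3, Lemma~\ref{lem:uniform-integrability} and Theorem~\ref{thm:expectation-convergence} yield the claim. The same $1+\delta$ bound gives $\mathbb E|\psi_1|^3=O(1)=o(\sqrt{d^{\ell,H}})$, which is condition (c) of Lemma~\ref{lem:blum}.
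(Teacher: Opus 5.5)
Your proposal is correct and follows essentially the same route as the paper. Both arguments obtain weak convergence of the integrand from the in-probability convergence of the empirical kernel \citep[Lemma~33]{InfiniteAttention}, the operator-level convergence lemmas and Slutsky, then uniform integrability from H\"older, \citet[Lemma~32]{InfiniteAttention} and the moment-propagation lemmas, and conclude with Theorem~\ref{thm:expectation-convergence}. The differences are presentational: you integrate out $w^{\ell,H}_h,w^{\ell,H}_{h'}$ explicitly by conditional Gaussianity, reducing to $Q_hQ_{h'}$, and you check that the limiting mean equals $\tau^4$ using independence of the limiting heads. The paper instead passes directly to mixed moments of the head outputs $g^{\ell h}$ and leaves the identification with $\tau^4$ implicit.
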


\begin{proof}
It is enough to prove convergence of expectations of the form
\[
\mathbb E\!\left[
g^{\ell h}_a(X)\,
g^{\ell h}_b(X')^\top\,
g^{\ell h'}_c(Y)\,
g^{\ell h'}_d(Y')^\top
\right],
\]
where $a,b,c,d$ range over feature indices and $(X,X',Y,Y')\in\mathcal L$.

Using the definition of $g^{\ell h}$, we may rewrite the above as
\begin{align*}
&\mathbb E\!\left[
g^{\ell h}_a(X)\,
g^{\ell h}_b(X')^\top\,
g^{\ell h'}_c(Y)\,
g^{\ell h'}_d(Y')^\top
\right] \\
&\qquad=
\sigma_w^4\,
\mathbb E\Big[
\bigl(S_{\mathrm{GNN}}^{(\ell,h)}\bigr)_a\,
\frac{f^{\ell-1}(X)\,f^{\ell-1}(X')^\top}{d^{\ell-1}}\,
\bigl(S_{\mathrm{GNN}}^{(\ell,h)}\bigr)_b \\
&\qquad\qquad\quad
\bigl(S_{\mathrm{GNN}}^{(\ell,h')}\bigr)_c\,
\frac{f^{\ell-1}(Y)\,f^{\ell-1}(Y')^\top}{d^{\ell-1}}\,
\bigl(S_{\mathrm{GNN}}^{(\ell,h')}\bigr)_d
\Big].
\end{align*}

By \citet[Lemma~33]{InfiniteAttention},
\[
\left(
\frac{f^{\ell-1}(X)\,f^{\ell-1}(X')^\top}{d^{\ell-1}},
\frac{f^{\ell-1}(Y)\,f^{\ell-1}(Y')^\top}{d^{\ell-1}}
\right)
\xrightarrow{P}
\bigl(K^{(\ell-1)}(X,X'),\,K^{(\ell-1)}(Y,Y')\bigr).
\]
Since $S_{\mathrm{GNN}}^{(\ell,h)}$ and $S_{\mathrm{GNN}}^{(\ell,h')}$ converge in distribution (and are independent
of the weight matrices producing $f^{\ell-1}$). Then the entire integrand converges in distribution by Lemma~\ref{lem:slutsky}.
Finally, \citet[Theorem~3.5]{Billingsley1986} together with H\"older's inequality and \citet[Lemma~32]{InfiniteAttention} implies
uniform integrability of the integrand family and hence convergence of the expectation, concluding the proof.
\end{proof}

\subsection{Convergence of $S_{\mathrm{GAT}}^{(\ell,h)}$}
\label{app:gat-proof}
In Lemma~\ref{lem:Node-gp-limit}, we showed that, under the GP induction
hypothesis on the node features, it is sufficient to assume that
$S_{\mathrm{GNN}}^{(\ell,h)}$ converges in distribution in order for the
layer-$(\ell)$ pre-activation outputs $g^{\ell}(X)$ to converge in
distribution to a Gaussian process. In the standard GAT parameterisation,
$S_{\mathrm{GNN}}^{(\ell,h)}$ is a measurable function of the attention
logits $E^{(\ell h)}$, and hence convergence of $E^{(\ell h)}$ to a centred
Gaussian process implies convergence of $S_{\mathrm{GNN}}^{(\ell,h)}$ by the
continuous mapping theorem. Consequently, to establish Gaussian process
convergence of the GAT outputs, it suffices to prove that the attention
logits converge in distribution to a centred Gaussian process.

We now recall the version of the Blum--Kiefer--Rosenblatt central limit theorem
that we will use, following \citet[lemma~10]{Matthews2018}, and apply it to
establish convergence of the attention logits.
Fix a layer index $\ell$.
For each head $h$ and graph edge $(a,i)\in\mathcal E$, recall the attention logit
\[
E^{(\ell h)}_{ai}(X)
:= \vec v_{\ell h}^{\top}\big[W^{\ell,h} f_a^{\ell}(X),\; W^{\ell,h} f_i^{\ell}(X)\big],
\]
with the scaling and independence assumptions on $W_{\ell,h}$ and
$\vec a_{\ell h}$ given in Theorem~\ref{thm:GAT_GP}.

Because the graph structure in GAT only affects the model through subsequent
linear aggregation of the attention outputs, and linear transformations
preserve Gaussianity, it suffices to establish convergence of the attention
logits themselves.
More precisely, for the fixed layer $\ell$, we show that the collection of
logits
\[
E^{(\ell)}
:= \bigl\{E^{(\ell h)}_{ai}(X) : X\in\mathcal X_0,\ (a,i)\in\mathcal E,\ h\in\mathbb N\bigr\}
\]
has finite-dimensional marginals converging, as $d_\ell\to\infty$, to those of
a centred process with covariance given by the edge-level kernel
$K^{(\ell),E}$ defined in Equation~\eqref{eq:head-cov} of
Theorem~\ref{thm:GAT_GP}.
\begin{lemma}[Logit-level GP limit of GAT]
\label{lem:En-GP}
Let the assumptions of Theorem~\ref{thm:GAT_GP} hold.
Then, for the fixed layer $\ell$, the process
\[
E^{(\ell)} := \bigl\{E^{(\ell h)}_{ai}(X) : X\in\mathcal X_0,\ (a,i)\in\mathcal E,\ h\in\mathbb N\bigr\}
\]
converges in distribution (as $d_\ell\to\infty$) to a centred process whose finite-dimensional marginals are Gaussian with covariance given by the edge kernel $K^{(\ell),E}$ of \eqref{eq:head-cov}  in Theorem~~\ref{thm:GAT_GP}.
\end{lemma}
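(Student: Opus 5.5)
The plan is to mirror the node-level argument of Lemma~\ref{lem:Node-gp-limit}, now at the level of the width index $d_\ell$ instead of the head index. By the Cram\'er--Wold device, it suffices to fix a finite set $\mathcal L$ of triples $(X,(a,i),h)$ with coefficients $\beta^{X,ai,h}\in\mathbb R$ and study
\[
T := \sum_{(X,(a,i),h)\in\mathcal L}\beta^{X,ai,h}\,E^{(\ell h)}_{ai}(X).
\]
Split the attention vector as $\vec v^{\ell h}=[v^{\ell h}_1\,\Vert\,v^{\ell h}_2]$, each half in $\mathbb R^{d_\ell}$. This gives
\[
E^{(\ell h)}_{ai}(X)=\sum_{k=1}^{d_\ell}\Big(v^{\ell h}_{1,k}\,(W^{\ell,h}f^{\ell-1}_a(X))_k+v^{\ell h}_{2,k}\,(W^{\ell,h}f^{\ell-1}_i(X))_k\Big).
\]
So $T=\frac{1}{\sqrt{d_\ell}}\sum_{k=1}^{d_\ell}\chi_k$, where $\chi_k$ is $\sqrt{d_\ell}$ times the $k$-th summand aggregated over $\mathcal L$. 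The summands depend on $k$ only through the $k$-th coordinates of $v^{\ell h}_1,v^{\ell h}_2$ and the $k$-th column of $W^{\ell,h}$. These are i.i.d.\ across $k$ conditional on $f^{\ell-1}$, so $\{\chi_k\}$ is exchangeable in $k$, exactly as in Lemma~\ref{lem:head-exchangeable-new}. I will then verify conditions (a)--(c) of Lemma~\ref{lem:blum}.

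\textbf{Zero mean, vanishing cross-covariance, and the variance limit.} Conditioning on $f^{\ell-1}$ and $W^{\ell,h}$, the entries $v_{\cdot,k}$ are centred and independent across $k$. Hence $\mathbb E[\chi_k]=0$ and $\mathbb E[\chi_1\chi_2]=0$. For the variance, compute the conditional second moment. For $h\neq h'$ the attention vectors are independent and centred, which produces the factor $\delta_{h=h'}$. For $h=h'$, the first and second halves of $\vec v$ are independent. So only the products $v_1v_1$ and $v_2v_2$ survive, and the mixed terms $v_1v_2$ vanish. The surviving terms give
\[
\sigma_v^2\sigma_w^2\Big(\tfrac{f^{\ell-1}_a(X)\cdot f^{\ell-1}_b(X')}{d_{\ell-1}}+\tfrac{f^{\ell-1}_i(X)\cdot f^{\ell-1}_j(X')}{d_{\ell-1}}\Big),
\]
up to the width normalisation of $\vec v$ (I take $\vec v_k$ to have variance of order $\sigma_v^2/d_\ell$, so that the sum over $k$ is properly scaled). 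By the induction hypothesis and \citet[Lemma~33]{InfiniteAttention}, the empirical Gram terms converge in probability to $K^{(\ell-1)}_{ab}$ and $K^{(\ell-1)}_{ij}$. Convergence of the expectations then follows from Theorem~\ref{thm:expectation-convergence}. The required uniform integrability comes from Lemma~\ref{lem:uniform-integrability}, using H\"older's inequality and the moment bounds of \citet[Lemma~32]{InfiniteAttention}. This yields exactly the covariance in \eqref{eq:head-cov}.

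\textbf{Higher moments, the main obstacle.} The hard part is condition (b), $\mathbb E[\chi_1^2\chi_2^2]\to\tau^4$, together with (c), $\mathbb E|\chi_1|^3=o(\sqrt{d_\ell})$. The difficulty is that $\chi_1$ and $\chi_2$ share the previous-layer features $f^{\ell-1}$, so they are not independent. I would handle this by conditioning on $f^{\ell-1}$. Given $f^{\ell-1}$, the columns $k=1,2$ of $W^{\ell,h}$ and the corresponding coordinates of $\vec v$ are independent, so the conditional expectation factorises into a product of two conditional second moments. Each factor is a polynomial in the empirical Gram entries $f^{\ell-1}\cdot f^{\ell-1}/d_{\ell-1}$, and each converges in probability to the corresponding $K^{(\ell-1)}$ entry. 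The expectation of the product then converges to $\tau^4$ by the same uniform-integrability argument as above, applied to the fourth-order Gram products. For (c), the conditional third absolute moment of a product of two Gaussians is bounded by a constant times powers of the Gram entries. Their expectations are uniformly bounded by \citet[Lemma~32]{InfiniteAttention}, which gives $\mathbb E|\chi_1|^3=O(1)=o(\sqrt{d_\ell})$.

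With (a)--(c) verified, Lemma~\ref{lem:blum} gives $T\to\mathcal N(0,\tau^2)$. Since $\mathcal L$ and the coefficients $\beta$ are arbitrary, this proves the claimed convergence of the finite-dimensional marginals of $E^{(\ell)}$.
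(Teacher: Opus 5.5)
Your proposal is correct and follows the paper's proof essentially step for step. Both arguments use the Cram\'er--Wold reduction and write the projected logits as $\frac{1}{\sqrt{d_\ell}}\sum_{k}\chi_k$ over the width index via the split $\vec v=[v^{(L)}\Vert v^{(R)}]$. Both then verify exchangeability, zero mean and vanishing cross-covariance, variance convergence, the mixed fourth moment and the third-moment growth condition of Lemma~\ref{lem:blum}, using \citet[Lemmas~32--33]{InfiniteAttention} and uniform integrability. Two details of yours go slightly beyond the text and are sound. Your $\sigma_v^2/d_\ell$ normalisation of $\vec v$ matches the $1/\sqrt{d_\ell}$ factor the paper pulls out in its proof (the $\sigma_v^2/d_{\ell-1}$ in Theorem~\ref{thm:GAT_GP} looks like a typo). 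Your conditioning on $f^{\ell-1}$ to factorise $\mathbb E[\chi_1^2\chi_2^2]$ is a cleaner rendering of the expansion in Lemma~\ref{lem:fourth-moment}, which factorises the unconditional expectation.
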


\emph{Proof.}
Using \citet[Lemma~27]{InfiniteAttention} and the Cramér--Wold device \citep{Billingsley1986}, we may restrict attention to one-dimensional projections of finite-dimensional marginals of $E^{(\ell)}$.
Let $\mathcal C\subset\mathcal X_0\times\mathcal E\times\mathbb N$ be finite, and choose test matrices $\{\beta^{X,ai,h}\}$ of the same shape as $E^{(\ell h)}_{ai}(X)$.
We consider linear statistics of the form
\begin{align*}
T^{E^\ell}
&:= \sum_{(X,(a,i),h)\in\mathcal C}
\bigl\langle \beta^{X,ai,h}, E^{(\ell h)}_{ai}(X) \bigr\rangle_F \\
&= \sum_{(X,(a,i),h)\in\mathcal C}
\left\langle
\beta^{X,ai,h},
\frac{1}{\sqrt{d_\ell}}
\sum_{j=1}^{d_\ell}
\Bigl[\vec v^{(L)}_{\ell h,j} W_{\ell,h,\cdot j} f_a^{\ell-1}(X) \cdot \mathbf 1^\top
 + \mathbf 1 \cdot \vec v^{(R)}_{\ell h,j} W_{\ell,h,\cdot j} f_i^{\ell-1}(X)\Bigr]
\right\rangle_F \\
&= \frac{1}{\sqrt{d_\ell}}
\sum_{j=1}^{d_\ell}
\underbrace{\sum_{(X,(a,i),h)\in\mathcal C}
\bigl\langle
\beta^{X,ai,h},
\vec v^{(L)}_{\ell h,j} W_{\ell,h,\cdot j} f_a^{\ell-1}(X) \cdot \mathbf 1^\top
+ \mathbf 1 \cdot \vec v^{(R)}_{\ell h,j} W_{\ell,h,\cdot j} f_i^{\ell-1}(X)
\bigr\rangle_F}_{=:\,\varphi_{j}}.
\end{align*}
Here $\langle\cdot,\cdot\rangle_F$ denotes the Frobenius inner product (equivalently, the Euclidean inner product on vectorised matrices). We decompose the vector $v$ into two subvectors $v^{(L)}$ and $v^{(R)}$, 
which have equal length and identical distribution.
Thus $T^{E^\ell}$ is of the form~\eqref{eq:Sn} with $X_{n,i}$ identified with $\varphi_j$ and $d_n$ with $d_\ell$.
To invoke Lemma~\ref{lem:blum} it suffices to verify conditions (a)--(c) for $\{\varphi_{j}\}_{j=1}^{d_\ell}$, which we do in Lemmas~\ref{lem:exchangeable}--\ref{lem:third-moment} below:
\begin{itemize}
    \item Exchangeability in $j$ is established in Lemma~\ref{lem:exchangeable}.
    \item Zero mean and vanishing cross-covariance follow from Lemma~\ref{lem:zero-mean}.
    \item Convergence of the variance is established in Lemma~\ref{lem:variance}.
    \item Convergence of $\mathbb E[\varphi_{1}^2\varphi_{2}^2]$ is proved in Lemma~\ref{lem:fourth-moment}.
    \item The $o(\sqrt{d_\ell})$ growth of the third absolute moment is implied by Lemma~\ref{lem:third-moment}.
\end{itemize}
Hence $T^{E^\ell}$ converges in distribution to a centred Gaussian random variable with variance determined by the limit of $\mathbb E[\varphi_{1}^2]$, which is exactly the covariance specified by the edge-level kernel $K^{(\ell h),E}$.
Since $\mathcal C$ and the test coefficients are arbitrary, the claim follows by Cramér--Wold.
\hfill$\square$

\begin{lemma}[Exchangeability over feature index]
\label{lem:exchangeable}
Under the assumptions of Theorem~\ref{thm:GAT_GP}, the random variables $\{\varphi_{j}\}_{j=1}^{d_\ell}$ are exchangeable over the index $j$.
\end{lemma}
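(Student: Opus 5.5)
We will mirror the argument of Lemma~\ref{lem:head-exchangeable-new}, but now permuting the width index $j$ rather than the head index. Recall that
\[
\varphi_j=\sum_{(X,(a,i),h)\in\mathcal C}\bigl\langle \beta^{X,ai,h},\ \vec v^{(L)}_{\ell h,j} W_{\ell,h,\cdot j} f_a^{\ell-1}(X)\,\mathbf 1^\top+\mathbf 1\, \vec v^{(R)}_{\ell h,j} W_{\ell,h,\cdot j} f_i^{\ell-1}(X)\bigr\rangle_F .
\]
The dependence on $j$ enters only through three quantities: the $j$-th column $W_{\ell,h,\cdot j}$ of $W^{\ell,h}$, and the $j$-th coordinates $\vec v^{(L)}_{\ell h,j}$, $\vec v^{(R)}_{\ell h,j}$ of the attention vector. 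The remaining ingredients are the test matrices $\beta$, the finite index set $\mathcal C$, and the previous-layer features $f^{\ell-1}(X)$, and none of these depend on $j$.

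\textbf{Step 1: conditioning.} We first condition on the $\sigma$-algebra generated by $\{f^{\ell-1}(X):X\in\mathcal X_0\}$. By the assumptions of Theorem~\ref{thm:GAT_GP}, the entries of $W^{\ell,h}$ and $\vec v^{\ell h}$ are mutually independent Gaussians, and they are independent of all previous-layer parameters, hence of $f^{\ell-1}$. For each $j$, define the parameter block
\[
\Theta_j:=\bigl\{(W_{\ell,h,\cdot j},\vec v^{(L)}_{\ell h,j},\vec v^{(R)}_{\ell h,j}) : h \text{ appearing in } \mathcal C\bigr\}.
\]
Within a fixed $h$, the entries of $W_{\ell,h,\cdot j}$ all have variance $\sigma_w^2/d_{\ell-1}$, and $\vec v^{(L)}$ and $\vec v^{(R)}$ share the variance $\sigma_v^2/d_{\ell-1}$. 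Consequently the blocks $\Theta_1,\dots,\Theta_{d_\ell}$ are i.i.d.

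\textbf{Step 2: conditional i.i.d.} There is a single measurable map $\Psi$, depending on $f^{\ell-1}$ but not on $j$, such that $\varphi_j=\Psi(\Theta_j)$. It follows that, conditionally on $f^{\ell-1}$, the variables $\varphi_1,\dots,\varphi_{d_\ell}$ are i.i.d. Their conditional joint law is therefore invariant under every permutation of $\{1,\dots,d_\ell\}$.

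\textbf{Step 3: integrating out.} Take any bounded measurable test function $F$ and any permutation $\pi$. Taking expectations of the conditional identity
\[
\mathbb E[F(\varphi_{\pi(1)},\dots)\mid f^{\ell-1}]=\mathbb E[F(\varphi_1,\dots)\mid f^{\ell-1}]
\]
yields unconditional exchangeability. Because the construction extends consistently to $j\in\mathbb N$ by appending further i.i.d.\ columns, the sequence is in fact infinitely exchangeable, which is the form required by Lemma~\ref{lem:blum}. It is a mixture of i.i.d.\ sequences, as in de Finetti's theorem.

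The only point that needs care is Step~1. We must check that coordinate $j$ of the left and right halves of $\vec v$ is paired with the same column $j$ of $W^{\ell,h}$, and that parameters of different heads appearing in $\mathcal C$ are also independent. Both facts follow directly from the joint independence assumption in Theorem~\ref{thm:GAT_GP}, so no genuine obstacle is expected.
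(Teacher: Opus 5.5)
Your proposal is correct and matches the paper's approach. The paper gives no separate proof of this lemma, but its proof of the head-wise analogue (Lemma~\ref{lem:head-exchangeable-new}) is your Steps 1--3 with $h$ in place of $j$: condition on $f^{\ell-1}$, note that the per-index parameter blocks are i.i.d.\ and independent of $f^{\ell-1}$, so the summands are conditionally i.i.d.\ and hence exchangeable, and your explicit extension to $j\in\mathbb N$ by appending i.i.d.\ columns (which supplies the infinite exchangeability required by Lemma~\ref{lem:blum}) is a detail the paper leaves implicit.
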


\begin{lemma}[Zero mean and vanishing cross-covariance]
\label{lem:zero-mean}
Under the assumptions of Theorem~\ref{thm:GAT_GP}, we have
\[
\mathbb{E}[\varphi_{1}] = 0,
\qquad
\mathbb{E}[\varphi_{1}\varphi_{2}] = 0 .
\]
\end{lemma}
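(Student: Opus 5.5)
We need to show $\mathbb E[\varphi_1]=0$ and $\mathbb E[\varphi_1\varphi_2]=0$, where
\[
\varphi_j=\sum_{(X,(a,i),h)\in\mathcal C}\bigl\langle \beta^{X,ai,h},\ \vec v^{(L)}_{\ell h,j} W_{\ell,h,\cdot j} f_a^{\ell-1}(X)\,\mathbf 1^\top+\mathbf 1\,\vec v^{(R)}_{\ell h,j} W_{\ell,h,\cdot j} f_i^{\ell-1}(X)\bigr\rangle_F .
\]
The plan mirrors the proof of Lemma~\ref{lem:head-zero-mean-new}: condition, then use the independence and centring of the attention vector coordinates.

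\textbf{Step 1 (conditioning).} Let $\mathcal F$ be the $\sigma$-algebra generated by the previous-layer representations $\{f^{\ell-1}(X):X\in\mathcal X_0\}$ and by all weight matrices $\{W_{\ell,h}\}_{h}$. By the assumptions of Theorem~\ref{thm:GAT_GP}, the coordinates $\vec v_{\ell h,k}$ are independent across both $k$ and $h$. They are centred Gaussian and independent of $\mathcal F$. Given $\mathcal F$, each summand of $\varphi_j$ is linear in exactly one attention coordinate, namely $\vec v^{(L)}_{\ell h,j}$ or $\vec v^{(R)}_{\ell h,j}$. Its coefficient, $\langle\beta,\, W_{\ell,h,\cdot j}f_a^{\ell-1}(X)\mathbf 1^\top\rangle_F$ or the analogous right-hand term, is $\mathcal F$-measurable.

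\textbf{Step 2 (zero mean).} Taking the conditional expectation of each summand gives $\mathbb E[\varphi_1\mid\mathcal F]=0$. The tower property then yields $\mathbb E[\varphi_1]=0$. Integrability holds because $f^{\ell-1}$ has finite moments (via \citet[Lemma~32]{InfiniteAttention}) and the weights are Gaussian, so Hölder's inequality makes every product integrable.

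\textbf{Step 3 (vanishing cross-covariance).} Expand $\varphi_1\varphi_2$ as a double sum over pairs of summands. Each term has the form $c\cdot \vec v^{(s)}_{\ell h,1}\,\vec v^{(s')}_{\ell h',2}$ with $c$ $\mathcal F$-measurable and $s,s'\in\{L,R\}$. The two attention coordinates sit at distinct feature indices $1\neq 2$, so they are different, independent, centred entries of $\vec v$. This holds whether or not $h=h'$ and whether or not $s=s'$. Hence $\mathbb E[\vec v^{(s)}_{\ell h,1}\vec v^{(s')}_{\ell h',2}\mid\mathcal F]=0$, so $\mathbb E[\varphi_1\varphi_2\mid\mathcal F]=0$, and taking expectations completes the argument.

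\textbf{Main obstacle.} The delicate point is the bookkeeping of indices. The left and right blocks $\vec v^{(L)}$ and $\vec v^{(R)}$ must be disjoint coordinate sets of $\vec v^{\ell h}$, so that $\vec v^{(L)}_{\ell h,1}$ and $\vec v^{(R)}_{\ell h,2}$ are never the same variable. Note that the argument only needs the centring and independence of $\vec v$; the shared $W_{\ell,h}$ columns may be correlated with anything and it does not matter. The integrability justification must also be stated carefully, using the moment bounds cited above.
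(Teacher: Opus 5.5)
Your proof is correct and is essentially the paper's argument: expand $\varphi_1$ and $\varphi_1\varphi_2$ termwise and note that every term is linear in independent, centred random factors attached to the distinct feature indices $1\neq 2$. The only difference is minor. The paper kills each term using the weight columns $W_{\ell,h,\cdot 1}$ and $W_{\ell,h',\cdot 2}$, which are independent of $\vec v$ and $f^{\ell-1}$, whereas you condition on the weights and $f^{\ell-1}$ and use the attention coordinates $\vec v^{(s)}_{\ell h,1},\vec v^{(s')}_{\ell h',2}$. Both families are independent and centred across the feature index, so either works. Your integrability justification via moment bounds is also more careful than the paper's appeal to almost-sure finiteness of $f^{\ell}$.
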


\begin{proof}
Recall that, for a finite index set 
$\mathcal C\subset\mathcal X_0\times\mathcal E\times\mathbb N$,
\[
\varphi_j
=
\sum_{(X,(a,i),h)\in\mathcal C}
\bigl\langle
\beta^{X,ai,h},
\vec v^{(L)}_{\ell h,1} W_{\ell,h,\cdot 1} f_a^{\ell-1}(X) \cdot \mathbf 1^\top
+ \mathbf 1 \cdot \vec v^{(R)}_{\ell h,1} W_{\ell,h,\cdot 1} f_i^{\ell-1}(X)
\bigr\rangle_F,
\qquad j=1,2,
\]
where $\langle\cdot,\cdot\rangle_F$ is the Frobenius inner product.

\medskip
\noindent
\emph{Mean.}
For any fixed $(X,(a,i),h)\in\mathcal C$,
\[
\mathbb{E}\!\left[
\bigl\langle
\beta^{X,ai,h},
\vec v^{(L)}_{\ell h,1} W_{\ell,h,\cdot 1} f_a^{\ell}(X) \cdot \mathbf 1^\top
+ \mathbf 1 \cdot \vec v^{(R)}_{\ell h,1} W_{\ell,h,\cdot 1} f_i^{\ell}(X)
\bigr\rangle_F
\right]
=
\bigl\langle \beta^{X,ai,h}, 0\bigr\rangle_F
=0,
\]
since $W_{\ell,h,\cdot 1}$, $\vec v^{(L)}_{\ell h,1}$ and $\vec v^{(R)}_{\ell h,1}$ are independent, centred, and have finite moments, and the entries of $f^{\ell}(X)$ are almost surely finite.
Summing over $(X,(a,i),h)\in\mathcal C$ gives $\mathbb{E}[\varphi_1]=0$.

\medskip
\noindent
\emph{Cross-covariance.}
Define, for $j=1,2$,
\[
R_{j}^{h}(X;a,i)
:=
\vec v^{(L)}_{\ell h,j} W_{\ell,h,\cdot j} f_a^{\ell-1}(X) \cdot \mathbf 1^\top
+ \mathbf 1 \cdot \vec v^{(R)}_{\ell h,j} W_{\ell,h,\cdot j} f_i^{\ell-1}(X) .
\]
Then
\[
\varphi_j
=
\sum_{(X,(a,i),h)\in\mathcal C}
\langle \beta^{X,ai,h}, R_{j}^{h}(X;a,i)\rangle_F,
\qquad j=1,2,
\]
and hence
\begin{align*}
\mathbb{E}[\varphi_1\varphi_2]
&=
\sum_{(X,(a,i),h)}\sum_{(X',(a',i'),h')}
(\beta^{X,ai,h})^\top
\mathbb{E}\!\Big[
R_{1}^{h}(X;a,i)\, R_{2}^{h'}(X';a',i')^\top
\Big]
\beta^{X',a'i',h'} .
\end{align*}
We now expand the inner expectation.
By definition,
\begin{align*}
R_{1}^{h}(X;a,i)
&=
\vec v^{(L)}_{\ell h,1} W_{\ell,h,\cdot 1} f_a^{\ell-1}(X) \cdot \mathbf 1^\top
+ \mathbf 1 \cdot \vec v^{(R)}_{\ell h,1} W_{\ell,h,\cdot 1} f_i^{\ell-1}(X),\\
R_{2}^{h'}(X';a',i')
&=
\vec v^{(L)}_{\ell h',2} W_{\ell,h',\cdot 2} f_{a'}^{\ell-1}(X') \cdot \mathbf 1^\top
+ \mathbf 1 \cdot \vec v^{(R)}_{\ell h',2} W_{\ell,h',\cdot 2} f_{i'}^{\ell-1}(X').
\end{align*}
Thus
\begin{align*}
&\mathbb{E}\!\Big[
R_{1}^{h}(X;a,i)\, R_{2}^{h'}(X';a',i')^\top
\Big] \\
&=
\mathbb{E}\Big[
\vec v^{(L)}_{\ell h,1} W_{\ell,h,\cdot 1} f_a^{\ell-1}(X) f_{a'}^{\ell-1}(X')^\top
 (W_{\ell,h',\cdot 2})^\top \vec (v^{(R)}_{\ell h',2})^\top \\
&\qquad
+ \vec v^{(L)}_{\ell h,1} W_{\ell,h,\cdot 1} f_a^{\ell-1}(X) f_{i'}^{\ell-1}(X')^\top
 (W_{\ell,h',\cdot 2})^\top (\vec v^{(R)}_{\ell h',2})^\top \\
&\qquad
+ \vec v^{(R)}_{\ell h,1} W_{\ell,h,\cdot 1} f_i^{\ell-1}(X) f_{a'}^{\ell-1}(X')^\top
 (W_{\ell,h',\cdot 2})^\top (\vec v^{(L)}_{\ell h',2})^\top \\
&\qquad
+ \vec v^{(R)}_{\ell h,1} W_{\ell,h,\cdot 1} f_i^{\ell-1}(X) f_{i'}^{\ell-1}(X')^\top
 (W_{\ell,h',\cdot 2})^\top (\vec v^{(R)}_{\ell h',2})^\top
\Big].
\end{align*}
In each of the four terms above, the random matrices
$W_{\ell,h,\cdot 1}$ and $W_{\ell,h',\cdot 2}$ appear with total degree one.
Since the columns $\{W_{\ell,h,\cdot j}\}_{j}$ are independent and centred,
and are also independent of $f^{\ell}(X)$ and $f^{\ell}(X')$, we obtain
\[
\mathbb{E}\!\Big[
R_{1}^{h}(X;a,i)\, R_{2}^{h'}(X';a',i')^\top
\Big]=0
\]
entrywise, for all $(X,(a,i),h)$ and $(X',(a',i'),h')$.
Therefore every term in the double sum for $\mathbb{E}[\varphi_1\varphi_2]$
vanishes, and hence
\[
\mathbb{E}[\varphi_1\varphi_2]=0 .
\]
This completes the proof.
\end{proof}

\begin{lemma}[Convergence of the variance]
\label{lem:variance}
Under the assumptions of Theorem~\ref{thm:GAT_GP}, there exists $\sigma_\ast^2\ge 0$ such that
\[
\lim_{d_\ell\to\infty}\mathbb E[\varphi_{1}^2] = \sigma_\ast^2.
\]
\end{lemma}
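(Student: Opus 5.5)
We compute $\mathbb E[\varphi_1^2]$ explicitly by conditioning on the previous-layer representations, and then pass to the limit using the empirical-kernel convergence for $f^{\ell-1}$ already invoked in the proof of Lemma~\ref{lem:head-variance-new}.

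\textbf{Step 1 (expanding the square).} Write
\[
\varphi_1=\sum_{(X,(a,i),h)\in\mathcal C}\bigl\langle\beta^{X,ai,h},R_1^h(X;a,i)\bigr\rangle_F,
\]
so that $\mathbb E[\varphi_1^2]$ is a finite double sum over pairs $(X,(a,i),h)$ and $(X',(b,j),h')$ of terms
\[
(\beta^{X,ai,h})^\top\,\mathbb E\bigl[R_1^h(X;a,i)\,R_1^{h'}(X';b,j)^\top\bigr]\,\beta^{X',bj,h'}.
\]
Since $\mathcal C$ is finite, it suffices to show that each such expectation converges.

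\textbf{Step 2 (integrating out the weights).} Condition on $\{f^{\ell-1}(X)\}_{X}$. The first column $W_{\ell,h,\cdot 1}$ and the first coordinates $\vec v^{(L)}_{\ell h,1},\vec v^{(R)}_{\ell h,1}$ are centred and independent of each other and of $f^{\ell-1}$. They are also independent across heads, which gives a factor $\delta_{h=h'}$.

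For $h=h'$, the two subvectors $v^{(L)}$ and $v^{(R)}$ are independent, so the cross terms mixing them vanish. Only the $L$–$L$ term and the $R$–$R$ term survive, giving
\[
\mathbb E\bigl[(v^{(L)}_1)^2\bigr]\,f_a^{\ell-1}(X)^\top\,\mathbb E\bigl[W_{\cdot1}W_{\cdot1}^\top\bigr]\,f_b^{\ell-1}(X')
\;+\;(\text{the same with } a,b \text{ replaced by } i,j).
\]
Using the variances in Table~\ref{tab:weight_priors}, the conditional expectation therefore equals, up to bookkeeping of the width normalisation (the $1/\sqrt{d_\ell}$ prefactor against the $\sigma_v^2/d_{\ell-1}$ prior),
\[
\delta_{h=h'}\,\sigma_w^2\sigma_v^2\Bigl(\widehat K^{(\ell-1)}_{ab}(X,X')+\widehat K^{(\ell-1)}_{ij}(X,X')\Bigr),
\qquad
\widehat K^{(\ell-1)}_{ab}(X,X'):=\frac{f_a^{\ell-1}(X)^\top f_b^{\ell-1}(X')}{d_{\ell-1}}.
\]
Here $\widehat K^{(\ell-1)}$ is the empirical kernel.

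\textbf{Step 3 (passing to the limit).} Take expectations over $f^{\ell-1}$. By \citet[Lemma~33]{InfiniteAttention}, $\widehat K^{(\ell-1)}\to K^{(\ell-1)}$ in probability under the induction hypothesis. To conclude convergence of the expectations, we need uniform integrability. We obtain it from the bounded moments of $f^{\ell-1}$ supplied by \citet[Lemma~32]{InfiniteAttention}, combined with H\"older's inequality, so that Lemma~\ref{lem:uniform-integrability} and Theorem~\ref{thm:expectation-convergence} apply. This yields
\[
\sigma_\ast^2=\sum\sum(\beta^{X,ai,h})^\top K^{(\ell h),E}_{ai,bj}\,\beta^{X',bj,h'},
\]
with $K^{(\ell h),E}$ exactly as in \eqref{eq:head-cov}. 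In particular $\sigma_\ast^2\ge0$, since it is the limit of second moments.

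\textbf{Main obstacle.} The delicate step is the uniform integrability in Step 3. It requires a $(1+\epsilon)$-moment bound on $\widehat K^{(\ell-1)}$ that is uniform in the width. We plan to get this by bounding $\mathbb E|\widehat K_{ab}|^{2}$ via Cauchy--Schwarz by fourth moments of single coordinates of $f^{\ell-1}$. Those fourth moments are controlled by the moment-propagation statement (Lemma~\ref{lem:mp-gat-polyphi}), together with the fact that ReLU has linear growth.
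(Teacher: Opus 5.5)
Your proposal is correct and follows essentially the same route as the paper: you expand $\mathbb E[\varphi_1^2]$ into the finite double sum, integrate out the first column of $W_{\ell,h}$ and the first coordinates of $\vec v^{(L)},\vec v^{(R)}$ to reduce everything to expectations of the empirical kernel $f_u^{\ell-1}(X)^\top f_v^{\ell-1}(X')/d_{\ell-1}$, and then invoke \citet[Lemma~33]{InfiniteAttention}. You are more careful than the paper in two places: the paper keeps all four $L/R$ terms as unspecified ``constant multiples'' where you show the cross terms vanish (recovering exactly \eqref{eq:head-cov}), and it passes from convergence in probability to convergence of expectations without comment where you supply the uniform-integrability step via width-uniform fourth moments of single coordinates of $f^{\ell-1}$. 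Note that those fourth-moment bounds come from inductive moment propagation through layer $\ell-1$, in which Lemma~\ref{lem:mp-gat-polyphi} (applied at layer $\ell-1$) is only one ingredient alongside the Gaussian weights and the linear growth of ReLU.
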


\begin{proof}
From the definition of $\varphi_1$ in Lemma~\ref{lem:zero-mean}, we can write
\[
\mathbb E[\varphi_1^2]
= \sum_{(X,(a,i),h)}\sum_{(X',(a',i'),h')}
(\beta^{X,ai,h})^\top
\mathbb E\!\Big[
R_{1}^{h}(X;a,i)\, R_{1}^{h'}(X';a',i')^\top
\Big]
\beta^{X',a'i',h'},
\]
where
\[
R_{1}^{h}(X;a,i)
:= \vec v^{(L)}_{1} W_{\ell,h,\cdot 1} f_a^{\ell-1}(X) \cdot \mathbf 1^\top
 + \mathbf 1 \cdot \vec v^{(R)}_{1} W_{\ell,h,\cdot 1} f_i^{\ell-1}(X).
\]

The inner expectation can be expanded as
\begin{align*}
&\mathbb E\!\Big[
R_{1}^{h}(X;a,i)\, R_{1}^{h'}(X';a',i')^\top
\Big] \\
&\quad = \mathbb E\Big[
\bigl(\vec v^{(L)}_{1} W_{\ell,h,\cdot 1} f_a^{\ell}(X) \cdot \mathbf 1^\top
 + \mathbf 1 \cdot \vec v^{(R)}_{1} W_{\ell,h,\cdot 1} f_i^{\ell}(X)\bigr)
\bigl(\vec v^{(L)}_{1} W_{\ell,h',\cdot 1} f_{a'}^{\ell}(X') \cdot \mathbf 1^\top
 + \mathbf 1 \cdot \vec v^{(R)}_{1} W_{\ell,h',\cdot 1} f_{i'}^{\ell}(X')\bigr)^\top
\Big] \\
&\quad = \mathbb E\Big[
\vec v^{(L)}_{1} W_{\ell,h,\cdot 1} f_a^{\ell-1}(X) f_{a'}^{\ell-1}(X')^\top
 (W_{\ell,h',\cdot 1})^\top (\vec v^{(L)}_{1})^\top \\
&\qquad\qquad
+ \vec v^{(L)}_{1} W_{\ell,h,\cdot 1} f_a^{\ell-1}(X) f_{i'}^{\ell-1}(X')^\top
 (W_{\ell,h',\cdot 1})^\top (\vec v^{(R)}_{1})^\top \\
&\qquad\qquad
+ \vec v^{(R)}_{1} W_{\ell,h,\cdot 1} f_i^{\ell-1}(X) f_{a'}^{\ell-1}(X')^\top
 (W_{\ell,h',\cdot 1})^\top (\vec v^{(L)}_{1})^\top \\
&\qquad\qquad
+ \vec v^{(R)}_{1} W_{\ell,h,\cdot 1} f_i^{\ell-1}(X) f_{i'}^{\ell-1}(X')^\top
 (W_{\ell,h',\cdot 1})^\top (\vec v^{(R)}_{1})^\top
\Big].
\end{align*}

Hence
\begin{align*}
\mathbb E[\varphi_1^2]
&= \sum_{(X,(a,i),h)}\sum_{(X',(a',i'),h')}
(\beta^{X,ai,h})^\top \Big\{
\mathbb E\Big[
v^{(L)}_{1} W_{\ell,h,\cdot 1} f_a^{\ell-1}(X) f_{a'}^{\ell-1}(X')^\top
 (W_{\ell,h',\cdot 1})^\top (v^{(L)}_{1})^\top
\Big] \\
&\qquad
+ \mathbb E\Big[
v^{(L)}_{1} W_{\ell,h,\cdot 1} f_a^{\ell-1}(X) f_{i'}^{\ell-1}(X')^\top
 (W_{\ell,h',\cdot 1})^\top (v^{(R)}_{1})^\top
\Big] \\
&\qquad
+ \mathbb E\Big[
v^{(R)}_{1} W_{\ell,h,\cdot 1} f_i^{\ell-1}(X) f_{a'}^{\ell-1}(X')^\top
 (W_{\ell,h',\cdot 1})^\top (v^{(L)}_{1})^\top
\Big] \\
&\qquad
+ \mathbb E\Big[
v^{(R)}_{1} W_{\ell,h,\cdot 1} f_i^{\ell-1}(X) f_{i'}^{\ell-1}(X')^\top
 (W_{\ell,h',\cdot 1})^\top (v^{(R)}_{1})^\top
\Big]
\Big\}\,\beta^{X',a'i',h'}.
\end{align*}

Using independence, zero mean and the variance scaling of $W_{\ell,h,\cdot 1}$ and $W_{\ell,h',\cdot 1}$ from Equation~(5) of Theorem~1, each of the four expectations above can be rewritten as a constant multiple of
\[
\mathbb E\!\left[
\frac{f_u^{\ell-1}(X) f_v^{\ell-1}(X'){}^\top}{d_{\ell-1}}
\right]
\]
with $(u,v)\in\{a,i\}\times\{a',i'\}$.
By the inductive GP hypothesis for layer $\ell$, these expectations converge entrywise to the corresponding entries of $K^{(\ell-1)}(X,X')$\citep[Lemma~33]{InfiniteAttention}.
Therefore $\mathbb E[\varphi_1^2]$ converges to a finite limit, which we denote by $\sigma_\ast^2$.
\end{proof}

\begin{lemma}[Convergence of the mixed fourth moment]
\label{lem:fourth-moment}
Under the assumptions of Theorem~\ref{thm:GAT_GP},
\[
\lim_{d_\ell\to\infty}\mathbb E[\varphi_{1}^2\varphi_{2}^2] = \sigma_\ast^4.
\]
\end{lemma}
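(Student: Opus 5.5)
The plan is to condition on the previous-layer representations $\{f^{\ell-1}(X)\}_{X}$ and exploit that $\varphi_1$ and $\varphi_2$ are built from disjoint, independent parameter blocks. Namely, $\varphi_1$ uses the columns $W_{\ell,h,\cdot 1}$ together with the coordinates $v^{(L)}_{\ell h,1},v^{(R)}_{\ell h,1}$. Similarly, $\varphi_2$ uses $W_{\ell,h,\cdot 2}$ together with $v^{(L)}_{\ell h,2},v^{(R)}_{\ell h,2}$.

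Conditionally on the previous layer, $\varphi_1$ and $\varphi_2$ are therefore i.i.d., and
\[
\mathbb E[\varphi_1^2\varphi_2^2]=\mathbb E\big[\big(\mathbb E[\varphi_1^2\mid f^{\ell-1}]\big)^2\big].
\]
As in the proof of Lemma~\ref{lem:variance}, the conditional second moment is a fixed finite linear combination of the empirical kernels $\widehat K_{uv}(X,X'):=f_u^{\ell-1}(X)^\top f_v^{\ell-1}(X')/d_{\ell-1}$, with $(u,v)\in\{a,i\}\times\{a',i'\}$. The coefficients depend only on $\sigma_w^2,\sigma_v^2$, the test matrices $\beta$, and the Kronecker deltas in the head indices. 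Write this combination as $Q_{d}:=\mathbb E[\varphi_1^2\mid f^{\ell-1}]$. Its deterministic counterpart is $Q_\ast$, obtained by replacing each $\widehat K$ by $K^{(\ell-1)}$, and $Q_\ast=\sigma_\ast^2$ by Lemma~\ref{lem:variance}.

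The key steps, in order, are:
\begin{enumerate}
\item Verify the conditional factorisation. Given $f^{\ell-1}$, the pairs $(W_{\cdot 1},v_1)$ and $(W_{\cdot 2},v_2)$ are independent and identically distributed, which gives the identity above.
\item Show $Q_d\xrightarrow{P}\sigma_\ast^2$. This follows from \citet[Lemma~33]{InfiniteAttention}, which gives $\widehat K\xrightarrow{P}K^{(\ell-1)}$ entrywise, together with the continuous mapping theorem. Consequently $Q_d^2\xrightarrow{P}\sigma_\ast^4$.
\item Upgrade convergence in probability to convergence of expectations. We have $Q_d^2\le C\sum\widehat K_{uv}(X,X')^2$ for a finite constant $C$. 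By Cauchy--Schwarz each term is bounded by averages of $\|f_u^{\ell-1}\|^4/d_{\ell-1}^2$. \citet[Lemma~32]{InfiniteAttention} (moment propagation) gives $\sup_{d}\mathbb E[\widehat K^{2+\epsilon}]<\infty$, and hence $\{Q_d^2\}$ is uniformly integrable by Lemma~\ref{lem:uniform-integrability}.
\item Apply Theorem~\ref{thm:expectation-convergence} to conclude $\mathbb E[Q_d^2]\to\sigma_\ast^4$.
\end{enumerate}

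The main obstacle is the uniform integrability in step 3. It requires bounded moments of order strictly greater than $4$ in the previous-layer features, uniformly in width. For the first layer this is trivial, since the input is deterministic. For later layers I would rely on the ReLU being polynomially bounded and propagate Gaussian moment bounds inductively, as in \citet[Lemma~32]{InfiniteAttention}. One technical point needs care: $v^{(L)}_{\ell h,j}$ and $v^{(R)}_{\ell h,j}$ are extra scalar Gaussian factors. They have all moments finite and are independent of $W$, so after conditioning they only contribute constants such as $\mathbb E[v^4]$ inside $Q_d$. I would make sure these constants combine with the $\sigma_w^2$ scaling exactly as in the variance computation, so that the limit is precisely $\sigma_\ast^4$.
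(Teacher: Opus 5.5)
Your proposal is correct and follows essentially the same route as the paper. Both arguments integrate out the layer-$\ell$ weights so that $\mathbb E[\varphi_1^2\varphi_2^2]$ becomes an expectation of products of empirical kernels $f^{\ell-1}(X)f^{\ell-1}(X')^\top/d_{\ell-1}$, then combine their convergence in probability (Lemma~33 of \citet{InfiniteAttention}) with uniform integrability from the moment bounds (Lemma~32 there) and Theorem~\ref{thm:expectation-convergence}.

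Your conditional-i.i.d.\ identity $\mathbb E[\varphi_1^2\varphi_2^2]=\mathbb E[Q_d^2]$ is a tidier way to organise the paper's term-by-term expansion and makes it clear why the limit is exactly $(\sigma_\ast^2)^2$. One small correction: $Q_d$ is quadratic in the $v$-coordinates, so only their second moments enter it, not $\mathbb E[v^4]$.
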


\begin{proof}
Define
\[
R_{j}^{h}(X;a,i)
:= \vec v^{(L)}_{j} W_{\ell,h,\cdot j} f_a^{\ell-1}(X) \cdot \mathbf 1^\top
 + \mathbf 1 \cdot \vec v^{(R)}_{j} W_{\ell,h,\cdot j} f_i^{\ell-1}(X),
\qquad j=1,2.
\]
Then
\[
\varphi_j
= \sum_{(X,(a,i),h)\in\mathcal C}
\langle \beta^{X,ai,h}, R_{j}^{h}(X;a,i)\rangle_F,
\quad j=1,2.
\]
Hence
\begin{align*}
\mathbb E[\varphi_1^2\varphi_2^2]
&= \sum_{(X,(a,i),h)}\sum_{(X',(a',i'),h')}
(\beta^{X,ai,h})^\top
\mathbb E\!\Big[
R_{1}^{h}(X;a,i) R_{1}^{h}(X;a,i)^\top
\Big]
\beta^{X,ai,h} \\
&\qquad\qquad\cdot
(\beta^{X',a'i',h'})^\top
\mathbb E\!\Big[
R_{2}^{h'}(X';a',i') R_{2}^{h'}(X';a',i')^\top
\Big]
\beta^{X',a'i',h'}.
\end{align*}
We have, for example,
\begin{align*}
&\mathbb E\!\Big[
R_{1}^{h}(X;a,i) R_{1}^{h}(X;a,i)^\top
\Big] \\
&\quad = \mathbb E\Big\{
\bigl(\vec v^{(L)}_{1} W_{\ell,h,\cdot 1} f_a^{\ell-1}(X) \cdot \mathbf 1^\top
 + \mathbf 1 \cdot \vec v^{(R)}_{1} W_{\ell,h,\cdot 1} f_i^{\ell-1}(X)\bigr) \\
&\qquad\qquad\qquad\qquad\times
\bigl(\vec v^{(L)}_{1} W_{\ell,h,\cdot 1} f_a^{\ell-1}(X) \cdot \mathbf 1^\top
 + \mathbf 1 \cdot \vec v^{(R)}_{1} W_{\ell,h,\cdot 1} f_i^{\ell-1}(X)\bigr)^\top
\Big\},
\end{align*}
and similarly for $R_{2}^{h'}(X';a',i')$ with parameters $(\vec a_2,\vec b_2)$.
Thus a generic term in the expansion of $\mathbb E[\varphi_1^2\varphi_2^2]$ takes the form
\begin{align*}
\mathbb E\Big\{
&\bigl(
    v^{(L)}_{1} W_{\ell,h,\cdot 1} f_a^{\ell-1}(X)\, \mathbf 1^\top
  + \mathbf 1\, v^{(R)}_{1} W_{\ell,h,\cdot 1} f_i^{\ell-1}(X)
\bigr)
\bigl(
    v^{(L)}_{1} W_{\ell,h,\cdot 1} f_a^{\ell-1}(X)\, \mathbf 1^\top
  + \mathbf 1\, v^{(R)}_{1} W_{\ell,h,\cdot 1} f_i^{\ell-1}(X)
\bigr)^\top \\
&\qquad\times
\bigl(
    v^{(L)}_{2} W_{\ell,h',\cdot 2} f_{a'}^{\ell-1}(X')\, \mathbf 1^\top
  + \mathbf 1\, v^{(R)}_{2} W_{\ell,h',\cdot 2} f_{i'}^{\ell-1}(X')
\bigr)
\bigl(
    v^{(L)}_{2} W_{\ell,h',\cdot 2} f_{a'}^{\ell-1}(X')\, \mathbf 1^\top
  + \mathbf 1\, v^{(R)}_{2} W_{\ell,h',\cdot 2} f_{i'}^{\ell-1}(X')
\bigr)^\top
\Big\}.
\end{align*}

Expanding this product yields a finite sum of terms of the form
\[
\mathbb E\Big[
\vec u_1^\top W_{\ell,h,\cdot 1} f_u^{\ell-1}(X) f_z^{\ell-1}(X)^\top
 (W_{\ell,h,\cdot 1})^\top \vec u_2
\cdot
\vec z_1^\top W_{\ell,h',\cdot 2} f_{u'}^{\ell-1}(X') f_{z'}^{\ell-1}(X')^\top
 (W_{\ell,h',\cdot 2})^\top \vec z_2
\Big],
\]
where $\vec u_1,\vec u_2,\vec z_1,\vec z_2$ are fixed coefficient vectors formed from $\vec v^{(L)}_j,\vec v^{(R)}_j$, and $(u,z)\in\{a,i\}^2$, $(u',z')\in\{a',i'\}^2$.
Using independence and the variance scaling of $W_{\ell,h,\cdot 1},W_{\ell,h',\cdot 2}$, each such term is proportional to
\[
\mathbb E\Big[
\frac{f_u^{\ell-1}(X) f_z^{\ell-1}(X)^\top}{d_{\ell-1}}
\frac{f_{u'}^{\ell-1}(X') f_{z'}^{\ell-1}(X')^\top}{d_{\ell-1}}
\Big].
\]

Thus $\mathbb E[\varphi_1^2\varphi_2^2]$ can be written as a weighted sum of expectations of the form
\[
\mathbb E\Big[
\frac{f_u^{\ell-1}(X) f_v^{\ell-1}(X)^\top}{d_{\ell-1}}
\frac{f_{u'}^{\ell-1}(X') f_{v'}^{\ell-1}(X')^\top}{d_{\ell-1}}
\Big],
\]
with $(u,v,u',v')$ ranging over a finite index set.
By the inductive GP hypothesis for layer $\ell$ and Lemma~\ref{lem:variance}, each factor
\[
\frac{f_u^{\ell-1}(X) f_z^{\ell-1}(X)^\top}{d_{\ell-1}}
\]
converges in probability to the corresponding kernel entry $K^{(\ell-1)}_{uz}(X,X)$, and similarly for $(X',u',z')$.
By the continuous mapping theorem the products converge in probability to sums of products of kernel limits such as
\[
K^{(\ell-1)}_{uz}(X,X)\,K^{(\ell-1)}_{u'z'}(X',X').
\]
Using the eighth-moment bound on $f^{\ell}$ (as in Lemma~\ref{lem:third-moment}) and Hölder's inequality, the collection of these products is uniformly integrable.
Hence the expectations converge to the corresponding products of limits, and we obtain
\[
\lim_{d_\ell\to\infty}\mathbb E[\varphi_1^2\varphi_2^2] = \sigma_\ast^4,
\]
with $\sigma_\ast^2$ as in Lemma~\ref{lem:variance}.
\end{proof}

\begin{lemma}[Third absolute moment]
\label{lem:third-moment}
Under the assumptions of Theorem~\ref{thm:GAT_GP},
\[
\mathbb E\bigl|\varphi_{1}\bigr|^3 = o\bigl(\sqrt{d_\ell}\bigr).
\]
\end{lemma}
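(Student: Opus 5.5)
The plan is to show that $\mathbb E|\varphi_1|^3$ is in fact bounded uniformly in $d_\ell$, which is stronger than $o(\sqrt{d_\ell})$. Since $\mathcal C$ is a fixed finite set, $\varphi_1$ is a finite linear combination, with fixed coefficients $\beta^{X,ai,h}$, of scalar random variables of the form
\[
\xi^{h}_{u}(X) := v_{1}\, W_{\ell,h,\cdot 1} f_u^{\ell-1}(X),
\]
where $v_1$ is a component of $\vec v^{(L)}_{\ell h}$ or $\vec v^{(R)}_{\ell h}$ and $u\in\{a,i\}$. By the elementary inequality $|\sum_{k=1}^m y_k|^3\le m^2\sum_{k=1}^m|y_k|^3$, with $m=2|\mathcal C|$, it suffices to bound $\mathbb E|\xi^h_u(X)|^3$ uniformly in $d_\ell$ for each of the finitely many terms.

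Each such term factorises. The attention coordinate $v_1$ is independent of $W_{\ell,h,\cdot 1}$, and both are independent of $f^{\ell-1}$. Therefore
\[
\mathbb E|\xi^h_u(X)|^3=\mathbb E|v_1|^3\;\mathbb E\Big[\big|W_{\ell,h,\cdot 1}f_u^{\ell-1}(X)\big|^3\Big].
\]
The first factor is a Gaussian third absolute moment $c\,(\sigma_v^2/d_{\ell-1})^{3/2}$, which is bounded; this scaling only helps. For the second factor, condition on $f^{\ell-1}$. The inner product $W_{\ell,h,\cdot 1}f_u^{\ell-1}(X)$ is then centred Gaussian with variance $\sigma_w^2\|f_u^{\ell-1}(X)\|^2/d_{\ell-1}$, so its conditional third absolute moment equals $c\,\sigma_w^3\big(\|f_u^{\ell-1}(X)\|^2/d_{\ell-1}\big)^{3/2}$. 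It therefore remains to bound $\mathbb E\big[(\|f_u^{\ell-1}(X)\|^2/d_{\ell-1})^{3/2}\big]$ uniformly.

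This last step is the main obstacle, because it requires moment control on the previous layer rather than mere convergence in distribution. I would use Jensen or Hölder to reduce it to a bound on $\mathbb E\big[(\|f_u^{\ell-1}\|^2/d_{\ell-1})^2\big]$. I would then invoke the moment-propagation result \citet[Lemma~32]{InfiniteAttention}, as the paper already does in Lemma~\ref{lem:head-variance-new}. That lemma states that, under the inductive hypothesis, empirical moments $\frac{1}{d_{\ell-1}}\sum_k |f^{\ell-1}_{uk}|^{p}$ have expectations bounded uniformly in width for each fixed $p$. Since ReLU is polynomially bounded, this gives $\sup_{d}\mathbb E\big[(\|f_u^{\ell-1}\|^2/d_{\ell-1})^2\big]<\infty$.

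If the induction has to be carried through the layer explicitly, I would argue as follows. Conditionally, $g^{\ell-1}_{uk}$ are Gaussian with variance given by an empirical kernel involving $S^{(\ell-1,h)}$. The entries of $S^{(\ell-1,h)}$ are softmax weights and hence bounded by $1$, so the moments of $g^{\ell-1}$ are controlled by those of $f^{\ell-2}$. This yields the eighth-moment bound referred to in Lemma~\ref{lem:fourth-moment}.

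Combining the bounds, $\sup_{d_\ell}\mathbb E|\varphi_1|^3<\infty$, which is trivially $o(\sqrt{d_\ell})$.
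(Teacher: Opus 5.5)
Your proposal is correct and follows essentially the paper's route. Both arguments show that a moment of $\varphi_1$ is bounded uniformly in the widths, via the moment-propagation bound of \citet[Lemma~32]{InfiniteAttention}, which trivially gives $o(\sqrt{d_\ell})$. The paper passes through $\mathbb E|\varphi_1|^4$ and an eighth-moment bound, while you bound $\mathbb E|\varphi_1|^3$ directly via independence and conditional Gaussianity and so need only fourth moments of $f^{\ell-1}$. One small correction: under the normalization implicit in $T^{E^\ell}=d_\ell^{-1/2}\sum_j\varphi_j$, the $v$-factor is $O(1)$ rather than decaying, so the scaling does not ``help''; it is still bounded, which is all your argument needs.
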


\emph{Proof.}
By Hölder's inequality, it suffices to show that
\[
\limsup_{d_\ell\to\infty}\mathbb E\bigl|\varphi_{1}\bigr|^4 < \infty.
\]
Using the same notation $R_{1}^h(X)$ as above, we can write
\[
\mathbb E\bigl|\varphi_{1}\bigr|^4
= \sum_{(X,(a,i),h),(X',(a',i'),h')}
(\beta^{X,ai,h})^\top
\mathbb E\!\Big[
R_{1}^{h}(X) R_{1}^{h}(X)^\top
R_{1}^{h'}(X') R_{1}^{h'}(X')^\top
\Big]
\beta^{X',a'i'}.
\]
Each expectation is a finite sum of terms bounded by
\[
\max_{c\in V}\max_{Z\in\{X,X'\}}\mathbb E\bigl|f_{c1}^{\ell}(Z)\bigr|^8,
\]
which is uniformly bounded in $d_\ell$ by the same eighth-moment assumption used in \citet[Lemma~32]{InfiniteAttention}.
Thus $\mathbb E|\varphi_{1}|^4$ is uniformly bounded, which implies $\mathbb E|\varphi_{1}|^3 = o(\sqrt{d_\ell})$ and completes the proof.
\hfill$\square$

\subsection{Convergence of $S_{\mathrm{Graphormer}}^{(\ell,h)}$}
\label{app:graphormer-proof}
Since the infinite-width limit for networks with positional encodings is treated in
\citet[Appendix~C]{InfiniteAttention}, and since Lemma~\ref{lem:Node-gp-limit} reduces the
infinite-head argument to convergence in distribution of $S_{\mathrm{GNN}}^{(\ell,h)}$, it
remains to establish this convergence for Graphormer with structural bias. In Graphormer,
$S_{\mathrm{GNN}}^{(\ell,h)}$ is obtained from the attention logits $E^{(\ell h)}$ by adding
the structural bias and then applying a softmax functin. Therefore $S_{\mathrm{GNN}}^{(\ell,h)}$
is a measurable function of the bias-augmented logits, and convergence in distribution of the
logits implies convergence of $S_{\mathrm{GNN}}^{(\ell,h)}$ by the continuous mapping theorem.
Hence it suffices to prove that the attention logits still converge when the structural bias
is included.
\begin{lemma}[Logit-level GP limit of Graphormer]
\label{lem:En-GP-graphormer}
Let the assumptions of Theorem~\ref{thm:Graphormer_GP} hold.
Then, for the fixed layer $\ell$, the process
\[
E^{(\ell)} := \bigl\{E^{(\ell h)}_{ai}(X) : X\in\mathcal X_0,\ (a,i)\in(V\times V),\ h\in\mathbb N\bigr\}
\]
converges in distribution (as $d_\ell\to\infty$) to a centred process whose finite-dimensional marginals are Gaussian with covariance given by the edge kernel $K^{(\ell),E}$ of \eqref{eq:head-cov}  in Theorem~~\ref{thm:Graphormer_GP}.
\end{lemma}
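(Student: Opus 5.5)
We follow the proof of Lemma~\ref{lem:En-GP} for GAT, changing only how the logit is built. By \citet[Lemma~27]{InfiniteAttention} and the Cram\'er--Wold device, it suffices to prove convergence of one-dimensional projections of finite-dimensional marginals. Fix a finite set $\mathcal C\subset\mathcal X_0\times(V\times V)\times\mathbb N$ and scalar test coefficients $\beta^{X,ai,h}$. We split the logit into two parts:
\[
E^{(\ell h)}_{ai}(X)=Q^{(\ell h)}_{ai}(X)+b_{\rho(a,i)},\qquad Q^{(\ell h)}_{ai}(X):=\frac{1}{\sqrt{d_\ell}}\sum_{j=1}^{d_\ell}\bigl(\tilde f_a^{\ell-1}(X)W^{Q,\ell h}_{\cdot j}\bigr)\bigl(\tilde f_i^{\ell-1}(X)W^{K,\ell h}_{\cdot j}\bigr).
\]
The bias family $\{b_\rho\}$ is exactly Gaussian, finite-dimensional, and independent of the query/key weights and of the previous layer. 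So the whole task is to show that the content part $Q$ converges to a centred Gaussian process. Independence then gives joint convergence of $(Q,b)$, by \citet[Lemma~33]{InfiniteAttention}-type product arguments together with Lemma~\ref{lem:slutsky}. The limit is Gaussian and its covariance is the sum of the two covariances.

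For the content part, write $T^{Q}=\sum_{\mathcal C}\beta^{X,ai,h}Q^{(\ell h)}_{ai}(X)=d_\ell^{-1/2}\sum_{j=1}^{d_\ell}\varphi_j$. Here each $\varphi_j$ depends only on column $j$ of the $W^{Q,\ell h}$ and $W^{K,\ell h}$ and on the augmented features $\tilde f^{\ell-1}$. We then check conditions (a)--(c) of Lemma~\ref{lem:blum}.
\begin{itemize}
\item \emph{Exchangeability in $j$}: conditionally on $\tilde f^{\ell-1}$, the columns are i.i.d., so the argument of Lemma~\ref{lem:exchangeable} applies.
\item \emph{Zero mean and zero cross-covariance}: for $j\neq j'$, each product contains a $W^{Q}$ column of degree one, which is independent and centred.
\item \emph{Variance}: conditioning on $\tilde f^{\ell-1}$ gives
\[
\mathbb E\bigl[\bigl(\tilde f_aW^Q_{\cdot1}\bigr)\bigl(\tilde f_iW^K_{\cdot1}\bigr)\bigl(\tilde f_bW^Q_{\cdot1}\bigr)\bigl(\tilde f_jW^K_{\cdot1}\bigr)\bigr]=\sigma_Q^2\sigma_K^2\,\frac{\tilde f_a\tilde f_b^\top}{d_{\ell-1}}\,\frac{\tilde f_i\tilde f_j^\top}{d_{\ell-1}}.
\]
Different heads drop out through independence, which produces $\delta_{h=h'}$.
\end{itemize}
Now $\tilde f_a\tilde f_b^\top/d_{\ell-1}$ is the inner product of the concatenation $[f^{\ell-1}_a,P^\ell_a]$. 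Under the width normalisation it converges in probability to $\alpha K^{(\ell-1)}_{ab}+(1-\alpha)R^{(\ell)}_{ab}=\tilde K^{(\ell-1)}_{ab}$, where $\alpha$ is the limiting fraction of coordinates carried by the features. The feature part follows from \citet[Lemma~33]{InfiniteAttention}; the positional part is deterministic, or an empirical average for centrality encodings.

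Passing this product of two converging quantities to the expectation needs uniform integrability. We get it from H\"older's inequality and the eighth-moment bound of \citet[Lemma~32]{InfiniteAttention}, together with boundedness of the fixed encodings $P^\ell$. The variance limit is then $\sigma_Q^2\sigma_K^2\tilde K^{(\ell-1)}_{ab}\tilde K^{(\ell-1)}_{ij}$.

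The mixed fourth moment $\mathbb E[\varphi_1^2\varphi_2^2]\to\sigma_\ast^4$ is handled as in Lemma~\ref{lem:fourth-moment}. Conditionally on $\tilde f^{\ell-1}$, $\varphi_1$ and $\varphi_2$ are independent. Their conditional second moments are continuous functions of the Gram entries, which converge in probability. Uniform integrability then comes from eighth moments of the augmented features.

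The third-moment growth condition (c) follows from bounding $\mathbb E|\varphi_1|^4$ uniformly. This is a degree-eight polynomial in Gaussian weights and features, as in Lemma~\ref{lem:third-moment}.

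I expect the main obstacle to be this uniform-integrability and moment control. The logit is \emph{bilinear} in the random weights rather than linear as in GAT, so the moments involved are fourth and eighth order in $\tilde f^{\ell-1}$. We must make sure the needed moment bounds propagate from layer $\ell-1$, including through the augmented kernel. A smaller point is that the stated limit only combines the two sources of randomness additively. This requires $b_\rho$ to be independent of $Q$, and the content term to have no mean shift, which holds because it is centred.
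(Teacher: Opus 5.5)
Your proposal is correct and follows the paper's route. You split $E^{(\ell h)}_{ai}$ into the scaled query--key term plus the independent Gaussian bias, so cross terms vanish and the covariances add; the paper simply cites \citet[Lemma~12]{InfiniteAttention} for the content part, whereas you re-derive it from Lemma~\ref{lem:blum} and also make explicit the Gram limit $\tilde K^{(\ell-1)}$ of the augmented features and the joint convergence with $b$ (which follows from independence alone, with no need for Slutsky). One point you should state explicitly: the factor $\delta_{h=h'}$ on the $\sigma_b^2\mathbf 1_{\phi(a,i)=\phi(b,j)}$ term requires the biases to be drawn independently per head, as the theorem's hypothesis allows, since a single bias family shared across heads would leave a nonzero cross-head covariance.
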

\proof
By definition of the Graphormer attention logits
(Equation~\eqref{eq:attention_score_graphormer}),
we have
\[
E^{(\ell h)}_{ij}(X)
=
\frac{(\tilde f^{\ell}_i(X) W^{Q,\ell h})
(\tilde f^{\ell}_j(X) W^{K,\ell h})^\top}{\sqrt{d_\ell}}
+
b_{\rho(i,j)}.
\]
Define $G_{ij}^{(\ell h)}(X) = \frac{\big(f^{\ell}_i(X) W^{Q,\ell h}\big) \big(f^{\ell}_j(X) W^{K,\ell h}\big)^{\top}}{\sqrt{d_\ell}}$. 
The covariance $\mathbb{E}\big[E^{(\ell h)}_{ai}(X) E^{(\ell h')}_{bj}(X)\big]$ can be rewritten as:
\begin{align*}
\mathbb{E}\big[E^{(\ell h)}_{ai}(X) E^{(\ell h')}_{bj}(X)\big] 
&= \mathbb{E}\big[ (G^{\ell h}_{ai}(X) + b_{\rho(a,i)}) (G^{\ell h'}_{bj}(X) + b_{\rho(b,j)}) \big] \\
&= \mathbb{E}\big[G^{\ell h}_{ai}(X)G^{\ell h'}_{bj}(X) + b_{\rho(a,i)}G^{\ell h'}_{bj}(X) + b_{\rho(b,j)}G^{\ell h}_{ai}(X) + b_{\rho(a,i)}b_{\rho(b,j)}\big] \\
&= \mathbb{E}\big[G^{\ell h}_{ai}(X)G^{\ell h'}_{bj}(X)\big] + \mathbb{E}\big[b_{\rho(a,i)}b_{\rho(b,j)}\big]
\end{align*}
Since the convergence of $G^{(\ell h)}(X)$ have already been proved in \citet[Lemma 12]{InfiniteAttention}, and $b_{\phi(v_i,v_j)}$ are independent for different value of $\phi(i,j)$, thus
\[
\mathbb{E}\big[E^{(\ell h)}_{ai}(X)\,E^{(\ell h')}_{bj}(X)\big] \\
= \delta_{h=h'}\Big(
\sigma_Q^2\sigma_K^2\,
\tilde K^{(\ell-1)}_{ab}\,
\tilde K^{(\ell-1)}_{ij}
+
\sigma_b^2\,
\mathbf 1_{\phi(a,i)=\phi(b,j)}
\Big).
\]

\subsection{Convergence of $S_{\mathrm{Specformer}}^{h}$}
\label{app:specformer_proof}
In Lemma~\ref{lem:Node-gp-limit}, we showed that, under the GP induction
hypothesis on the node features, it is sufficient to establish that
$S_{\mathrm{GNN}}^{(\ell,h)}$ converges in distribution in order for the
layer-$\ell$ pre-activation outputs $g^{\ell}(X)$ to converge in
distribution to a Gaussian process. In Specformer, the message-passing
operator is defined by Equation\eqref{eq:S_specformer}, so $S_{\mathrm{GNN}}^{(\ell,h)}
\equiv S^h_{\mathrm{Specformer}}$. Consequently, to establish Gaussian process
convergence of the Specformer outputs, it suffices to prove that
$S^h_{\mathrm{Specformer}}$ converges in distribution.
\begin{lemma}
\label{lem:S-specformer}
Let the assumptions of Theorem~\ref{thm:Specformer_GP} hold.
Then, for the fixed layer $\ell$, the process
\[
S_{\mathrm{Specformer}} := \bigl\{S_{\mathrm{Specformer}}^{h} :  h\in\mathbb N\bigr\}
\]
converges in distribution.
\end{lemma}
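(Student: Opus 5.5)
The proof reduces convergence of $S^h_{\mathrm{Specformer}}=U\,\mathrm{diag}(\bar\lambda^{(h)})\,U^\top$ to convergence of the filter coefficients $\bar\lambda^{(h)}$. The Laplacian eigenbasis $U$ is deterministic, since it depends only on the fixed graph. Hence the map $\bar\lambda\mapsto U\mathrm{diag}(\bar\lambda)U^\top$ is a fixed linear, and in particular continuous, map $\mathbb R^n\to\mathbb R^{n\times n}$. By the continuous mapping theorem it therefore suffices to show that the finite-dimensional marginals of $\{\bar\lambda^{(h)}:h\in\mathbb N\}$ converge in distribution. Likewise, $\bar\lambda^{(h)}=\sigma(\bar H^{(h)}W^{(h)}_\lambda)$ with $\sigma$ continuous (ReLU or identity), so it is enough to prove convergence of the decoder pre-activations $z^{(h)}:=\bar H^{(h)}W^{(h)}_\lambda$.

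The first step is an induction over the token layers $t=1,\dots,T$ establishing part (I) of Theorem~\ref{thm:Specformer_GP}: $H^{t,h}$ converges to $\mathcal{GP}(0,K^{(t,h)}_H)$.
\begin{itemize}
\item \emph{Base case.} $H^0$ is deterministic, with kernel $K^{(0)}_H$.
\item \emph{Logits.} The token logits $E^{(t,k,h)}$ are exactly standard scaled dot-product attention logits without bias. This is the setting of Lemma~\ref{lem:En-GP-graphormer} with $\sigma_b=0$, i.e.\ \citet[Lemma~12]{InfiniteAttention}. It gives Gaussian limits with covariance \eqref{eq:specformer_head_cov_H_t}, independent across $(k,h)$ because the weights are independent across heads.
\item \emph{Softmax.} The continuous mapping theorem applied to the softmax $\phi$ gives convergence of $S^{(t,k,h)}_{\mathrm{Spec}}$.
\item \emph{Head aggregation.} The aggregation over the $d^{t,H}$ token heads through $W^{th,H}$ has precisely the structure treated in Lemma~\ref{lem:Node-gp-limit}, with $S_{\mathrm{GNN}}$ replaced by $S^{(t,k,h)}_{\mathrm{Spec}}$ and $f^{\ell-1}$ by $H^{t-1,k,h}$.
\item \emph{Blum CLT conditions.} Exchangeability, zero mean and cross-covariance, variance limit and fourth-moment limit carry over verbatim from Lemmas~\ref{lem:head-exchangeable-new}--\ref{lem:head-wise mixed fourth moment}. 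This yields $K^{(t,h)}_H$ as in \eqref{eq:specformer_one_layer_H_pre_t}.
\end{itemize}

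At the terminal layer, condition on $\bar H^{(h)}=H^{T,h}$. Given $\bar H^{(h)}$, each coordinate $z^{(h)}_a=\sum_{m=1}^{d_T}\bar H^{(h)}_{am}W^{(h)}_{\lambda,m}$ is a sum over the width index $m$ of terms that are exchangeable in $m$. This is because the columns of $\bar H^{(h)}$ are exchangeable, being generated by i.i.d.\ columns of $W^{th,H}$, and the entries of $W_\lambda^{(h)}$ are i.i.d.\ $\mathcal N(0,\sigma_\lambda^2/d_T)$. I would apply the Cram\'er--Wold device and then Lemma~\ref{lem:blum} to $\sum_a\alpha_a z^{(h)}_a$, with summands $\varphi_m=\sqrt{d_T}\,W^{(h)}_{\lambda,m}\sum_a\alpha_a\bar H^{(h)}_{am}$.
\begin{itemize}
\item \emph{Mean and cross-covariance.} These vanish by the independence and zero mean of $W_\lambda$, as in Lemma~\ref{lem:zero-mean}.
\item \emph{Variance.} The variance equals $\sigma_\lambda^2\,\alpha^\top\mathbb E[\bar H^{(h)}\bar H^{(h)\top}/d_T]\alpha$. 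This converges to $\sigma_\lambda^2\alpha^\top K^{(T,h)}_H\alpha$ by \citet[Lemma~33]{InfiniteAttention} together with uniform integrability (Lemma~\ref{lem:uniform-integrability}).
\item \emph{Independence across heads.} Different heads $h$ use independent $W^{(h)}_\lambda$ and independent token-encoder weights, so the limits are independent across $h$. This gives the factor $\delta_{h=h'}$ in \eqref{eq:specformer_lambda_kernel_T}.
\end{itemize}
Applying $\sigma$ and then the fixed linear map to the spectral operator completes the argument.

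The main obstacle is the moment control needed for the Blum conditions (b)--(c) and for uniform integrability. The token representations pass through $T$ softmax-attention layers whose outputs are products of random attention matrices and Gaussian features. My first attempt would be to bound $\mathbb E|H^{t}_{am}|^8$ uniformly in width by induction. The softmax entries lie in $[0,1]$ and the matrices are row-stochastic, so $|(S H)_{am}|\le\max_b|H_{bm}|$. Over finitely many tokens this reduces the bound to the Gaussian moments of the value and output weights, as in \citet[Lemma~32]{InfiniteAttention}; this is where the referenced moment-propagation Lemma~\ref{lem:mp-specformer} would be invoked. Boundedness of the softmax makes this step easier than in the Graphormer case. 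The remaining care is to verify that conditioning on $\bar H$ preserves the exchangeability needed at the decoder stage.
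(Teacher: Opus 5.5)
Your proposal is correct and follows the paper's route. Continuous mapping reduces $S^h_{\mathrm{Specformer}}$ to $\bar\lambda^{(h)}$ and then to $\bar H^{(h)}W^{(h)}_\lambda$. The token encoder's Gaussian limit comes from the standard-transformer NNGP result: the paper simply cites Appendix~B.1 of \citet{InfiniteAttention}, whereas you re-derive it through the head-aggregation lemmas. Finally, a CLT over the width $d_T$ handles the decoder.

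At the decoder, your exchangeable (Blum) CLT is more accurate than the paper's appeal to ``independent centred terms'', since the $\bar H^{(h)}_{am}W^{(h)}_{\lambda,m}$ are only exchangeable in $m$. Apply it unconditionally rather than after conditioning on $\bar H^{(h)}$: given $\bar H^{(h)}$, the summands are independent but not identically distributed, so the worry in your last sentence is misplaced. Alternatively, conditioning gives a shorter route: given $\bar H^{(h)}$, $z^{(h)}$ is exactly $\mathcal N(0,\sigma_\lambda^2\bar H^{(h)}\bar H^{(h)\top}/d_T)$, so the claim follows directly from convergence in probability of $\bar H^{(h)}\bar H^{(h)\top}/d_T$.
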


\begin{proof}
Recall from \eqref{eq:S_specformer} that the Specformer convolution operator in layer $\ell$ and head $h$ is defined as
\[
S_{\mathrm{Specformer}}^{(\ell,h)}
=U\,\mathrm{diag}\!\bigl(\bar\lambda^{(h)}\bigr)\,U^\top,
\qquad 
\bar\lambda^{(h)}=\sigma\!\bigl(\tilde\lambda^{(h)}\bigr),
\qquad 
\tilde\lambda^{(h)}:=\bar H^{(h)} W_{\lambda,h},
\qquad 
\bar H^{(h)}:=H^{T,(h)}.
\]
Since the eigenvalue encoding block is a standard Transformer, \citet[Appendix~B.1]{InfiniteAttention} establishes that
$H^{t+1}$ converges in distribution to a centred Gaussian process
\[
H^{t+1} \sim \mathcal{GP}\!\bigl(0, K_H^{(t+1)}\bigr)
\]
as $d_t,d^{t,H}\to\infty$, with the kernel recursion given in Equation~\eqref{eq:specformer_one_layer_H_pre_t}. In particular,
$\bar H^{(h)}=H^{T,(h)}$ converges in distribution to a centred Gaussian process with covariance $K_H^{(T)}$. With the standard
$1/d_T$ variance scaling and independence of $W_{\lambda,h}$, each coordinate of
$\tilde\lambda^{(h)}=\bar H^{(h)}W_{\lambda,h}$ is a sum of $d_T$ independent centred terms, so by the CLT
$\tilde\lambda^{(h)}$ converges in distribution to a centred Gaussian process with covariance
\[
C_{\lambda,ab}
:=\mathbb E\!\left[(\tilde\lambda^{(h)})_a\,(\tilde\lambda^{(h)\prime})_b\right]
=\sigma_\lambda^2\,K^{(T)}_{H,ab}.
\]
Consequently, $\bar\lambda^{(h)}=\sigma(\tilde\lambda^{(h)})$ also converges in distribution. Since $U$ is fixed and the map
$\lambda\mapsto U\,\mathrm{diag}(\lambda)\,U^\top$ is continuous, the continuous mapping theorem implies that
$S_{\mathrm{Specformer}}^{(\ell,h)}$ converges in distribution as $d_T\to\infty$.

Finally, writing $K^{(\ell-1)}(X,X'):=\mathbb E[f^{\ell-1}(X)f^{\ell-1}(X')^\top]$ and letting $(\bar H,\bar H')$ denote the
pair of limiting Gaussian-process draws associated with $(X,X')$, we obtain
\begin{align*}
K^{(\ell)}(X,X')
&:=\mathbb E\!\left[g^{\ell}(X)\,g^{\ell}(X')^\top\right]
=\sigma_O^2\sigma_w^2\,\mathbb E\!\left[S^1_{\mathrm{Specformer}}(\bar H)\,K^{(\ell-1)}(X,X')\,S^1_{\mathrm{Specformer}}(\bar H')^\top\right]\\
&=\sigma_O^2\sigma_w^2\,\mathbb E\!\left[U\,\mathrm{diag}(\bar\lambda^1(\bar H))\,(U^\top K^{(\ell-1)}(X,X')U)\,
\mathrm{diag}(\bar\lambda^1(\bar H'))\,U^\top\right]\\
&=\sigma_O^2\sigma_w^2\,U\Big( K_{\lambda}\odot\big(U^\top K^{(\ell-1)}(X,X')U\big)\Big)U^\top,
\end{align*}
where $K_{\lambda,ab}:=\mathbb E[\bar\lambda_a(\bar H)\bar\lambda_b(\bar H')]$. Therefore the Specformer outputs $g^{\ell}(X)$
converge in distribution to a centred Gaussian process with covariance kernel $K^{(\ell)}$.
\end{proof}

\subsection{Auxiliary results}
\begin{lemma}[ {\citep[Theorem~29.4]{Billingsley1986}}]
\label{lem:cramer-wold}
Let $X, (X_n)_{n\ge 1}$ be random variables taking values in
$(\mathbb R^N, \mathcal B^N)$, where $\mathcal B^N$ denotes the Borel
$\sigma$-algebra.
Then $X_n \xrightarrow{d} X$ if and only if for every finite index set
$J \subset \mathbb N$ and the corresponding projection
$\Gamma^J : \mathbb R^N \to \mathbb R^J$, we have
\[
\Gamma^J(X_n) \xrightarrow{d} \Gamma^J(X) \quad \text{as } n \to \infty.
\]
\end{lemma}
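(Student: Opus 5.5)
We read $\mathbb R^N$ in the statement as the countable product $\mathbb R^{\mathbb N}$ with the product topology and its Borel $\sigma$-algebra $\mathcal B^N$. If $N$ were finite, the statement would be trivial: take $J=\{1,\dots,N\}$. We metrize $\mathbb R^{\mathbb N}$ by $\rho(x,y)=\sum_{i\ge1}2^{-i}\min(1,|x_i-y_i|)$. This makes it a complete separable metric space whose topology is the product topology, and whose Borel $\sigma$-algebra is generated by the finite-dimensional cylinder sets. The plan is the standard ``finite-dimensional convergence plus tightness'' argument, with Prokhorov's theorem as the main tool.

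\emph{Necessity.} Each projection $\Gamma^J:\mathbb R^{\mathbb N}\to\mathbb R^J$ is continuous for $\rho$. So $X_n\xrightarrow{d}X$ implies $\Gamma^J(X_n)\xrightarrow{d}\Gamma^J(X)$ by the continuous mapping theorem.

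\emph{Sufficiency.} First we prove tightness of $(X_n)$. For each coordinate $i$, the one-dimensional laws of $X_{n,i}$ converge, so they are tight. Given $\varepsilon>0$, we choose compact intervals $K_i\subset\mathbb R$ with $\sup_n\mathbb P(X_{n,i}\notin K_i)<\varepsilon 2^{-i}$. The product $K=\prod_i K_i$ is compact in the product topology by Tychonoff's theorem, and a union bound gives $\sup_n\mathbb P(X_n\notin K)<\varepsilon$.

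Next, by Prokhorov's theorem every subsequence of $(X_n)$ has a further subsequence converging in distribution to some limit $Y$. By the necessity direction, every finite-dimensional marginal of $Y$ equals the corresponding marginal of $X$. The finite-dimensional cylinder sets form a $\pi$-system that generates $\mathcal B^N$. Hence by the $\pi$--$\lambda$ uniqueness theorem $Y\overset{d}{=}X$. Since every subsequence has a further subsequence converging to the same law, the whole sequence converges: $X_n\xrightarrow{d}X$.

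The main obstacle is the identification step. It requires that the law of a random element of $\mathbb R^{\mathbb N}$ is determined by its finite-dimensional distributions, and that Prokhorov applies. Both rely on $\rho$ inducing the product topology, and on the Borel $\sigma$-algebra of this separable space coinciding with the product $\sigma$-algebra. We will verify these first: open $\rho$-balls are countable unions of cylinder sets, and conversely coordinate maps are $\rho$-continuous. Once this is in place, the remaining steps are routine.

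In the paper's use (Lemma~\ref{lem:Node-gp-limit} and Lemma~\ref{lem:En-GP}), each finite-dimensional projection is further reduced to one-dimensional linear statistics via the finite-dimensional Cram\'er--Wold device. We can cite that device from characteristic functions: $\mathbb E e^{i t^\top Z_n}\to\mathbb E e^{i t^\top Z}$ for all $t$, followed by L\'evy's continuity theorem.
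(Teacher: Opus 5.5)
Your proof is correct, and it supplies an argument where the paper gives none. The paper only cites \citet[Theorem~29.4]{Billingsley1986}, which is the finite-dimensional Cram\'er--Wold theorem ($Z_n\xrightarrow{d}Z$ in $\mathbb R^k$ iff $t^\top Z_n\xrightarrow{d}t^\top Z$ for every $t$, proved via characteristic functions and L\'evy continuity). That theorem matches your last paragraph, not the infinite-product statement as written. The statement itself is the lemma used by \citet{Matthews2018} and \citet{InfiniteAttention}, who take it from Billingsley's \emph{Convergence of Probability Measures} (Examples~1.2 and~2.4).

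The proof there avoids tightness altogether. One shows that cylinder sets $\{x:(x_j)_{j\in J}\in H\}$, with $H$ a continuity set of the limiting marginal, form a convergence-determining class. The key input is essentially the fact you verify: open sets of $(\mathbb R^{\mathbb N},\rho)$ are countable unions of such cylinders. Finite-dimensional convergence then gives $\mathbb P(X_n\in A)\to\mathbb P(X\in A)$ on this class, which suffices for weak convergence.

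Your route (coordinatewise tightness, Tychonoff, Prokhorov, $\pi$--$\lambda$ uniqueness, subsequence principle) uses heavier machinery. In return, it makes explicit why the lemma holds in $\mathbb R^{\mathbb N}$: tightness comes for free from the one-dimensional marginals, because a product of compact intervals is compact. This is exactly what fails in path spaces such as $C[0,1]$, where finite-dimensional convergence must be supplemented by a separate tightness argument. Your proposal also cleanly separates the two reductions the paper bundles under the name ``Cram\'er--Wold'': infinite to finite via this lemma, and finite to one-dimensional via characteristic functions. Both are needed in the paper's node-level and logit-level GP limit proofs.
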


\begin{lemma}[ {\citep[p.~31]{Billingsley1986}}]
\label{lem:uniform-integrability}
A sequence of real-valued random variables $(X_n)_{n\ge 1}$ is uniformly
integrable if
\[
\sup_{n \ge 1} \mathbb E\bigl[ |X_n|^{1+\varepsilon} \bigr] < \infty
\]
for some $\varepsilon > 0$.
\end{lemma}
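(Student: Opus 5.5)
The plan is to verify the definition of uniform integrability directly: we need
\[
\lim_{M\to\infty}\ \sup_{n\ge 1}\ \mathbb E\bigl[|X_n|\,\mathbf 1_{\{|X_n|\ge M\}}\bigr]=0 .
\]
We use a Markov-type domination that converts the extra $\varepsilon$ moment into a decay factor in $M$. Set $C:=\sup_{n\ge1}\mathbb E\bigl[|X_n|^{1+\varepsilon}\bigr]<\infty$.

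The key step is a pointwise bound. Fix $M>0$. On the event $\{|X_n|\ge M\}$ we have $|X_n|^{\varepsilon}/M^{\varepsilon}\ge 1$. Hence, pointwise,
\[
|X_n|\,\mathbf 1_{\{|X_n|\ge M\}}\;\le\;\frac{|X_n|^{1+\varepsilon}}{M^{\varepsilon}}\,\mathbf 1_{\{|X_n|\ge M\}}\;\le\;\frac{|X_n|^{1+\varepsilon}}{M^{\varepsilon}} .
\]

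Taking expectations and using monotonicity gives, for every $n$,
\[
\mathbb E\bigl[|X_n|\,\mathbf 1_{\{|X_n|\ge M\}}\bigr]\le C\,M^{-\varepsilon}.
\]
This bound does not depend on $n$, so it also bounds the supremum over $n$. Since $\varepsilon>0$, we have $C M^{-\varepsilon}\to 0$ as $M\to\infty$, which is exactly uniform integrability.

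There is no substantial obstacle here. The only points to check are that the supremum over $n$ may be taken after the pointwise bound, because the constant $C$ is uniform in $n$, and that $\varepsilon>0$ is strict, so the factor $M^{-\varepsilon}$ genuinely vanishes. As a by-product, the same inequality with $M=1$ gives $\sup_n \mathbb E|X_n|\le 1+C<\infty$, which covers the $L^1$-boundedness part in formulations of uniform integrability that require it separately.
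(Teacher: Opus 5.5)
Your proof is correct and is the standard argument behind the result the paper cites from Billingsley: bound $|X_n|\mathbf 1_{\{|X_n|\ge M\}}$ pointwise by $M^{-\varepsilon}|X_n|^{1+\varepsilon}$, which gives $\sup_n \mathbb E\bigl[|X_n|\mathbf 1_{\{|X_n|\ge M\}}\bigr]\le C M^{-\varepsilon}\to 0$. The paper gives no proof of its own beyond the citation, so there is nothing further to compare.
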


\begin{theorem}[{\citep[Theorem~3.5]{Billingsley1986}}]
\label{thm:expectation-convergence}
If $(X_n)_{n\ge 1}$ is uniformly integrable and $X_n \xrightarrow{d} X$,
then $X$ is integrable and
\[
\mathbb E[X_n] \to \mathbb E[X].
\]
\end{theorem}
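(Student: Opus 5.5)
The statement is the standard fact that uniform integrability plus weak convergence gives convergence of means. I would prove it by truncation, reducing to bounded continuous test functions, where weak convergence applies directly.

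First, show that $X$ is integrable. By Skorokhod's representation theorem, or directly via the portmanteau theorem applied to the bounded continuous functions $|x|\wedge M$, we get
\[
\mathbb E\bigl[|X|\wedge M\bigr]=\lim_{n\to\infty}\mathbb E\bigl[|X_n|\wedge M\bigr]\le \sup_n \mathbb E|X_n|.
\]
The right-hand side is finite because uniform integrability implies $L^1$-boundedness. Letting $M\to\infty$ and using monotone convergence gives $\mathbb E|X|<\infty$.

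For the convergence of expectations, fix $\varepsilon>0$. Use the continuous truncation $\tau_M(x):=\max(-M,\min(x,M))$, and choose $M$ so that:
\begin{itemize}
\item $\sup_n \mathbb E\bigl[|X_n|\mathbf 1_{|X_n|>M}\bigr]<\varepsilon$, which is possible by uniform integrability;
\item $\mathbb E\bigl[|X|\mathbf 1_{|X|>M}\bigr]<\varepsilon$, which is possible by dominated convergence since $X$ is integrable.
\end{itemize}
Since $|x-\tau_M(x)|\le |x|\mathbf 1_{|x|>M}$, the triangle inequality gives
\[
|\mathbb E X_n-\mathbb E X|\le 2\varepsilon+\bigl|\mathbb E\tau_M(X_n)-\mathbb E\tau_M(X)\bigr|.
\]
The last term tends to $0$ because $\tau_M$ is bounded and continuous and $X_n\xrightarrow{d}X$. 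Hence $\limsup_n |\mathbb E X_n-\mathbb E X|\le 2\varepsilon$, and since $\varepsilon$ was arbitrary, $\mathbb E X_n\to\mathbb E X$.

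The main obstacle is the integrability of $X$. It must be derived before the truncation tail for $X$ can be controlled, and that requires the lower-semicontinuity argument via bounded truncations. The remaining steps are routine.
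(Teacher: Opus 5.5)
Your proof is correct, but the paper gives no proof of this statement; it simply imports the result from Billingsley, so the comparison is with the standard textbook arguments rather than with an argument in the paper.

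Both of your steps hold. The bounded continuous functions $|x|\wedge M$ together with monotone convergence give $\mathbb E|X|\le\liminf_n\mathbb E|X_n|<\infty$. The three-term estimate with the clipping map $\tau_M$, using $|x-\tau_M(x)|\le|x|\mathbf 1_{|x|>M}$, then reduces everything to a single bounded continuous test function. Since only the laws of $X_n$ and $X$ enter, you need no common probability space.

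The textbook proofs take a different route, in one of two ways:
\begin{itemize}
\item The Skorokhod representation replaces $X_n\xrightarrow{d}X$ by almost sure convergence of copies with the same laws, and the almost-sure version of the uniform-integrability (Vitali) theorem then applies.
\item One splits into positive and negative parts and writes $\mathbb E Y=\int_0^\infty\mathbb P(Y>t)\,dt$ for $Y\ge 0$. Integrability of $X$ comes from Fatou's lemma, which is exactly the inequality your first step establishes. One then passes to the limit on a bounded range of $t$ by bounded convergence, since $\mathbb P(X_n>t)\to\mathbb P(X>t)$ at all but countably many $t$, and uniform integrability controls the remaining tail.
\end{itemize}
These routes are shorter once Skorokhod's theorem or the tail formula is available. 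Yours uses nothing beyond the definition of convergence in distribution, so it is the most self-contained of the three.

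One remark on fit with the paper: there, uniform integrability is always certified through the $(1+\varepsilon)$-moment bound of Lemma~\ref{lem:uniform-integrability}. That bound implies the tail condition you use, since $\sup_n\mathbb E[|X_n|\mathbf 1_{|X_n|>M}]\le M^{-\varepsilon}\sup_n\mathbb E|X_n|^{1+\varepsilon}\to 0$. So your argument covers every place the theorem is invoked.
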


\begin{lemma}[Slutsky’s lemma]
\label{lem:slutsky}
Let $X, (X_n)_{n\ge 1}$ and $(Y_n)_{n\ge 1}$ be real-valued random variables
defined on the same probability space.
If $X_n \xrightarrow{d} X$ and $Y_n \xrightarrow{P} c$ for some constant
$c \in \mathbb R$, then
\[
X_n Y_n \xrightarrow{d} cX,
\qquad
X_n + Y_n \xrightarrow{d} X + c.
\]
\end{lemma}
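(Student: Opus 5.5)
This is the standard Slutsky's lemma, cited as an auxiliary tool, so the plan is simply to derive it from elementary weak-convergence facts. The first step is to upgrade the hypothesis on $Y_n$. Because the limit $c$ is a constant, $Y_n \xrightarrow{P} c$ implies $Y_n \xrightarrow{d} c$. Joint convergence of the pair then follows: I will show $(X_n,Y_n)\xrightarrow{d}(X,c)$ in $\mathbb R^2$. Once this holds, both claims follow from the continuous mapping theorem applied to the continuous maps $(x,y)\mapsto xy$ and $(x,y)\mapsto x+y$.

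To prove the joint convergence I will use the Portmanteau characterisation via bounded Lipschitz test functions. Fix $f:\mathbb R^2\to\mathbb R$ bounded by $M$ with Lipschitz constant $L$, and split
\[
\bigl|\mathbb E f(X_n,Y_n)-\mathbb E f(X,c)\bigr| \le \mathbb E\bigl|f(X_n,Y_n)-f(X_n,c)\bigr| + \bigl|\mathbb E f(X_n,c)-\mathbb E f(X,c)\bigr|.
\]
The second term tends to zero because $x\mapsto f(x,c)$ is bounded and continuous and $X_n\xrightarrow{d}X$. For the first term, fix $\varepsilon>0$ and split on the event $\{|Y_n-c|\le\varepsilon\}$. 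On that event the integrand is at most $L\varepsilon$, and on its complement it is at most $2M$. This gives the bound $L\varepsilon+2M\,\mathbb P(|Y_n-c|>\varepsilon)$, whose second part vanishes by convergence in probability. Letting $\varepsilon\downarrow0$ finishes the argument. Since convergence of $\mathbb E f$ for all bounded Lipschitz $f$ characterises weak convergence on $\mathbb R^2$, the joint convergence is proved.

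The only subtle point is the joint convergence itself. Marginal convergence alone would not determine the law of the pair, so the constancy of the limit of $Y_n$ is essential, and this is exactly where the estimate on $\{|Y_n-c|\le\varepsilon\}$ uses it. The application of continuous mapping afterwards is routine, since both maps are continuous everywhere. For the vector or matrix-valued uses elsewhere in the paper (e.g.\ in Lemma~\ref{lem:head-wise mixed fourth moment}), the same argument applies verbatim in $\mathbb R^k$.
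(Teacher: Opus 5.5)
Your proof is correct and complete. The estimate $L\varepsilon + 2M\,\mathbb{P}(|Y_n-c|>\varepsilon)$ gives $(X_n,Y_n)\xrightarrow{d}(X,c)$, and the continuous mapping theorem then yields both conclusions. The paper gives no proof of its own; it lists the lemma among its auxiliary results as a standard fact. There is therefore nothing to compare against, and yours is the standard textbook route.

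It is worth keeping the joint-convergence statement you actually proved, rather than only the scalar conclusions. The paper's applications (Lemma~\ref{lem:head-variance-new} and Lemma~\ref{lem:head-wise mixed fourth moment}) multiply the matrix-valued $S_{\mathrm{GNN}}^{(\ell,h)}$ with $f^{\ell-1}(X)f^{\ell-1}(X')^\top/d_{\ell-1}$. For GAT and Graphormer, $S_{\mathrm{GNN}}^{(\ell,h)}$ depends on $f^{\ell-1}$, so the two factors are not independent. What is really needed is your $\mathbb{R}^k$ version with a constant limit in the second component, which the scalar statement does not literally cover.

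One small imprecision: convergence in probability implies convergence in distribution for any limit. Constancy of $c$ matters for the converse and for your joint-convergence estimate. Your opening step $Y_n \xrightarrow{d} c$ is in any case never used later.
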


\begin{lemma}[Moment bound for GAT operators]
\label{lem:mp-gat-polyphi}
Under the assumptions of Theorem~\ref{thm:GAT_GP}, suppose that $\phi:\mathbb R\to\mathbb R$
is entrywise polynomially bounded, i.e., 
Then for any $t\ge 1$,
\begin{equation}
\sup_{a\in V}\ \sup_{i\in\mathcal N_a}\ 
\sup_{d_\ell\in\mathbb N}
\mathbb E\bigl|S^{(\ell,h)}_{\mathrm{GAT},ai}(X)\bigr|^{t}
<\infty.
\label{eq:gat_S_moment_bound}
\end{equation}
\end{lemma}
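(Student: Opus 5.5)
Recall that $S^{(\ell,h)}_{\mathrm{GAT},ai}(X)=\phi(\sigma(E^{(\ell h)}_{ai}(X)))$ when $\phi$ is applied entrywise, as in Corollary~\ref{cor:GAT_kernel_closed_form} with $\phi(x)=x$. The polynomial bound gives $|\phi(u)|\le c_0(1+|u|^{m})$. Assume also that $\sigma$ is Lipschitz with $\sigma(0)=0$, so $|\sigma(u)|\le |u|$; this covers ReLU and the identity. Then
\[
|S^{(\ell,h)}_{\mathrm{GAT},ai}(X)|^{t}\le c_0^{t}2^{t-1}\bigl(1+|E^{(\ell h)}_{ai}(X)|^{mt}\bigr).
\]
The statement therefore reduces to a uniform-in-$d_\ell$ bound on $\mathbb E|E^{(\ell h)}_{ai}(X)|^{q}$ for every finite $q$. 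If instead $\phi$ is the genuine softmax normalisation of \eqref{Eq:gat_softmax}, the claim is immediate, since every entry lies in $[0,1]$ and $\mathbb E|S_{ai}|^t\le 1$. I would record this case first and then treat the polynomially bounded case.

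To bound the logit moments I would condition on the previous-layer features $f^{\ell-1}(X)$. Write
\[
E^{(\ell h)}_{ai}=\sum_{k}v^{(L)}_k (W^{\ell,h}f^{\ell-1}_a)_k+\sum_k v^{(R)}_k (W^{\ell,h}f^{\ell-1}_i)_k .
\]
Given $f^{\ell-1}$ and $W^{\ell,h}$, this is a centred Gaussian in $\vec v^{\ell h}$. Its conditional variance is
\[
\tfrac{\sigma_v^2}{d_{\ell-1}}\bigl(\|W^{\ell,h}f^{\ell-1}_a\|^2+\|W^{\ell,h}f^{\ell-1}_i\|^2\bigr).
\]
Gaussian moment identities give $\mathbb E[|E|^q\mid W,f]=c_q\,(\mathrm{cond.\ var})^{q/2}$. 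Next I take the expectation over $W^{\ell,h}$. The coordinate $(W^{\ell,h}f_a)_k$ is, conditionally on $f$, Gaussian with variance $\sigma_w^2\|f_a\|^2/d_{\ell-1}$. So $\|W f_a\|^2$ is a scaled chi-square with $d_\ell$ degrees of freedom, and its $q/2$-th moment is bounded by $C_q\,(d_\ell\,\sigma_w^2\|f_a\|^2/d_{\ell-1})^{q/2}$. Combining the two Gaussian layers, I will fix the width ratio $d_\ell/d_{\ell-1}$ (or absorb the normalisation so that this factor stays $O(1)$). This yields
\[
\mathbb E|E^{(\ell h)}_{ai}|^{q}\le C'_q\,\mathbb E\Bigl[\bigl(\|f^{\ell-1}_a\|^2/d_{\ell-1}+\|f^{\ell-1}_i\|^2/d_{\ell-1}\bigr)^{q/2}\Bigr].
\]

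It then remains to bound the moments of $\|f^{\ell-1}_c\|^2/d_{\ell-1}$ uniformly in width. By Jensen or the power-mean inequality, $(\tfrac1{d}\sum_k f_{ck}^2)^{q/2}\le \tfrac1d\sum_k|f_{ck}|^{q}$. By exchangeability over coordinates, this reduces to $\sup_{d}\mathbb E|f^{\ell-1}_{c1}(X)|^{q}<\infty$. That is exactly the moment-propagation hypothesis of the induction, as in \citet[Lemma~32]{InfiniteAttention}, already used in Lemma~\ref{lem:third-moment} for eighth moments. The base case $\ell-1=0$ is trivial since $f^0=X$ is deterministic. Taking suprema over the finitely many $a\in V$ and $i\in\mathcal N_a$ finishes the argument.

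The main obstacle is this last step. The paper only invokes a fixed (eighth) moment bound, while here every order $q=mt$ is needed. I would strengthen the induction hypothesis to say that all moments of $f^{\ell-1}_{c1}$ are bounded uniformly in width. To propagate it, I would bound the moments of $g^{\ell}$ through the same conditional-Gaussian argument, applied to $W^{\ell,H}$ and $W^{\ell,h}$ over the heads. That step uses Hölder's inequality together with the bounds on $S$ just obtained. The ReLU step does not increase moments, which closes the loop.
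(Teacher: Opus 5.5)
Your proposal is correct and takes the same route as the paper. You condition on $[W^{\ell,h}f^{\ell-1}_a \Vert W^{\ell,h}f^{\ell-1}_i]$ so that $E^{(\ell h)}_{ai}$ is a centred Gaussian in $\vec v^{\ell h}$, bound its moments by powers of the conditional variance, use uniform-in-width moment bounds on the upstream features, and finish with the polynomial growth of $\phi$. You are more careful than the paper, which just attributes the upstream bounds to the assumptions of Theorem~\ref{thm:GAT_GP}: you make the chi-square step explicit, flag the $d_\ell/d_{\ell-1}$ factor that the stated $\sigma_v^2/d_{\ell-1}$ prior produces, and note that an all-orders moment-propagation hypothesis (not just the eighth-moment bound) is needed and can be closed by induction via H\"older.
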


\begin{proof}
For each layer width $d_\ell\in\mathbb N$, recall that the unnormalized
GAT attention score is given by
\[
E^{(\ell h, d_\ell)}_{ai}(X)
=
(\vec v^{\ell h})^\top
\Big[
W^{\ell,h} f^{\ell-1}_{a}(X)\ \Vert\ W^{\ell,h} f^{\ell-1}_{i}(X)
\Big],
\]
where $W^{\ell,h}$ and $\vec v^{\ell h}$ are independently fan-in scaled
Gaussian and independent of $f^{\ell-1}(x)$.

Conditioned on
\[
u^{(d_\ell)}_{ai}(X):=
\Big[
W^{\ell,h} f^{\ell-1}_{a}(X)\ \Vert\ W^{\ell,h} f^{\ell-1}_{i}(X)
\Big],
\]
the random variable $E^{(\ell h, d_\ell)}_{ai}(X)$ is centered Gaussian
with variance proportional to $\|u^{(d_\ell)}_{ai}(X)\|^2$. Hence for any
$t\ge 1$,
\[
\mathbb E\bigl(
|E^{(\ell h, d_\ell)}_{ai}(X)|^t
\mid
u^{(d_\ell)}_{ai}(X)
\bigr)
\ \lesssim\ 
\|u^{(d_\ell)}_{ai}(X)\|^t.
\]

Taking expectations and using the assumptions of
Theorem~\ref{thm:GAT_GP}, which ensure that all moments of fan-in scaled
linear transformations of $f^{\ell-1}(x)$ are uniformly bounded in
$d_\ell$, we obtain
\[
\sup_{a\in V}\ \sup_{i\in\mathcal N_a}\ 
\sup_{d_\ell\in\mathbb N}
\mathbb E\bigl|E^{(\ell h, d_\ell)}_{ai}(X)\bigr|^t
<\infty.
\]

Finally, since $\phi$ is entrywise polynomially bounded, this uniform
moment bound implies that
\[
\sup_{a\in V}\ \sup_{i\in\mathcal N_a}\ 
\sup_{d_\ell\in\mathbb N}
\mathbb E\bigl|S^{(\ell,h)}_{\mathrm{GAT},ai}(X)\bigr|^{t}
<\infty,
\]
which completes the proof.
\end{proof}


\begin{lemma}[Moment propagation for Graphormer operators]
\label{lem:mp-graphormer-polyphi}
Under the assumptions of Theorem~\ref{thm:Graphormer_GP}, suppose that
$\phi:\mathbb R\to\mathbb R$ is entrywise polynomially bounded.
Then for any $t\ge 1$,
\begin{equation}
\sup_{i,j\in V}\ 
\sup_{d_\ell\in\mathbb N}
\mathbb E\bigl|S^{(\ell,h)}_{\mathrm{Graphormer},ij}(X)\bigr|^{t}
<\infty.
\label{eq:graphormer_S_moment_bound}
\end{equation}
\end{lemma}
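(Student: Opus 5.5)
The argument follows the proof of Lemma~\ref{lem:mp-gat-polyphi}: a uniform moment bound on the logits, then the polynomial bound on $\phi$. Write
\[
E^{(\ell h)}_{ij}(X)=G^{(\ell h)}_{ij}(X)+b_{\rho(i,j)},\qquad
G^{(\ell h)}_{ij}(X)=\frac{(\tilde f^{\ell-1}_i(X)W^{Q,\ell h})(\tilde f^{\ell-1}_j(X)W^{K,\ell h})^\top}{\sqrt{d_\ell}} .
\]
By Minkowski's inequality, $\|E_{ij}\|_t\le \|G_{ij}\|_t+\|b_{\rho(i,j)}\|_t$. The bias is a fixed centred Gaussian $\mathcal N(0,\sigma_b^2)$ that does not depend on $d_\ell$, so all its moments are finite and uniform. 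It therefore suffices to show $\sup_{i,j}\sup_{d_\ell}\mathbb E|G^{(\ell h)}_{ij}(X)|^t<\infty$.

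\textbf{Step 1: condition on the previous layer.} Condition on $\tilde f^{\ell-1}(X)$, which includes the fixed positional encodings $P^{\ell}$. Then $q_i:=\tilde f^{\ell-1}_iW^{Q,\ell h}$ and $k_j:=\tilde f^{\ell-1}_jW^{K,\ell h}$ are independent Gaussian vectors in $\mathbb R^{d_\ell}$ with i.i.d.\ coordinates of variances $\sigma_Q^2\|\tilde f_i\|^2/d_{\ell-1}$ and $\sigma_K^2\|\tilde f_j\|^2/d_{\ell-1}$. Conditionally on $k_j$, the quantity $G_{ij}=q_i^\top k_j/\sqrt{d_\ell}$ is centred Gaussian with variance $\sigma_Q^2\|\tilde f_i\|^2\|k_j\|^2/(d_{\ell-1}d_\ell)$. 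Hence
\[
\mathbb E\bigl[|G_{ij}|^t\,\big|\,\tilde f^{\ell-1}\bigr]\lesssim_t \Bigl(\tfrac{\|\tilde f_i\|^2}{d_{\ell-1}}\Bigr)^{t/2}\mathbb E\Bigl[\Bigl(\tfrac{\|k_j\|^2}{d_\ell}\Bigr)^{t/2}\Big|\,\tilde f^{\ell-1}\Bigr]\lesssim_t \Bigl(\tfrac{\|\tilde f_i\|^2}{d_{\ell-1}}\Bigr)^{t/2}\Bigl(\tfrac{\|\tilde f_j\|^2}{d_{\ell-1}}\Bigr)^{t/2}.
\]
The second inequality holds because $\|k_j\|^2/d_\ell$ is a scaled $\chi^2_{d_\ell}/d_\ell$ variable, whose moments are bounded uniformly in $d_\ell$.

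\textbf{Step 2: remove the conditioning.} Take expectations and apply Cauchy--Schwarz. This reduces the claim to uniform bounds on $\mathbb E(\|\tilde f_i^{\ell-1}\|^2/d_{\ell-1})^{t}$. Split $\|\tilde f_i\|^2=\|f_i\|^2+\|P_i^\ell\|^2$; the encoding part is deterministic and finite. The feature part is controlled by the moment propagation of \citet[Lemma~32]{InfiniteAttention} together with Jensen's inequality: $(\frac1{d}\sum_k f_{ik}^2)^t\le \frac1d\sum_k f_{ik}^{2t}$, and by exchangeability this equals $\mathbb E|f_{i1}|^{2t}$. This moment is uniformly bounded by the induction hypothesis used throughout Appendix~\ref{app:proof of GP}. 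Since $V$ is finite, the supremum over $i,j$ is harmless.

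\textbf{Step 3: pass to the operator.} If $\phi$ is applied entrywise with $|\phi(x)|\le c(1+|x|^m)$, then
\[
\mathbb E|S_{ij}|^t\lesssim 1+\mathbb E|E_{ij}|^{mt}<\infty,
\]
uniformly in $d_\ell$. If $\phi$ is the genuine row-wise softmax, the bound is immediate because $0\le S_{ij}\le 1$.

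The main obstacle is Step 2: making the moment-propagation input precise, namely a uniform bound on $\mathbb E|f^{\ell-1}_{i1}(X)|^{2t}$ for all $t$ rather than only up to eighth order. I would state this explicitly as part of the inductive hypothesis, which propagates through ReLU and the linear layers by the same Gaussian-conditioning argument as in Step 1.
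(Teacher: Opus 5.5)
Your argument is correct and has the same outer structure as the paper's proof: split $E_{ij}$ into the content term plus the bias, bound the content term uniformly in $d_\ell$, then transfer through the polynomial bound on $\phi$. The central estimate, however, is done differently, and your version is the one that actually closes.

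The paper bounds $T_{ij}=d_\ell^{-1/2}\langle \tilde f_iW^{Q,\ell h},\tilde f_jW^{K,\ell h}\rangle$ via $|\langle q,k\rangle|\le\|q\|\,\|k\|$. This gives $\mathbb E|T_{ij}|^t\le d_\ell^{-t/2}(\mathbb E\|q_i\|^{2t})^{1/2}(\mathbb E\|k_j\|^{2t})^{1/2}$, and the paper then asserts that the moments of these linear transforms are uniformly bounded. That holds coordinatewise. But $q_i,k_j\in\mathbb R^{d_\ell}$ have $O(1)$ coordinates, so $\mathbb E\|q_i\|^{2t}\asymp d_\ell^{t}$ and the bound grows like $d_\ell^{t/2}$. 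Cauchy--Schwarz throws away exactly the cancellation in $\sum_s q_{is}k_{js}$ that justifies the $d_\ell^{-1/2}$ scaling.

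You instead condition on $\tilde f^{\ell-1}$ and $k_j$ and use the independence of $W^{Q,\ell h}$ and $W^{K,\ell h}$. This makes $G_{ij}$ an exact centred Gaussian with $O(1)$ variance, which captures that cancellation. The $\chi^2_{d_\ell}/d_\ell$ step and the Jensen--exchangeability step then finish cleanly.

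Two smaller points also go in your favour. First, the paper treats the bias by assuming $\sup_{i,j}|b_{\rho(i,j)}|\le B$, which contradicts its own prior $b\sim\mathcal N(0,\sigma_b^2)$; your Minkowski step only needs finite Gaussian moments. Second, the paper also relies, without saying so, on uniform bounds for all moments of $f^{\ell-1}$. Stating this explicitly in the inductive hypothesis, as you suggest, makes precise an assumption the paper already uses rather than adding a new one.
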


\begin{proof}
For each layer width $d_\ell\in\mathbb N$, the unnormalized Graphormer
attention score is given by
\[
E^{(\ell h, d_\ell)}_{ij}(X)
=
\frac{
\big(\tilde f^{\ell}_{i}(X) W^{Q,\ell h}\big)\,
\big(\tilde f^{\ell}_{j}(X) W^{K,\ell h}\big)^{\top}
}{\sqrt{d_\ell}}
\;+\;
b_{\rho(i,j)},
\]
where $W^{Q,\ell h}$ and $W^{K,\ell h}$ are independently fan-in scaled
Gaussian and independent of $\tilde f^\ell(X)$, and the bias term satisfies
$\sup_{i,j}|b_{\rho(i,j)}|\le B<\infty$.

Write
\[
E^{(\ell h, d_\ell)}_{ij}(X)
=
T^{(d_\ell)}_{ij}(X)+b_{\rho(i,j)},
\qquad
T^{(d_\ell)}_{ij}(X)
:=
d_\ell^{-1/2}
\Big\langle
\tilde f^{\ell}_{i}(X) W^{Q,\ell h},
\tilde f^{\ell}_{j}(X) W^{K,\ell h}
\Big\rangle.
\]
Using the inequality $|a+b|^t\le 2^{t-1}(|a|^t+|b|^t)$ and the boundedness
of $b_{\rho(i,j)}$, it suffices to control the moments of
$T^{(d_\ell)}_{ij}(X)$ uniformly in $d_\ell$.

By Cauchy--Schwarz and H\"older's inequality,
\[
\mathbb E\bigl|T^{(d_\ell)}_{ij}(X)\bigr|^t
\le
d_\ell^{-t/2}
\Big(\mathbb E\|\tilde f^{\ell}_{i}(X) W^{Q,\ell h}\|^{2t}\Big)^{1/2}
\Big(\mathbb E\|\tilde f^{\ell}_{j}(X) W^{K,\ell h}\|^{2t}\Big)^{1/2}.
\]

Under the assumptions of Theorem~\ref{thm:GAT_GP}, fan-in scaling and the
moment bounds on the upstream features imply that all moments of the
linear transforms
$\tilde f^{\ell}_{i}(X) W^{Q,\ell h}$ and
$\tilde f^{\ell}_{j}(X) W^{K,\ell h}$ are uniformly bounded in $d_\ell$.
Consequently,
\[
\sup_{i,j\in V}\ 
\sup_{d_\ell\in\mathbb N}
\mathbb E\bigl|E^{(\ell h, d_\ell)}_{ij}(X)\bigr|^t
<\infty.
\]

Finally, since $\phi$ is entrywise polynomially bounded, this uniform
moment bound implies that
\[
\sup_{i,j\in V}\ 
\sup_{d_\ell\in\mathbb N}
\mathbb E\bigl|S^{(\ell,h)}_{\mathrm{Graphormer},ij}(X)\bigr|^{t}
<\infty,
\]
which completes the proof.
\end{proof}

\begin{lemma}[Moment propagation for Specformer spectral filters]
\label{lem:mp-specformer}
Under the assumptions of Theorem~\ref{thm:Specformer_GP}.
Then for any $t\ge 1$ and any random vector $u\in\mathbb R^{|V|}$,
\begin{equation}
\sup_{d_t,d^{t,H}}
\mathbb E\bigl\|S_{\mathrm{Specformer}}u\bigr\|_2^{t}
<\infty,
\qquad
\text{whenever}
\qquad
\sup_{d_t,d^{t,H}}
\mathbb E\|u\|_2^{t}<\infty.
\label{eq:specformer_moment}
\end{equation}
In particular, for any fixed coordinate $a\in V$,
\[
\sup_{d_t,d^{t,H}}
\mathbb E\bigl|(S_{\mathrm{Specformer}}^{(h)}u)_a\bigr|^{t}
<\infty.
\]
\end{lemma}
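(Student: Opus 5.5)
The argument is a deterministic operator-norm estimate combined with a moment bound on the filter coefficients, so no distributional limit is needed. Since $U$ is orthogonal,
\[
\|S_{\mathrm{Specformer}}u\|_2=\|U\,\mathrm{diag}(\bar\lambda)\,U^\top u\|_2\le \|\bar\lambda\|_\infty\,\|u\|_2 .
\]
It therefore suffices to control mixed moments of $\|\bar\lambda\|_\infty$ and $\|u\|_2$. By Cauchy--Schwarz,
\[
\mathbb E\|S_{\mathrm{Specformer}}u\|_2^t\le \big(\mathbb E\|\bar\lambda\|_\infty^{2t}\big)^{1/2}\big(\mathbb E\|u\|_2^{2t}\big)^{1/2}.
\]
This forces a mild strengthening of the hypothesis to a $2t$-th moment on $u$. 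Since the statement is quantified over all $t\ge1$, I would read it as holding for every $t$ once all moments of $u$ are bounded, which is what is used in Lemma~\ref{lem:head-variance-new}. Alternatively, if $u$ is independent of the spectral branch, which is the case in the model because $f^{\ell-1}(X)$ depends only on the node-feature weights while $\bar\lambda$ depends only on the token weights, the expectation factorises exactly. In that case the hypothesis $\sup\mathbb E\|u\|_2^t<\infty$ is enough as stated.

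The main step is a uniform bound $\sup_{d_t,d^{t,H}}\mathbb E\|\bar\lambda^{(h)}\|_\infty^{s}<\infty$ for all $s\ge1$. Since $\|\bar\lambda\|_\infty^s\le\sum_{a\in V}|\bar\lambda_a|^s$ and $|V|=n$ is finite, it suffices to bound each coordinate. Assume $\sigma$ is polynomially bounded, $|\sigma(x)|\le c(1+|x|^m)$, as for ReLU. Then it is enough to bound all moments of $\tilde\lambda_a=(\bar H W_\lambda)_a$. Conditional on $\bar H$, $\tilde\lambda_a$ is centred Gaussian with variance $\sigma_\lambda^2\|\bar H_a\|_2^2/d_T$. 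Hence $\mathbb E|\tilde\lambda_a|^s\lesssim \mathbb E\big(\|\bar H_a\|_2^2/d_T\big)^{s/2}$, which by Jensen/H\"older is at most a constant times $\max_k\mathbb E|\bar H_{ak}|^{s}$ when $s\ge2$.

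The remaining obstacle is a uniform-in-width moment bound on the token representations $H^{t}$ for $t\le T$. For this I would invoke the moment propagation result for standard attention layers, \citet[Lemma~32]{InfiniteAttention}, which applies because the eigenvalue-encoding block is exactly a multi-head transformer of the type treated there (as already used in the proof of Lemma~\ref{lem:S-specformer}). The induction starts from the deterministic tokens $H^0$ built from $\lambda_i$ and the bounded sinusoids $\rho(\lambda_i)$. At each token layer one uses the Gaussianity of $W^{V}$ and $W^{th,H}$ conditional on the previous layer, together with polynomial boundedness of the softmax $\phi$ (entries in $[0,1]$). These two facts keep all moments of $H^{t}$ bounded uniformly in $d_t$ and $d^{t,H}$. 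Combining these bounds gives \eqref{eq:specformer_moment}. The coordinate statement follows from $|(Su)_a|\le\|Su\|_2$.
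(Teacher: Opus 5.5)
Your proposal follows the paper's proof essentially step for step: orthogonality of $U$ gives $\|S_{\mathrm{Specformer}}u\|_2\le\|\bar\lambda\|_\infty\|u\|_2$, Cauchy--Schwarz separates the two factors, and uniform moment bounds on $\bar H$ from \citet[Lemma~32]{InfiniteAttention} are pushed through the fan-in scaled $W_\lambda$ and the polynomially bounded $\sigma$ to control $\mathbb E\|\bar\lambda\|_\infty^{2t}$. You are right that this needs $2t$-th (or all) moments of $u$, which the paper's proof uses without comment. Your alternative independence argument is less secure, though, since $f^{\ell-1}(X)$ also depends on the earlier layers' spectral filters and hence on the token weights unless these are redrawn at every layer.
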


\begin{proof}
By Lemma~32 of \citet{InfiniteAttention}, the upstream hidden representation
$\bar H$ produced by the Infinite Attention mechanism admits uniform moment
bounds of all orders; in particular, for any $t\ge 1$,
\[
\sup_{d_t,d^{t,H}}
\mathbb E\|\bar H\|_\infty^{t}
<\infty.
\]
Applying a fan-in scaled linear transformation preserves uniform moment
bounds, and 
since $\sigma$ is entrywise polynomially bounded, it follows that the
spectral coefficients 
admit uniform moment bounds, namely
\[
\sup_{d_t,d^{t,H}}
\mathbb E\|\bar\lambda^{(h)}\|_\infty^{t}
<\infty.
\]

Now write
\[
S_{\mathrm{Specformer}}^{(h)}
=
U\,\operatorname{diag}\!\big(\bar\lambda^{(h)}\big)\,U^\top.
\]
Because $U$ is orthogonal, we have
\[
\|S_{\mathrm{Specformer}}^{(h)}\|_{2\to2}
=
\|\operatorname{diag}(\bar\lambda^{(h)})\|_{2\to2}
=
\|\bar\lambda^{(h)}\|_\infty.
\]
Therefore, for any random vector $u$,
\[
\|S_{\mathrm{Specformer}}^{(h)}u\|_2
\le
\|\bar\lambda^{(h)}\|_\infty\,\|u\|_2.
\]
Raising both sides to the power $t$ and taking expectations yields
\[
\mathbb E\bigl\|S_{\mathrm{Specformer}}^{(h)}u\bigr\|_2^{t}
\le
\Big(\mathbb E\|\bar\lambda^{(h)}\|_\infty^{2t}\Big)^{1/2}
\Big(\mathbb E\|u\|_2^{2t}\Big)^{1/2},
\]
by Cauchy--Schwarz. The claimed bound follows from the established uniform
moment bounds.
\end{proof}


\begin{lemma}[Moment propagation for GTN operators]
\label{lem:mp-gtn-finiteV}
Under the assumptions of Theorem~\ref{thm:GTN_GP},
fix a head $h$ and a meta-path length $K$.
Then for any $t\ge 1$ and any random vector $u$,
\[
\sup_{n}\ \sup_{a\in V}\ 
\mathbb E\bigl|(S^{(K,h)}(X)u)_a\bigr|^{t}
\;\le\;
\sup_{n}\ \sup_{i\in V}\ 
\mathbb E|u_i|^{t}.
\]
\end{lemma}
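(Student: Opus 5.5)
The plan is to show that, for fixed $h$ and $K$, every row of the GTN meta-path operator $S^{(K,h)}(X)$ is a probability vector. Once that holds, $(S^{(K,h)}(X)u)_a$ is a convex combination of the coordinates of $u$, and the moment bound follows from Jensen's inequality followed by taking expectations. No limit in $d_\ell$ or $d^{\ell,H}$ is needed, so the bound is uniform over the network widths, which is what is required for the uniform integrability step in Lemma~\ref{lem:head-variance-new}.

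\emph{Step 1 (row-stochasticity).} I use the GTN construction of Appendix~\ref{app:GTN}. At each meta-path step $k=1,\dots,K$ the soft edge-type selection forms a softmax-weighted combination $Q^{(k,h)}=\sum_c \phi(w^{(k,h)})_c A_c$. This has nonnegative entries because the softmax weights and the adjacency matrices $A_c$ are nonnegative. The meta-path operator is the product $Q^{(1,h)}\cdots Q^{(K,h)}$ followed by degree normalization $D^{-1}(\cdot)$. Normalization makes each row nonnegative with sum one; rows with zero degree are handled by the self-loop convention. A product of row-stochastic matrices is row-stochastic, so the normalization may be applied either at each step or once at the end. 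Hence $S^{(K,h)}_{ai}\ge 0$ and $\sum_i S^{(K,h)}_{ai}=1$ for every $a$, deterministically and for any realization of the weights.

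\emph{Step 2 (Jensen).} For $t\ge1$, the map $x\mapsto|x|^t$ is convex. Applying it to the convex combination from Step 1 gives, pointwise,
\[
|(S^{(K,h)}(X)u)_a|^t\le\sum_{i\in V}S^{(K,h)}_{ai}(X)\,|u_i|^t .
\]

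\emph{Step 3 (expectation).} I condition on the $\sigma$-algebra generated by the GTN selection weights $\{w^{(k,h)}\}_k$. These weights are independent of $u$, since $u$ is built from $f^{\ell-1}$ and the head's value weights. By the tower property,
\[
\mathbb E|(Su)_a|^t\le\mathbb E\Big[\sum_i S_{ai}\,\mathbb E|u_i|^t\Big]\le\sup_i\mathbb E|u_i|^t\cdot\mathbb E\Big[\sum_i S_{ai}\Big]=\sup_i\mathbb E|u_i|^t .
\]
Taking the supremum over $a$ and then over $n$ gives the claim.

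The main obstacle is this independence in Step 3. If $S$ were allowed to depend on $u$, the argument would only give $\mathbb E|(Su)_a|^t\le\sum_i\mathbb E|u_i|^t\le |V|\sup_i\mathbb E|u_i|^t$, using $S_{ai}\le 1$. That bound carries a factor $|V|$, which is still finite for fixed $V$ and is sufficient for uniform integrability. So if the GTN attention turns out to depend on node features, I would fall back on this weaker bound and note that it suffices for the downstream use in Lemma~\ref{lem:head-variance-new}.
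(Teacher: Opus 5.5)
Your skeleton is the same as the paper's: show the rows of $S^{(K,h)}$ are probability vectors, apply Jensen, then take expectations. Your Step~3 is actually an improvement. The paper passes to $\mathbb E\big[\sum_i S_{ai}|u_i|^t\big]\le\sup_i\mathbb E|u_i|^t$ without comment, and that step needs exactly the independence of the mixing coefficients from $u$ that you invoke.

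The gap is Step~1, which analyses the wrong operator. What you describe (softmax weights $\phi(w^{(k,h)})$ on unnormalized $A_c$, then degree normalization of the product) is the original GTN, which the paper explicitly discards in its design-choices paragraph. Under the assumptions of Theorem~\ref{thm:GTN_GP} the coefficients are initialized directly as $\beta^{(k,h)}_i\sim\mathcal N(0,\sigma_k^2)$. They have zero mean, and the kernel formula of that theorem relies on $\mathbb E[\beta_i\beta_j]=\sigma_k^2\delta_{ij}$. The operator $S^{(K,h)}$ is the product over $k$ of $M_k:=\sum_{i\in\mathcal T}\beta^{(k,h)}_i\tilde A_i$, with $\tilde A_i=D_i^{-1}A_i$.

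This matrix has entries of both signs. Since each $\tilde A_i$ is row-stochastic, all its row sums equal $\prod_{k=1}^K B_k$, where $B_k:=\sum_i\beta^{(k,h)}_i\sim\mathcal N(0,|\mathcal T|\sigma_k^2)$. So neither your Jensen step nor your $|V|$-fallback (which also uses $0\le S_{ai}\le 1$) applies. The independence issue you single out as the main obstacle is not one, because $\beta$ does not depend on node features.

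In fact the inequality with constant $1$ is false under these assumptions. For the deterministic vector $u=\mathbf 1$ one gets $(S^{(K,h)}u)_a=\prod_k B_k$, so
\[
\mathbb E\bigl|(S^{(K,h)}u)_a\bigr|^t=\bigl(\mathbb E|Z|^t\bigr)^K\prod_{k=1}^K\bigl(|\mathcal T|\sigma_k^2\bigr)^{t/2},\qquad Z\sim\mathcal N(0,1).
\]
This exceeds $\sup_i\mathbb E|u_i|^t=1$ for large $t$; for example it equals $3^K$ at $t=4$ when $|\mathcal T|\sigma_k^2=1$. The paper's own proof has the same defect, since it speaks of ``convex combinations with simplex weights''. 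So you inherited this error rather than introduced it.

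What survives is a version with a width-independent constant, which is all the uniform-integrability step in Lemma~\ref{lem:head-variance-new} needs. Apply Jensen with the weights $|S_{ai}|/\|S_{a\cdot}\|_1$ to get $|(Su)_a|^t\le\|S_{a\cdot}\|_1^{t-1}\sum_i|S_{ai}|\,|u_i|^t$. Then condition on $\beta$ as in your Step~3. Finally bound $\|S_{a\cdot}\|_1\le\prod_k\|M_k\|_{\infty\to\infty}\le\prod_k\sum_i|\beta^{(k,h)}_i|$, using $\|\tilde A_i\|_{\infty\to\infty}=1$. This gives
\[
\mathbb E\bigl|(S^{(K,h)}u)_a\bigr|^t\le\prod_{k=1}^K\mathbb E\Bigl(\sum_{i\in\mathcal T}\bigl|\beta^{(k,h)}_i\bigr|\Bigr)^t\cdot\sup_{i\in V}\mathbb E|u_i|^t,
\]
and the constant is finite and independent of both the widths and $n$.
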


\begin{proof}
By construction, each relation adjacency matrix is row-normalized and
entrywise nonnegative.
Convex combinations with simplex weights preserve these properties, and
products of row-stochastic, entrywise nonnegative matrices remain
row-stochastic and entrywise nonnegative.
Hence each row of $S^{(K,h)}(x)$ is a probability vector.

For any $a\in V$,
\[
(S^{(K,h)}(x)u)_a
=
\sum_{i\in V} S^{(K,h)}_{ai}(x)\,u_i.
\]
Since the function $z\mapsto |z|^t$ is convex for $t\ge 1$,
Jensen's inequality yields
\[
\bigl|(S^{(K,h)}(X)u)_a\bigr|^t
\le
\sum_{i\in V} S^{(K,h)}_{ai}(X)\,|u_i|^t.
\]
Taking expectations and using $\sum_{i} S^{(K,h)}_{ai}(X)=1$ gives
\[
\mathbb E\bigl|(S^{(K,h)}(X)u)_a\bigr|^t
\le
\sup_{i\in V}\mathbb E|u_i|^t.
\]
Taking the supremum over $a\in V$ and $n$ completes the proof.
\end{proof}

\section{Graph Transformers on Heterogeneous Graphs}
\label{app:GTN}

In this Appendix, the GP setting is extended to heterogeneous graphs with multiple edge types.

\paragraph{Heterogeneous Graphs}
Let $\mathcal{G} = (\mathcal{V}, \mathcal{E}, \mathcal{T})$ be an undirected graph with $N$ nodes, where $\mathcal{V}$ denotes the set of nodes, $\mathcal{E}$ the set of edges, and $\mathcal{T}$ the set of edge types (relations). 
Heterogeneous graphs contain multiple edge types, i.e., $|\mathcal{T}| > 1$. For each edge type $t \in \mathcal{T}$, let $A_t \in \mathbb{R}^{N \times N}$ denote the corresponding adjacency matrix.The node feature matrix $X \in \mathbb{R}^{N \times d_{\mathrm{in}}}$ remains unchanged.

\paragraph{Graph Transformers Network (GTN)}
GTN operates on heterogeneous graphs by recursively constructing
meta-path adjacency matrices through soft selections over edge types.
For each GTN head $h \in \{1,\dots,d^{\ell,H}\}$, define
$\tilde A_i := D_i^{-1} A_i$ for $i \in \mathcal{T}$,
where $D_i$ denotes the diagonal degree matrix associated with $A_i$.
The recursion is initialized as
$A^{(1,h)} = \sum_{i=1}^{\mathcal{T}} \beta_i^{(1,h)} \tilde A_i$.
For a fixed meta-path length $K$,
\begin{equation}
S_{\text{GTN}}^{(K,h)} = A^{(K,h)}
=
A^{(K-1,h)}
\left(
\sum_{i=1}^{\mathcal{T}} \beta_i^{(K,h)} A_i
\right),
\label{eq:gtn-graph}
\end{equation}
where $\beta^{(k,h)}$ for $k = 1,\dots,K$ are learnable parameters that
weight the relation-specific adjacency matrices for the $h$-th head.
The design choices underlying the reformulated GTN formulation are
detailed in the Appendix~\ref{app:gtn-design}.

\paragraph{Design Choices for GTN} 
\label{app:gtn-design}
\paragraph{Reparameterizing GTN Coefficients for Tractable Kernel Construction}In the original Graph Transformer Network (GTN) formulation,
the coefficients $\{ \beta^{(k)} \}_{k=0}^{K}$ are obtained by applying
a softmax function to a trainable parameter vector
$\{ W_{\phi}^{(k)} \}_{k=0}^{K}$, i.e.,
\[
\beta^{(k)} = \phi\!\left( W_{\phi}^{(k)} \right),
\qquad k = 0, \ldots, K,
\]
where $\phi(\cdot)$ denotes the softmax operator.
Under random initialization of $W_{\phi}$, the softmax transformation
induces complex dependencies among the coefficients $\beta^{(k)}$,
making their joint distribution difficult to characterize.
In particular, the resulting $\beta^{(k)}$ are no longer independent,
and moments such as $\mathbb{E}[\beta_i \beta_j]$ do not admit a
tractable closed-form expression.

To facilitate a principled Gaussian kernel construction, for each head of graph convolution layer heads $d^{\ell,H}$, therefore
depart from the original parameterization and directly initialize the
coefficients $\{ \beta^{(k)} \}$.
This choice enables explicit control over their statistical properties
and allows the corresponding kernel expectations to be computed in a
tractable manner.\\
\paragraph{Decoupled Degree Normalization for Well-Defined Graph Kernels}Meanwhile,in original paper
the GTN adjacency matrix at layer $K$ is given by the recursive
construction
\begin{equation}
A^{(K)}
=
\bigl(D^{(K-1)}\bigr)^{-1}
A^{(K-1)}
\left(
\sum_{i=1}^{\mathcal{T}} \beta_i^{(K)} A_i
\right),
\label{eq:AK-def}
\end{equation}
where the associated degree matrix is defined as
\begin{equation}
D^{(K)}_{uu}
=
\sum_{v=1}^n A^{(K)}_{uv},
\qquad
D^{(K)} = \sum_{u=1}^n D^{(K)}_{uu} \, e_u e_u^\top .
\label{eq:DK-def}
\end{equation}
By definition, both $A^{(K)}$ and $D^{(K)}$ are random matrices depending
on the same collection of random variables
$\{\beta_i^{(k)}\}_{k \le K,\, i \le \mathcal{T}}$.

Substituting~\eqref{eq:AK-def} into~\eqref{eq:DK-def}, the diagonal entry
$D^{(K)}_{uu}$ can be written explicitly as
\begin{equation}
D^{(K)}_{uu}
=
\sum_{v=1}^n
\left[
\bigl(D^{(K-1)}\bigr)^{-1}
A^{(K-1)}
\left(
\sum_{i=1}^{\mathcal{T}} \beta_i^{(K)} A_i
\right)
\right]_{uv}.
\label{eq:DK-entry}
\end{equation}
Consequently, the degree-normalized adjacency satisfies
\begin{equation}
\bigl(D^{(K)}\bigr)^{-1} A^{(K)}
=
\sum_{u=1}^n
\frac{1}{D^{(K)}_{uu}}
\; e_u e_u^\top A^{(K)} .
\label{eq:DinvA}
\end{equation}

Assume that the coefficients are independently initialized as
$
\beta_i^{(k)} \sim \mathcal{N}(0, \sigma_k^2),
\text{for all } i = 1,\dots,\mathcal{T},\; k = 1,\dots,K .
\label{eq:beta-gaussian}
$
Under this initialization, both $A^{(K)}_{uv}$ and $D^{(K)}_{uu}$ are
nonlinear functions of the same Gaussian random variables.
In particular, $D^{(K)}_{uu}$ is a centered random variable with a
continuous distribution supported on $\mathbb{R}$, and hence satisfies
\begin{equation}
\mathbb{P}\!\left( |D^{(K)}_{uu}| < \varepsilon \right) > 0
\quad
\text{for all } \varepsilon > 0 .
\label{eq:near-zero-DK}
\end{equation}
As a result, the random matrix
$\bigl(D^{(K)}\bigr)^{-1} A^{(K)}$ involves ratios of strongly coupled
random variables, where the numerator and denominator depend on the
same Gaussian coefficients.
This nonlinear coupling prevents any decoupling under expectation, and
the matrix-valued expectation
$
\mathbb{E}\!\left[ \bigl(D^{(K)}\bigr)^{-1} A^{(K)} \right]
$
does not admit a finite or closed-form expression.
In particular, the presence of $D^{(K)}_{uu}$ in the denominator
precludes the existence of well-defined second-order moments required
for a Gaussian process or kernel interpretation.

To avoid this issue, we modify the GTN construction by normalizing each
base adjacency matrix independently.

Specifically,  define
\[
\tilde A_i := D_i^{-1} A_i,
\qquad i \in \mathcal{T},
\]
where $D_i$ denotes the diagonal degree matrix associated with $A_i$,
i.e., $(D_i)_{uu} = \sum_v (A_i)_{uv}$.
For each head of graph convolution layer heads $d^{\ell,H}$, recursion is initialized as
$
A^{(1)}
=
\sum_{i=1}^{\mathcal{T}} \beta_i^{(1)} \tilde A_i ,
$
and, for a fixed meta-path length $K \ge 2$, proceeds according to
\[
S_{\mathrm{GTN}}^{(K)}
=
A^{(K)}
=
A^{(K-1)}
\left(
\sum_{i=1}^{\mathcal{T}} \beta_i^{(K)} \tilde A_i
\right).
\]
By performing degree normalization at the level of individual base
adjacency matrices, rather than on the recursively constructed
adjacency $A^{(K)}$, the resulting graph structure avoids nonlinear
coupling between normalization terms and random combination
coefficients.

\paragraph{GTN-GP}
\begin{theorem}[Infinite-width and infinite-head  limit of a GTN layer]
\label{thm:GTN_GP}
If  for the
$\ell$-st GTN layer with meta-path length $K$, the parameters satisfy
$
\beta_i^{(k)}\sim \mathcal N(0,\sigma_{k}^2),
W_{ij}^{\ell} \sim \mathcal N\!\left(0,\frac{\sigma_w^2}{d_{\ell-1}}\right),W^{\ell,H}_{ij}
\sim \mathcal{N}\!\left(0,\frac{\sigma_H^2}{d^{\ell,H}d_{\ell-1}}\right),
$
independently for all $i,j$ and $k\in[K]$, and define
$\tilde A_i := D_i^{-1}A_i$ for $i\in\mathcal T$, then, as $d_\ell\to\infty$,
$g^{\ell+1 }(X)$
converges in distribution to $g^{\ell }(X) \sim \mathcal{GP}(0, \Sigma^{(\ell)})$ with
\begin{equation}
\small
\begin{aligned}
&\Sigma^{(\ell)}_{ab}=\mathbb E\!\left[
g^{\ell,1}_{a}(X)\,g^{\ell,1}_{b}(X)
\right]
\\
&=
\Bigg(
\sigma_H^2\sigma_w^2
\prod_{k=1}^{K}\sigma_{k}^2
\sum_{i_1=1}^{\mathcal T}\cdots\sum_{i_K=1}^{\mathcal T}
\big(\tilde A_{i_1}\cdots\tilde A_{i_K}\big)\,
K^{(\ell-1)}\,
\big(\tilde A_{i_K}^\top\cdots\tilde A_{i_1}^\top\big)
\Bigg)_{ab},
\end{aligned}
\label{eq:gtn-one-layer-node-pre}
\end{equation}
where $a,b\in V$.
\end{theorem}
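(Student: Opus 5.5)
The proof follows the two-stage route already used for the other architectures. The infinite-head aggregation argument of Lemma~\ref{lem:Node-gp-limit} is model-agnostic: it needs only three ingredients. First, the operators $S^{(K,h)}_{\mathrm{GTN}}$ must be i.i.d.\ across heads and independent of $W^{\ell,h}$, $W^{\ell,H}$ and $f^{\ell-1}$. Second, they must converge in distribution. Third, they must satisfy moment bounds that give uniform integrability. For GTN the second ingredient is trivial, which makes this case simpler than GAT or Graphormer. $S^{(K,h)}_{\mathrm{GTN}}$ is a fixed polynomial in the Gaussian coefficients $\{\beta^{(k,h)}_i\}$, whose law does not depend on $d_\ell$ or $d^{\ell,H}$. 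So there is nothing to converge (the analogue of Lemmas~\ref{lem:En-GP} and~\ref{lem:S-specformer} is immediate), and head exchangeability holds as in Lemma~\ref{lem:head-exchangeable-new}.

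\textbf{Step 1: uniform integrability.} This is the step where I expect the main obstacle. Lemma~\ref{lem:mp-gtn-finiteV} invokes row-stochasticity, but with Gaussian $\beta$ the convex-combination argument fails. Instead I would bound directly
\[
\|S^{(K,h)}\|_{\infty\to\infty}\le \prod_{k=1}^K \sum_{i\in\mathcal T}|\beta^{(k,h)}_i|,
\]
using $\|\tilde A_i\|_{\infty\to\infty}= 1$ for the row-normalised matrices. Every moment of the right-hand side is finite and independent of the widths. Combined with Hölder's inequality and the moment bounds on $f^{\ell-1}$ from \citet[Lemma~32]{InfiniteAttention}, this gives the $(1+\varepsilon)$-moment bound required by Lemma~\ref{lem:uniform-integrability}. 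It therefore also yields conditions (b) and (c) of Lemma~\ref{lem:blum} via Lemma~\ref{lem:head-wise mixed fourth moment}. Lemma~\ref{lem:head-zero-mean-new} gives condition (a) unchanged.

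\textbf{Step 2: the variance.} With Lemma~\ref{lem:Node-gp-limit} applied, it remains to identify the limit of the variance, as in Lemma~\ref{lem:head-variance-new}:
\[
\Sigma^{(\ell)}=\sigma_H^2\sigma_w^2\,\mathbb E\big[S^{(K,1)}\,K^{(\ell-1)}\,(S^{(K,1)})^\top\big].
\]
Here I use that $f^{\ell-1}f^{\ell-1\top}/d_{\ell-1}\to K^{(\ell-1)}$ in probability and is independent of $S$. I would note that the recursion in the statement uses $A_i$ in later factors; I would read it as $\tilde A_i$, as in the design paragraph, since the target formula contains only $\tilde A$. Expanding
\[
S^{(K,1)}=\sum_{i_1,\dots,i_K}\beta^{(1)}_{i_1}\cdots\beta^{(K)}_{i_K}\,\tilde A_{i_1}\cdots\tilde A_{i_K},
\]
the product $S K S^\top$ becomes a double sum over $(i_1,\dots,i_K)$ and $(j_1,\dots,j_K)$ with coefficient $\prod_k\beta^{(k)}_{i_k}\beta^{(k)}_{j_k}$. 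By independence across $k$ and $i$ and centring, $\mathbb E[\beta^{(k)}_{i_k}\beta^{(k)}_{j_k}]=\delta_{i_kj_k}\sigma_k^2$. Only diagonal multi-indices survive, which gives exactly $\prod_k\sigma_k^2\sum_{i_1,\dots,i_K}\tilde A_{i_1}\cdots\tilde A_{i_K}K^{(\ell-1)}\tilde A_{i_K}^\top\cdots\tilde A_{i_1}^\top$. Multiplying by $\sigma_H^2\sigma_w^2$ yields the kernel claimed in the theorem.
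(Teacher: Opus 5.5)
Your proposal follows the paper's proof. You reduce via Lemma~\ref{lem:Node-gp-limit} to convergence in distribution of $S^{(K,h)}_{\mathrm{GTN}}$, which is trivial because its law is a width-independent polynomial in the $\beta$'s, and then identify $\Sigma^{(\ell)}=\sigma_H^2\sigma_w^2\,\mathbb E[S K^{(\ell-1)} S^\top]$ by expanding and using $\mathbb E[\beta^{(k)}_i\beta^{(k)}_j]=\sigma_k^2\delta_{ij}$ with independence across $k$. The one place you depart from the paper is the integrability step, and your bound $\|S^{(K,h)}\|_{\infty\to\infty}\le\prod_k\sum_i|\beta^{(k,h)}_i|$ is the correct replacement there: the paper's Lemma~\ref{lem:mp-gtn-finiteV} relies on row-stochasticity via simplex weights, which fails for Gaussian $\beta$.
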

Equation~\eqref{eq:gtn-one-layer-node-pre} shows that, in the infinite-width and infinite-head
limit, the dependence between node pre-activations at layer $\ell$ is obtained
by aggregating over all possible length-$K$ meta-path compositions. The GTN
layer enumerates all sequences of relation types
$(i_1,\ldots,i_K)\in\mathcal T^K$ and combines the corresponding normalized
adjacency products $\tilde A_{i_1}\cdots\tilde A_{i_K}$ to construct the
resulting graph structure, so the update can be viewed as forming a composite
graph that accounts for every possible meta-path of length $K$ and accumulating
their contributions.

\begin{proof}
   
In Lemma~\ref{lem:Node-gp-limit}, we showed that, under the GP induction
hypothesis on the node features, it is sufficient to establish that
$S_{\mathrm{GNN}}^{(\ell,h)}$ converges in distribution in order for the
layer-$\ell$ pre-activation outputs $g^{\ell}(X)$ to converge in
distribution to a Gaussian process. In GTN, the message-passing operator in
head $h$ is $S_{\mathrm{GNN}}^{(\ell,h)}\equiv S_{\mathrm{GTN}}^{(K,h)}$.
Consequently, to establish Gaussian process convergence of the GTN outputs,
it suffices to prove that $S_{\mathrm{GTN}}^{(K,h)}$ converges in
distribution.

\paragraph{Convergence of $S_{\mathrm{GTN}}^{(K,h)}$}
\label{app:gtn_proof}
\label{Ap:theo_GTN}

\begin{lemma}
\label{lem:S-GTN}
Let the assumptions of Theorem~\ref{thm:GTN_GP} hold.
Then, for the fixed layer $\ell$, the process
\[
S^K_{\mathrm{GTN}} := \bigl\{S_{\mathrm{GTN}}^{(K,h)} :  h\in\mathbb N\bigr\}
\]
converges in distribution.
\end{lemma}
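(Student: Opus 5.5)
The operator $S_{\mathrm{GTN}}^{(K,h)}$ contains no width-dependent randomness. In the construction of Theorem~\ref{thm:GTN_GP}, each head's operator is a fixed polynomial in the coefficients $\{\beta_i^{(k,h)}\}_{i\le|\mathcal T|,\,k\le K}$. These coefficients are drawn i.i.d.\ $\mathcal N(0,\sigma_k^2)$ and do not depend on $d_\ell$ or $d^{\ell,H}$. The normalized adjacencies $\tilde A_i=D_i^{-1}A_i$ are deterministic. Unrolling the recursion \eqref{eq:gtn-graph} from $A^{(1,h)}=\sum_i\beta_i^{(1,h)}\tilde A_i$ gives
\[
S_{\mathrm{GTN}}^{(K,h)}=\sum_{i_1,\dots,i_K\in\mathcal T}\Big(\prod_{k=1}^K\beta^{(k,h)}_{i_k}\Big)\,\tilde A_{i_1}\cdots\tilde A_{i_K}.
\]
(We use the base-normalized version of Appendix~\ref{app:gtn-design}, with $\tilde A_i$ in each factor.) The plan is therefore to show that the law of the family $\{S_{\mathrm{GTN}}^{(K,h)}\}_{h\in\mathbb N}$ is the same for every width. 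Convergence in distribution is then trivial, since the sequence of laws is constant.

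\textbf{Step 1.} Write the unrolled expression above by induction on $K$. This shows that the map
\[
\Phi:(\beta^{(k,h)})_{k\le K}\mapsto S_{\mathrm{GTN}}^{(K,h)}\in\mathbb R^{n\times n}
\]
is polynomial, hence continuous and measurable.

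\textbf{Step 2.} For any finite head set $J\subset\mathbb N$, the vector $(\beta^{(k,h)})_{k\le K,h\in J}$ is a finite-dimensional Gaussian. It is i.i.d.\ across $h$, and its law does not depend on $d_\ell$ or $d^{\ell,H}$. By the continuous mapping theorem, $(S_{\mathrm{GTN}}^{(K,h)})_{h\in J}=(\Phi(\beta^{(\cdot,h)}))_{h\in J}$ has a fixed law, namely the product of $|J|$ copies of the law of $\Phi(\beta)$. Hence every finite-dimensional marginal converges, trivially. Lemma~\ref{lem:cramer-wold} (projections onto finite index sets) then gives convergence of the whole process $\{S_{\mathrm{GTN}}^{(K,h)}:h\in\mathbb N\}$ on $\mathbb R^{\mathbb N}$-type product spaces.

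\textbf{Step 3.} Record the two properties that Lemma~\ref{lem:Node-gp-limit} actually uses. The first is independence of distinct heads, both from each other and from $f^{\ell-1}$. This holds because the $\beta$'s are drawn independently of all weight matrices. The second is moment control. Since $S_{\mathrm{GTN}}^{(K,h)}$ is a degree-$K$ polynomial in Gaussians with bounded coefficients, all its moments are finite uniformly in the width. This supplies the uniform integrability needed in Lemma~\ref{lem:head-variance-new} and Lemma~\ref{lem:head-wise mixed fourth moment}.

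There is a caveat. The row-stochastic assumption in Lemma~\ref{lem:mp-gtn-finiteV} does not literally hold for Gaussian $\beta$, because the weights can be negative and do not sum to one. For this step I would instead bound
\[
|(Su)_a|\le\sum_{i_1,\dots,i_K}\prod_k|\beta^{(k,h)}_{i_k}|\,\big|(\tilde A_{i_1}\cdots\tilde A_{i_K}u)_a\big|,
\]
then apply H\"older's inequality together with the fact that the $\tilde A$ products are row-stochastic.

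\textbf{Step 4.} As a consistency check, compute the covariance formula \eqref{eq:gtn-one-layer-node-pre}. Independence of the $\beta$'s across $k$ gives
\[
\mathbb E\big[\beta^{(k)}_{i}\beta^{(k)}_{j}\big]=\sigma_k^2\delta_{ij},
\]
so the cross terms vanish and the stated meta-path sum remains. The main obstacle is not the convergence itself but Step 3's moment bound, where the paper's convexity argument must be replaced by the absolute-value and H\"older estimate above.
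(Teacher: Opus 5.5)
Your proposal is correct and follows the paper's own argument. The paper also notes that $S_{\mathrm{GTN}}^{(K,h)}$ is a polynomial, hence continuous, function of the Gaussian coefficients $\beta^{(k,h)}$, whose law does not depend on the width; so its law is constant in $d_\ell$, convergence in distribution is trivial, and the meta-path kernel is then computed from $\mathbb E[\beta^{(k,h)}_i\beta^{(k,h)}_j]=\sigma_k^2\delta_{ij}$. Your additions go beyond what the paper does for this lemma and are sound: the finite-head product-law step, and the observation that the row-stochastic bound in Lemma~\ref{lem:mp-gtn-finiteV} fails for Gaussian $\beta$, which you repair with absolute values and H\"older's inequality.
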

\begin{proof}
Recall from Equation~\eqref{eq:gtn-graph} that the GTN propagation operator
with meta-path length $K$ in layer $\ell$ and head $h$ is generated by the
random coefficients $\{\beta^{(k,h)}\}_{k=1}^K$ and can be written as
\[
S_{\mathrm{GTN}}^{(K,h)}
=
\Big(\sum_{i_K=1}^{\mathcal T}\beta_{i_K}^{(K,h)}\tilde A_{i_K}\Big)
\cdots
\Big(\sum_{i_1=1}^{\mathcal T}\beta_{i_1}^{(1,h)}\tilde A_{i_1}\Big),
\qquad
\tilde A_i := D_i^{-1}A_i .
\]
Assume that for each $k\in\{1,\ldots,K\}$ the vectors
$\beta^{(k,h)}=(\beta^{(k,h)}_1,\ldots,\beta^{(k,h)}_{\mathcal T})$ are i.i.d.\
across $h$, independent across $k$, and satisfy
$\mathbb E[\beta_{i}^{(k,h)}]=0$ and
$\mathbb E[\beta_{i}^{(k,h)}\beta_{j}^{(k,h)}]=\sigma_k^2\delta_{ij}$.
Since the matrices $\{\tilde A_i\}_{i=1}^{\mathcal T}$ are fixed, the map
$\{\beta^{(k,h)}\}_{k=1}^K\mapsto S_{\mathrm{GTN}}^{(K,h)}$ is a polynomial
function, hence measurable and continuous. Therefore $S_{\mathrm{GTN}}^{(K,h)}$
has a well-defined distribution which does not depend on the width $d_\ell$,
and in particular it is tight and thus converges in distribution along any
sequence $d_\ell\to\infty$.

Finally, writing $K^{(\ell-1)}(X,X'):=\mathbb E[f^{\ell-1}(X)f^{\ell-1}(X')^\top]$
and letting $S_{\mathrm{GTN}}^{(K,h)}$ be an independent draw of the GTN operator,
we obtain the kernel recursion
\begin{align*}
K^{(\ell)}(X,X')
&:=\mathbb E\!\left[g^{\ell}(X)\,g^{\ell}(X')^\top\right]
=\sigma_H^2\sigma_w^2\,\mathbb E\!\left[S_{\mathrm{GTN}}^{(K,1)}\,K^{(\ell-1)}(X,X')\,S_{\mathrm{GTN}}^{(K,1)\top}\right] \\
&=\sigma_H^2\sigma_w^2\Bigg(
\prod_{k=1}^{K}\sigma_{k}^2
\sum_{i_1=1}^{\mathcal T}\cdots\sum_{i_K=1}^{\mathcal T}
\tilde A_{i_1}\cdots\tilde A_{i_K}\,
K^{(\ell-1)}(X,X')\,
\tilde A_{i_K}^\top\cdots\tilde A_{i_1}^\top
\Bigg),
\end{align*}
where we used $\mathbb E[\beta_{i}^{(k,h)}\beta_{j}^{(k,h)}]=\sigma_k^2\delta_{ij}$ and
independence across $k$. Therefore the GTN outputs $g^{\ell}(X)$ converge in
distribution to a centred Gaussian process with covariance kernel $K^{(\ell)}$.
\end{proof}

\end{proof}

\subsection{GTN-GP experiments on heterogeneous graphs}
To evaluate the practical utility of the derived \textbf{GTN-GP}, we compare its performance against several state-of-the-art baselines, including GCN, GAT, HAN, and the standard finite-width GTN. We conduct experiments on three standard heterogeneous graph benchmarks \citep{wang2019han}: \textbf{DBLP}, \textbf{ACM}, and \textbf{IMDB}.

The results, summarized in Table~\ref{tab:hetero-node-cls}, demonstrate that the infinite-width limit approach is highly competitive.
\FloatBarrier
\begin{table}[!]
\centering
\caption{Node classification accuracy on heterogeneous graphs.}
\label{tab:hetero-node-cls}

\begin{tabular}{lcccccc}
\toprule
Dataset & GCN & GAT & HAN &  GTN  & GTN-GP \\
\midrule
DBLP  & 87.30 & 93.71 & 92.83 &  94.18 & \textbf{94.33} \\
ACM   & 91.60 & 92.33 & 90.96 &  \textbf{92.68} & 92.19 \\
IMDB  & 56.89 & 58.14 & 56.77 &  \textbf{60.92} & 60.50 \\
\bottomrule
\end{tabular}

\end{table}
\FloatBarrier

\section{Supplementary Material of Section \ref{Ch:Oversmoothing}}
\label{App:SBM_Theorems}


\subsection{GCN-GP Kernel under SBM}
\label{Ap:Cor_SBM_GCN}

\begin{corollary}[GCN kernel under SBM]
\label{cor:GCN_kernel_SBM}
Consider the GCN kernel in \cite{DBLP:conf/iclr/NiuA023}, given by $K^{(\ell)} = A^\top K^{(\ell-1)} A$ then, for all $\ell \geq 1$, $K^{(\ell)}$ under SBM admits the following closed-form expression:
\begin{align}
x_\ell &= \frac{1}{2}\left(\frac{n}{2}\right)^{2\ell} \left[ (x_0 + y_0)((p+q))^{2\ell} + (x_0 - y_0)(p-q)^{2\ell} \right], \\
y_\ell &= \frac{1}{2}\left(\frac{n}{2}\right)^{2\ell} \left[ (x_0 + y_0)(p+q)^{2\ell} - (x_0 - y_0)(p-q)^{2\ell} \right].
\end{align}
\end{corollary}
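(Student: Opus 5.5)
The plan is to diagonalise the recursion $K^{(\ell)} = A^\top K^{(\ell-1)} A$ in the common eigenbasis of the block matrices. Write $J$ for the $\frac{n}{2}\times\frac{n}{2}$ all-ones matrix. Every matrix of the form $\begin{pmatrix} xJ & yJ\\ yJ & xJ\end{pmatrix}$ equals $M(x,y)\otimes J$ with $M(x,y)=\begin{pmatrix} x & y\\ y & x\end{pmatrix}$. Since $A=M(p,q)\otimes J$ is symmetric, $A^\top=A$.

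The first step is to show that the block form is preserved. This uses $J^2=\frac{n}{2}J$ together with the mixed-product rule for Kronecker products:
\[
A K^{(\ell-1)} A = \bigl(M(p,q)M(x_{\ell-1},y_{\ell-1})M(p,q)\bigr)\otimes J^3 = \Bigl(\tfrac{n}{2}\Bigr)^2 \bigl(M(p,q)M(x_{\ell-1},y_{\ell-1})M(p,q)\bigr)\otimes J .
\]
So by induction $K^{(\ell)}=M(x_\ell,y_\ell)\otimes J$. The $2\times 2$ matrices satisfy $M(x_\ell,y_\ell)=(\frac n2)^2 M(p,q)M(x_{\ell-1},y_{\ell-1})M(p,q)$.

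The second step is to diagonalise. All matrices $M(\cdot,\cdot)$ commute and share the eigenvectors $(1,1)^\top$ and $(1,-1)^\top$. The corresponding eigenvalues of $M(x,y)$ are $x+y$ and $x-y$. Applying this to $M(p,q)$ gives two decoupled scalar recursions:
\[
x_\ell+y_\ell=\Bigl(\tfrac n2\Bigr)^2(p+q)^2(x_{\ell-1}+y_{\ell-1}),
\qquad
x_\ell-y_\ell=\Bigl(\tfrac n2\Bigr)^2(p-q)^2(x_{\ell-1}-y_{\ell-1}).
\]
Iterating these $\ell$ times gives $x_\ell\pm y_\ell=(\frac n2)^{2\ell}(p\pm q)^{2\ell}(x_0\pm y_0)$. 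Adding the two identities and halving yields the stated formula for $x_\ell$, and subtracting them and halving yields the formula for $y_\ell$.

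No step is a real obstacle; the only care needed is bookkeeping of the $\frac n2$ factors from $J^2=\frac n2 J$. Each application of $A$ on either side contributes one factor, so each layer contributes $(\frac n2)^2$ and $\ell$ layers contribute $(\frac n2)^{2\ell}$, matching the statement. Two conventions should be stated explicitly. First, we use the paper's stated form of the recursion, without the $\sigma_w^2$ factor, as in the corollary. Second, the initial kernel $K^{(0)}$ is assumed to have the block form with entries $x_0,y_0$.
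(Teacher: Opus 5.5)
Your proof is correct and follows essentially the same route as the paper. The paper diagonalises $A$ and $K^{(0)}$ in the shared eigenbasis $v_1=\frac{1}{\sqrt n}\mathbf{1}_n$, $v_2=\frac{1}{\sqrt n}[\mathbf{1}_{n/2}^\top,-\mathbf{1}_{n/2}^\top]^\top$ and reads off $x_\ell,y_\ell$ from $A^\ell K^{(0)}A^\ell$; your Kronecker reduction to $M(x,y)\otimes J$ with eigenvectors $(1,1)^\top,(1,-1)^\top$ is the $2\times 2$ version of that same spectral computation, and you track the $(\frac n2)^{2\ell}$ factors correctly.
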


Define $J = \mathbf{1}\mathbf{1}^\top \in \mathbb{R}^{\frac{n}{2} \times \frac{n}{2}}$. The matrices $A$ and $K^{(0)}$ share the same eigenbasis $\{v_1, v_2\}$ where $v_1 = \frac{1}{\sqrt{n}}\mathbf{1}_n$ and $v_2 = \frac{1}{\sqrt{n}}[\mathbf{1}_{n/2}^\top, -\mathbf{1}_{n/2}^\top]^\top$. The spectral decompositions are:
\begin{align*}
    A &= \lambda_{A,1} v_1 v_1^\top + \lambda_{A,2} v_2 v_2^\top, \quad \lambda_{A,1} = (p+q)\frac{n}{2}, \lambda_{A,2} = (p-q)\frac{n}{2} \\
    K^{(0)} &= \lambda_{K,1} v_1 v_1^\top + \lambda_{K,2} v_2 v_2^\top, \quad \lambda_{K,1} = (x_0+y_0)\frac{n}{2}, \lambda_{K,2} = (x_0-y_0)\frac{n}{2}
\end{align*}

\paragraph{Proof of Corollary \ref{cor:GCN_kernel_SBM}}
Under the GCN update rule $K^{(\ell)} = A K^{(\ell-1)} A$, we have $K^{(\ell)} = A^\ell K^{(0)} A^\ell$. Utilizing the orthogonality $v_i^\top v_j = \delta_{ij}$:
\begin{align*}
    K^{(\ell)} &= \lambda_{A,1}^{2\ell} \lambda_{K,1} v_1 v_1^\top + \lambda_{A,2}^{2\ell} \lambda_{K,2} v_2 v_2^\top \\
    &= \frac{\lambda_{A,1}^{2\ell} \lambda_{K,1}}{n} \begin{pmatrix} J & J \\ J & J \end{pmatrix} + \frac{\lambda_{A,2}^{2\ell} \lambda_{K,2}}{n} \begin{pmatrix} J & -J \\ -J & J \end{pmatrix}
\end{align*}
Substituting $\lambda_{A,i}$ and $\lambda_{K,i}$ and simplifying the leading coefficient $\frac{1}{n} \cdot \frac{n}{2} = \frac{1}{2}$ yields the scalar entries:
\begin{align*}
    x_\ell &= \frac{1}{2} \left[ (x_0+y_0) \left((p+q)\frac{n}{2}\right)^{2\ell} + (x_0-y_0) \left((p-q)\frac{n}{2}\right)^{2\ell} \right] \\
    y_\ell &= \frac{1}{2} \left[ (x_0+y_0) \left((p+q)\frac{n}{2}\right)^{2\ell} - (x_0-y_0) \left((p-q)\frac{n}{2}\right)^{2\ell} \right]
\end{align*}

\paragraph{Proof of Corollary \ref{cor:GCN_oversmoothing}}
From the expressions for $x_\ell$ and $y_\ell$ in Table \ref{tab:sbm_summary}, we observe that the diagonal entries of the kernel matrix are all equal to $x_\ell$. Thus, the normalization term is $\frac{1}{n}\text{tr}(K^{(\ell)}) = \frac{1}{n}(n x_\ell) = x_\ell$. The normalized kernel is then:
\[
\frac{K^{(\ell)}}{\frac{1}{n}\text{tr}(K^{(\ell)})} = \frac{1}{x_\ell} \begin{pmatrix} 
x_\ell \mathbf{1}\mathbf{1}^\top & y_\ell \mathbf{1}\mathbf{1}^\top \\ 
y_\ell \mathbf{1}\mathbf{1}^\top & x_\ell \mathbf{1}\mathbf{1}^\top 
\end{pmatrix} = \begin{pmatrix} 
\mathbf{1}\mathbf{1}^\top & \frac{y_\ell}{x_\ell} \mathbf{1}\mathbf{1}^\top \\ 
\frac{y_\ell}{x_\ell} \mathbf{1}\mathbf{1}^\top & \mathbf{1}\mathbf{1}^\top 
\end{pmatrix}.
\]
Using the closed-form expressions for GCN-GP, the ratio of inter- to intra-community similarity is:
\[
\frac{y_\ell}{x_\ell} = \frac{(x_0+y_0)(p+q)^{2\ell} - (x_0-y_0)(p-q)^{2\ell}}{(x_0+y_0)(p+q)^{2\ell} + (x_0-y_0)(p-q)^{2\ell}} = \frac{1 - \left(\frac{x_0-y_0}{x_0+y_0}\right) \left(\frac{p-q}{p+q}\right)^{2\ell}}{1 + \left(\frac{x_0-y_0}{x_0+y_0}\right) \left(\frac{p-q}{p+q}\right)^{2\ell}}.
\]
Since $|p-q| < |p+q|$, the term $\left(\frac{p-q}{p+q}\right)^{2\ell} \to 0$ as $\ell \to \infty$. Therefore, $\lim_{\ell \to \infty} \frac{y_\ell}{x_\ell} = 1$, and the normalized kernel converges to the rank-one all-ones matrix.

\title{GAT-GP Kernel under the Stochastic Block Model}

\subsection{GAT-GP Kernel under SBM}
\label{Ap:GAT_SBM}

\begin{corollary}[GAT kernel under SBM]
\label{cor:GAT_kernel_SBM}
Consider the GAT kernel in Corollary~\ref{cor:GAT_kernel_linearized}, then, for all $\ell \geq 1$, $K^{(\ell)}$ under SBM admits the following closed-form expression:
\begin{align*}
x_\ell &= \frac{1}{2} \left[ \frac{G^{2^\ell}}{\left(\frac{n}{2}\right)^2 (p + q)^2} \right] \left[ 1 + \left( \frac{x_0 - y_0}{x_0 + y_0} \right) F^\ell \right], \\
y_\ell &= \frac{1}{2} \left[ \frac{G^{2^\ell}}{\left(\frac{n}{2}\right)^2 (p + q)^2} \right] \left[ 1 - \left( \frac{x_0 - y_0}{x_0 + y_0} \right) F^\ell \right],
\end{align*}
where the global growth factor $G$ and the structural preservation factor $F$ are defined as:
\begin{equation*}
    G = (x_0 + y_0)(p + q)^2 \left(\frac{n}{2}\right)^2 \quad \text{and} \quad F = 2\frac{p^2 - pq + q^2}{(p + q)^2}.
\end{equation*}
\end{corollary}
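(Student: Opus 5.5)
The plan is to exploit the fact that, under the population SBM, every matrix in the recursion of Corollary~\ref{cor:GAT_kernel_linearized} stays in the two-dimensional algebra of symmetric block matrices
\(M(x,y)=\left(\begin{smallmatrix} xJ & yJ\\ yJ & xJ\end{smallmatrix}\right)\), with $J=\mathbf{1}\mathbf{1}^\top$ of size $m\times m$ and $m=\frac n2$. I will normalise the prefactor $\sigma_H^2\sigma_w^4\sigma_v^2$ to $1$; otherwise it only rescales $G$. I will parametrise such matrices by the sum/difference coordinates $s=x+y$ and $d=x-y$. This is convenient for two reasons.

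First, for ordinary products I will reuse the spectral decomposition from the proof of Corollary~\ref{cor:GCN_kernel_SBM}. There $A$ and $K$ share the eigenvectors $v_1,v_2$, with eigenvalues $m(p+q),\,m(p-q)$ and $ms,\,md$ respectively. Hence $AKA$ is again of the form $M(x',y')$, with
\[
s' = m^2(p+q)^2 s, \qquad d' = m^2(p-q)^2 d .
\]

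Second, for Hadamard products I will use that $M(x,y)\odot M(x',y')=M(xx',yy')$. In sum/difference coordinates this reads
\[
s''=\tfrac12(ss'+dd'), \qquad d''=\tfrac12(sd'+ds').
\]

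Combining these two rules, I would compute the two terms of the recursion separately:
\begin{align*}
K\odot(AKA):\quad & s=\tfrac{m^2}{2}\big[(p+q)^2s^2+(p-q)^2d^2\big],\quad d=\tfrac{m^2}{2}\big[(p-q)^2+(p+q)^2\big]sd,\\
A(K\odot K)A:\quad & s=\tfrac{m^2}{2}(p+q)^2(s^2+d^2),\quad d=m^2(p-q)^2 sd .
\end{align*}
Adding them gives the difference update $d_\ell = m^2\big[(p-q)^2+p^2+q^2\big]s_{\ell-1}d_{\ell-1}$. Since $(p-q)^2+p^2+q^2=2(p^2-pq+q^2)=F(p+q)^2$, this is
\[
d_\ell = m^2F(p+q)^2\, s_{\ell-1}d_{\ell-1}.
\]
The sum update has leading part $s_\ell = m^2(p+q)^2 s_{\ell-1}^2$ plus a correction proportional to $d_{\ell-1}^2$.

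Given these coupled updates, I would solve them in closed form. The quadratic map $s\mapsto m^2(p+q)^2s^2$ is conjugate to squaring: setting $u_\ell = m^2(p+q)^2 s_\ell$ gives $u_\ell=u_{\ell-1}^2$. Therefore $u_\ell = G^{2^\ell}$ with $G=u_0=(x_0+y_0)(p+q)^2m^2$, that is, $s_\ell = G^{2^\ell}/(m^2(p+q)^2)$. Dividing the $d$-update by the $s$-update gives $d_\ell/s_\ell = F\, d_{\ell-1}/s_{\ell-1}$, so $d_\ell/s_\ell = F^\ell (x_0-y_0)/(x_0+y_0)$. Recovering $x_\ell=\frac12(s_\ell+d_\ell)$ and $y_\ell=\frac12(s_\ell-d_\ell)$ then yields exactly the stated expressions.

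The main obstacle is the $d^2$ contribution to the sum coordinate, namely $\frac{m^2}{2}\big[(p-q)^2+(p+q)^2\big]d_{\ell-1}^2$. It breaks the exact decoupling, so the closed form holds exactly only when this term is negligible relative to $(p+q)^2s^2$. I would first try to make this rigorous by arguing it is lower order. Write $r=d/s$; then the exact ratio recursion is $r_\ell = F r_{\ell-1}/(1+c\,r_{\ell-1}^2)$ with $c=\frac{(p-q)^2+(p+q)^2}{2(p+q)^2}$. This shows the stated formula is the leading-order (small-$r$) solution. If exactness cannot be obtained, I would state the corollary as this leading-order approximation, with the $d^2$ correction dropped, and note that the qualitative role of $F$ used in Corollary~\ref{cor:GAT_no_oversmoothing} is unaffected to first order.
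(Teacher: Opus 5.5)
Your two-rule derivation (spectral rule for $AKA$, coordinatewise rule for $\odot$) is correct. It is a tidier route to exactly the recurrences in the paper's proof: with $m=\frac n2$, $s=x+y$, $d=x-y$,
\[
s_\ell=m^2(p+q)^2s_{\ell-1}^2+m^2(p^2+q^2)\,d_{\ell-1}^2,\qquad d_\ell=m^2F(p+q)^2\,s_{\ell-1}d_{\ell-1}.
\]
The obstacle you raise at the end is not a technicality. It makes the statement false whenever $x_0\neq y_0$ and $(p,q)\neq(0,0)$. Already at $\ell=1$, the true $s_1$ exceeds the claimed $G^2/(m^2(p+q)^2)=m^2(p+q)^2s_0^2$ by $m^2(p^2+q^2)(x_0-y_0)^2$. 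For example, with $n=2$, $p=1$, $q=0$, $K^{(0)}=I$, the recursion gives $K^{(1)}=2I$, whereas the formula gives $x_1=\frac32$, $y_1=-\frac12$.

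So your main-line step ``dividing the $d$-update by the $s$-update gives $d_\ell/s_\ell=F\,d_{\ell-1}/s_{\ell-1}$'' holds only for the truncated $s$-update, and the corollary as stated cannot be proved. The paper's argument does not get around this either, for two reasons. First, when it shifts $\ell\to\ell-1$ to eliminate $\prod_j S_j^2$, it treats $\alpha$, which contains $[2m^2(p^2-pq+q^2)]^{2(\ell-1)}$, as independent of $\ell$. Second, it picks the ansatz root $k=1/\beta$ with $\beta=m^2(p+q)^2$. That root is precisely the solution of $S_\ell=\beta S_{\ell-1}^2$ and contradicts the true $S_1$. In effect the paper silently drops the $\Delta^2$ term, which you drop explicitly; your diagnosis is the correct one.

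Your fallback of stating the result as an approximation is the right move, but be precise about its scope. The formula is ``leading order'' only for fixed $\ell$ as $r_0=\frac{x_0-y_0}{x_0+y_0}\to0$, not uniformly in depth, and that matters downstream.

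\emph{Magnitude.} With your $r_j=d_j/s_j$ and $c=\frac{p^2+q^2}{(p+q)^2}$, one has exactly $m^2(p+q)^2s_\ell=G^{2^\ell}\prod_{j<\ell}(1+c\,r_j^2)^{2^{\ell-1-j}}$. So the relative error grows doubly exponentially in $\ell$.

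\emph{Ratio.} Your exact map $r\mapsto Fr/(1+cr^2)$ gives $r_\ell\le F^\ell r_0$, so the approximation is uniformly good when $F<1$. For $F>1$, however, $r_\ell\to r^*=\sqrt{(F-1)/c}$ rather than $F^\ell r_0\to\infty$. The latter would force $y_\ell<0$, which is impossible since the recursion keeps $x_\ell,y_\ell\ge0$ when $p,q,x_0,y_0\ge0$. For $F=1$, the exact $r_\ell$ decreases monotonically to $0$.

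So what you can actually prove is the coupled recursion together with this ratio map. The corrected consequence for Corollary~\ref{cor:GAT_no_oversmoothing} is collapse for $F\le1$, and a rank-two limit with $\gamma=\frac{1-r^*}{1+r^*}$ for $F>1$. Your claim that the first-order picture is unaffected is therefore right except at the boundary $F=1$.
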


\paragraph{Proof of Corollary \ref{cor:GAT_kernel_SBM}}
The evolution of the Gaussian Process (GP) kernel for a Graph Attention Network (GAT) at layer $l+1$ is given by:

\begin{equation}
    K^{(l+1)} = K^{(l)} \odot (A K^{(l)} A) + A (K^{(l)} \odot K^{(l)}) A,
\end{equation}

where $A$ is the adjacency matrix and $K^{(l)}$ the kernel at layer $l$.

We consider a Stochastic Block Model with two communities of size $n/2$. The expected adjacency matrix $A$ and the kernel $K^{(l)}$ exhibit the following block structures:

\begin{equation}
    A = \begin{bmatrix} 
    p \mathbf{1}\mathbf{1}^\top & q \mathbf{1}\mathbf{1}^\top \\ 
    q \mathbf{1}\mathbf{1}^\top & p \mathbf{1}\mathbf{1}^\top 
    \end{bmatrix}, \quad 
    K^{(l)} = \begin{bmatrix} 
    x_l \mathbf{1}\mathbf{1}^\top & y_l \mathbf{1}\mathbf{1}^\top \\ 
    y_l \mathbf{1}\mathbf{1}^\top & x_l \mathbf{1}\mathbf{1}^\top 
    \end{bmatrix}.
\label{Eq:sbm_blocks}
\end{equation}
where $\mathbf{1}\mathbf{1}^\top$ is the all-ones matrix of size $\frac{n}{2} \times \frac{n}{2}$.

\subsubsection{Recurrence Relation}
By substituting the block structures of $A$ and $K^{(l)}$ into the evolution equation and noting that $(\mathbf{1}\mathbf{1}^\top)^2 = \frac{n}{2}\mathbf{1}\mathbf{1}^\top$ (yielding a factor of $(\frac{n}{2})^2$ for the triple products), we derive the following recurrence relations for the scalar components $x_{l+1}$ and $y_{l+1}$:

\begin{align}
    x_{l+1} &= \left(\frac{n}{2}\right)^2 \left[ 2(p^2 + q^2)x_l^2 + 2pq(y_l x_l + y_l^2) \right] \\
    y_{l+1} &= \left(\frac{n}{2}\right)^2 \left[ 2(p^2 + q^2)y_l^2 + 2pq(y_l x_l + x_l^2) \right]
\end{align}

We introduce a change of variables:
\begin{align}
    S_{l+1} &= x_{l+1} + y_{l+1} \\
    \Delta_{l+1} &= x_{l+1} - y_{l+1}
\end{align}
Substituting the recurrence relations for $x_{l+1}$ and $y_{l+1}$, the evolution of the discrepancy $\Delta_{l+1}$ is given by:

\begin{align}
    \Delta_{l+1} &= \left(\frac{n}{2}\right)^2 \left[ [2(p^2+q^2)x_l^2+2pq(y_lx_l+y_l^2)] - [2(p^2+q^2)y_l^2+2pq(y_lx_l+x_l^2)] \right] \nonumber \\
    &= \left(\frac{n}{2}\right)^2 \left[ 2(p^2+q^2)(x_l^2 - y_l^2) - 2pq(x_l^2 - y_l^2) \right] \nonumber \\
    &= 2\left(\frac{n}{2}\right)^2 (p^2 - pq + q^2)(x_l - y_l)(x_l + y_l) \nonumber \\
    &= 2\left(\frac{n}{2}\right)^2 (p^2 - pq + q^2) \Delta_l S_l
\end{align}
Similarly, we derive the evolution of the total kernel sum $S_{l+1}$:

\begin{align}
    S_{l+1} &= \left(\frac{n}{2}\right)^2 \left[ [2(p^2+q^2)x_l^2+2pq(y_lx_l+y_l^2)] + [2(p^2+q^2)y_l^2+2pq(y_lx_l+x_l^2)] \right] \nonumber \\
    &= \left(\frac{n}{2}\right)^2 \left[ 2(p^2+q^2)(x_l^2 + y_l^2) + 2pq(x_l + y_l)^2 \right] \nonumber \\
    &= \left(\frac{n}{2}\right)^2 \left[ (p^2+q^2)((x_l-y_l)^2 + (x_l+y_l)^2) + 2pq(x_l + y_l)^2 \right] \nonumber \\
    &= \left(\frac{n}{2}\right)^2 (p^2+q^2) \Delta_l^2 + \left(\frac{n}{2}\right)^2 (p+q)^2 S_l^2
\end{align}

In order to simplify the problem to a single variable recurrence relation, we express $\Delta_l$ as a product of previous terms. By iteratively applying the recurrence $\Delta_{j+1} = 2(\frac{n}{2})^2 (p^2 - pq + q^2) \Delta_j S_j$ starting from $j=0$, we obtain:

\begin{equation}
    \Delta_l = \Delta_0 \prod_{j=0}^{l-1} \left[ 2\left(\frac{n}{2}\right)^2 (p^2 - pq + q^2) S_j \right] = \Delta_0 [2\left(\frac{n}{2}\right)^2 (p^2 - pq + q^2)]^l \prod_{j=0}^{l-1} S_j
\end{equation}

Substituting the square of this expression into the equation for $S_{l+1}$, we have:

\begin{align}
    S_{l+1} &= \left(\frac{n}{2}\right)^2 (p^2+q^2) \left( \Delta_0 [2\left(\frac{n}{2}\right)^2 (p^2 - pq + q^2)]^l \prod_{j=0}^{l-1} S_j \right)^2 + \left(\frac{n}{2}\right)^2 (p+q)^2 S_l^2 \nonumber \\
    &= \left(\frac{n}{2}\right)^2 (p^2+q^2) [2\left(\frac{n}{2}\right)^2 (p^2 - pq + q^2)]^{2l} \Delta_0^2 \prod_{j=0}^{l-1} S_j^2 + \left(\frac{n}{2}\right)^2 (p+q)^2 S_l^2
\end{align}

Using the initial condition $\Delta_0 = x_0 - y_0$, we arrive at the final one-variable recurrence relation:

\begin{equation}
    S_{l+1} = \left(\frac{n}{2}\right)^2 (p^2+q^2) [2\left(\frac{n}{2}\right)^2 (p^2 - pq + q^2)]^{2l} (x_0 - y_0)^2 \prod_{j=0}^{l-1} S_j^2 + \left(\frac{n}{2}\right)^2 (p+q)^2 S_l^2
    \label{Eq:alpha}
\end{equation}

This equation demonstrates that the total kernel sum at layer $l+1$ depends on the entire history of the sums from previous layers $\{S_0, \dots, S_l\}$, weighted by the SBM parameters and the initial discrepancy.

\subsubsection{Analytical Solution of the Recurrence Relation}

Consider the sequence $S_l$:
\begin{equation}
    S_l = \alpha \prod_{j=0}^{l-2} S_j^2 + \beta S_{l-1}^2,
    \label{eq:original}
\end{equation}
where $\alpha = \left(\frac{n}{2}\right)^2(p^2+q^2) \left[2\left(\frac{n}{2}\right)^2 (p^2 - pq + q^2)\right]^{2(l-1)} (x_0 - y_0)^2$ and $\beta = \left(\frac{n}{2}\right)^2 (p+q)^2$.

To eliminate the product term, we examine $S_{l-1}$. By shifting the index $l \to l-1$:
\begin{equation}
    S_{l-1} = \alpha \prod_{j=0}^{l-3} S_j^2 + \beta S_{l-2}^2
\end{equation}
Rearranging to isolate the product:
\begin{equation}
    \alpha \prod_{j=0}^{l-3} S_j^2 = S_{l-1} - \beta S_{l-2}^2
    \label{eq:isolated_product}
\end{equation}
Returning to $S_l$ and peeling off the last term:
\begin{equation}
    S_l = \left( \alpha \prod_{j=0}^{l-3} S_j^2 \right) S_{l-2}^2 + \beta S_{l-1}^2
\end{equation}
Substituting Equation \ref{eq:isolated_product}:
\begin{equation}
    S_l = (S_{l-1} - \beta S_{l-2}^2) S_{l-2}^2 + \beta S_{l-1}^2
\end{equation}
Distributing the $S_{l-2}^2$ term:
\begin{equation}
    S_l = S_{l-1}S_{l-2}^2 - \beta S_{l-2}^4 + \beta S_{l-1}^2
\end{equation}

Using the ansatz $S_l = k \gamma^{2^l}$, we derive the characteristic polynomial for $k$:
\begin{equation}
    \beta k^4 - k^3 - \beta k^2 + k = 0
\end{equation}
The non-trivial solutions are $1$ and $1/\beta$. Given $S_0$ and $S_1$, the specific choice of $k$ ensures consistency with the initial data. For the remainder of the paper we will consider $k=1/\beta = [(\frac{n}{2})^2 (p+q)^2]^{-1}$. Using $S_0 = x_0 + y_0$:
\begin{equation}
    S_l = k \left( \frac{x_0 + y_0}{k} \right)^{2^l} = \frac{1}{\left(\frac{n}{2}\right)^2(p+q)^2} \left[ \left(\frac{n}{2}\right)^2 (p+q)^2 (x_0 + y_0) \right]^{2^l}
    \label{eq:Sl_final}
\end{equation}

The difference is:
\begin{equation}
    \Delta_l = [2\left(\frac{n}{2}\right)^2 (p^2 - pq + q^2)]^l (x_0 - y_0) \prod_{j=0}^{l-1} S_j
\end{equation}
The product simplifies to $\prod_{j=0}^{l-1} S_j = k^{-2^l + l + 1} (x_0 + y_0)^{2^l - 1}$. Thus:
\begin{equation}
    \Delta_l = [2\left(\frac{n}{2}\right)^2 (p^2 - pq + q^2)]^l (x_0 - y_0) k^{-2^l + l + 1} (x_0 + y_0)^{2^l - 1}
    \label{eq:Deltal_final}
\end{equation}

\subsection{Final Expressions for $x_l$ and $y_l$}
Solving $x_l = \frac{S_l + \Delta_l}{2}$ and $y_l = \frac{S_l - \Delta_l}{2}$ by defining:
\begin{equation}
    G = (x_0 + y_0)(p + q)^2 \left(\frac{n}{2}\right)^2 \quad \text{and} \quad F = 2\frac{p^2 - pq + q^2}{(p + q)^2}
\end{equation}
we obtain:
\begin{equation}
    x_l = \frac{1}{2} \left[ \frac{G^{2^l}}{\left(\frac{n}{2}\right)^2 (p + q)^2} \right] \left[ 1 + \left( \frac{x_0 - y_0}{x_0 + y_0} \right) F^l \right]
\end{equation}
\begin{equation}
    y_l = \frac{1}{2} \left[ \frac{G^{2^l}}{\left(\frac{n}{2}\right)^2 (p + q)^2} \right] \left[ 1 - \left( \frac{x_0 - y_0}{x_0 + y_0} \right) F^l \right]
\end{equation}

\subsubsection{Proof of Corollary \ref{cor:GAT_no_oversmoothing}}
\begin{proof}

Using the expressions for $x_\ell$ and $y_\ell$ from Corollary \ref{cor:GAT_kernel_SBM}, the ratio of inter- to intra-community similarity is:
\[
\frac{y_\ell}{x_\ell} = \frac{1 - \left( \frac{x_0 - y_0}{x_0 + y_0} \right) F^\ell}{1 + \left( \frac{x_0 - y_0}{x_0 + y_0} \right) F^\ell}.
\]

The normalized kernel, defined by dividing by the average trace $\frac{1}{n}\text{tr}(K^{(\ell)}) = x_\ell$, is given by:
\[
\frac{K^{(\ell)}}{\frac{1}{n}\text{tr}(K^{(\ell)})} = \begin{pmatrix} 
\mathbf{1}\mathbf{1}^\top & \left( \frac{1 - \left( \frac{x_0 - y_0}{x_0 + y_0} \right) F^\ell}{1 + \left( \frac{x_0 - y_0}{x_0 + y_0} \right) F^\ell} \right) \mathbf{1}\mathbf{1}^\top \\ 
\left( \frac{1 - \left( \frac{x_0 - y_0}{x_0 + y_0} \right) F^\ell}{1 + \left( \frac{x_0 - y_0}{x_0 + y_0} \right) F^\ell} \right) \mathbf{1}\mathbf{1}^\top & \mathbf{1}\mathbf{1}^\top 
\end{pmatrix}.
\]
As $\ell \to \infty$, if $F < 1$, the second term of $y_\ell$  vanishes, leading to rank collapse (oversmoothing). However, if $F \geq 1$, the second term does not vanish. Setting $F = 2\frac{p^2 - pq + q^2}{(p + q)^2} \geq 1$ yields:
\begin{align*}
2(p^2 - pq + q^2) &\geq p^2 + 2pq + q^2 \\
p^2 - 4pq + q^2 &\geq 0
\end{align*}
This condition is satisfied when the ratio $p/q$ is sufficiently large (or small), specifically $p/q \geq 2 + \sqrt{3}$ or $p/q \leq 2 - \sqrt{3}$. Under this condition, the limit matrix retains rank 2, preserving the community structure.
\end{proof}

\subsection{Graphormer-GP Kernel under SBM}

\begin{corollary}[Graphormer Kernel under SBM]
\label{cor:Graphormer_kernel_SBM}
Consider the Graphormer kernel from Corollary \ref{cor:graphormer-linear} without the bias term, and assume that the positional encodings capture all structural information of the graph. In the SBM setting, this implies that the spatial relation matrix coincides with the adjacency matrix, i.e., $R = A$. Under SBM, the block entries $\tilde{x}_\ell$ and $ \tilde{y}_\ell$ of $\tilde{K}^{(\ell)}$ are determined by the recurrence:
\begin{align*}
    \tilde{x}_\ell &= \alpha^\ell x_0 + (1-\alpha^\ell)p, \\
    \tilde{y}_\ell &= \alpha^\ell y_0 + (1-\alpha^\ell)q,
\end{align*}
where the normalized kernel is $K^{(\ell)} = \tilde{K}^{(\ell)} \cdot Z_{\ell-1}$, with $Z_{\ell-1} = \text{Tr}(A^\top (\tilde{K}^{(\ell-1)} \odot \tilde{K}^{(\ell-1)}))$.
\end{corollary}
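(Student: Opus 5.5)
The plan is to treat the augmented-kernel map $\tilde K^{(\ell)} = \alpha K^{(\ell-1)} + (1-\alpha)R$ as the quantity that carries the block structure. The trace factor $Z_{\ell-1}$ is a positive scalar that rescales but never alters block ratios.

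\emph{Block invariance.} First, under SBM with $R=A$, both $R$ and the initial kernel $K^{(0)}$ have the two-block form with entries $(p,q)$ and $(x_0,y_0)$. Any affine combination of such matrices stays in this class. Using Corollary~\ref{cor:graphormer-linear} with $\sigma_b=0$, the node kernel is
\[
K^{(\ell)}_{ab} = \sigma_H^2\sigma_w^2\sigma_Q^2\sigma_K^2\,\tilde K^{(\ell)}_{ab}\sum_{i,j}\bigl(\tilde K^{(\ell-1)}_{ij}\bigr)^2 .
\]
This is $\tilde K^{(\ell)}$ times a scalar, which we identify (after absorbing constants, and replacing the full double sum by the $A$-weighted trace convention stated in the corollary) with $Z_{\ell-1}$. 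Hence every $K^{(\ell)}$ and every $\tilde K^{(\ell)}$ is two-block, and the problem reduces to scalar recursions on the block entries.

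\emph{Scalar recursion.} Second, I would adopt the convention implicit in the statement: the block evolution is tracked through the normalized recursion $\tilde K^{(\ell)} = \alpha \tilde K^{(\ell-1)} + (1-\alpha) A$, with $\tilde K^{(0)} = K^{(0)}$. This means the scalar $Z$ is stripped off at each layer before feeding into the next augmentation, which is legitimate because only the block ratios matter. This gives the affine recurrence
\[
\tilde x_\ell = \alpha\tilde x_{\ell-1} + (1-\alpha)p .
\]
Its fixed point is $p$, so $\tilde x_\ell - p = \alpha(\tilde x_{\ell-1}-p)$, and therefore $\tilde x_\ell = \alpha^\ell x_0 + (1-\alpha^\ell)p$. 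The same argument gives $\tilde y_\ell = \alpha^\ell y_0 + (1-\alpha^\ell)q$, and a one-line induction confirms both formulas.

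\emph{Main obstacle.} The delicate point is justifying that dropping $Z_{\ell-1}$ inside the recursion is harmless. With the unnormalized recursion $\alpha K^{(\ell-1)} + (1-\alpha)A$, the scalar multiplies only the $K$ part, so the weights are no longer exactly $\alpha^\ell$. I would handle this by explicitly stating that the corollary concerns the kernel normalized by $Z$ at each layer, as the phrase ``normalized kernel'' indicates. Under that reading the identity $K^{(\ell)} = \tilde K^{(\ell)} Z_{\ell-1}$ holds by definition of the factorization above. Finally, I would compute $Z_{\ell-1}$ blockwise to confirm it is strictly positive whenever $\tilde K^{(\ell-1)}\neq 0$, so that the normalization is well defined.
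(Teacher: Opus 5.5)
Your proposal is correct and follows essentially the same route as the paper: block-constancy of $K^{(0)}$ and $R=A$ under SBM, solving the affine first-order recurrence $\tilde x_\ell = \alpha\tilde x_{\ell-1}+(1-\alpha)p$ (and likewise for $\tilde y_\ell$) with $\tilde x_0=x_0$, and reading $K^{(\ell)}=\tilde K^{(\ell)}Z_{\ell-1}$ off as a scalar factorization. The paper also feeds $\tilde K^{(\ell-1)}$ rather than $Z_{\ell-2}\tilde K^{(\ell-1)}$ into the augmentation step, as you do, but without saying so. Your obstacle paragraph rightly shows this is a convention, since a scalar multiplying only the $K$ summand does change the block ratios. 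So state it as part of the setup instead of justifying it by ``only block ratios matter.''
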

Since our analysis focuses on whether the kernel preserves the block structure induced by the SBM, the trace term does not play a role. Consequently, it suffices to study the evolution of $\tilde{K}^{(\ell)}$.

\begin{proof}
By the recursive definition of the Graphormer kernel, the unnormalized kernel $\tilde{K}^{(\ell)}$ is a weighted sum of the previous layer’s normalized kernel $K^{(\ell-1)}$ and the spatial relation matrix $R=A$. Under the SBM assumption, both $K^{(\ell-1)}$ and $A$ are block-constant. 
Solving these first-order recurrences with initial conditions $x_0, y_0$ yields the closed-form expressions:
\begin{equation*}
\tilde{K}^{(\ell)} =
\begin{pmatrix}
\alpha^\ell x_0 + (1-\alpha^\ell)p
& \alpha^\ell y_0 + (1-\alpha^\ell)q \\
\alpha^\ell y_0 + (1-\alpha^\ell)q
& \alpha^\ell x_0 + (1-\alpha^\ell)p
\end{pmatrix}
\end{equation*}
The normalized kernel $K_{ab}^{(\ell)}$ is obtained by multiplying the previous unnormalized entries by the sum over the entries of the spatial relation matrix (here $A$):
\begin{align*}
K_{ab}^{(\ell)} &= \tilde{K}_{ab}^{(\ell-1)} \sum_{i,j \in V} A_{ij} \tilde{K}{ij}^{(\ell-1)} \tilde{K}{ij}^{(\ell-1)} \
= \tilde{K}_{ab}^{(\ell-1)} \cdot \text{Tr}\left(A^\top (\tilde{K}^{(\ell-1)} \odot \tilde{K}^{(\ell-1)})\right)
\end{align*}
Hence the normalized kernel is defined as $K^{(\ell)} = \tilde{K}^{(\ell)} \cdot Z_{\ell-1}$ with $Z_{\ell-1} = \text{Tr}(A^\top (\tilde{K}^{(\ell-1)} \odot \tilde{K}^{(\ell-1)}))$. 
\end{proof}

\subsection{SPECFORMER-GP KERNEL UNDER SBM}

\begin{corollary}[SpecFormer Kernel under SBM]
\label{cor:SpecFormer_kernel_SBM}
Consider the Specformer kernel from Theorem \ref{thm:Specformer_GP}, defined as  For $\ell \geq 1$, the block entries of $K^{(\ell)}$ admit the following closed-form expressions:
\begin{align*}
x_\ell &= \frac{1}{2} \left[ (x_0 + y_0) \bar{\lambda}_1^{2\ell} + (x_0 - y_0) \bar{\lambda}_2^{2\ell} \right], \\
y_\ell &= \frac{1}{2} \left[ (x_0 + y_0) \bar{\lambda}_1^{2\ell} - (x_0 - y_0) \bar{\lambda}_2^{2\ell} \right],
\end{align*}
where $\bar{\lambda}_1$ and $\bar{\lambda}_2$ $\bar{\lambda}_2$ denote the learned eigenvalues of the Specformer's spectral filter; these parameters determine the cross-covariance $K_{\lambda}$.
\end{corollary}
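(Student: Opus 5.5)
The plan is to mirror the proof of Corollary~\ref{cor:GCN_kernel_SBM}, replacing the fixed operator $A$ by the learned spectral filter. Under the SBM, the normalized Laplacian $L = I_n - D^{-1/2}AD^{-1/2}$ is block-constant up to the identity, so its leading eigenvectors are $v_1 = \frac{1}{\sqrt n}\mathbf 1_n$ and $v_2 = \frac{1}{\sqrt n}[\mathbf 1_{n/2}^\top,-\mathbf 1_{n/2}^\top]^\top$. The remaining eigenvectors span $\{v_1,v_2\}^\perp$.

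The input kernel $K^{(0)}$ is block-constant, so it lies in $\mathrm{span}\{v_1v_1^\top, v_2v_2^\top\}$:
\[
K^{(0)} = \lambda_{K,1}v_1v_1^\top + \lambda_{K,2}v_2v_2^\top, \qquad \lambda_{K,1} = (x_0+y_0)\tfrac n2,\quad \lambda_{K,2} = (x_0-y_0)\tfrac n2 .
\]
Hence $U^\top K^{(0)} U$ is diagonal, with nonzero entries only in the two coordinates associated with $v_1$ and $v_2$.

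Next I apply the recursion \eqref{eq:specformer_X_propagated_T}, absorbing $\sigma_H^2\sigma_w^2$ into the filter values (or setting it to one), and using the linear-activation convention of Table~\ref{tab:linear_kernels}, so that $K^{(\ell)} = \Sigma^{(\ell)}$. Because $U^\top K^{(\ell-1)}U$ is diagonal, the Hadamard product with $K_\lambda$ only reads the diagonal of $K_\lambda$. Define $\bar\lambda_1^2 := K_{\lambda,11}$ and $\bar\lambda_2^2 := K_{\lambda,22}$, which is how I interpret the ``learned eigenvalues'' in the statement, consistent with Remark~\ref{rem:SpecFormer_GCN}. One layer then maps
\[
\lambda_{K,i}\,v_iv_i^\top \;\longmapsto\; \bar\lambda_i^2\,\lambda_{K,i}\,v_iv_i^\top ,
\]
and this keeps the kernel in the same two-dimensional span.

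By induction on $\ell$, $K^{(\ell)} = \bar\lambda_1^{2\ell}\lambda_{K,1}v_1v_1^\top + \bar\lambda_2^{2\ell}\lambda_{K,2}v_2v_2^\top$. Expanding $v_1v_1^\top = \frac1n\begin{pmatrix}J&J\\J&J\end{pmatrix}$ and $v_2v_2^\top = \frac1n\begin{pmatrix}J&-J\\-J&J\end{pmatrix}$, and using $\frac1n\cdot\frac n2 = \frac12$, gives the claimed expressions for $x_\ell$ and $y_\ell$ exactly as in the GCN case.

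The main obstacle is justifying that $K_\lambda$ acts diagonally on the relevant coordinates. This needs (i) that $v_1$ and $v_2$ are genuinely eigenvectors of $L$ in the population SBM, which holds since $D = \frac n2(p+q)I$, and (ii) that the off-diagonal entries $K_{\lambda,12}$ are multiplied by the zero off-diagonal entries of $U^\top K U$. Point (ii) holds automatically because the Hadamard product is taken with a diagonal matrix. The degenerate eigenspace $\{v_1,v_2\}^\perp$, where $L = I$, carries no mass of $K$, so the choice of $U$ within it is irrelevant.
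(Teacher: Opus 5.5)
Your proposal is correct and follows the paper's proof. Both pass to $U^\top K^{(\ell)} U$, note that it starts diagonal with entries $\frac n2(x_0\pm y_0)$ on the $v_1,v_2$ coordinates, iterate the Hadamard recursion, and reconstruct $K^{(\ell)}$ from $v_1v_1^\top$ and $v_2v_2^\top$. The only difference is cosmetic: the paper writes $(K_\lambda)_{ab}=\bar\lambda_a\bar\lambda_b$ and takes Hadamard powers, whereas you observe that only the diagonal entries $K_{\lambda,ii}=:\bar\lambda_i^2$ ever enter, so you never need $K_\lambda$ to be rank one.
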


\begin{proof}
As detailed in Theorem \ref{thm:Specformer_GP}, the Specformer kernel at layer $\ell$ is obtained via  $K^{(\ell)} = U (K_{\lambda} \odot (U^\top K^{(\ell-1)} U)) U^\top$. We define $M^{(\ell)} := U^\top K^{(\ell)} U$. Given the recurrence relation, we have:
\begin{equation*}
    M^{(\ell)} = K_{\lambda} \odot M^{(\ell-1)} = (K_{\lambda} \odot K_{\lambda} \odot \dots \odot K_{\lambda}) \odot M^{(0)} = (K_{\lambda})^{\odot \ell} \odot M^{(0)},
\end{equation*}
where $(K_{\lambda})_{ab} = \bar{\lambda}_a \bar{\lambda}_b$. Thus, the $\ell$-th Hadamard power is $((K_{\lambda})^{\odot \ell})_{ab} = \bar{\lambda}_a^\ell \bar{\lambda}_b^\ell$. For an SBM with two equal communities, $U=[v_1 \hspace{1mm} v_2]$ with $v_1 = \frac{1}{\sqrt{n}}\mathbf{1}_n$ and  $v_2 = \frac{1}{\sqrt{n}}[\mathbf{1}_{n/2}^\top, -\mathbf{1}_{n/2}^\top]^\top$.

The initial spectral kernel $M^{(0)} = U^\top K^{(0)} U$ is a diagonal matrix containing the eigenvalues of the block-constant kernel $K^{(0)}$:
\begin{equation*}
    M^{(0)} = \begin{pmatrix} \frac{n}{2}(x_0 + y_0) & 0 \\ 0 & \frac{n}{2}(x_0 - y_0) \end{pmatrix}.
\end{equation*}
The non-zero diagonal entries of $M^{(\ell)}$ evolve as:
\begin{align*}
    M^{(\ell)}_{11} &= (\bar{\lambda}_1 \bar{\lambda}_1)^\ell \frac{n}{2}(x_0 + y_0) = \bar{\lambda}_1^{2\ell} \frac{n}{2}(x_0 + y_0), \\
    M^{(\ell)}_{22} &= (\bar{\lambda}_2 \bar{\lambda}_2)^\ell \frac{n}{2}(x_0 - y_0) = \bar{\lambda}_2^{2\ell} \frac{n}{2}(x_0 - y_0).
\end{align*}
Reconstructing the kernel in the spatial domain via $K^{(\ell)} = M^{(\ell)}_{11} v_1 v_1^\top + M^{(\ell)}_{22} v_2 v_2^\top$, we use the outer product definitions:
\begin{equation*}
    v_1 v_1^\top = \frac{1}{n} \begin{pmatrix} \mathbf{J} & \mathbf{J} \\ \mathbf{J} & \mathbf{J} \end{pmatrix}, \quad v_2 v_2^\top = \frac{1}{n} \begin{pmatrix} \mathbf{J} & -\mathbf{J} \\ -\mathbf{J} & \mathbf{J} \end{pmatrix},
\end{equation*}
where $\mathbf{J} = \mathbf{1}\mathbf{1}^\top$ of size $\frac{n}{2} \times \frac{n}{2}$. Substituting these yields:
\begin{align*}
    K^{(\ell)} &= \frac{\bar{\lambda}_1^{2\ell}(x_0 + y_0)}{2} \begin{pmatrix} \mathbf{J} & \mathbf{J} \\ \mathbf{J} & \mathbf{J} \end{pmatrix} + \frac{\bar{\lambda}_2^{2\ell}(x_0 - y_0)}{2} \begin{pmatrix} \mathbf{J} & -\mathbf{J} \\ -\mathbf{J} & \mathbf{J} \end{pmatrix}.
\end{align*}
Identifying the diagonal block scalar $x_\ell$ and the off-diagonal block scalar $y_\ell$ gives the desired result.
\end{proof}

\paragraph{Proof of Remark \ref{cor:Specformer_advantage}}
Using the expressions for $x_\ell$ and $y_\ell$ from Corollary \ref{cor:SpecFormer_kernel_SBM}, the ratio of inter- to intra-community similarity is:
\[
\frac{y_\ell}{x_\ell} = \frac{(x_0+y_0)\bar{\lambda}_1^{2\ell} - (x_0-y_0)\bar{\lambda}_2^{2\ell}}{(x_0+y_0)\bar{\lambda}_1^{2\ell} + (x_0-y_0)\bar{\lambda}_2^{2\ell}} = \frac{\left(\frac{x_0+y_0}{x_0-y_0}\right)\left(\frac{\bar{\lambda}_1}{\bar{\lambda}_2}\right)^{2\ell} - 1}{\left(\frac{x_0+y_0}{x_0-y_0}\right)\left(\frac{\bar{\lambda}_1}{\bar{\lambda}_2}\right)^{2\ell} + 1}.
\]
If $|\bar{\lambda}_2| \geq |\bar{\lambda}_1|$, the term $(\bar{\lambda}_1/\bar{\lambda}_2)^{2\ell}$ either vanishes or stays constant as $\ell \to \infty$. Consequently, the ratio $y_\ell/x_\ell$ does not converge to $1$, and the kernel maintains its rank-$2$ structure. This allows Specformer to avoid oversmoothing by amplifying or preserving the second spectral direction $v_2$, which corresponds to the community partition.

\section{Supplementary Material for Experiments}
\label{app:experimental details}
The code used to reproduce all experiments is available at
\url{https://figshare.com/s/4ad5245d4d6c405f46f0}. In this section we provide additional information regarding datasets and experiment detauls. In Section~\ref{app:additional_experiments} we present additional experiments. 
\subsection{Datasets}
We conduct experiments on both real-world benchmarks and synthetic datasets to evaluate model performance under varying conditions. 

\paragraph{Benchmark Datasets} 
We utilize two widely recognized benchmarks: \textbf{PubMed} \citep{sen2008collective}, a homopholic citation network, and \textbf{Chameleon} \citep{rozemberczki2019gemsec}, a heterophilic Wikipedia page-page network.

\paragraph{Synthetic Data} 
To analyze the interplay between graph topology and feature information, we employ two synthetic generative models, considering two-class versions for both:
\begin{itemize}
    \item \textbf{SBM (Random Features):} The graph is generated according to the Stochastic Block Model (SBM) rule, where the probability of an edge existing between nodes in the same community is $p$ and between different communities is $q$. Node features are drawn from an independent Gaussian distribution, making them uninformative of the community structure.
    \item \textbf{CSBM (Aligned Features):} In the Contextual Stochastic Block Model (CSBM), node features are generated as a mixture of Gaussians such that the feature distributions are aligned with the graph's community labels. This represents a more realistic scenario common in literature where the graph structure and node attributes provide complementary information regarding the underlying classes.
\end{itemize}

All datasets are accessed via the PyTorch Geometric (PyG) data loaders. We follow the standard PyG splits for each dataset.
In particular, for datasets that come with multiple predefined splits (notably the heterophilous benchmarks), we run the full training--validation--test procedure on every split and report the mean of the test accuracy across splits.

\begin{table}[H]
\centering
\caption{Dataset statistics for node-level classification tasks.}
\label{tab:dataset_statistics}
\begin{tabular}{lccccccc}
\toprule
Dataset & Level & Splits &  Nodes &  Edges & Classes & Features & Homophily  \\
\midrule
Pubmed    & Node & 1 & 19,717 & 44,338  & 3  & 500   & 0.80 \\
Chameleon & Node & 10 & 2,277  & 31,421  & 10 & 2,325 & 0.23  \\

\bottomrule
\end{tabular}
\end{table}
\begin{table}[H]
\centering
\caption{Datasets for node classification on heterogeneous graphs.}
\label{tab:hetero-datasets}
\begin{tabular}{lrrrrrrr}
\toprule
Dataset & \ Nodes & \ Edges & \ Edge type & \ Features & \ Training & \ Validation & \ Test \\
\midrule
DBLP  & 18405 & 67946 & 4 & 334  & 800 & 400 & 2857 \\
ACM   &  8994 & 25922 & 4 & 1902 & 600 & 300 & 2125 \\
IMDB  & 12772 & 37288 & 4 & 1256 & 300 & 300 & 2339 \\
\bottomrule
\end{tabular}
\end{table}

\subsection{Experimental Environment, Training Details, and Hyperparameters}
We compare several graph GP kernels derived from the infinite-width limits of representative architectures, including GCN-GP, GAT-GP, Graphormer-GP, and Specformer-GP.
For each kernel, we perform node classification using a GP-style multi-output regression setup: class labels are encoded as one-hot targets, and predictions are obtained by selecting the class with the largest output score.

Replacing the softmax with the identity map requires an explicit normalisation step; consistent with \citet{InfiniteAttention}, we incorporate a per-node LayerNorm. Concretely, at the end of each layer we apply the following LayerNorm kernel:
\(
K_{ab}\Big[K_{aa}K_{bb}\Big]^{-1/2}.
\)
All GP models in our experiments include a LayerNorm module at the end of every layer.

For each split, model selection is conducted on the validation set (e.g., via a grid search over the ridge regularization coefficient) and the resulting configuration is evaluated on the test set.
For datasets with multiple splits, we aggregate results by reporting the mean across splits.

\subsection{Additional Experiments}
\label{app:additional_experiments}
\paragraph{Distribution Experiments}
Figure~\ref{fig:app_spec_dist} complements Figure~\ref{fig:distribution_grid} in the introduction by further illustrating the distribution of eigenvalue encoding of Specformer.

\begin{figure}[H]
    \centering
    \begin{minipage}{0.6\columnwidth} 
        \centering
        \includegraphics[width=0.48\linewidth]{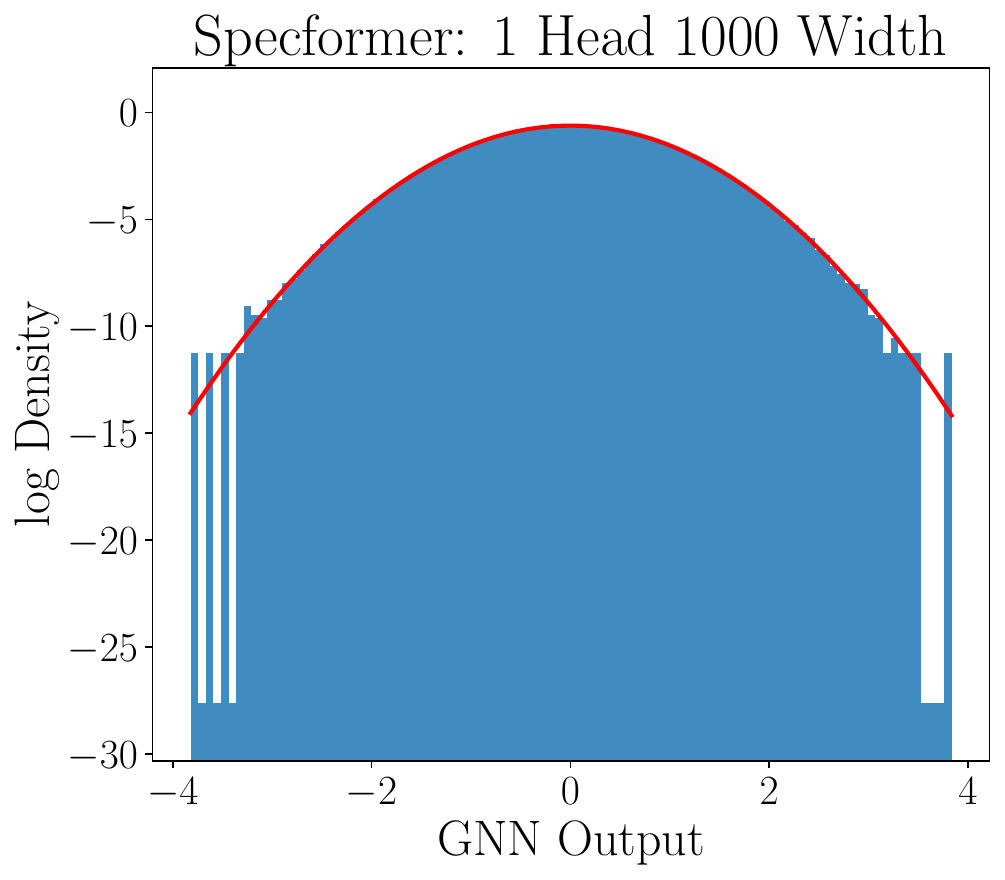}
        \hfill
        \includegraphics[width=0.48\linewidth]{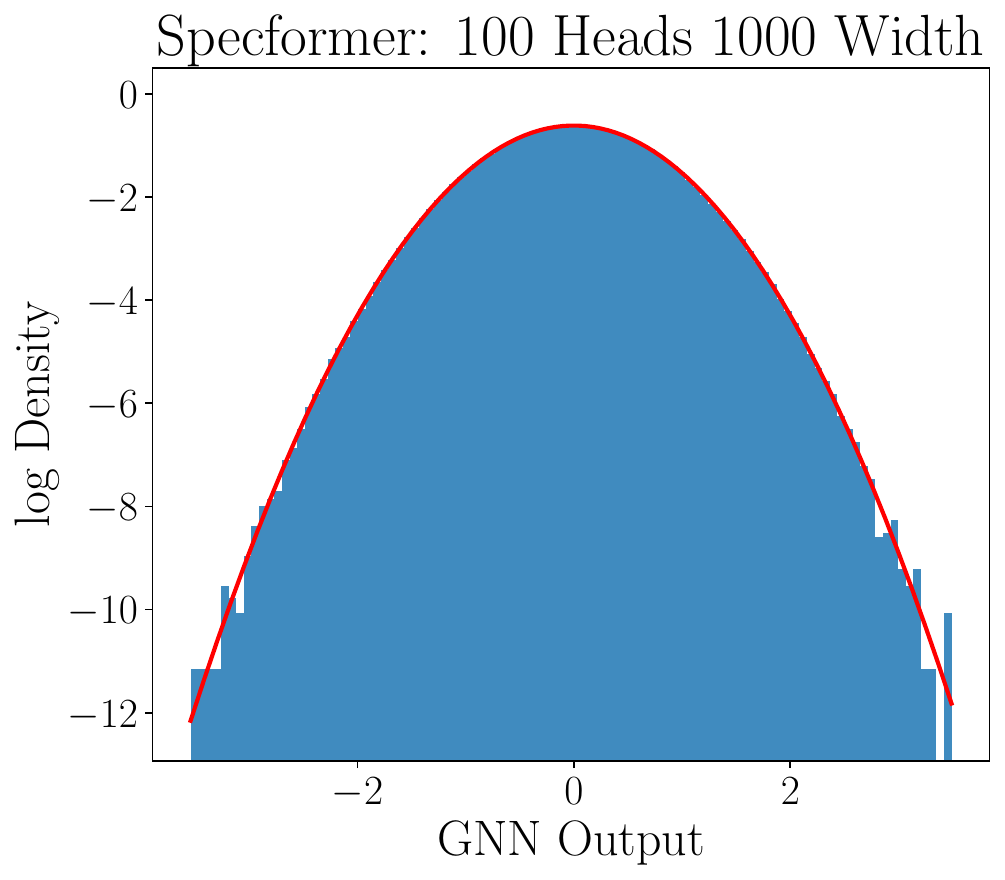}
    \end{minipage}
    \caption{Histogram of the eigenvalue encoding of Specformer for different number of heads. The output distribution converges to a Gaussian (red line) fitted with mean and variance of the empirical distribution when both width and number of heads are large.}
    \label{fig:app_spec_dist}
\end{figure}

\paragraph{Oversmoothing Experiments}
Figure~\ref{fig:pe_oversmoothing_graphormer} further illustrates the effect of positional encodings on oversmoothing. In both datasets, Laplacian-based positional encodings consistently achieve high test accuracy across depth. In the CSBM setting, where node features are already informative, these encodings have perfect accuracy.
\begin{figure}[H]
    \centering
    \includegraphics[width=0.5\columnwidth]{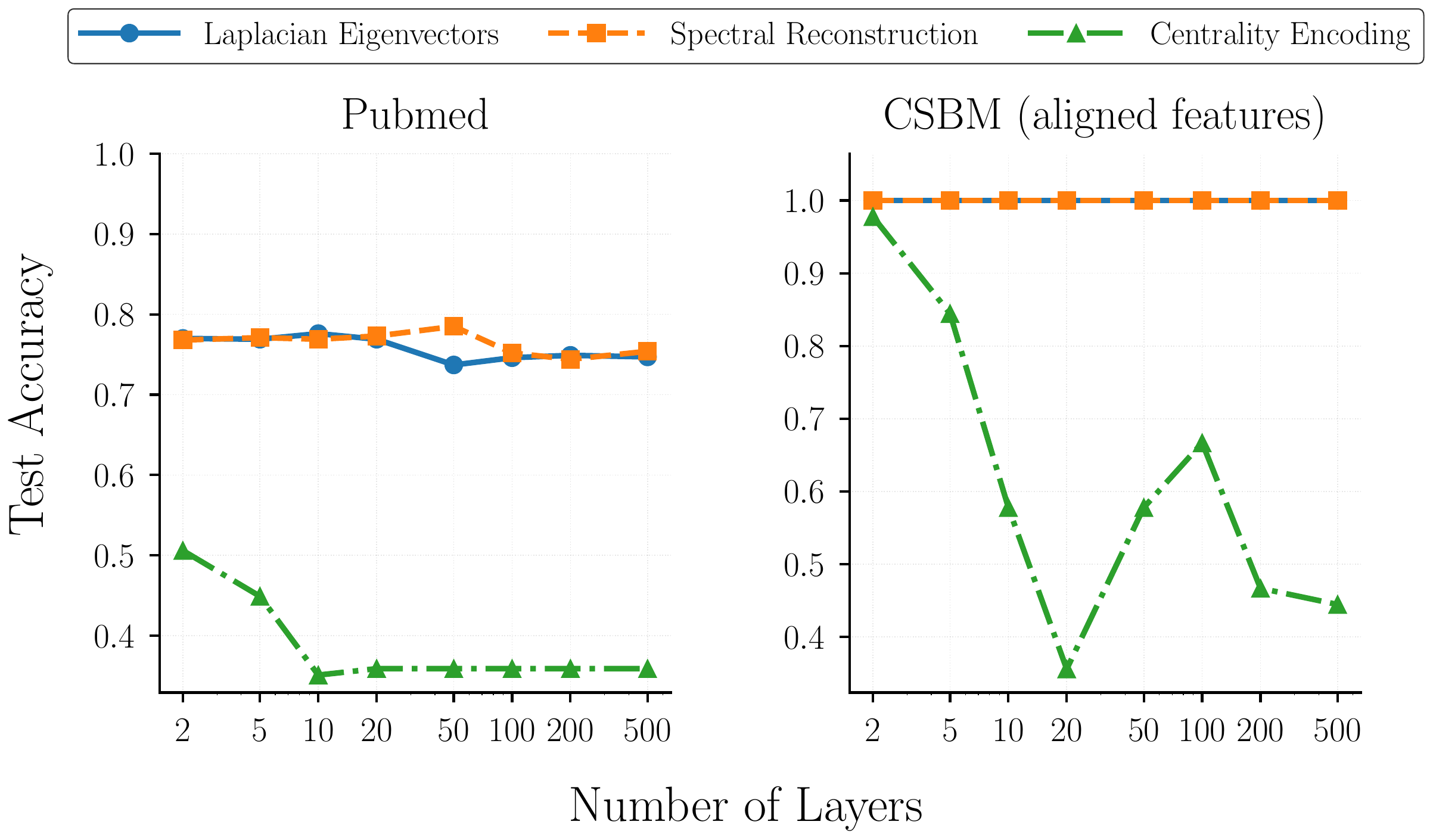}
    \caption{\textbf{Oversmoothing behaviour and the impact of positional encodings.} Test accuracy of \textbf{Graphormer-GP} on Pubmed (left) and CSBM (right) as a function of the number of layers.}
    \label{fig:pe_oversmoothing_graphormer}
\end{figure}
\end{document}